\documentclass{article}

\usepackage{microtype}
\usepackage{graphicx}
\usepackage{subcaption}
\usepackage{booktabs} 

\usepackage{hyperref}

\usepackage[preprint]{icml2026}

\usepackage{amsmath}
\usepackage{amssymb}
\usepackage{mathtools}
\usepackage{amsthm}
\usepackage{multirow}
\usepackage{xspace}

\theoremstyle{plain}
\newtheorem{theorem}{Theorem}[section]

\newtheorem{lemma}[theorem]{Lemma}

\theoremstyle{definition}

\newtheorem{assumption}{Assumption}[section]

\theoremstyle{remark}
\newtheorem{remark}[theorem]{Remark}

\usepackage[capitalize,noabbrev]{cleveref}

\crefalias{lemma}{lemma}
\crefalias{proposition}{proposition}
\crefalias{corollary}{corollary}
\crefalias{definition}{definition}
\crefalias{claim}{claim}

\crefname{theorem}{Theorem}{Theorems}
\crefname{lemma}{Lemma}{Lemmas}
\crefname{proposition}{Proposition}{Propositions}
\crefname{definition}{Definition}{Definitions}
\crefname{assumption}{Assumption}{Assumptions}
\crefname{equation}{Eq.}{Eqs.}
\Crefname{equation}{Equation}{Equations}

\usepackage{listings}
\usepackage{xcolor}
\definecolor{codegreen}{rgb}{0,0.6,0}
\definecolor{codegray}{rgb}{0.5,0.5,0.5}
\definecolor{codepurple}{rgb}{0.58,0,0.82}
\definecolor{backcolour}{rgb}{0.95,0.95,0.92}
\lstdefinestyle{mystyle}{
    backgroundcolor=\color{backcolour},   
    commentstyle=\color{codegreen},
    keywordstyle=\color{magenta},
    numberstyle=\tiny\color{codegray},
    stringstyle=\color{codepurple},
    basicstyle=\ttfamily\footnotesize,
    breakatwhitespace=false,         
    breaklines=true,                 
    captionpos=b,                    
    keepspaces=true,                 
    numbers=none,           
    frame=single,           
    xleftmargin=2em,        
    framexleftmargin=1.5em, 
    numbersep=5pt,                  
    showspaces=false,                
    showstringspaces=false,
    showtabs=false,                  
    tabsize=2
}
\newcommand{\term}{RCDP-UCB\xspace}

\newcommand{\inner}[1]{\left\langle #1 \right\rangle}
\DeclareMathOperator*{\argmax}{arg\,max}

\usepackage[textsize=tiny]{todonotes}

\icmltitlerunning{Robust Linear Dueling Bandits with Post-serving Context under Unknown
Delays and Adversarial Corruptions}

\begin{document}

\twocolumn[
  \icmltitle{Robust Linear Dueling Bandits with Post-serving Context \\ under Unknown
Delays and Adversarial Corruptions}



  \icmlsetsymbol{equal}{*}

  \begin{icmlauthorlist}
    \icmlauthor{Youngmin Oh}{yyy}
  \end{icmlauthorlist}

  \icmlaffiliation{yyy}{InfiniTree, Republic of Korea}
  \icmlcorrespondingauthor{Youngmin Oh}{youngmin0.oh@gmail.com} 

  \icmlkeywords{Machine Learning, ICML}

  \vskip 0.3in
]



\printAffiliationsAndNotice{\textit{Accepted at the Forty-Third International Conference on Machine Learning (ICML 2026). This is the author's version of the work.}}
\begin{abstract}
We study linear dueling bandits in volatile environments characterized by the simultaneous presence of post-serving contexts, delayed feedback, and adversarial corruption. Feedback is subject to unknown stochastic or adversarial delays and a cumulative corruption budget $\mathcal{C}$. To address these challenges, we propose \term, which integrates a learned approximator that predicts post-serving contexts from pre-serving information. It further employs an adaptive weighting strategy that clips feature vectors to mitigate the impact of corrupted and delayed observations simultaneously. Under standard regularity conditions and a parametric post-serving mapping, we rigorously establish that our algorithm is delay-regime-agnostic, achieving a regret upper bound of $\widetilde{\mathcal{O}}(d(\sqrt{T} + \mathcal{C} + \mathcal{D}))$, where $d$ is the total feature dimension and $\mathcal{D}$ encapsulates the delay complexity, scaling with $\sqrt{\Lambda}$ under adversarial delays or $\mu_{\tau}$ under stochastic delays ($\Lambda$: cumulative delay budget; $\mu_{\tau}$: mean of sub-Gaussian delays). We further establish lower bounds that nearly match our upper bounds up to a $\sqrt{d}$ factor for adversarial delays in the absence of post-serving contexts.  
Code is available at \url{https://github.com/youngmin0oh/rcdp-public}.
\end{abstract}

\section{Introduction}
\label{sec:intro}

While the multi-armed bandit (MAB) framework successfully models various decision-making problems---ranging from recommendation systems to ad allocation~\citep{lattimore2020bandit}---designing explicit real-valued reward functions often proves elusive in practice. Indeed, in many modern interactive systems, user feedback is inherently relative rather than absolute. For instance, a user may readily prefer Movie A over Movie B, yet find it difficult to assign a precise numerical score to either.

This reliance on relative feedback has become increasingly critical with the advent of Large Language Models (LLMs)~\citep{guo2025deepseek}. Reinforcement Learning from Human Feedback (RLHF)~\citep{rlhf, dpo}, one of the techniques for training LLMs, fundamentally relies on pairwise preference comparisons to align model outputs with human intent, demonstrating the pivotal role of preference-based learning in state-of-the-art AI systems~\citep{guo2025deepseek, gpt4}. This preference-based learning is elegantly formulated by the dueling bandits framework~\citep{firstduel}. To leverage side information for personalization in these complex environments, this framework has been further extended to the contextual setting~\citep{2015contextual, saha2021optimal}, bridging the gap between theoretical preference modeling and practical, high-dimensional applications.

\begin{table*}[t]
\caption{Comparison of cumulative regret upper bounds for various contextual bandit settings. We further omit sub-leading terms to highlight the dominant scaling~(e.g., $\lambda$).}
\label{tab:unified_comparison}
\centering
\begin{small}
\begin{sc}
\begin{tabular}{lcccc}
\toprule
\textbf{Setting} & \textbf{Corruption} & \textbf{Delay} & \textbf{Algorithm} & \textbf{Upper Bound} \\
\midrule
\multirow{7}{*}{\shortstack{Linear\\Bandits}}
  & No  & No  & OFUL \citep{abbasi2011improved} & $\widetilde{\mathcal{O}}(d\sqrt{T})$ \\
  & Yes & No  & CW-OFUL \citep{he2022nearly} & $\widetilde{\mathcal{O}}(d\sqrt{T} + d\mathcal{C})$ \\
\cmidrule{2-5}
  & \multirow{3}{*}{No}  & \multirow{3}{*}{Stochastic}
      & Delayed-UCB \citep{zhou2019learning} & $\widetilde{\mathcal{O}}(\sqrt{dT(d + \mathbb{E}[\tau])})$ \\
  &   &   & OTFLinUCB$^{\ast}$ \citep{vernade2020linear} & $\widetilde{\mathcal{O}}\big((d\sqrt{T} + md)/\tau_m\big)$ \\
  &   &   & GLB-UCB$^{\dagger}$ \citep{howson2023delayed}  & $\widetilde{\mathcal{O}}(d\sqrt{T} + d^{3/2}\mathbb{E}[\tau])$ \\
\cmidrule{2-5}
  & No  & Adversarial & \citet{ito2020delay} & $\widetilde{\mathcal{O}}(\sqrt{d(d + d_{\max})T})$ \\
\midrule
\multirow{6}{*}{\shortstack{Generalized\\Linear\\Bandits}}
  & No  & No  & GLM-UCB \citep{filippi2010parametric} & $\widetilde{\mathcal{O}}(d\sqrt{T})$ \\
  & Yes & No  & GAdaOFUL \citep{yucorruption} & $\widetilde{\mathcal{O}}(d\sqrt{\sum_t\sigma_t^2} + d\mathcal{C})$ \\
  & Yes & No  & \citet{ye2023corruption} & $\widetilde{\mathcal{O}}(d\sqrt{T} + d\mathcal{C})$ \\ 
\cmidrule{2-5}
  & \multirow{2}{*}{No}  & \multirow{2}{*}{Stochastic}
      & Delayed-UCB \citep{zhou2019learning}  & $\widetilde{\mathcal{O}}(d\sqrt{T} + d\mathbb{E}[\tau]\sqrt{T})$ \\
  &   &   & GLB-UCB \citep{howson2023delayed}     & $\widetilde{\mathcal{O}}(d\sqrt{T} + d^{3/2}\mathbb{E}[\tau])$ \\
\midrule
\multirow{3}{*}{\shortstack{Linear\\Dueling\\Bandits}}
  & No  & No  & \citet{saha2021optimal, lst}& $\widetilde{\mathcal{O}}(d\sqrt{T})$ \\
  & Yes & No  & RCDB \citep{di2024nearly} & $\widetilde{\mathcal{O}}(d\sqrt{T} + d\mathcal{C})$ \\
\cmidrule{2-5}
  & Yes & Both & \term (Ours) & $\widetilde{\mathcal{O}}(d(\sqrt{T}+\mathcal{C}+\mathcal{D}))$ \\
\bottomrule
\end{tabular}
\end{sc}
\end{small}
\vskip 0.1in
\begin{flushleft}
\footnotesize
$d$: feature dimension; $T$: time horizon; $\mathcal{C}$: total corruption budget; $\mathbb{E}[\tau]$: expected delay; $d_{\max}$: maximum delay; $\kappa$: lower bound of the link function derivative; $\sigma_t^2$: variance at round $t$; $\mathcal{D} = \max(\mu_\tau, \Lambda^{1/2})$, where $\mu_\tau$ is the mean of the sub-Gaussian delay and $\Lambda$ is the total adversarial delay budget; $\tau_m = \mathbb{P}(D \leq m)$ denotes the cumulative probability of the delay falling within window $m$.
\end{flushleft}
\end{table*}

However, existing contextual dueling bandit algorithms predominantly operate under the assumption that the utility of an arm is fully determined by the pre-serving context---features observable before an action is selected (e.g., user demographics, item metadata). This assumption fails to capture critical factors that are only revealed after the service is rendered, which we term post-serving contexts following~\citet{post-serving}. For instance, in food delivery, the true enjoyment of a meal depends not only on the restaurant's cuisine (pre-serving) but also on the actual delivery time and food temperature (post-serving). Similarly, in ride-sharing, a user's satisfaction depends on the car type (pre-serving) as well as the cleanliness and driver conduct (post-serving). Ignoring these latent, post-hoc determinants can lead to suboptimal policy learning, as the learner fails to account for the full causal mechanism driving user preferences.

Compounding this challenge, real-world feedback is rarely instantaneous or pristine. Feedback often arrives with unknown stochastic or adversarial delays~\citep{vernade2020linear, lancewicki2023unified, howson2023delayed}, and the observed outcomes may be subject to malicious corruption~\citep{liu2024corruption, di2024nearly}. The interplay between latent post-serving contexts, delayed feedback, and adversarial corruption creates a hostile environment where standard regret minimization strategies typically fail.

Specifically, the uncertainty in mapping pre-serving to post-serving contexts introduces non-stationarity, while delays and corruption can severely skew the learner's perception of utility. Solving this tripartite problem poses a substantial theoretical challenge, as the combined effect of these factors is fundamentally multiplicative rather than additive. Delays significantly reduce the effective sample size available for immediate updates, creating a data-scarce regime. This sparsity amplifies the statistical influence of adversarial corruptions: in the absence of abundant data, each observed feedback signal carries disproportionately high leverage. Consequently, an adversary can exert considerable bias by selectively corrupting these sparse observations, rendering the learning process highly sensitive to minor perturbations. Moreover, the learner must estimate a mapping from pre-serving to post-serving contexts using already-compromised and delayed feedback. This creates a vicious cycle: corrupted signals degrade the context predictor, which in turn magnifies estimation errors where delays have already starved the model of reliable data.

To surmount these intertwined challenges, we propose \term (Robust to Corruption, Delay, and Post-serving UCB), an algorithmic framework. To handle corrupted and delayed observations, we employ a weighted ridge regression estimator with an uncertainty-aware re-weighting mechanism. The intuition is as follows: observations that appear unreliable---either due to excessive delay or potential corruption---receive lower weight. This allows the learner to gracefully degrade under attack rather than catastrophically fail. In our setting, post-serving contexts are revealed after the decision, forcing the learner to act under partial information. We construct a confidence ellipsoid that explicitly accounts for this additional variance, enabling principled optimism over both the unknown preference parameters and the unobserved features.

A key feature of our approach is that it does not require prior knowledge of the specific delay mechanism type (stochastic vs. adversarial). While the aggregate magnitude of the corruption and delay budgets is utilized for theoretical tuning of the clipping threshold $\alpha$, the algorithm adapts efficiently without needing to detect or classify the source of the delay.

We establish the first regret guarantees for linear dueling bandits within this unified tripartite setting. Under standard regularity conditions and a parametric post-serving mapping, we achieve a regret bound of:
$\widetilde{\mathcal{O}}\left(d\left(\sqrt{T} + \mathcal{C} + \mathcal{D}\right)\right),$
where $d$ denotes the feature dimension, $T$ is the time horizon, and $\mathcal{C}$ represents the cumulative corruption budget. Notably, our framework provides best-of-both-worlds guarantees in terms of delay regimes for the delay complexity $\mathcal{D}$, which adaptively scales as $\Lambda^{1/2}$ for an adversarial delay budget $\Lambda$, or as the mean delay $\mu_{\tau}$ for sub-Gaussian distributions. Each component of the bound admits a transparent interpretation: $d\sqrt{T}$ represents the fundamental cost of learning, $d\mathcal{D}$ captures the penalty incurred by delayed feedback, and $d\mathcal{C}$ reflects the impact of adversarial corruptions.

To certify the optimality of our results, we derive a lower bound of $\Omega\left(\sqrt{dT} + d\mathcal{C} + \mathcal{D}'\right)$ in the absence of post-serving contexts, where $\mathcal{D}' = \max\{\sqrt{d\Lambda}, d\mu_{\tau}\}$. This confirms that the overhead introduced by delays and corruptions is information-theoretically unavoidable, and that \term achieves this limit with near-optimal efficiency up to a $\sqrt{d}$ factor for adversarial delays. To the best of our knowledge, this work is the first to simultaneously address the joint challenges of adversarial corruptions and delayed feedback within the contextual dueling bandit framework.

Our primary contributions are summarized as follows:
\begin{itemize}
    \item \textbf{Unified Analytical Framework:} We formalize a linear dueling bandit framework tailored for volatile environments, providing a comprehensive treatment of three critical practical challenges: post-serving contexts, unknown observation delays, and adversarial corruptions.
    \item \textbf{Algorithm Design:} We propose \term, a novel algorithm that 
    integrates contextual mapping estimation with adaptive clipping mechanisms by re-weighting. This re-weighting strategy 
    simultaneously controls both corrupted and delayed observations, enabling 
    the algorithm to remain agnostic to whether feedback irregularities stem 
    from adversarial manipulation or network latency.
    \item \textbf{Regret Analysis:} We prove a regret upper bound of 
$\widetilde{\mathcal{O}}(d(\sqrt{T} + \mathcal{C} + \mathcal{D}))$, matching 
our lower bound up to a $\sqrt{d}$ factor in the adversarial delay regime.
\end{itemize}

\section{Related Work}

\paragraph{Linear Contextual Dueling Bandits.}
Linear utility functions in CDBs have been extensively studied~\citep{lst, saha2021optimal, variance-aware, feelgood}. \citet{lst} and \citet{saha2021optimal} established UCB-based regret bounds of $\widetilde{\mathcal{O}}(d\sqrt{T})$ and $\widetilde{\mathcal{O}}(\sqrt{dT})$, matching the lower bounds identified by \citet{saha2021optimal}. Alternative approaches include the variance-aware action-elimination method by \citet{variance-aware} and the Thompson Sampling-based algorithm by \citet{feelgood}, which also achieves $\widetilde{\mathcal{O}}(d\sqrt{T})$ regret.
Beyond the linear regime, \citet{ohneural} and \citet{verma2025neural} have
extended CDBs to neural function classes via deep feature representation and
NTK approximations, providing UCB- and TS-based exploration strategies with
provable regret guarantees.

\paragraph{Robustness to Adversarial Corruption.}

Addressing the vulnerability of bandit algorithms to data poisoning remains a pivotal challenge. A fundamental distinction exists between action-independent (weak) and action-dependent (strong) corruption. In the context of linear bandits, \citet{he2022nearly} and \citet{liu2024corruption} established the minimax regret bounds under corruption. For Generalized Linear Bandits (GLBs), \citet{yucorruption} introduced GAdaOFUL, a variance-aware algorithm utilizing adaptive Huber regression. More recently, \citet{ye2023corruption} proposed CR-GLM-UW, which leverages uncertainty weighting to achieve robust performance without prior knowledge of the corruption level. In the realm of Dueling Bandits, while earlier works like \citet{agarwal2021stochastic} addressed general robustness, \citet{di2024nearly} recently developed RCDB, achieving nearly optimal regret bounds in the presence of strong corruptions.

\paragraph{Delayed Feedback Mechanisms.}
Delays in outcome observation significantly complicate the regret landscape. For stochastic delays in linear bandits, \citet{zhou2019learning} provided foundational upper bounds, which \citet{vernade2020linear} extended to scenarios with censored feedback (OTFLinUCB). In the GLB setting, \citet{howson2023delayed} demonstrated that with appropriate optimism, the regret penalty remains additive with respect to the expected delay. Regarding adversarial delays, \citet{ito2020delay} derived regret bounds dependent on the maximum delay $d_{\max}$. Despite these advances, the intersection of heavy-tailed delays and adversarial corruption—particularly in the preference-based setting—remains largely unexplored, a gap our work aims to bridge.

\paragraph{Post-Serving Contexts and Partial Observability.}
When valuable contextual information is only observable after arm selection, standard contextual bandit algorithms may suffer from model misspecification. \citet{post-serving} introduced the framework of contextual bandits with post-serving contexts, proposing the poLinUCB algorithm that achieves regret $\widetilde{\mathcal{O}}(T^{1-\alpha}d_u^\alpha + d_u\sqrt{TK})$, where $\alpha \in [0, 1/2]$ captures the learnability of the pre- to post-context mapping function. Central to their analysis is a robustified Elliptical Potential Lemma (EPL) that accommodates noise in observed features. Earlier work by \citet{wang2016learning} studied hidden contexts under strong initialization assumptions, while subsequent research explored noisy or unobservable contexts through online prediction from context histories \citep{qi2018bandit, yang2020contextual, yang2021robust, zhu2022robust}. \citet{park2021analysis} further analyzed the Thompson Sampling  algorithm under partial observability constraints.

\section{Problem Formulation}
\label{sec:problem_formulation}

\paragraph{Basic Setup with Post-serving Contexts.}
We consider a contextual dueling bandit problem over a finite horizon $T$ with a finite $K$ arms. In each round $t \in [T]$, the environment provides a pre-serving context set $\mathcal{X}_t = \{x_{t,1}, \dots, x_{t,K}\} \subset \mathbb{R}^{d_x}$. Then the learner selects a pair of arms $(a_t, b_t) \in [K] \times [K]$. Following the selection, the learner observes the associated post-serving contexts $y_{t,a_t}, y_{t,b_t} \in \mathbb{R}^{d_y}$ as in~\citet{post-serving}. The complete feature representation for an arm $k$ is defined as the concatenation $z_{t,k} = (x_{t,k}, y_{t,k}) \in \mathbb{R}^d$, where $d = d_x + d_y$. We assume that there is a learnable mapping $\phi_*(\cdot):\mathbb{R}^{d_x}\rightarrow\mathbb{R}^{d_y}$ such that $y_{t,k} = \phi_*(x_{t,k}) + \epsilon_{t,k}$ where $\epsilon_{t,k}$ is a zero-mean noise vector, i.e., $\phi_*(x_{t,k})=\mathbb{E}[y_{t,k}|x_{t,k}]$.
We adopt a linear utility model with an unknown parameter $\Theta_* \in \mathbb{R}^d$, where the utility of arm $k$ at round $t$ is given by $u_{t,k} = \langle \Theta_*, z_{t,k} \rangle$ for $k \in [K]$. The preference probability between arms $a_t$ and $b_t$ is governed by a link function $g: \mathbb{R} \to [0, 1]$. Specifically, the binary outcome $l_t \in \{0, 1\}$ satisfies $\mathbb{E}[l_t \mid a_t, b_t] = g(\langle \Theta_*, z_{t,a_t} - z_{t,b_t} \rangle).$ A prominent example is the logistic link function $g(x) = \frac{1}{1 + \exp(-x)}$, corresponding to the Bradley-Terry-Luce (BTL) model~\cite{btmodel, btmodel2}.

\paragraph{Feedback Delay.}
A distinguishing feature of our setting is that the learner operates without prior knowledge of whether the delay sequence $\{\tau_t\}_{t=1}^T$ is governed by a stochastic process or determined by an adversary. The delay regime is fixed throughout the entire horizon but remains unknown to the learner. We consider two canonical settings:
\begin{enumerate}
    \item Stochastic Delays. The delays $\{\tau_t\}_{t=1}^T$ are independent $\sigma^2$-sub-Gaussian random variables with mean $\mu_\tau$. That is, for all $n \in \mathbb{R}$, $ \mathbb{E}\bigl[\exp\bigl(n(\tau_t - \mu_\tau)\bigr)\bigr] \leq \exp\left(\frac{n^2 \sigma^2}{2}\right)\nonumber$ as in~\citet{howson2023delayed}.
    \item Adversarial Delays. An adaptive adversary selects the delay sequence subject to a cumulative budget constraint: $\sum_{t=1}^T \tau_t \leq \Lambda.\nonumber$
    The adversary may choose $\tau_t$ based on $\mathcal{H}_{t-1}$, consisting of all rounds whose feedback has arrived (formally defined below).
\end{enumerate}
We define the \emph{observed-feedback history} at round $t$ as
\begin{align}
    \mathcal{H}_t \coloneqq \bigl\{ s \in [t] : s + \tau_s \leq t \bigr\},\nonumber
\end{align}
i.e., the set of past rounds whose (possibly corrupted) outcome $o_s$ has been revealed to the learner by the end of round $t$. The mean $\mu_\tau$ is meaningful only in the stochastic-delay regime, while the budget $\Lambda$ applies only in the adversarial-delay regime; throughout the paper, $\mu_\tau$ and $\Lambda$ are used solely in their respective regimes.

\paragraph{Feedback Corruption.}
Before the outcome is revealed to the learner, an adversary may corrupt the true signal $l_t$ to produce a corrupted observation $\gamma_t \in \{0,1\}$. The adversary is constrained by a total corruption budget $C$: $\sum_{t=1}^T |l_t - \gamma_t| \leq C.$ Accordingly, we let $o_t$ be the observed outcome to the learner, which is unknown whether $o_t=l_t$ or not.

\paragraph{Regret Definition.}
The learner's objective is to minimize the cumulative average regret $R_T$. Let $k_t^* = \arg\max_{k \in [K]} \langle \Theta_*, z^*_{t,k} \rangle$ be the optimal arm at round $t$, where $z^*_{t,k} = (x_{t,k}, \phi_*(x_{t,k})) \in \mathbb{R}^d$. The instantaneous regret $r_t$ is defined as the utility gap between the optimal arm and the selected pair~\citep{saha2021optimal}:
\begin{align}
    r_t = \frac{1}{2} \left( \langle \Theta_*, z^*_{t,k_t^*} - z^*_{t,a_t} \rangle + \langle \Theta_*, z^*_{t,k_t^*} - z^*_{t,b_t} \rangle \right).\nonumber
\end{align}
The total cumulative regret is thus $R_T = \sum_{t=1}^T r_t$. The learner must achieve sublinear regret while remaining agnostic to the specific delay regime and the presence of corruption.

\section{\term}

To establish the theoretical guarantees for our proposed algorithm, we introduce the following structural and statistical assumptions. These are standard in the literature on generalized linear bandits and contextual dueling bandits with auxiliary information~\citep{delayedreward, di2024nearly, lst, variance-aware}.

\begin{assumption}
\label{as:kappa}
The link function $g: \mathbb{R} \to [0, 1]$ is continuously differentiable, and there exists a constant $\kappa > 0$ such that its derivative satisfies $\dot{g}(s) \geq \kappa$ for all $s$ in the domain of interest.
\end{assumption}

\begin{assumption}
\label{assum:thetabounded}
The true underlying parameter $\Theta_* \in \mathbb{R}^d$ is contained within a ball of radius $M$, i.e., $\|\Theta_*\|_2 \leq M$ for some known constant $M > 0$.
\end{assumption}

\begin{assumption}[$\mathcal{L}_2$-Consistency and Convergence Rate]
\label{assum:dist_learnability}\label{assum:learnability}
Let $\mathcal{D}$ be a probability distribution over $\mathcal{X} \times \mathcal{Y}$. We assume the existence of an estimation algorithm that, given a dataset $S = \{(x_s, y_s)\}_{s=1}^t$ sampled i.i.d. from $\mathcal{D}$, produces an estimator $\widehat{\phi}_t : \mathcal{X} \to \mathbb{R}^{d_y}$. Furthermore, we assume that there exist constants $C_0 > 0$ and $a \in (0, 1/2]$ such that for any $\delta \in (0, 1)$, the estimator $\widehat{\phi}_t$ satisfies the following uniform error bound with probability at least $1 - \delta$:
\begin{align}
    \sup_{x \in \mathcal{X}} \|\widehat{\phi}_t(x) - \phi_*(x)\|_2 \leq C_0 \frac{\sqrt{d_x}}{t^{a}}\log(t/\delta),\nonumber
\end{align}
where $\phi_*$ is the underlying ground-truth function.
\end{assumption}

\begin{remark}[Spectral and Smoothness Interpretations]
\label{rem:justification}
Assumption \ref{assum:dist_learnability} encapsulates various functional regimes:
\begin{itemize}
    \item Parametric Linear Case: If $\phi_*(x) = \Phi^\top x$, then $a = 1/2$ is achievable via ordinary least squares or ridge regression~\citep{abbasi2011improved}.
    \item Non-parametric H\"{o}lder Continuity: For $\phi_* \in \mathcal{H}_\beta(\mathcal{X})$, the minimax optimal rate is $a = \beta / (2\beta + d_x)$, reflecting the curse of dimensionality~\citep{tsybakov2008nonparametric}.
    \item Reproducing Kernel Hilbert Spaces (RKHS): For functions with bounded norm in an RKHS, $a = 1/2$ can be recovered under standard regularity conditions~\citep{srinivas2012information}.
\end{itemize}
\end{remark}

We assume the boundedness of contexts in the $L_2$ norm for the sake of simplicity:
\begin{assumption}
\label{as:zbound}
The joint feature vectors are bounded within a ball of radius $1/2$, i.e., $\|z_{t,k}\|_2 \leq 1/2$ for all $t \in [T]$ and $k \in [K]$. Consequently, the pairwise feature differences satisfy $\|\Delta z_{t,k_1,k_2}\|_2 \leq 1$, where $\Delta z_{t,k_1,k_2} := z_{t,k_1} - z_{t,k_2}$.
\end{assumption}

\begin{assumption}[Delay Regime]
\label{as:delay}
The delay sequence $\{\tau_t\}_{t=1}^T$ follows exactly one of the two regimes introduced in \Cref{sec:problem_formulation}:
\textup{(i)} \emph{stochastic delays} — the $\tau_t$ are independent $\sigma^2$-sub-Gaussian random variables with mean $\mu_\tau$; or
\textup{(ii)} \emph{adversarial delays} — the $\tau_t$ are chosen by an adaptive adversary subject to $\sum_{t=1}^T \tau_t \leq \Lambda$.
The mean $\mu_\tau$ is used only in regime~\textup{(i)} and the budget $\Lambda$ only in regime~\textup{(ii)}; the algorithm is agnostic to which regime is in effect.
\end{assumption}

We define the following weighted design matrix to facilitate our analysis with weights $\{\omega_s\}_{s=1}^{t}$:
\begin{align}
    \widetilde{V}_t &= \lambda I + \kappa \sum_{s=1}^{t} \omega_s \Delta z_s \Delta z_s^\top\label{eq:widetildeV},\\
    \widetilde{W}_t &= \lambda I + \kappa\sum_{s=1}^{t} \mathbb{I}\{s + \tau_s \leq t\} \, \omega_s \Delta z_s \Delta z_s^\top,\label{eq:widetildeW}
\end{align}
with $\Delta z_s = \Delta z_{s,a_s,b_s}$, where $a_s$ and $b_s$ are chosen arms at round $s$. We estimate the unknown parameter $\Theta_* \in \mathbb{R}^{d_x + d_y}$ by minimizing the weighted regularized negative log-likelihood, which corresponds to the Maximum Likelihood Estimation (MLE) with an $\ell_2$-regularizer:
\begin{equation}
    \mathcal{L}_t(\Theta) = \frac{\lambda}{2} \|\Theta\|_2^2 - \sum_{s \in \mathcal{H}_{t-1}} \omega_s \log g \left( (-1)^{1-o_s} \langle \Theta, \Delta z_s \rangle \right), \nonumber
\end{equation}
where $\Delta z_s = \Delta z_{s,a_s,b_s}$, and $\omega_s > 0$ is the weight for sample $s$. The estimator $\Theta_t$ is obtained as:
\begin{align}
    \Theta_t = \mathop{\mathrm{argmin}}_{\Theta \in \mathbb{R}^{d_x + d_y}} \mathcal{L}_t(\Theta). \label{eq:Theta_t}
\end{align}
Assuming $g$ belongs to a set of exponential family distributions, the following estimating equation is obtained~\cite{lst}:
\begin{align}
    \lambda \Theta_t + \sum_{s \in\mathcal{H}_{t-1}}\omega_s \left( g(\Theta_t^\top \Delta z_s) - o_s \right) \Delta z_s = 0.\nonumber
\end{align}

To simultaneously mitigate the impact of adversarial corruption and stabilize the bias induced by delayed feedback, we propose the following adaptive weighting scheme. Let $\alpha > 0$ serve as a tuning parameter for robustness. We define the weight $\omega_s$ for the $s$-th sample as:
\begin{align}
\label{eq:adaptive_weight}
    \omega_s = \min\left(1, \frac{\alpha}{\|\Delta z_s\|_{\widetilde{V}_{s-1}^{-1}}}\right).
\end{align}
Geometrically, this weighting enforces the soft constraint $\|\sqrt{\omega_s} \Delta z_s\|_{\widetilde{V}_{s-1}^{-1}} \leq \alpha$. It serves as a unified mechanism mitigating both adversarial corruptions and delay-induced bias. Note that while \citet{di2024nearly} utilized this strategy exclusively for 
robustness against corruption, our work establishes it as a unified mechanism 
that simultaneously counteracts both adversarial corruption and delayed 
feedback, with $\alpha = \sqrt{d}/(\mathcal{C} + \mathcal{D})$ tuned to 
achieve sublinear regret.

Furthermore, we let $\widehat{\phi}_t$ be an estimator of $\phi_*$ at round $t$. Then we propose the following strategy \term: at each round $t$,
\begin{align}
    a_t &= \argmax_{k\in[K]}\inner{\Theta_{t-1}, \widehat{z}_{t,k}},\label{eq:firstarm}\\
    b_t &= \argmax_{k\in[K]}\inner{\Theta_{t-1}, \widehat{z}_{t,k}}+c_t||\Delta \widehat{z}_{t,k,a_t}||_{\widetilde{V}_{t-1}^{-1}}.\label{eq:secondarm}
\end{align}
where $\widehat z_{t,k} = (x_{t,k}, \widehat{y}_{t,k})$ with $\widehat{y}_{t,k}=\widehat{\phi}_t(x_{t,k})$ and $\Delta z_{t,k,k'} := \widehat{z}_{t,k} - \widehat{z}_{t,k'}$ for $k,k'\in[K].$ Note that a learner cannot observe the post-serving contexts, so the learner estimates $\widehat{y}_{t,k}$ of $y_{t,k}$ by utilizing $\widehat{\phi}_t.$ The value $c_t$ will be specified later when analyzing the regret analysis. The full procedure is summarized in Algorithm \ref{alg:main}.

\begin{algorithm}[h!]
   \caption{\term}
   \label{alg:main}
\begin{algorithmic}
   \STATE {\bfseries Input:} Horizon $T$, dimension $d=d_x+d_y$, regularization $\lambda$, robustness parameter $\alpha$, design matrix $\widetilde{V}_0 = \lambda I$, $\widetilde{W}_0 = \lambda I$, estimator $\Theta_0 = 0$.
   \STATE Initialize replay buffer $\mathcal{D} = \emptyset$, neural approximator $\hat{\phi}$.
   \FOR{$t=1$ {\bfseries to} $T$}
        \STATE 1. Context Generation \& Prediction:
        \STATE Receive pre-serving contexts $\mathcal{X}_t = \{x_{t,k}\}_{k=1}^K$.
        \STATE Form estimated features: $\widehat{z}_{t,k} \leftarrow (x_{t,k}, \widehat{y}_{t,k})$ where $\widehat{y}_{t,k} \leftarrow \hat{\phi}_{t-1}(x_{t,k})$ for all $k\in[K]$.
        \STATE 2. Arm Selection \& Execution:
        \STATE Select pair $(a_t, b_t)$ based on $\widehat{z}_{t,k}$ using \Cref{eq:firstarm,eq:secondarm}.
        \STATE Plays pair $(a_t, b_t)$ and observe post-serving contexts $y_{t,a_t}, y_{t,b_t}$.
        \STATE Add $(x_{t,a_t}, y_{t,a_t})$ and $(x_{t,b_t}, y_{t,b_t})$ to $\mathcal{D}$.
        \STATE 3. Feedback Processing (Delayed Outcomes):
        \STATE Update history $\mathcal{H}_t \leftarrow \{s \in [t] : s + \tau_s \le t\}$.
        \STATE Calculate weight $\omega_t \leftarrow \min(1, \alpha / \|\Delta z_t\|_{\widetilde{V}_{t-1}^{-1}})$.
        \STATE Update $\widetilde{V}_{t} \leftarrow \widetilde{V}_{t-1} + \kappa \omega_t \Delta z_t \Delta z_t^\top$.
        \FOR{$s \in \mathcal{H}_t \setminus \mathcal{H}_{t-1}$ (newly arrived outcomes)}
            \STATE Observe outcome $o_s$.
            \STATE Update $\widetilde{W}_{s} \leftarrow \widetilde{W}_{s-1} + \kappa \omega_s \Delta z_s \Delta z_s^\top$.
        \ENDFOR
        \STATE 4. Model Updates:
        \STATE Update $\hat{\phi}_t$ by training on $\mathcal{D}$.
        \STATE Update $\Theta_t$ by minimizing weighted loss $\mathcal{L}_t$ using available outcomes.
   \ENDFOR
\end{algorithmic}
\end{algorithm}

Our algorithm, \term, employs $\widetilde{V}_t$ for both arm selection and robust weight calculation, rather than the observed matrix $\widetilde{W}_t$. This approach enforces optimism in the face of scheduled uncertainty, effectively preventing redundant exploration in delayed settings and ensuring the selection strategy is consistent with the robustness mechanism. While exploration and weighting leverage this optimistic $\widetilde{V}_t$, parameter estimation ($\Theta_t$) remains strictly grounded in realized feedback via $\widetilde{W}_t$, ensuring statistical validity.

\section{Regret Analysis}
A fundamental theoretical bottleneck in this regime arises from the multiplicative interplay between feedback latency and adversarial corruption, a challenge further exacerbated by post-serving contexts. Conventional analytical frameworks typically decompose the weighted norm associated with the observed Gram matrix $\widetilde{W}_t$ as follows
    $\|\Delta z_s\|_{\widetilde{W}_t^{-1}} \lesssim \|\Delta z_s\|_{\widetilde{V}_t^{-1}} + \|\Delta z_s\|_{\widetilde{M}_t^{-1}},$
where $\widetilde{M}_t$ represents the adjustment matrix accounting for delayed observations. Following \citet{howson2023delayed}, the delay adjustment satisfies $\|\Delta z_s\|_{\widetilde{M}_t^{-1}} \leq C(\tau_t) \|\Delta z_s\|_{\widetilde{V}_t^{-1}}^2$, where $C(\tau_t)$ scales with the cumulative delay. This interaction causes the corruption-induced bias to become multiplicatively coupled with the delay factor, yielding an unfavorable $\mathcal{O}(\mathcal{C}\mathcal{D})$ term on the right-hand side of the regret bound.

Accordingly, we exploit a critical information asymmetry: while outcomes are subject to stochastic latency, the contexts (covariates) are revealed instantaneously upon action selection. Leveraging this, we propose anchoring our weighting mechanism to the full information geometry $\widetilde{V}_t$---constructed from the entire chronological sequence of contexts---rather than the partial, delay-dependent geometry $\widetilde{W}_t$. This methodological shift enables a strictly tighter regret decomposition through the following mechanisms:
\begin{enumerate}
    \item Statistical Leverage Stability: By evaluating the statistical leverage scores of each sample relative to the global information manifold $\widetilde{V}_{s-1}$, we ensure that the importance weights defined in \Cref{eq:adaptive_weight} are determined a priori---at the moment of action selection. This design renders the estimation process invariant to adversarial manipulations of outcome arrival times, as the weighting mechanism is entirely decoupled from the delay sequence $\{\tau_t\}_{t=1}^T$.
    \item Structural Norm Decoupling: The core utility of the $\widetilde{V}_{t-1}^{-1}$-norm is its ability to facilitate an additive error structure, allowing corruption and delay-induced errors to be bounded independently. This is achieved by partitioning the historical samples $\{1, \ldots, t-1\}$ into three disjoint categories based on their observational status:
    \begin{equation}
        [t-1] = \underbrace{\mathcal{A}_t \cap \mathcal{E}^c}_{\text{arrived \& clean}} \;\sqcup\; \underbrace{\mathcal{A}_t \cap \mathcal{E}}_{\text{arrived \& corrupted}} \;\sqcup\; \underbrace{\mathcal{A}_t^c}_{\text{pending feedback}},\nonumber
    \end{equation}
    where $\mathcal{A}_t = \{s : s + \tau_s < t\}$ denotes the set of rounds whose feedback has arrived, and $\mathcal{E} = \{s : c_s = 1\}$ denotes the set of corrupted rounds.
\end{enumerate}
By the approach above, we obtain the following concentration inequality estimation:

\begin{lemma}[Estimation Error Bound]
\label[lemma]{lemma:concentration}
Suppose that \cref{as:kappa}--\cref{as:zbound} are satisfied. Let $\Theta_*$ denote the true parameter and $\Theta_t$ be the estimator satisfying \Cref{eq:Theta_t}. Assume the link function satisfies $\dot{g}(x) \ge \kappa > 0$. Then, for any $\delta \in (0, 1)$, the following inequality holds with probability at least $1-\delta$:
\begin{equation}
    \|\Theta_{t-1}-\Theta_*\|_{\widetilde{W}_{t-1}}\leq\|\Theta_{t-1}-\Theta_*\|_{\widetilde{V}_{t-1}} \le \beta_t,
\end{equation}
where the confidence radius $\beta_t$ is defined as
\begin{align}
    \beta_t &= \frac{1}{2} \sqrt{d \log\left( \frac{1 + t  / (d\lambda)}{\delta} \right)} + \sqrt{\lambda} M \nonumber \\&\phantom{{}={}} + \alpha \mathcal{C} + \alpha\mathcal{D}\nonumber
\end{align}
with $\mathcal{D}=\max(\Lambda^{1/2},\mu_\tau).$
\end{lemma}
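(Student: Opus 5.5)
We follow the standard analysis of weighted MLE for generalized linear dueling bandits, as in \citet{lst} and the corruption-robust version of \citet{di2024nearly}. We then add one extra error term for pending (undelivered) samples and bound it through the clipping weights.

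\textbf{Step 1: linearization.} Define $G_{t-1}(\Theta)=\lambda\Theta+\sum_{s\in\mathcal{H}_{t-2}}\omega_s g(\langle\Theta,\Delta z_s\rangle)\Delta z_s$. By the estimating equation, $G_{t-1}(\Theta_{t-1})=\sum_{s\in\mathcal{H}_{t-2}}\omega_s o_s\Delta z_s$. The mean value theorem together with \cref{as:kappa} gives
\[
G_{t-1}(\Theta_{t-1})-G_{t-1}(\Theta_*)=F(\Theta_{t-1}-\Theta_*), \qquad F\succeq \widetilde{W}_{t-1}.
\]
Hence $\|\Theta_{t-1}-\Theta_*\|_{\widetilde{W}_{t-1}}\le\|G_{t-1}(\Theta_{t-1})-G_{t-1}(\Theta_*)\|_{\widetilde{W}_{t-1}^{-1}}$. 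Writing $o_s=g(\langle\Theta_*,\Delta z_s\rangle)+\eta_s+(\gamma_s-l_s)$, the right-hand side splits into three parts: a noise martingale $\sum_s\omega_s\eta_s\Delta z_s$, a regularization bias $\lambda\Theta_*$ contributing at most $\sqrt\lambda M$, and a corruption sum.

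\textbf{Step 2: passing from $\widetilde W$ to $\widetilde V$.} The claimed ordering $\|\cdot\|_{\widetilde W}\le\|\cdot\|_{\widetilde V}$ holds because $\widetilde W_{t-1}\preceq\widetilde V_{t-1}$, since $\widetilde V$ contains every term of $\widetilde W$ plus the pending ones. To control the $\widetilde V$-norm itself, we add and subtract the pending samples $s\in\mathcal{A}_t^c$. This gives an equation of the form $\widetilde V_{t-1}(\Theta_{t-1}-\Theta_*)\approx(\text{noise})+(\text{corruption})+(\text{pending})$, where the pending term is $\kappa\sum_{s\notin\mathcal{H}_{t-2}}\omega_s\Delta z_s\Delta z_s^\top(\Theta_{t-1}-\Theta_*)$ up to curvature constants. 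Note that $\widetilde V_{t-1}$ is built from the full chronological sequence of contexts, including those whose feedback has not yet arrived.

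\textbf{Step 3: bounding the three terms in $\widetilde V_{t-1}^{-1}$-norm.}
\begin{itemize}
\item \emph{Noise.} The self-normalized bound of \citet{abbasi2011improved} applies with weights $\omega_s\le1$, $1/2$-sub-Gaussian noise, and \cref{as:zbound}. It yields $\tfrac12\sqrt{d\log((1+t/(d\lambda))/\delta)}$. Since each $\omega_s$ is $\mathcal{F}_{s-1}$-measurable, being fixed at action time independently of delays, the martingale structure survives arbitrary arrival orders.
\item \emph{Corruption.} We use $|\gamma_s-l_s|\le1$ and $\|\omega_s\Delta z_s\|_{\widetilde V_{t-1}^{-1}}\le\omega_s\|\Delta z_s\|_{\widetilde V_{s-1}^{-1}}\le\alpha$, which follows from \Cref{eq:adaptive_weight} and monotonicity. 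The triangle inequality then gives at most $\alpha\mathcal{C}$.
\item \emph{Pending.} Each pending sample contributes a vector of $\widetilde V^{-1}$-norm at most $\alpha$ times a bounded scalar (at most $1$ after normalizing by $\kappa$). So the term is at most $\alpha\,|\mathcal{A}_t^c|$.
\end{itemize}

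\textbf{Step 4: counting pending rounds (main obstacle).} The remaining task is $|\{s<t: s+\tau_s\ge t\}|\lesssim\mathcal{D}$.
\begin{itemize}
\item \emph{Adversarial regime.} A pending round $s$ has $\tau_s\ge t-s$. With $n$ pending rounds, the delays are at least $1,2,\dots,n$, so $n(n+1)/2\le\Lambda$ and $n\le\sqrt{2\Lambda}$.
\item \emph{Stochastic regime.} We have $\sum_{s<t}\mathbb{P}(\tau_s\ge t-s)\le\mu_\tau+O(\sigma\sqrt{\log})$ by sub-Gaussian tails. A Freedman or Bernstein bound on the indicator sum, plus a union bound over $t$, then gives $O(\mu_\tau+\sigma\sqrt{\log(T/\delta)})$.
\end{itemize}
Absorbing logarithms and $\sigma$ into the $\widetilde{\mathcal O}$ convention gives $\alpha\mathcal{D}$.

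The delicate point is Step 2. Because $g$ is nonlinear, the pending correction involves $\Theta_{t-1}-\Theta_*$ itself. We will need either the clipping $\|\sqrt{\omega_s}\Delta z_s\|_{\widetilde V_{s-1}^{-1}}\le\alpha$ to make this a contraction that can be absorbed into the left-hand side, or a crude bound using $\|\Theta\|\le M$-type boundedness with constants folded into $\alpha\mathcal D$. We would first try the absorption argument, and fall back on the crude bound if it does not close.
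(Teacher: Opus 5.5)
Your architecture matches the paper's. You linearize a full-history map so the Jacobian dominates $\widetilde V_{t-1}$, split into noise, corruption, regularization and pending parts, charge every corrupted or pending round at most $\alpha$ via $\omega_s\|\Delta z_s\|_{\widetilde V_{s-1}^{-1}}\le\alpha$, and count pending rounds ($\sqrt{2\Lambda}$ in the adversarial case). The gap is exactly where you flag it.

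\textbf{The gap.} In your linearized form $\kappa\sum_{s\notin\mathcal H_{t-2}}\omega_s\Delta z_s\Delta z_s^\top(\Theta_{t-1}-\Theta_*)$, each pending $\Delta z_s$ carries the coefficient $\langle\Delta z_s,\Theta_{t-1}-\Theta_*\rangle$, up to curvature. Your Step 3 claim that this is ``at most $1$'' is unsupported, and neither fallback supplies it.
\begin{itemize}
\item \emph{Absorption.} The best available bound on the contraction factor is $L\sum_{s\notin\mathcal H_{t-2}}\omega_s\|\Delta z_s\|^2_{\widetilde V_{t-1}^{-1}}\le L\,D_t\,\alpha/\sqrt{\lambda}$, where $L=\sup\dot g$ is an upper bound the assumptions do not even provide. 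With $\alpha=\sqrt d/(\mathcal C+\mathcal D)$ and $D_t$ of order $\mathcal D$, this is of order $L\sqrt{d/\lambda}$, which is not below $1$ in general.
\item \emph{The constraint you would invoke is false.} The bound $\|\sqrt{\omega_s}\Delta z_s\|_{\widetilde V_{s-1}^{-1}}\le\alpha$ fails on clipped rounds, where the left side equals $\sqrt{\alpha\|\Delta z_s\|_{\widetilde V_{s-1}^{-1}}}>\alpha$. Only $\omega_s\|\Delta z_s\|_{\widetilde V_{s-1}^{-1}}\le\alpha$ holds.
\item \emph{The $M$-type bound.} $\Theta_{t-1}$ is an unconstrained regularized MLE with no projection, so nothing keeps it in the $M$-ball. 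The first-order condition only gives $\|\Theta_{t-1}\|_2\le (t-1)/\lambda$.
\end{itemize}

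\textbf{The missing observation.} With curvature included, the pending coefficient is $\dot g(\xi_s)\langle\Delta z_s,\Theta_{t-1}-\Theta_*\rangle=g(\langle\Theta_{t-1},\Delta z_s\rangle)-g(\langle\Theta_*,\Delta z_s\rangle)$. This lies in $[-1,1]$ simply because $g$ maps into $[0,1]$, so you should not linearize the pending part at all. Apply the mean value theorem to $\bar G(\Theta)=\lambda\Theta+\sum_{s=1}^{t-1}\omega_s g(\langle\Theta,\Delta z_s\rangle)\Delta z_s$, whose Jacobian dominates $\widetilde V_{t-1}$. Then use the first-order condition, which involves only arrived rounds, to write
\begin{align*}
\bar G(\Theta_{t-1})-\bar G(\Theta_*) &= \sum_{s\in\mathcal H_{t-2}}\omega_s\bigl(o_s-g(\langle\Theta_*,\Delta z_s\rangle)\bigr)\Delta z_s-\lambda\Theta_*\\
&\quad+\sum_{s\in[t-1]\setminus\mathcal H_{t-2}}\omega_s\bigl(g(\langle\Theta_{t-1},\Delta z_s\rangle)-g(\langle\Theta_*,\Delta z_s\rangle)\bigr)\Delta z_s .
\end{align*}
The last sum has $\widetilde V_{t-1}^{-1}$-norm at most $\alpha$ times the number of pending rounds, whatever $\Theta_{t-1}$ is. No absorption and no bound on $\|\Theta_{t-1}\|$ are needed. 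This is how the paper handles its terms $S_{t,2}$ and $S_{t,3}$.

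\textbf{A smaller point.} The noise sum now runs only over arrived rounds, and the arrival indicators change with $t$, so ``the martingale structure survives arbitrary arrival orders'' needs an argument. The paper completes it to the full martingale $\sum_{s<t}\omega_s\eta_s\Delta z_s$, to which the weighted self-normalized bound applies directly. It then charges the pending noise another $\alpha D_t$ using $|\eta_s|\le 1$.
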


The lemma above says that
\begin{equation}
    \text{(Estimation Error)} \leq \underbrace{\mathcal{O}(\sqrt{d})}_{\text{statistical}} + \underbrace{\mathcal{O}\left(\mathcal{C}\alpha\right)}_{\text{corruption}} + \underbrace{\mathcal{O}\left(\mathcal{D}\alpha\right)}_{\text{delay}}.\nonumber
\end{equation}

Using \Cref{lemma:concentration}, we obtain the following regret upper bounds.
\begin{theorem}[Regret Upper Bound]
\label{theorem:upper}
Suppose the conditions of \Cref{lemma:concentration} hold. By setting the exploration width $c_t = 2\beta_t$ and the parameter
\begin{align}
    \alpha = \frac{\sqrt{d}}{\mathcal{C}+\mathcal{D}},\label{eq:alpha}
\end{align}
the cumulative regret $R_T$ satisfies
\begin{equation}
    R_T = \widetilde{O}\left(A_T + B_T \right),\nonumber
\end{equation} omitting sub-leading terms additionally to highlight the dominant scaling,
where
$ A_T = d\left(\sqrt{T} + \mathcal{C} +\mathcal{D}\right)$,
$ B_T = T^{1-a} d_x^a(1+\sqrt{d}).\nonumber
$
\end{theorem}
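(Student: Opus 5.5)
We would follow the standard optimism-based argument for robust linear dueling bandits (as in RCDB of \citet{di2024nearly}), taking \Cref{lemma:concentration} as given and treating the post-serving prediction error as an additive misspecification term.

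\textbf{Step 1: per-round regret.} For the true features $z^*_{t,k}$ and predicted features $\widehat z_{t,k}$, write $e_{t,k} = z^*_{t,k}-\widehat z_{t,k}$. By \Cref{assum:learnability}, $\|e_{t,k}\|_2 \le \varepsilon_t := C_0\sqrt{d_x}\,t^{-a}\log(t/\delta)$ for all $k$. Using the choice of $a_t$ in \Cref{eq:firstarm}, $b_t$ in \Cref{eq:secondarm}, $c_t=2\beta_t$, and $\|\Theta_{t-1}-\Theta_*\|_{\widetilde V_{t-1}}\le\beta_t$, we then replicate the RCDB argument. That argument bounds $\langle\Theta_*,\widehat z_{k^*}-\widehat z_{a_t}\rangle$ and $\langle\Theta_*,\widehat z_{k^*}-\widehat z_{b_t}\rangle$ by $O(\beta_t)\,\|\Delta\widehat z_{t,b_t,a_t}\|_{\widetilde V_{t-1}^{-1}}$, because $b_t$ maximizes the optimistic index and $k^*$ is a competitor. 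Converting from $\widehat z$ back to $z^*$ costs $\langle\Theta_*,e\rangle\le 2M\varepsilon_t$ per arm. In addition, the mismatch between the selection feature $\Delta\widehat z_t$ and the regression feature $\Delta z_t$ in $\widetilde V_{t-1}$ changes the width by at most $\|e\|_{\widetilde V_{t-1}^{-1}}\le \varepsilon_t/\sqrt\lambda$, multiplied by $\beta_t$. The result is
\[
r_t \lesssim \min\!\big(1,\beta_t\|\Delta z_t\|_{\widetilde V_{t-1}^{-1}}\big) + (M+\beta_t/\sqrt\lambda)\,\varepsilon_t .
\]

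\textbf{Step 2: sum the width term using the weights.} Split the rounds according to whether $\omega_t=1$ or $\omega_t<1$.
\begin{itemize}
\item On rounds with $\omega_t=1$, $\sqrt{\omega_t}\Delta z_t=\Delta z_t$. Cauchy--Schwarz and the elliptical potential lemma for $\widetilde V_t$ (which includes every context immediately, so delays do not enter) give $\beta_T\sqrt{T d\log(1+T/(d\lambda))}/\sqrt\kappa$.
\item On rounds with $\omega_t<1$, we have $\|\Delta z_t\|_{\widetilde V_{t-1}^{-1}} = \omega_t\|\Delta z_t\|^2_{\widetilde V_{t-1}^{-1}}/\alpha$. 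Since $\omega_t\|\Delta z_t\|_{\widetilde V_{t-1}^{-1}}^2 \le \alpha^2 \lesssim 1$ after choosing $\lambda$, the potential lemma bounds the sum by $\beta_T d\log(\cdot)/(\kappa\alpha)$.
\end{itemize}
With $\alpha=\sqrt d/(\mathcal C+\mathcal D)$, \Cref{lemma:concentration} gives $\beta_T=\widetilde O(\sqrt d + \sqrt d)=\widetilde O(\sqrt d)$, because $\alpha(\mathcal C+\mathcal D)=\sqrt d$. The first piece is therefore $\widetilde O(d\sqrt T)$, and the second is $\widetilde O(\sqrt d\cdot d(\mathcal C+\mathcal D)/\sqrt d)=\widetilde O(d(\mathcal C+\mathcal D))$. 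Together these give $A_T$.

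\textbf{Step 3: sum the misspecification term.} We have $\sum_t \varepsilon_t \lesssim \sqrt{d_x}\,T^{1-a}\log(T/\delta)/(1-a)$. Multiplying by $(M+\beta_T/\sqrt\lambda)=\widetilde O(1+\sqrt d)$ yields $B_T$ up to logarithms and constants; using $d_x^{1/2}\le d_x^a$-type rescaling, this is absorbed into the stated $d_x^a$ form by treating constants loosely, as the theorem does. A union bound over the events of \Cref{lemma:concentration} and \Cref{assum:learnability} across rounds (with $\delta/T$) finishes the proof.

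\textbf{Main obstacle.} The delicate step is Step 1's feature mismatch. The algorithm explores with $\Delta\widehat z$ built from $\widehat\phi_{t-1}$, while $\widetilde V$ and the potential lemma are built from the realized $\Delta z_t=(x,y)$, which includes the noise $\epsilon_{t,k}$. The regret, however, is measured on $z^*$. I would handle this by bounding $\|\Delta\widehat z_t-\Delta z_t\|_{\widetilde V_{t-1}^{-1}}\le(2\varepsilon_t+\|\epsilon_{t,a_t}-\epsilon_{t,b_t}\|)/\sqrt\lambda$ and charging it to $B_T$. Two difficulties remain. First, the zero-mean noise part requires either a boundedness assumption (\Cref{as:zbound}) or a martingale argument, since $\Theta_*$ is fixed and $\langle\Theta_*,\epsilon\rangle$ has mean zero. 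Second, this route implicitly uses a robustified elliptical potential lemma in the spirit of \citet{post-serving}, which is the step I expect to require the most care.
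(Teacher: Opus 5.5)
Your proposal has a genuine gap in Step 1. There you replace the selection width $\|\Delta\widehat z_t\|_{\widetilde V_{t-1}^{-1}}$ by the width of the \emph{realized} feature $\Delta z_t$, at a claimed cost of $\varepsilon_t/\sqrt\lambda$. That cost only controls $\Delta\widehat z_t-\Delta z^*_t$. The difference you actually incur is $\Delta\widehat z_t-\Delta z_t=(\Delta\widehat z_t-\Delta z^*_t)-(0,\epsilon_{t,a_t}-\epsilon_{t,b_t})$, and the post-serving noise in it does not decay in $t$. Your remedy in the last paragraph bounds this noise by $\|\epsilon_{t,a_t}-\epsilon_{t,b_t}\|_2/\sqrt\lambda$ and charges $\beta_t$ times it to $B_T$. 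That contributes order $\beta_T T=\widetilde O(\sqrt d\,T)$, i.e.\ linear regret. A martingale argument cannot fix this: the term is a nonnegative norm multiplied by $\beta_t$, not the mean-zero functional $\langle\Theta_*,\epsilon\rangle$. So the per-round bound $r_t\lesssim\min(1,\beta_t\|\Delta z_t\|_{\widetilde V_{t-1}^{-1}})+(M+\beta_t/\sqrt\lambda)\varepsilon_t$ is not established. The identity you exploit on clipped rounds, $\|\Delta z_t\|_{\widetilde V_{t-1}^{-1}}=\omega_t\|\Delta z_t\|^2_{\widetilde V_{t-1}^{-1}}/\alpha$, lets Step 2 use only the standard potential lemma. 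But it is available only because of this unjustified switch to the realized feature.

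The missing idea, which is how the paper argues, is to anchor at the noise-free feature instead. Your optimism argument does give $2r_t\le 3\beta_t\|\Delta\widehat z_t\|_{\widetilde V_{t-1}^{-1}}+4MC_0\sqrt{d_x}\,t^{-a}\log(t/\delta)$. Then write $\|\Delta\widehat z_t\|_{\widetilde V_{t-1}^{-1}}\le\|\Delta z^*_t\|_{\widetilde V_{t-1}^{-1}}+\lambda^{-1/2}\|\Delta\widehat z_t-\Delta z^*_t\|_2$, so only a genuinely decaying term goes into $B_T$. The price is that $\widetilde V_{t-1}$ is built from the noisy $\Delta z_s$ while the potential is evaluated at $\Delta z^*_t$. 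That is exactly the setting of the generalized elliptical potential lemma of \citet{post-serving} (\Cref{lemma:generalized_elliptic_potential_lemma}); with $p=1/2$ it handles the unclipped rounds, giving $\widetilde O(\sqrt{dT})$. On clipped rounds $\omega_t$ is defined through $\Delta z_t$, not $\Delta z^*_t$, so instead of your identity one uses $\|\Delta z^*_t\|=\omega_t\|\Delta z^*_t\|\,\|\Delta z_t\|/\alpha\le(\omega_t\|\Delta z^*_t\|^2+\omega_t\|\Delta z_t\|^2)/\alpha$, with all norms in $\widetilde V_{t-1}^{-1}$. The realized part is bounded by the standard potential lemma and the latent part by the generalized lemma with $p=1$. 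The paper bounds both by $\widetilde O(d)$, and multiplying by $\beta_T/\alpha$ gives $d(\mathcal C+\mathcal D)$. You name the robust potential lemma as a concern, but your Step 2 never uses it, and it is precisely what absorbs the noise you cannot afford to pay per round.

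Two smaller slips. First, on clipped rounds $\omega_t\|\Delta z_t\|^2_{\widetilde V_{t-1}^{-1}}=\alpha\|\Delta z_t\|_{\widetilde V_{t-1}^{-1}}>\alpha^2$, so your inequality is reversed; what holds, and suffices, is the bound by $\alpha/\sqrt\lambda$. Second, $d_x^{1/2}\le d_x^a$ fails for $a<1/2$, so $\sqrt{d_x}\,T^{1-a}$ is not absorbed into $d_x^aT^{1-a}$. The paper's own computation also ends with a $\sqrt{d_x}$ factor, so that mismatch lies with the stated form of $B_T$.
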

\begin{proof}[Proof Sketch]
The proof relies on a novel information-geometric decoupling that effectively separates estimation uncertainty from the regret decomposition. We distinguish between three feature variants: the estimated features $\Delta \widehat{z}_t$ used for selection, the observed noisy features $\Delta z_t$ used for weighting, and the \textit{noise-free latent ground-truth} $\Delta z^*_t$ (where $z^*_{t,k} \coloneqq (x_{t,k}, \phi_*(x_{t,k}))$).

\textbf{Step 1: Robust Confidence Bounds via Bias-Variance Trade-off.}
By invoking \Cref{lemma:concentration}, we establish that the estimation error is bounded by $\beta_t = \widetilde{\mathcal{O}}(\sqrt{d} + (\mathcal{C} + \mathcal{D})\alpha)$. Here, the term $(\mathcal{C} + \mathcal{D})\alpha$ quantifies the bias induced by adversarial corruption and invisible delayed feedback. We select the clipping threshold $\alpha = \sqrt{d}/(\mathcal{C} + \mathcal{D})$ to balance this bias against the variance, thereby yielding a dimension-dependent confidence radius of $\beta_t = \widetilde{\mathcal{O}}(\sqrt{d})$.

\textbf{Step 2: Regret Decomposition and Approximation Control.}
Under the principle of optimism with exploration width $c_t = 2\beta_t$, the instantaneous regret is dominated by the uncertainty of the \textit{selected} estimator: $r_t \lesssim \beta_t \|\Delta \widehat{z}_t\|_{\widetilde{V}_{t-1}^{-1}}$. By applying the triangle inequality and the learnability assumption (\Cref{assum:learnability}), we decompose this into the latent truth and the approximation error:
\begin{align}
    \|\Delta \widehat{z}_t\|_{\widetilde{V}_{t-1}^{-1}} \leq \|\Delta z^*_t\|_{\widetilde{V}_{t-1}^{-1}} + \|\Delta \widehat{z}_t - \Delta z^*_t\|_{\widetilde{V}_{t-1}^{-1}}. \nonumber
\end{align}
The approximation error term accumulates to a sublinear order of $\widetilde{\mathcal{O}}(T^{1-a})$, reducing the problem to bounding the cumulative elliptic potential of the latent features $\sum_{t=1}^T \|\Delta z^*_t\|_{\widetilde{V}_{t-1}^{-1}}$.

\textbf{Step 3: Weighted Analysis.}
Partitioning based on weights $\omega_t$ (derived from observed $\Delta z_t$), the unclipped regime sums to $\widetilde{\mathcal{O}}(\sqrt{dT})$. In the clipped regime ($\|\Delta z_t\|_{\widetilde{V}_{t-1}^{-1}} > \alpha$), using $\|\Delta z^*_t\| \approx \|\Delta z_t\|$, the contribution scales as $\alpha^{-1} \sum \omega_t \|\Delta z_t\|^2_{\widetilde{V}_{t-1}^{-1}} = \widetilde{\mathcal{O}}(\alpha^{-1}d)$. Substituting $\alpha$ recovers $\widetilde{\mathcal{O}}(d(\mathcal{C} + \mathcal{D}))$.
\end{proof}

Furthermore, we establish lower bounds by establishing worst case instances, respectively.
\begin{theorem}[Minimax Lower Bound]
\label{thm:lowerbound}
Assume the absence of post-serving contexts. For any dimension $d$, corruption budget $\mathcal{C}$, horizon $T$, adversarial delay budget $\Lambda$, and mean stochastic delay $\mu_\tau$ satisfying the conditions $T \geq \max\{ \frac{4d^2}{25}, d\mathcal{C} \}$, $\Lambda \leq T^2$, and $\mu_{\tau} < T$, and for any algorithm $\mathcal{A}$, there exists an instance $\mathcal{I}$ such that the expected regret is lower bounded by:
\begin{equation}
    \mathbb{E}[R_T(\mathcal{A}; \mathcal{I})] \geq \Omega\left(\frac{d\sqrt{T} + d\mathcal{C} + \mathcal{D}'}{\kappa}\right), \nonumber
\end{equation} omitting sub-leading terms additionally to highlight the dominant scaling,
where $\mathcal{D}' \coloneqq \max\{\sqrt{d\Lambda}, d\mu_\tau\}$.
\end{theorem}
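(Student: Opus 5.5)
The lower bound is a sum of three terms, so I will prove three separate lower bounds, each of the form $\Omega(\cdot/\kappa)$, and combine them. Since $\max\{x,y,z\}\ge (x+y+z)/3$, it suffices to show that for every algorithm there is an instance forcing each term separately. All instances drop the post-serving context ($d_y=0$, so $z=x$) and use a link function with $\dot g\ge\kappa$ near the origin. The natural choice is a linear link $g(s)=1/2+\kappa s$ on the relevant range, so that preference probabilities differ from $1/2$ by $\kappa$ times the utility gap. This is where the $1/\kappa$ factor enters: with gaps of size $\Delta/\kappa$, the KL divergence between two instances is $O(\Delta^2)$ per informative comparison, while the regret per mistake is $\Delta/\kappa$.

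\textbf{Statistical term $d\sqrt T/\kappa$.} I would reduce to the known linear dueling bandit lower bound of \citet{saha2021optimal}, or rederive it with a hypercube construction. Take $\Theta\in\{\pm\epsilon\}^d$ and arms as vectors in $\{\pm 1/(2\sqrt d)\}^d$ (rescaled to satisfy \Cref{as:zbound}). Then use the standard per-coordinate Assouad argument: flipping coordinate $i$ changes the outcome distribution of each comparison by KL $O(\kappa^2\epsilon^2/d)$, and each wrong coordinate costs regret of order $\epsilon/\sqrt d$ per round. Choosing $\epsilon\asymp \sqrt{d/T}/\kappa$ yields $\Omega(d\sqrt T/\kappa)$. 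The condition $T\ge 4d^2/25$ keeps $\epsilon$ within the ball $\|\Theta\|\le M$ and keeps $g$ inside $[0,1]$.

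\textbf{Corruption term $d\mathcal C/\kappa$.} Following \citet{he2022nearly,di2024nearly}, I would use a masking adversary. Split the horizon into $d$ blocks, and let block $i$ present arms that differ only in coordinate $i$. The adversary spends about $\mathcal C/d$ corruptions in each block to replace outcomes with fair coin flips, so the two hypotheses on the sign of $\Theta_i$ are indistinguishable for the first $\Theta(\mathcal C/d)\cdot(1/\text{gap})$ rounds. With a constant gap, each block then yields $\Omega(\mathcal C/d)$ rounds of constant regret. Getting the overall $d\mathcal C$ scaling requires the simpler standard argument instead: the adversary corrupts the first $\mathcal C$ rounds to mimic the alternative instance, which costs $\Omega(\mathcal C)$ per dimension-block. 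The condition $T\ge d\mathcal C$ guarantees there is room for $d$ blocks of length $\mathcal C$. I expect this calibration, namely matching the per-block corruption count to the feature scaling so that the factor $d$ (and not $\sqrt d$) appears, to be the main obstacle. I would first try blocks of length $\mathcal C$ with unit-norm arms $e_i/2$, where every corrupted outcome is a full coin-flip replacement.

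\textbf{Delay term $\mathcal D'/\kappa$.} For stochastic delays, I would take deterministic delays $\tau_t\equiv\mu_\tau$, which are trivially sub-Gaussian. For the first $\mu_\tau$ rounds no feedback arrives, so the learner's choices are independent of $\Theta$. Repeating this in $d$ orthogonal blocks, with arms revealed per coordinate, forces $\Omega(d\mu_\tau)$ constant-regret rounds; $\mu_\tau<T$ ensures these rounds exist. For adversarial delays with budget $\Lambda$, the adversary delays each of the first $m$ rounds until time $m$, costing about $m^2/2$. Setting $m\asymp\sqrt\Lambda$ (valid since $\Lambda\le T^2$) gives $m$ rounds without feedback. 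Splitting across $d$ coordinates, each block has budget $\Lambda/d$, giving $\sqrt{\Lambda/d}$ blind rounds per block and total $d\sqrt{\Lambda/d}=\sqrt{d\Lambda}$. During blind rounds a uniformly random sign of $\Theta_i$ forces expected regret $\Omega(1/\kappa)$ per round. Finally, I would sum the three bounds using the max-versus-sum inequality.
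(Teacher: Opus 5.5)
Your architecture matches the paper's: three separate hard instances combined through $\max\{x,y,z\}\ge (x+y+z)/3$, the linear link $1/2+\kappa s$, and blind phases created by deterministic delays $\tau_t\equiv\mu_\tau$ or by a triangular adversarial schedule. The genuine gap is in the corruption term.

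The budget $\sum_t|l_t-\gamma_t|\le\mathcal C$ is global. In your sequential design the learner knows which coordinate is live in each block, so every round in which it compares $+e_i/2$ with $-e_i/2$ is informative. Hiding the sign of $\Theta_{*,i}$ in such a round costs about $\kappa\Delta$ expected flips, while the round costs the learner only $\Delta/2$ regret. So $\mathcal C$ flips can hide only about $\mathcal C/\kappa$ regret, however they are split across blocks. Your own first computation ($\mathcal C/d$ masked rounds per block) already shows this. The proposed repair, coin-flip replacement in the first $\mathcal C$ rounds of each of $d$ blocks, spends about $d\mathcal C$ corruptions in total and violates the budget. No recalibration of per-block counts or of the feature scaling changes this, because the exchange rate of about $1/\kappa$ regret per flip is scale-free.

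The missing idea is that the factor $d$ must come from the adversary paying on only about a $2/d$ fraction of rounds. That requires hiding a needle among $d$ simultaneous alternatives, not treating coordinates one at a time. This is the construction behind \citet[Theorem 5.4]{di2024nearly}, which the paper simply cites for the combined $d\sqrt T+d\mathcal C$ term. Take $K=d$ arms $z_{t,k}=e_k/2$ and $\Theta_*=\Delta e_{i^*}$ with $i^*$ uniform and $\Delta\asymp 1/\kappa$. The adversary replaces by a fair coin the outcome of every comparison involving arm $i^*$, until the budget is spent. All other comparisons are fair coins anyway, so while the budget lasts the learner's observations and choices are independent of $i^*$. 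Averaged over $i^*$, at most a $2/d$ fraction of rounds then involve $i^*$. The budget therefore lasts $\Omega(d\mathcal C)$ rounds, which is where $T\ge d\mathcal C$ is used, each with regret $\Omega(1/\kappa)$.

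For the delay terms your route genuinely differs from the paper's. The paper uses a single blind phase, of length $\Theta(\sqrt\Lambda)$ resp.\ $\mu_\tau$. It obtains the extra $\sqrt d$ resp.\ $d$ from per-round regret that grows with $d$: unit-ball actions with $\Theta_*\in\{\pm1/8\}^d$, resp.\ hypercube actions $\{-1,1\}^d$ with $\Theta_*\in\{\pm1/4\}^d$. You instead use $d$ fresh blind phases on independent coordinates, each with bounded per-round regret. This respects $\|z_{t,k}\|_2\le 1/2$ and $\|\Theta_*\|_2\le M$, which the paper's instances violate for large $d$. The price is that you need $\sqrt{d\Lambda}\lesssim T$ and $d\mu_\tau\lesssim T$. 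Your justifications ``valid since $\Lambda\le T^2$'' and ``$\mu_\tau<T$'' only cover a single block. Under those bounds the per-round regret is at most $M$, so no admissible instance can force regret beyond $MT$. The full stated ranges are reachable only by letting the per-round regret grow with $d$, as the paper does.
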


\begin{remark}[Order-wise Optimality with $a=1/2$]
By setting $a=1/2$, the regret upper bound in \Cref{theorem:upper} can be simplified to:
$R_T = \widetilde{\mathcal{O}}\left( d\left(\sqrt{T} + \mathcal{C} + \max(\Lambda^{1/2}, \mu_\tau)\right) \right).$
Neglecting the multiplicative factor of $\sqrt{d}$ for adversarial delays, this result of $R_T$ for $a=1/2$ is remarkably consistent with the lower bounds established in \Cref{thm:lowerbound}. Both the upper and lower bounds exhibit a matching dependence on the primary parameters: the square-root of the time horizon $\sqrt{T}$, the linear corruption budget $C$, and the respective delay budgets $\sqrt{\Lambda}$ and $\mu_\tau$. This alignment confirms the fundamental additive costs incurred by adversarial corruption and delayed feedback.
\end{remark}

 The rigorous justifications for \Cref{lemma:concentration} (\Cref{sec:proof_concentration}), \Cref{theorem:upper} (\Cref{sec:proof_upper}), and \Cref{thm:lowerbound} (\Cref{sec:proof_lower}) are provided to the supplementary material.

\section{Experimental Results}
\label{sec:exp}

\begin{figure}[t!]
    \centering
    \begin{subfigure}{0.45\columnwidth}
        \includegraphics[width=\linewidth]{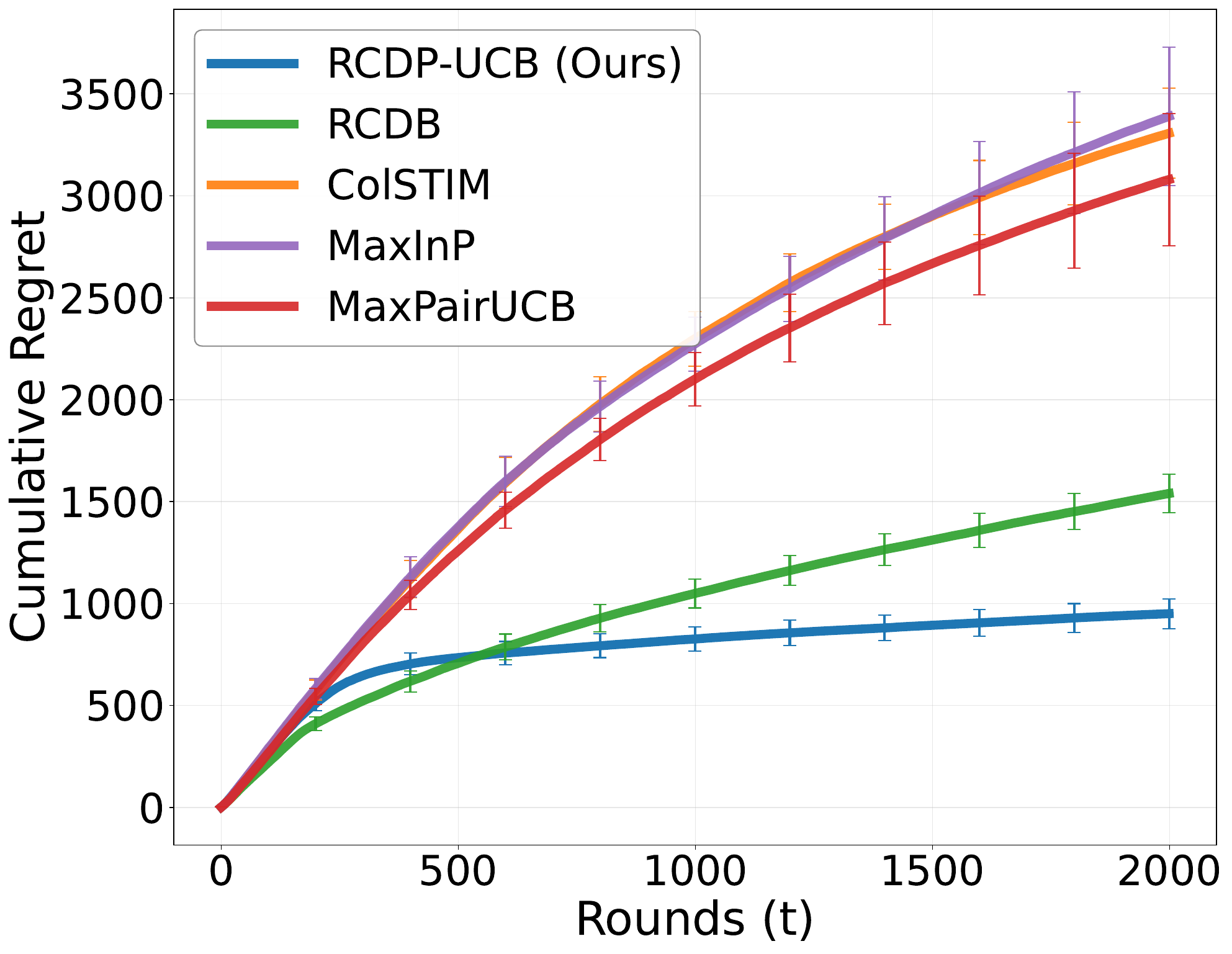}
        \caption{Strategic}
    \end{subfigure}
    \begin{subfigure}{0.45\columnwidth}
        \includegraphics[width=\linewidth]{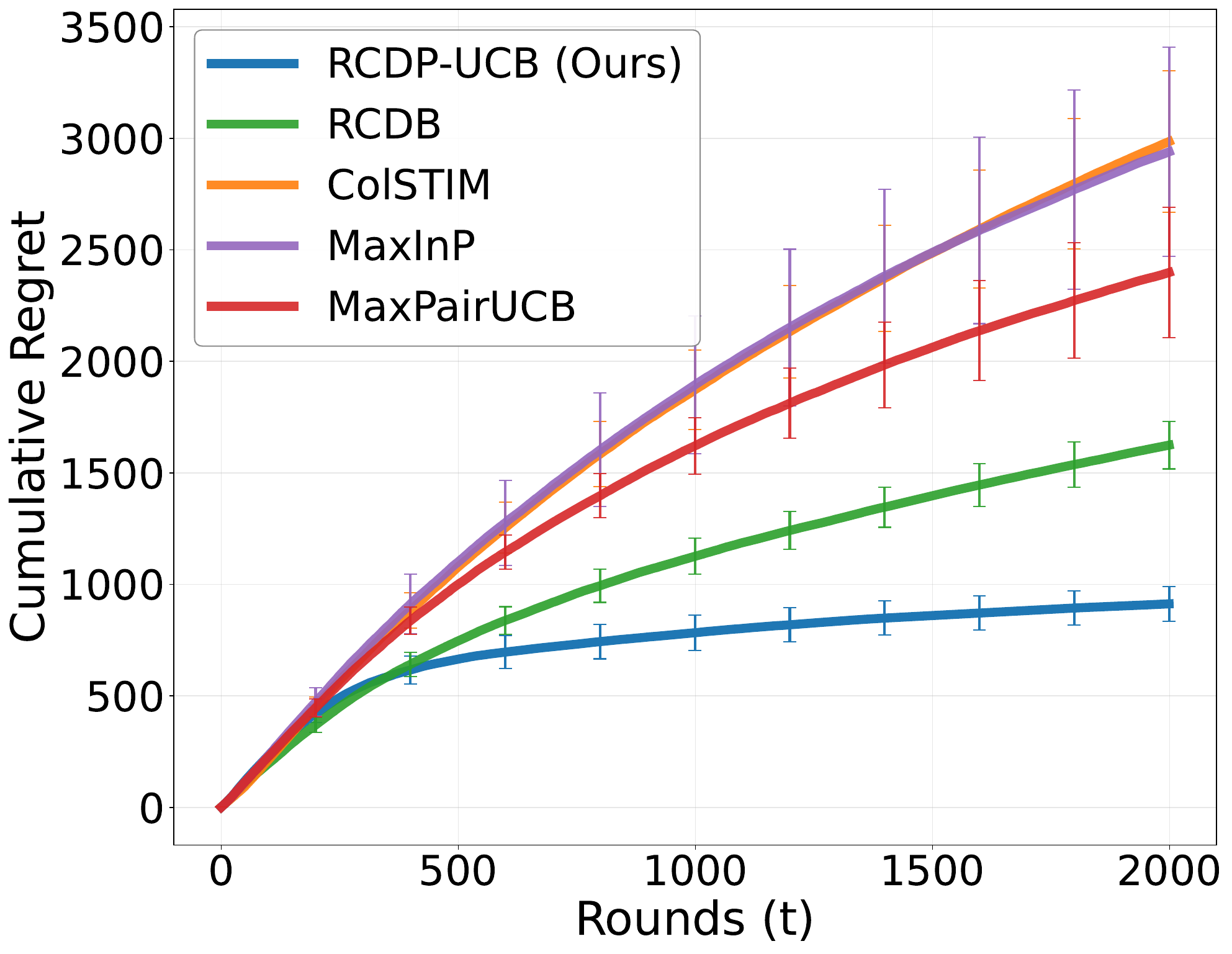}
        \caption{Stochastic}
    \end{subfigure}
    \caption{Performance comparison without post-serving contexts, i.e., the linear setting ($\mathcal{C}=25$, $\Lambda=10^4$, $\mu_\tau = 100, \sigma = 10$; 10 runs). \term consistently outperforms RCDB, demonstrating the robustness of our unified weighting mechanism.}
    \label{fig:ablation} 
\end{figure}

\begin{figure}[ht]
    \centering
    \begin{subfigure}{0.45\columnwidth}
        \includegraphics[width=\linewidth]{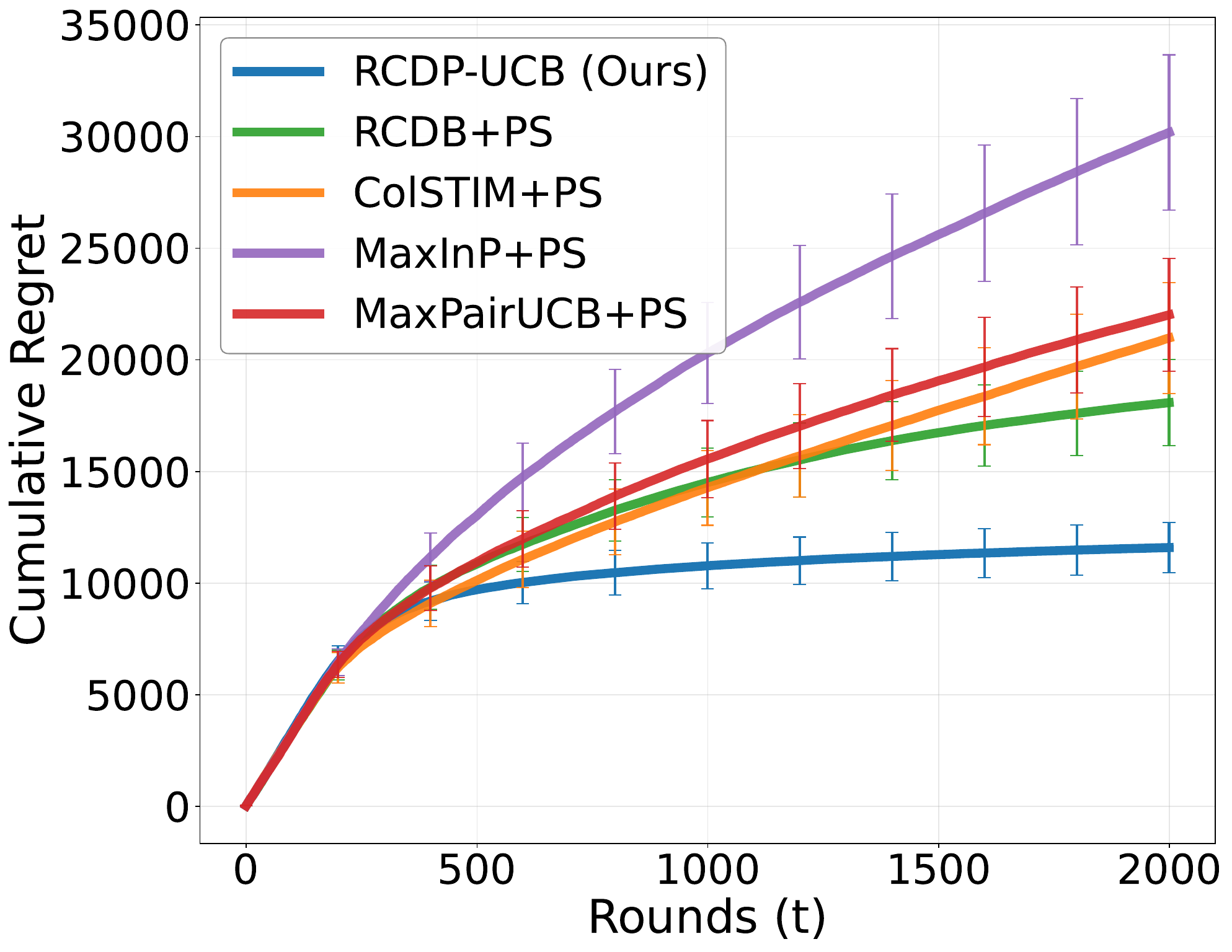}
        \caption{Strategic: Sinusoidal}
    \end{subfigure}
    \begin{subfigure}{0.45\columnwidth}
        \includegraphics[width=\linewidth]{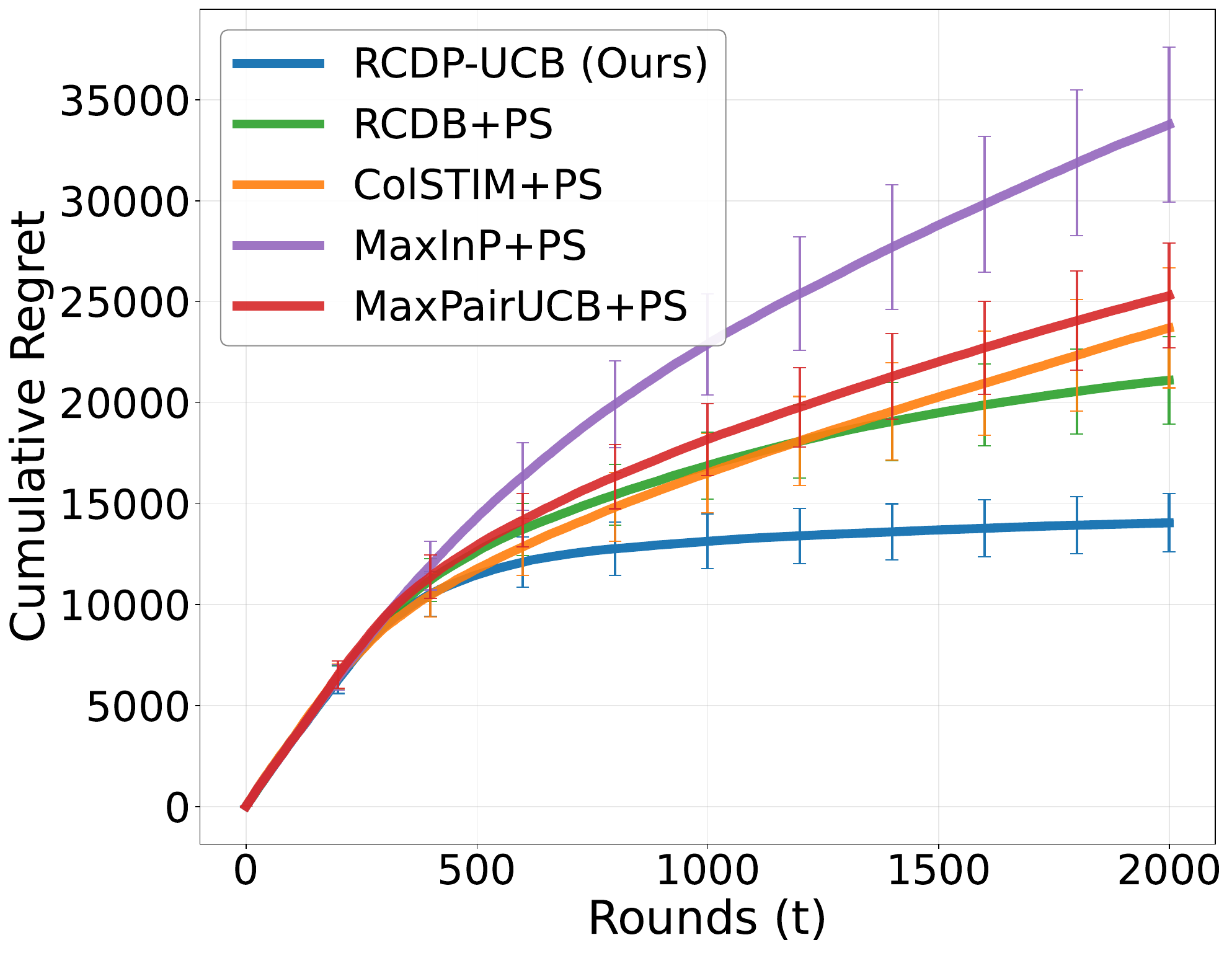}
        \caption{Stochastic: Sinusoidal}
    \end{subfigure}
    \begin{subfigure}{0.45\columnwidth}
        \includegraphics[width=\linewidth]{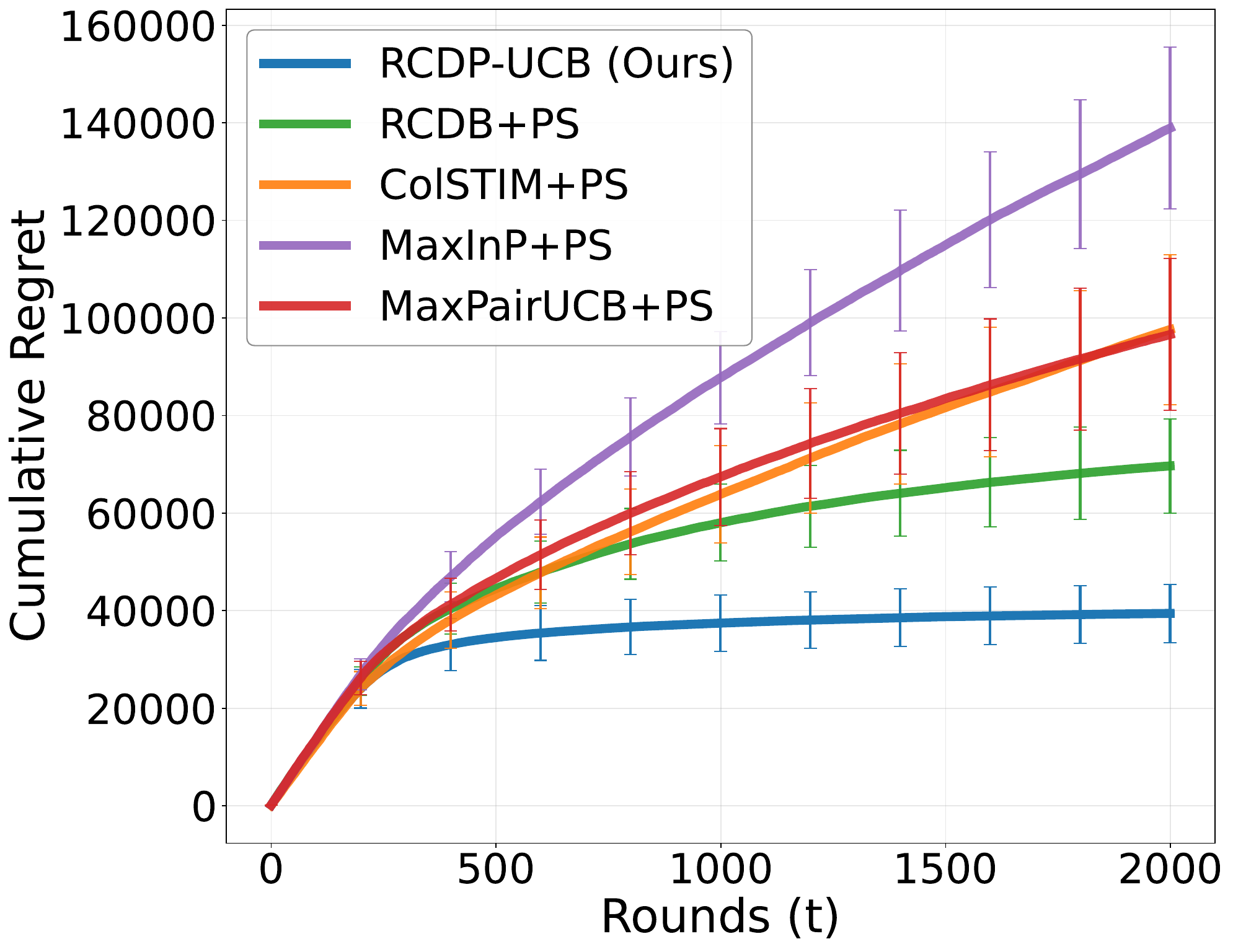}
        \caption{Strategic: Polynomial}
    \end{subfigure}
    \begin{subfigure}{0.45\columnwidth}
        \includegraphics[width=\linewidth]{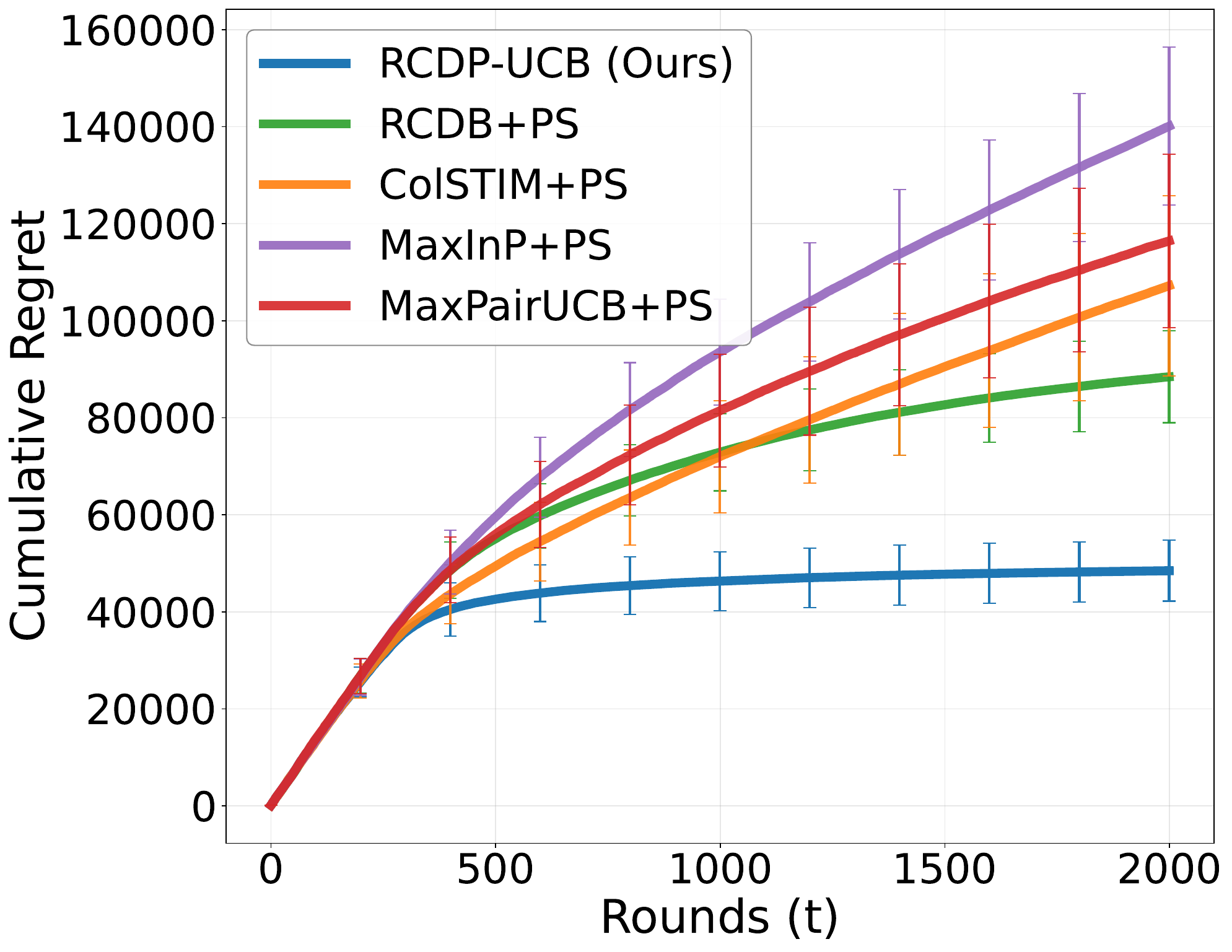}
        \caption{Stochastic: Polynomial}
    \end{subfigure}
    \begin{subfigure}{0.45\columnwidth}
        \includegraphics[width=\linewidth]{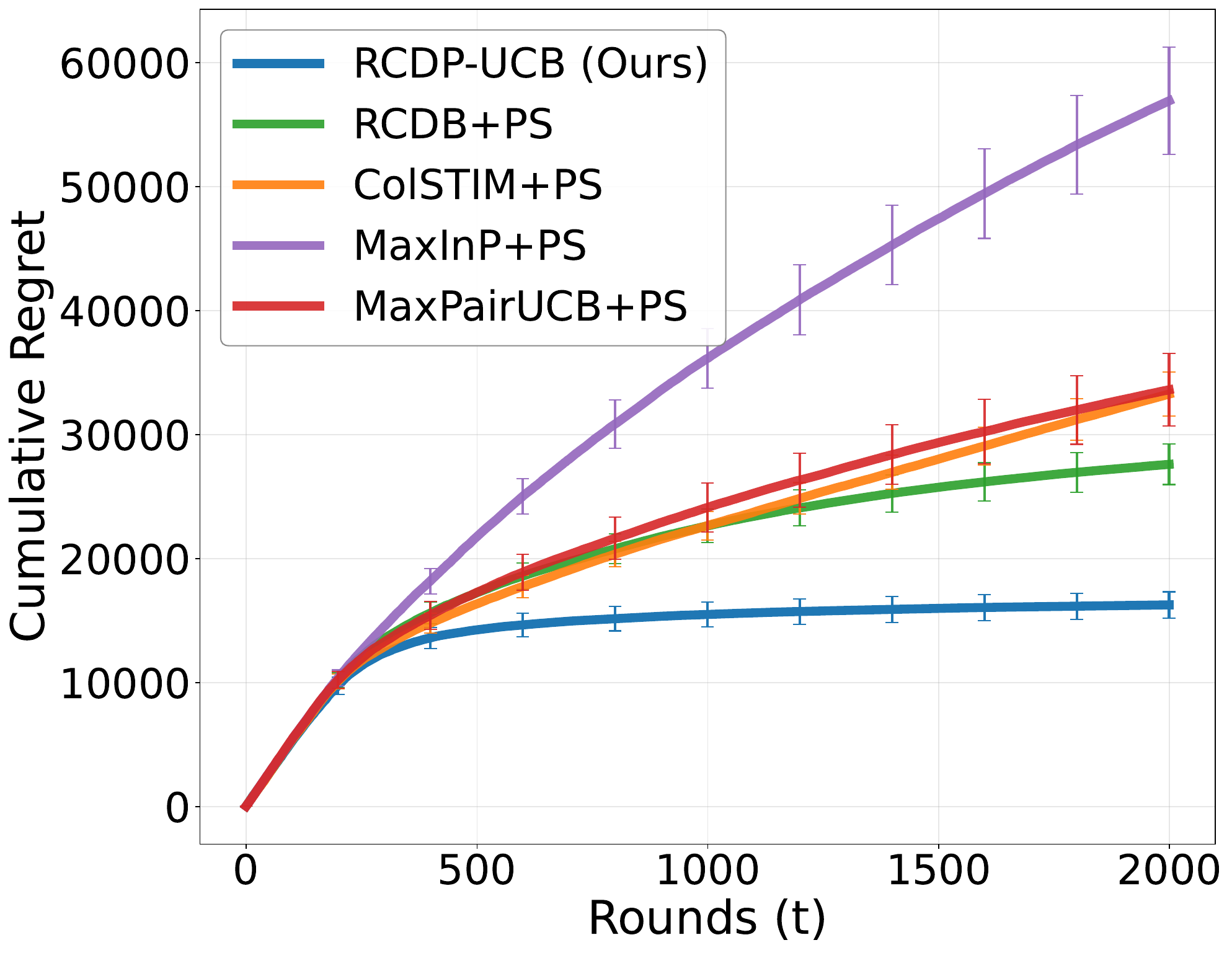}
        \caption{Strategic: Abs}
    \end{subfigure}
    \begin{subfigure}{0.45\columnwidth}
        \includegraphics[width=\linewidth]{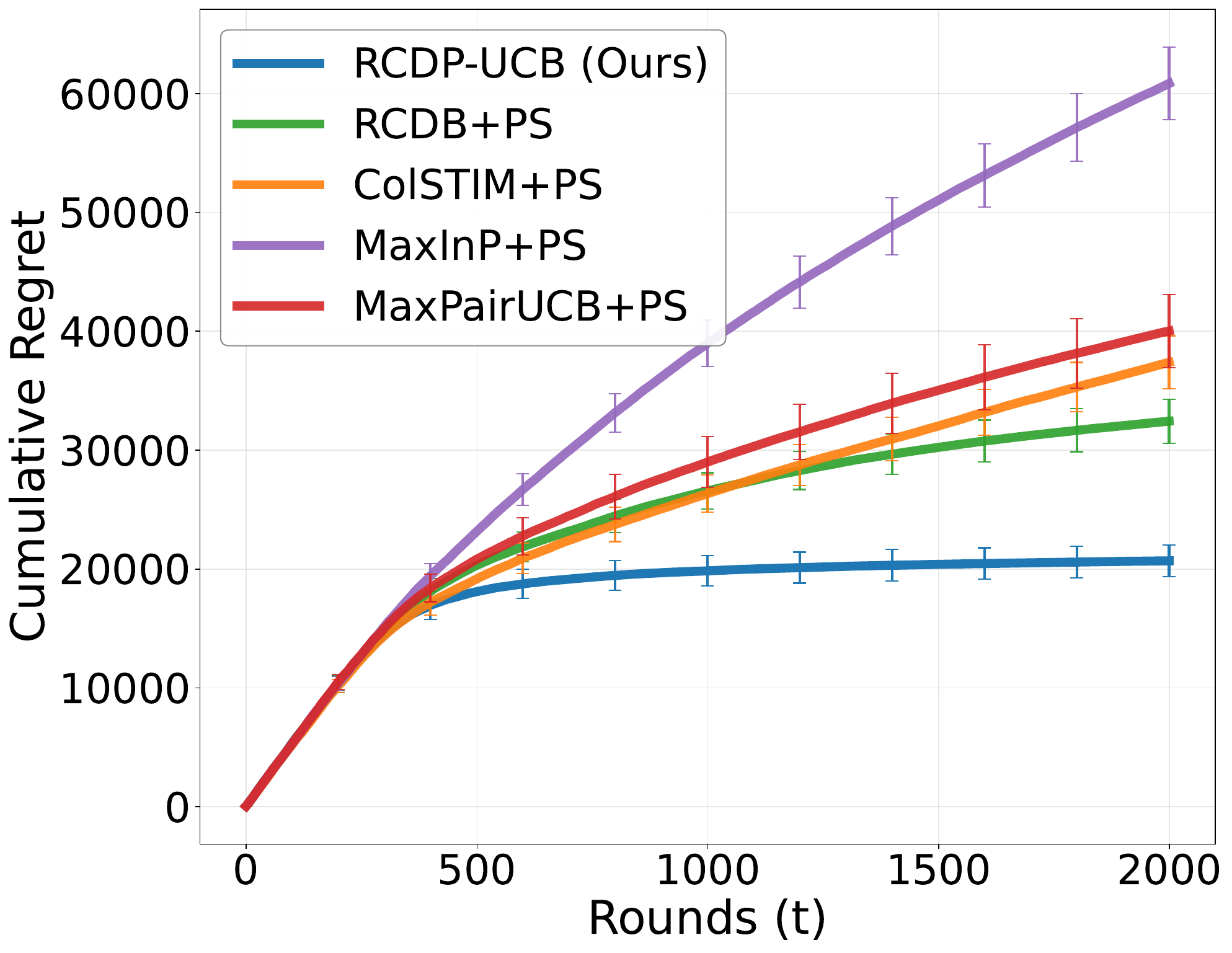}
        \caption{Stochastic: Abs}
    \end{subfigure}
    \caption{Cumulative regret with learned post-serving contexts ($\mathcal{C}=25$, $\Lambda=10^4$, $\mathcal{N}(100, 10^2)$). Averaged over 10 runs, \term consistently outperforms baselines, demonstrating superior robustness in latent environments.}
    \label{fig:robustness_results}
\end{figure}

\begin{figure}[ht]
    \centering
    \begin{subfigure}{0.45\columnwidth}
        \includegraphics[width=\linewidth]{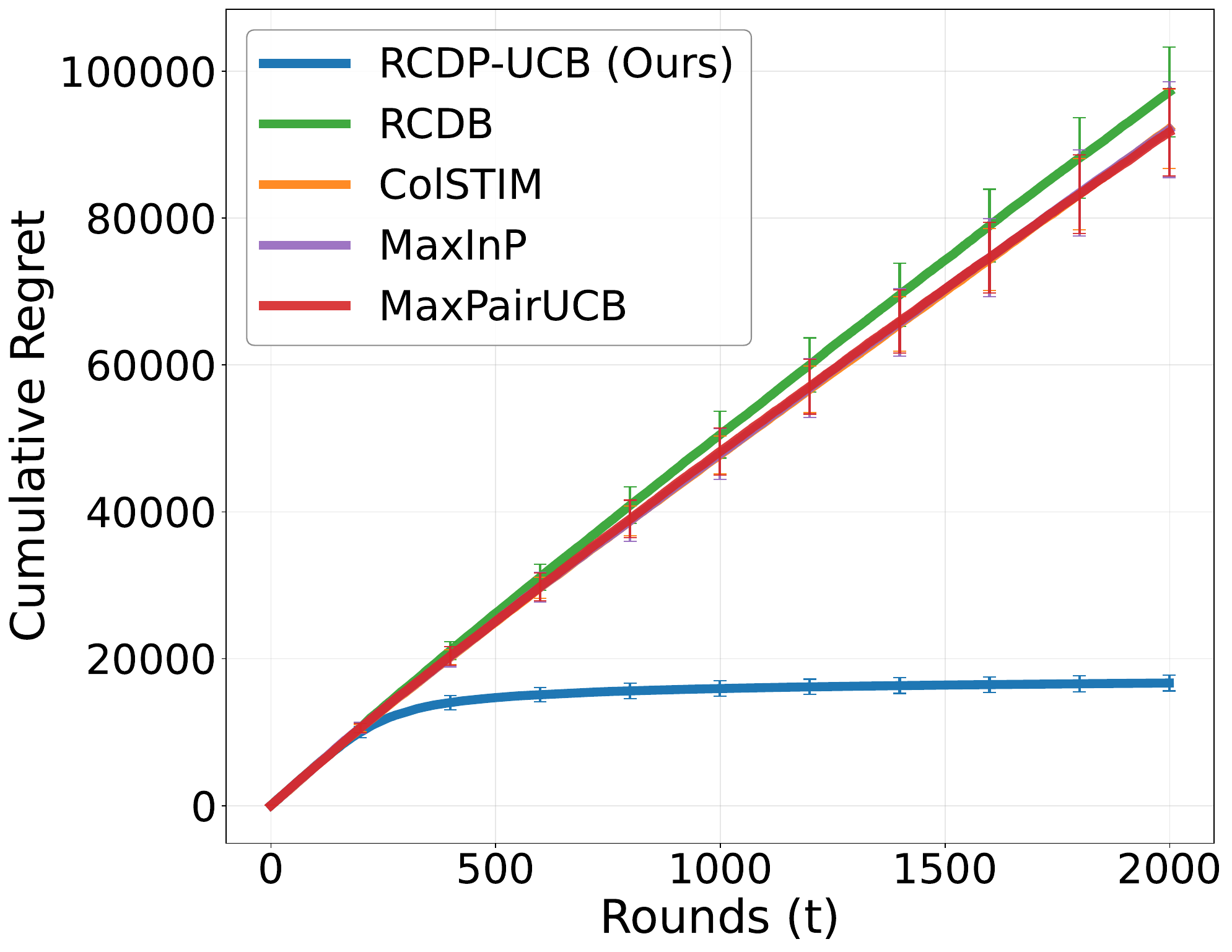}
        \caption{Stochastic: Abs}
    \end{subfigure}
    \begin{subfigure}{0.45\columnwidth}
        \includegraphics[width=\linewidth]{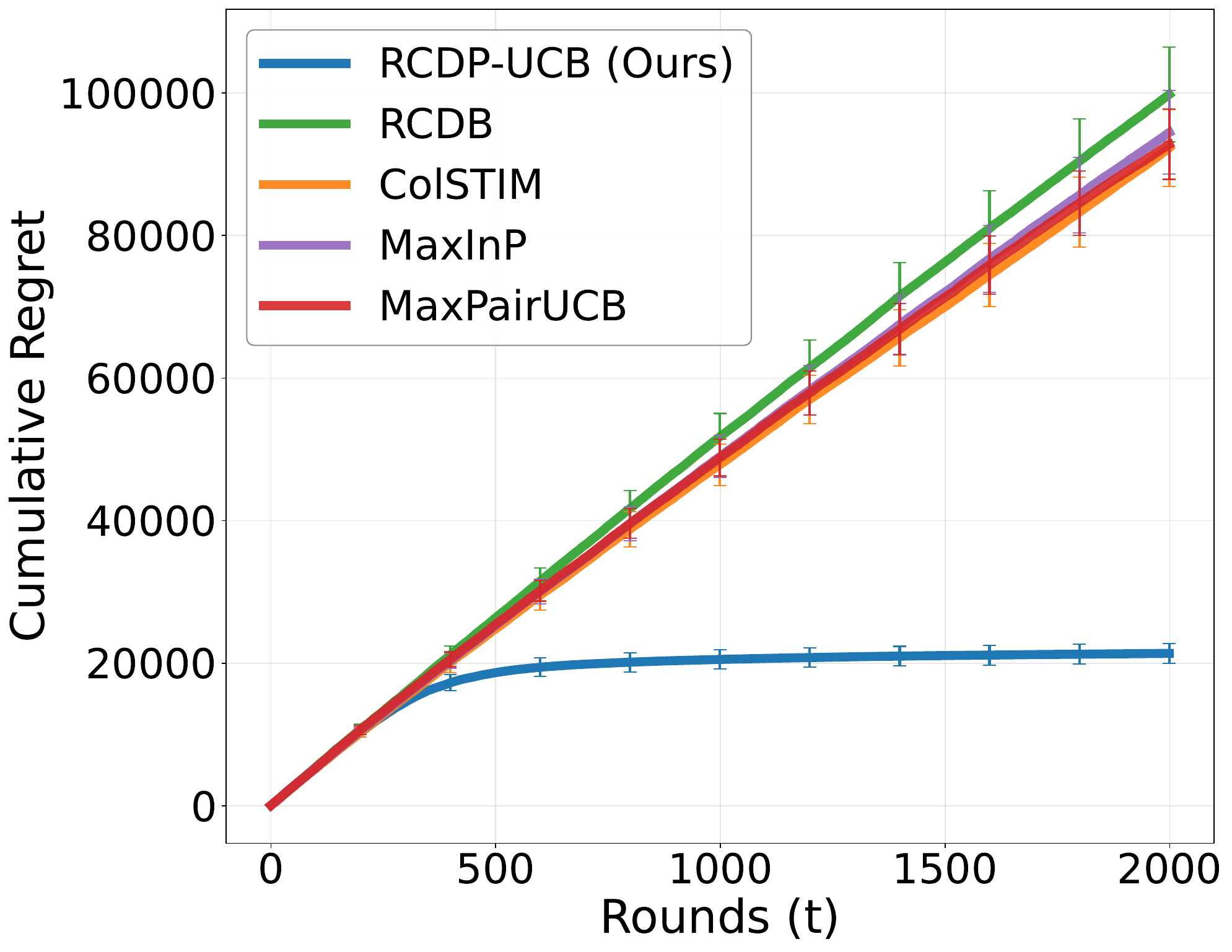}
        \caption{Strategic: Abs}
    \end{subfigure}
    \caption{Cumulative regret where only \term exploits the learned mapping $\phi_*$, while baselines are restricted to pre-serving contexts ($\mathcal{C}=25$, $\Lambda=10^4$, $\mathcal{N}(100, 10^2)$). Averaged over 10 runs, \term demonstrates superior performance relative to the baselines.}
    \label{fig:robustness_results2}
\end{figure}

We evaluate \term{} across synthetic environments simulating the interplay of latent post-serving dynamics, feedback corruption, and observation delays.

\subsection{Experimental Setup}
\label{sec:exp_setup}

We consider a contextual dueling bandit problem with $d_x = 10$ and $K = 10$ arms. Pre-serving contexts are sampled uniformly from $[-\pi, \pi]^{d_x}$. We employ three non-linear mappings $\phi_*: \mathbb{R}^{d_x} \to \mathbb{R}^{d_y}$: (i) Polynomial: $y_{t,k} \propto [x_{t,k}^2, \sqrt{|x_{t,k}|}]^\top$; (ii) Sinusoidal: $y_{t,k} \propto [\cos(x_{t,k}), \sin(x_{t,k})]^\top$; and (iii) Absolute: $y_{t,k} = |x_{t,k}|$. 

The unknown mapping $\phi_*$ is approximated using a Multi-Layer Perceptron (MLP) $\hat{\phi}_t$ comprising two hidden layers (64 units each) with ReLU activations. The network is updated for 2 epochs per round using the Adam optimizer with a learning rate of $10^{-3}$, utilizing an accumulated replay buffer of observed $(x, y)$ pairs. We fix the regularization parameter $\lambda=1.0$. We use the BTL model~\cite{btmodel,btmodel2} as the link function $g$. 
Detailed hyperparameter configurations are provided in \Cref{sec:exp_details}.

We investigate two delay regimes: (i) Stochastic Delay ($\tau_t \sim \mathcal{N}(\mu_\tau, \sigma^2)$) and (ii) Strategic Delay (adversarial starvation attack). We evaluate robustness under a prioritized interference protocol where the adversary favors strategic outcome corruption over delay. Subject to budget $\mathcal{C}$, corruption yields immediate falsified feedback, whereas delays affect only uncorrupted outcomes. This prioritization models a strategic adversary aiming to maximize the disruptive efficacy of the attack under the premise that direct outcome falsification impedes convergence more severely.

\subsection{Baselines}
We benchmark \term{} against state-of-the-art dueling bandit algorithms. To ensure a fair comparison, all baselines are provided with the observable pre-serving contexts $\mathcal{X}_t$. The baselines include:
(i) \textbf{RCDB}~\citep{di2024nearly}, a robust method utilizing weighted MLE;
(ii) \textbf{ColSTIM}~\citep{lst}, a randomized soft-elimination strategy;
(iii) \textbf{MaxInP}~\citep{saha2021optimal}, an approach based on maximizing information gain;
(iv) \textbf{MaxPairUCB}~\citep{variance-aware}, a strategy comparing the two arms with the highest upper confidence bounds. In this section, we fix the aggregate error budget at $\mathcal{C}+\mathcal{D} = 125$, with a corruption budget of $\mathcal{C}=25$. Accordingly, for both \term and RCDB, the robustness parameter $\alpha$ is pre-calibrated to be theoretically.

\subsection{Empirical Analysis}

\paragraph{Efficacy of Adaptive Weighting.}
\Cref{fig:ablation} isolates the impact of our weighting strategy by comparing \term{} with baselines in a setting without post-serving contexts (i.e., a standard linear dueling problem). \term{} generalizes the RCDB framework by incorporating a unified budget $\mathcal{C}+\mathcal{D}$ into the weight calculation (\Cref{eq:adaptive_weight}). \term{} consistently outperforms baselines across all delay settings. This confirms that down-weighting observations by \Cref{eq:adaptive_weight} effectively mitigates instability from both corruption and delay.

\paragraph{Robustness in Latent Post-Serving Environments.}
\Cref{fig:robustness_results} presents the cumulative regret in environments with non-linear post-serving mappings. Here, all algorithms (including baselines) are augmented to learn $\phi_*$. \term{} achieves the lowest cumulative regret across all tasks (Absolute, Polynomial, Sinusoidal) independently of delay regimes. This demonstrates that our algorithm successfully leverages predicted post-serving contexts while neutralizing the adverse effects of delayed and corrupted feedback.

\paragraph{Impact of Post-Serving Context Awareness.}
\Cref{fig:robustness_results2} further investigates the scenario where only \term{} learns and utilizes the post-serving mapping $\phi_*$, while baselines rely solely on pre-serving contexts. As expected, \term{} typically outperforms the baselines. This highlights the critical advantage of explicitly modeling and utilizing latent post-serving dynamics, provided the learning algorithm is robust enough to handle the associated corruptions and delays.

Further experimental results, including sensitivity analyses regarding context dimensionality, arm set size, and error budgets, alongside evaluations on real-world datasets, are detailed in \Cref{sec:app_exp} of the supplementary material.

\section{Conclusion and Discussion}
\label{sec:conclusion}

In this work, we introduced \term, a unified framework for linear contextual dueling bandits designed to robustly handle latent post-serving contexts, adversarial corruptions, and unknown feedback delays. By anchoring our analysis to the full information design matrix $\widetilde{V}_{t-1}$, we developed an adaptive weighting mechanism operating on a unified error budget $\mathcal{C}+\mathcal{D}$. This design enables the algorithm to determine importance weights \textit{a priori}, effectively decoupling parameter estimation from the specific realization of delay sequences. Theoretically, \term achieves sublinear regret while remaining agnostic to the underlying delay regime; our bounds match information-theoretic lower bounds up to logarithmic factors in the stochastic setting and remain near-optimal within a $\sqrt{d}$ factor in the adversarial delay regime.

\paragraph{Limitations.}
A critical direction for future research is extending this framework to handle delayed post-serving contexts. Additionally, closing the $\mathcal{O}(\sqrt{d})$ gap in the adversarial delay lower bound  remains an open problem.
Additionally, establishing a tight information-theoretic lower bound that explicitly accounts for the learnability of post-serving mappings remains an open problem.
\section*{Impact Statement}
This paper presents a robust algorithmic framework for interactive decision-making, with significant implications for enhancing the reliability of systems reliant on human feedback, such as recommendation engines and Reinforcement Learning from Human Feedback (RLHF). By effectively mitigating the risks associated with adversarial data poisoning and feedback delays, our work contributes to the development of safer AI systems that are resilient to manipulation. Furthermore, the integration of latent post-serving contexts promotes a more accurate alignment with genuine user intent, moving beyond superficial engagement metrics. While our primary focus is on robustness, we encourage practitioners to carefully calibrate these mechanisms to ensure that legitimate minority feedback is not inadvertently filtered out as adversarial noise.
\newpage
\bibliography{bibfile}
\bibliographystyle{icml2026}
\newpage
\appendix
\onecolumn
\renewcommand{\theequation}{\thesection.\arabic{equation}}
\renewcommand{\thefigure}{\thesection.\arabic{figure}}
\renewcommand{\thetable}{\thesection.\arabic{table}}
\renewcommand{\thetheorem}{\thesection.\arabic{theorem}}
\renewcommand{\theassumption}{\thesection.\arabic{assumption}}
\setcounter{equation}{0}
\setcounter{figure}{0}
\setcounter{table}{0}
\setcounter{theorem}{0}
\setcounter{assumption}{0}

\begin{center}{\bf {\LARGE Supplementary Material:}}
\end{center}
\begin{center}{\bf {\LARGE Robust Linear Dueling Bandits with Post-serving Context \\ under Unknown
Delays and Adversarial Corruptions}}
\end{center}
In \Cref{sec:proof_concentration}, we derive the concentration bound for parameter estimation error under the dual challenges of adversarial corruptions and delayed feedback. We then provide the complete proof for the regret upper bound of \Cref{theorem:upper} in \Cref{sec:proof_upper}, followed by the proof of the lower bound (\Cref{thm:lowerbound}) in \Cref{sec:proof_lower}. Finally, in \Cref{sec:exp_details}, we provide implementation details for the experimental results in \Cref{sec:exp}, ablation studies varying the dimension of arms and contexts, the corruption budget $\mathcal{C}$, and delay complexity $\mathcal{D}$, as well as additional results regarding hyperparameter sensitivity and baseline comparisons.

\section{Proof of \cref{lemma:concentration}}
\label{sec:proof_concentration}

The following lemma is a weighted generalization of the self-normalized bound for vector-valued martingales~\cite{abbasi2011improved}.

\begin{lemma}[Weighted Self-Normalized Bound]
\label[lemma]{lem:self-normalized_bound_final}
Let $\{ \mathcal{F}_t \}_{t=0}^\infty$ be a filtration. Let $\{ \varepsilon_s \}_{s \geq 1}$ be a real-valued process such that $\varepsilon_s$ is $\mathcal{F}_s$-measurable and conditionally $R$-sub-Gaussian. Let $\{ \Delta z_s \}_{s \geq 1}$ be an $\mathbb{R}^d$-valued process where $\Delta z_s$ is $\mathcal{F}_{s-1}$-measurable and $\| \Delta z_s \|_2 \leq L$. For any sequence of predictable weights $\omega_s \in (0, 1]$ and scaling factor $\kappa > 0$, define $\mathcal{S}_{t} = \sum_{s=1}^{t} \omega_s \varepsilon_s \Delta z_s$ and $\widetilde{V}_{t} = \lambda I + \kappa \sum_{s=1}^{t} \omega_s \Delta z_s \Delta z_s^\top$. Then, for any $\delta > 0$, with probability at least $1 - \delta$, the following inequality holds for all $t \geq 1$:
\begin{align}
    \| \mathcal{S}_{t} \|_{\widetilde{V}_{t}^{-1}} \leq \frac{R \sqrt{\bar{\omega}}}{\sqrt{\kappa}} \sqrt{ 2 \log \left( \frac{\det(\widetilde{V}_t)^{1/2}}{\delta \det(\lambda I)^{1/2}} \right) }, \nonumber
\end{align}
where $\bar{\omega} = \max_{s \leq t} \omega_s$. By the determinant-trace inequality, it holds that:
\begin{align}
    \| \mathcal{S}_{t} \|_{\widetilde{V}_{t}^{-1}} \leq \frac{R \sqrt{\bar{\omega}}}{\sqrt{\kappa}} \sqrt{ d \log \left( 1 + \frac{\kappa t L^2 \bar{\omega}}{d\lambda} \right) + 2 \log \left( \frac{1}{\delta} \right) }. \nonumber
\end{align}
\end{lemma}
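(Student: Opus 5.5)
The plan is to reduce the statement to the standard self-normalized bound for vector-valued martingales of \citet{abbasi2011improved} by a change of variables that absorbs the weights and $\kappa$ into the covariates and the noise.

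\textbf{Step 1 (rescaling).} Set
\begin{align}
x_s \coloneqq \sqrt{\kappa\,\omega_s}\,\Delta z_s, \qquad \eta_s \coloneqq \sqrt{\omega_s/\kappa}\;\varepsilon_s. \nonumber
\end{align}
Then $x_s\eta_s = \omega_s\varepsilon_s\Delta z_s$, so $\mathcal{S}_t=\sum_{s\le t}\eta_s x_s$. Moreover $\widetilde V_t=\lambda I+\sum_{s\le t}x_sx_s^\top$, which is exactly the Gram matrix of the $x_s$.

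\textbf{Step 2 (measurability and sub-Gaussianity).} The weights $\omega_s$ are predictable. In the algorithm, $\omega_s$ is computed from $\Delta z_s$ and $\widetilde V_{s-1}$, both of which are known before $o_s$. Hence $x_s$ is $\mathcal F_{s-1}$-measurable and $\eta_s$ is $\mathcal F_s$-measurable.

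Conditionally on $\mathcal F_{s-1}$, the factor $\sqrt{\omega_s/\kappa}$ is a constant. Therefore $\eta_s$ is conditionally $R\sqrt{\omega_s/\kappa}$-sub-Gaussian. If $\bar\omega$ is a deterministic upper bound on the weights, $\eta_s$ is also conditionally $R\sqrt{\bar\omega}/\sqrt\kappa$-sub-Gaussian.

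\textbf{Step 3 (method of mixtures).} For each $x\in\mathbb R^d$, the process
\begin{align}
M_t(x)=\exp\Bigl(\tfrac{\kappa}{R^2\bar\omega}\langle x,\mathcal S_t\rangle-\tfrac{1}{2}\,\tfrac{\kappa}{R^2\bar\omega}\,\|x\|^2_{\widetilde V_t-\lambda I}\Bigr) \nonumber
\end{align}
is a nonnegative supermartingale. Equivalently, one applies Theorem~1 of \citet{abbasi2011improved} to $(x_s,\eta_s)$ with noise level $R\sqrt{\bar\omega}/\sqrt\kappa$. Mixing over a Gaussian prior $\mathcal N(0,\lambda^{-1}I)$ and applying Ville's maximal inequality with a stopping-time argument gives, uniformly over $t$ with probability at least $1-\delta$,
\begin{align}
\|\mathcal S_t\|_{\widetilde V_t^{-1}}^2\le \frac{2R^2\bar\omega}{\kappa}\log\frac{\det(\widetilde V_t)^{1/2}}{\delta\det(\lambda I)^{1/2}}. \nonumber
\end{align}
Taking square roots gives the first claim.

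\textbf{Step 4 (determinant--trace).} By AM--GM on the eigenvalues,
\begin{align}
\det\widetilde V_t\le\Bigl(\tfrac{\operatorname{tr}\widetilde V_t}{d}\Bigr)^d. \nonumber
\end{align}
Since $\|\Delta z_s\|_2\le L$ and $\omega_s\le\bar\omega$, we have $\operatorname{tr}\widetilde V_t\le d\lambda+\kappa tL^2\bar\omega$. Hence
\begin{align}
2\log\frac{\det(\widetilde V_t)^{1/2}}{\det(\lambda I)^{1/2}}\le d\log\Bigl(1+\frac{\kappa tL^2\bar\omega}{d\lambda}\Bigr), \nonumber
\end{align}
which yields the second display.

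\textbf{Main obstacle.} The delicate point is the role of $\bar\omega=\max_{s\le t}\omega_s$. It is random and depends on $t$, whereas the mixture argument needs a sub-Gaussian constant fixed in advance, uniformly over time. I would handle this by reading $\bar\omega$ as any deterministic upper bound on the weights; the weakest safe choice is $\bar\omega=1$, since $\omega_s\le1$ by construction. If a data-dependent $\bar\omega$ is genuinely required, I would try a peeling argument: union bound over dyadic levels $\bar\omega\in(2^{-j-1},2^{-j}]$, with a grid on $\delta$. This costs only an extra $\log\log$ factor, which is absorbed into the $\widetilde{\mathcal O}$ notation downstream.
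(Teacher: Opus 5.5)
Your proposal is correct and is essentially the paper's proof: the same rescaling $X_s=\sqrt{\kappa\omega_s}\,\Delta z_s$, $\epsilon_s=\sqrt{\omega_s/\kappa}\,\varepsilon_s$, then Theorem~1 of \citet{abbasi2011improved} applied as a black box with proxy $R\sqrt{\bar\omega/\kappa}$, then the determinant--trace inequality. One small slip: your $M_t(x)$ is scaled by $\kappa/(R^2\bar\omega)$, so it would have to be mixed over $\mathcal{N}(0,\tfrac{R^2\bar\omega}{\kappa\lambda}I)$ rather than $\mathcal{N}(0,\lambda^{-1}I)$; this is moot because your Step~3 actually rests on the black-box appeal. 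Your caveat that $\bar\omega=\max_{s\le t}\omega_s$ is random and $t$-dependent is justified, since the paper's proof passes over this by calling $R\sqrt{\bar\omega/\kappa}$ a ``uniform'' proxy. Reading $\bar\omega$ as a deterministic bound, in particular $\bar\omega=1$ (the only value used downstream, for $S_{t,1,1,1}$), is the right way to make the reduction rigorous.
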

\begin{proof}
Define the transformed covariates $X_s \coloneqq \sqrt{\kappa \omega_s} \Delta z_s$ and the scaled noise terms $\epsilon_s \coloneqq \sqrt{\omega_s/\kappa} \varepsilon_s$. Under the assumption that $\omega_s$ is $\mathcal{F}_{s-1}$-measurable and $\omega_s \in (0,1]$, $\epsilon_s$ is conditionally $R'$-sub-Gaussian with $R' = R\sqrt{\omega_s/\kappa} \leq R\sqrt{\bar{\omega}/\kappa}$. The martingale sum and the design matrix then take the standard forms:
\begin{equation*}
    \mathcal{S}_t = \sum_{s=1}^t \epsilon_s X_s, \quad \text{and} \quad \widetilde{V}_t = \lambda I + \sum_{s=1}^t X_s X_s^\top.
\end{equation*}
Directly applying Theorem 1 from \citet{abbasi2011improved} to the pair $(X_s, \epsilon_s)$ with the uniform sub-Gaussian proxy $R\sqrt{\bar{\omega}/\kappa}$ yields the self-normalized bound. The final result follows by substituting the determinant-trace inequality $\log(\det(\widetilde{V}_t)/\det(\lambda I)) \leq d \log(1 + \frac{\kappa t L^2 \bar{\omega}}{d\lambda})$.
\end{proof}

\begin{lemma}[Bound on Invisible Feedbacks]
\label[lemma]{lemma:invisible_feedbacks}
Let $D_t \coloneqq \sum_{s=1}^{t-1} \mathbb{I}_{s+\tau_s > t}$ denote the number of invisible feedbacks at time $t$, where $\tau_s$ is the delay associated with the feedback from round $s$. Let $\Lambda$ denote the cumulative delay budget, and let $\mu_\tau$ denote the mean of sub-Gaussian delays. Then, for any $t \in [T]$, with probability at least $1 - \delta$:
\begin{align}
    D_t = \widetilde{\mathcal{O}}(\mathcal{D}), \quad \text{where} \quad \mathcal{D} \coloneqq \max\bigl( \sqrt{\Lambda}, \max(\mu_\tau, 1) \bigr). \label{eq:D_bound}
\end{align}
\end{lemma}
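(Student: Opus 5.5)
The plan is to treat the two delay regimes of \Cref{as:delay} separately and show that $D_t$ is bounded in each by the corresponding term of $\mathcal{D}$. Taking the maximum then yields \Cref{eq:D_bound}. The argument is purely combinatorial in the adversarial case and a concentration argument in the stochastic case; the bandit structure plays no role.

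\textbf{Adversarial regime.} Fix $t$ and write $D_t = |\{s<t : \tau_s > t-s\}|$. The rounds counted by $D_t$ are distinct, so I list them by decreasing $s$ as $s_1 > s_2 > \dots > s_{D_t}$. The $j$-th of them satisfies $t - s_j \ge j$, and hence $\tau_{s_j} > t-s_j \ge j$. Summing over $j$ and using the budget constraint gives
\begin{align*}
\Lambda \ge \sum_{s}\tau_s \ge \sum_{j=1}^{D_t} j = \frac{D_t(D_t+1)}{2}.
\end{align*}
Therefore $D_t \le \sqrt{2\Lambda}$ deterministically, for every $t$ and against any adaptive adversary. Adaptivity is harmless here because the bound uses only the realized delay sequence.

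\textbf{Stochastic regime.} I bound $D_t = \sum_{s<t}\mathbb{I}\{\tau_s > t-s\}$ by splitting the rounds at a threshold $m$.
\begin{itemize}
\item The recent rounds $s \ge t-m$ contribute at most $m+1$.
\item Each older round with lag $u = t-s > m$ satisfies $\mathbb{P}(\tau_s > u) \le \exp(-(u-\mu_\tau)^2/(2\sigma^2))$ once $u \ge \mu_\tau$, by the sub-Gaussian tail.
\end{itemize}
I choose $m = \mu_\tau + \sigma\sqrt{2\log(T/\delta)}$. A union bound over $s$ then shows that, with probability at least $1-\delta$, no round older than $m$ is still pending, so
\begin{align*}
D_t \le m + 1 = \mu_\tau + 1 + \sigma\sqrt{2\log(T/\delta)} = \widetilde{\mathcal{O}}(\max(\mu_\tau,1)).
\end{align*}
Here $\sigma$ is treated as a constant absorbed into $\widetilde{\mathcal{O}}$, consistent with the paper's notation. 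If the paper's parameterization instead requires the bound uniformly over $t$, a union bound over $t \in [T]$ changes only the logarithm. An alternative route uses Bernstein on the independent indicators, whose mean satisfies $\mathbb{E}[D_t] \le \sum_{u\ge1}\mathbb{P}(\tau > u) \le \mathbb{E}[\tau^+] \le \mu_\tau + O(\sigma)$. This gives $D_t \le \mathbb{E}D_t + O(\sqrt{\mathbb{E}D_t\log(1/\delta)} + \log(1/\delta))$, which reaches the same conclusion.

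\textbf{Main obstacle.} The delicate point is the stochastic case. There one must make sure the dependence is on $\mu_\tau$ plus logarithmic terms in $\sigma$, rather than on $T$. One must also handle the possibility that $\tau_s$ is negative or non-integer, which I address by replacing $\tau_s$ with its positive part. The adversarial case is an exact counting argument, and combining the two regimes is immediate since exactly one of them holds.
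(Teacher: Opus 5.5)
Your proof is correct. The adversarial regime follows the paper's argument. The stochastic regime takes a genuinely different route.

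\textbf{Adversarial regime.} The paper also lower-bounds $\Lambda$ by the lags of the pending rounds and takes the $D_t$ most recent rounds as the worst case, obtaining $D_t \le \sqrt{2\Lambda}$. Your observation that the $j$-th most recent pending round has lag at least $j$ is a rigorous rendering of the paper's ``the adversary acts greedily'' step. It also does not need the integrality of delays that the paper invokes.

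\textbf{Stochastic regime.} The paper simply invokes Lemma~4 of \citet{howson2023delayed}, quoted as $D_t = \widetilde{\mathcal{O}}(\mu_\tau + \sqrt{\mu_\tau\log(1/\delta)} + 1)$. This is a Bernstein-type concentration of the independent pending indicators around their mean, which is at most $\mathbb{E}[\tau]$. It is essentially your ``alternative route.''

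Your main route instead controls the maximal delay. The event $\{\tau_s \le m \text{ for all } s \le T\}$ has probability at least $1-\delta$ by the sub-Gaussian tail and a union bound. On it, no round with lag exceeding $m$ can be pending, so $D_t \le m+1$. This buys three things:
\begin{itemize}
\item It is self-contained.
\item It needs neither independence nor identical distribution of the delays.
\item The event does not depend on $t$, so the bound holds for all $t$ simultaneously with no extra union bound. This is what is actually needed when $D_t$ enters $\beta_t$ at every round.
\end{itemize}
The price is the additive $\sigma\sqrt{2\log(T/\delta)}$ term. Your bound is $\widetilde{\mathcal{O}}(\max(\mu_\tau,1))$ only if $\sigma$ is treated as a constant. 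The Howson-type bound depends on nonnegative delays only through their mean, which is why it matches $\mathcal{D}$, which contains no $\sigma$.

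\textbf{Caveat on your alternative route.} Writing $\mathbb{E}[D_t] \le \sum_{u\ge1}\mathbb{P}(\tau>u)$ presumes identically distributed delays. With only a common mean, bound each $\mathbb{P}(\tau_{t-u}>u)$ separately. For nonnegative delays, Markov's inequality gives $\min(1,\mu_\tau/u)$, hence $\mathbb{E}[D_t]\le \mu_\tau(1+\log t)$, still $\sigma$-free.
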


\begin{proof}
We consider the two delay models separately.

\paragraph{Stochastic Delays.}
By Lemma~4 in \cite{howson2023delayed}, for sub-Gaussian delays:
\begin{align}
    D_t = \widetilde{\mathcal{O}}\bigl( \mu_\tau + \sqrt{\mu_\tau \log(1/\delta)} + 1 \bigr) = \widetilde{\mathcal{O}}(\max(\mu_\tau, 1)). \label{eq:D_stochastic}
\end{align}

\paragraph{Adversarial Delays.}
Let $\mathcal{S}_t \coloneqq \{s < t : s + \tau_s > t\}$ denote the set of indices with pending feedback at time $t$, so that $D_t = |\mathcal{S}_t|$. For each $s \in \mathcal{S}_t$, the feedback is invisible only if the delay satisfies $\tau_s > t - s$. Since delays are integer-valued, this implies $\tau_s \geq t - s + 1$.

The total delay budget provides a lower bound:
\begin{align}
    \Lambda = \sum_{s=1}^{T} \tau_s \geq \sum_{s \in \mathcal{S}_t} \tau_s \geq \sum_{s \in \mathcal{S}_t} (t - s + 1). \nonumber
\end{align}

To minimize this sum for a fixed cardinality $D_t$, the adversary acts greedily by choosing the indices closest to $t$, namely $s \in \{t-1, t-2, \ldots, t-D_t\}$. Substituting $k = t-s$, the sum becomes:
\begin{align}
    \Lambda \geq \sum_{k=1}^{D_t} (k + 1) = \frac{D_t(D_t + 1)}{2} + D_t = \frac{D_t^2 + 3D_t}{2} \geq \frac{D_t^2}{2}. \nonumber
\end{align}

Inverting this quadratic inequality yields the bound on the number of invisible feedbacks:
\begin{align}
    D_t \leq \sqrt{2\Lambda}. \label{eq:D_adversarial}
\end{align}

Combining \cref{eq:D_stochastic} and \cref{eq:D_adversarial} yields \cref{eq:D_bound}.
\end{proof}

We define by introducing auxiliary quantities that will be instrumental in our analysis:
\begin{align}
    \eta_t &\coloneqq l_t - g(\Delta z_t^\top \Theta_*), \nonumber \\
    \zeta_t &\coloneqq \gamma_t - g(\Delta z_t^\top \Theta_*), \nonumber
\end{align}
where $l_t$ denotes the unobserved true feedback, $\gamma_t$ represents the corrupted observation, $g(\cdot)$ is the link function, and $\Theta_*$ is the true underlying parameter.
Additionally, we define the auxiliary function $G_t: \mathbb{R}^d \to \mathbb{R}^d$ as:
\begin{align}
    G_t(\Theta) \coloneqq \lambda \Theta + \sum_{s=1}^{t-1} \omega_s \bigl( g(\Theta^\top \Delta z_s) - g(\Theta_*^\top \Delta z_s) \bigr) \Delta z_s, \nonumber
\end{align}
where $\omega_s$ denotes the adaptive weight assigned to the $s$-th observation as \Cref{eq:adaptive_weight}.

\subsection*{Step 1: Error Decomposition via Local Strong Convexity}

By the Mean Value Theorem, for any $\Theta_1, \Theta_2 \in \mathbb{R}^d$, there exists $\overline{\Theta}$ on the line segment connecting them such that:
\begin{align}
    G_t(\Theta_1) - G_t(\Theta_2) = \nabla G_t(\overline{\Theta}) (\Theta_1 - \Theta_2) = H_t(\overline{\Theta}) (\Theta_1 - \Theta_2), \nonumber
\end{align}
where the Hessian matrix $H_t(\overline{\Theta})$ is given by:
\begin{align}
    H_t(\overline{\Theta}) \coloneqq \lambda I + \sum_{s=1}^{t-1} \omega_s \dot{g}(\overline{\Theta}^\top \Delta z_s) \Delta z_s \Delta z_s^\top. \nonumber
\end{align}
with $\lambda > 0.$
Since $\dot{g}(\cdot) \geq \kappa > 0$ by \Cref{as:kappa}, we have the matrix domination $H_t(\overline{\Theta}) \succeq \widetilde{V}_{t-1}$, where:
\begin{align}
    \widetilde{V}_{t-1} \coloneqq \lambda I + \kappa \sum_{s=1}^{t-1} \omega_s \Delta z_s \Delta z_s^\top. \nonumber
\end{align} as in \Cref{eq:widetildeV}. Utilizing the identity $\|Ax\|_{A^{-1}} = \|x\|_A$ for any positive definite matrix $A$, we obtain the following monotonicity property:
\begin{align}
    \|G_t(\Theta_1) - G_t(\Theta_2)\|_{\widetilde{V}_{t-1}^{-1}} \geq \|\Theta_1 - \Theta_2\|_{\widetilde{V}_{t-1}}. \label{eq:monotonicity}
\end{align}
Note that $$G_t(\Theta_*) = \lambda \Theta_*$$ by the definition of $G_t$. Let $\Theta_t$ be the estimator satisfying \cref{eq:Theta_t}. By the optimality condition, we have:
\begin{align}
    \lambda \Theta_t + \sum_{s=1}^{t-1} \omega_s \mathbb{I}_{s+\tau_s \leq t} \bigl( g(\Theta_t^\top \Delta z_s) - o_s \bigr) \Delta z_s = 0, \label{eq:theta_t_cond}
\end{align}
where $\mathbb{I}_{s+\tau_s \leq t}$ is the indicator function for whether the feedback from round $s$ has arrived by time $t$, and $o_s$ denotes the observed (potentially corrupted) feedback.

Using \cref{eq:theta_t_cond}, we decompose $G_t(\Theta_t)$ into three components:
\begin{align}
    G_t(\Theta_t) &= \lambda \Theta_t + \sum_{s=1}^{t-1} \omega_s \bigl( g(\Theta_t^\top \Delta z_s) - g(\Theta_*^\top \Delta z_s) \bigr) \Delta z_s \nonumber \\
    &= \underbrace{\sum_{s=1}^{t-1} \mathbb{I}_{s+\tau_s \leq t} \omega_s \bigl( o_s - g(\Theta_*^\top \Delta z_s) \bigr) \Delta z_s}_{S_{t,1}} \nonumber \\
    &\quad + \underbrace{\sum_{s=1}^{t-1} (1 - \mathbb{I}_{s+\tau_s \leq t}) \omega_s \bigl( g(\Theta_t^\top \Delta z_s) - o_s \bigr) \Delta z_s}_{S_{t,2}} \nonumber \\
    &\quad - \underbrace{\sum_{s=1}^{t-1} (1 - \mathbb{I}_{s+\tau_s \leq t}) \omega_s \bigl( g(\Theta_*^\top \Delta z_s) - o_s \bigr) \Delta z_s}_{S_{t,3}}. \label{eq:G_decomposition}
\end{align}
In \Cref{eq:G_decomposition}, the term $S_{t,1}$ captures the contribution from observed feedback, while $S_{t,2}$ and $S_{t,3}$ account for the missing information due to delayed feedback.

\subsection*{Step 2: Calculation of $S_{t,1}$}

Let $C_s \in \{0, 1\}$ denote the corruption indicator at round $s$, where $C_s = 1$ indicates that the feedback has been corrupted (unbeknownst to the learner). We decompose $S_{t,1}$ as:
\begin{align}
    S_{t,1} &= \sum_{\substack{s=1 \\ C_s = 0}}^{t-1} \mathbb{I}_{s+\tau_s \leq t} \omega_s \eta_s \Delta z_s + \sum_{\substack{s=1 \\ C_s = 1}}^{t-1} \mathbb{I}_{s+\tau_s \leq t} \omega_s (\gamma_s - \eta_s + \eta_s) \Delta z_s \nonumber \\
    &= \sum_{s=1}^{t-1} \mathbb{I}_{s+\tau_s \leq t} \omega_s \eta_s \Delta z_s + \sum_{\substack{s=1 \\ C_s = 1}}^{t-1} \mathbb{I}_{s+\tau_s \leq t} \omega_s (\gamma_s - \eta_s) \Delta z_s \nonumber \\
    &\eqqcolon S_{t,1,1} + S_{t,1,2}. \nonumber
\end{align}

The term $S_{t,1,1}$ corresponds to the stochastic noise contribution, while $S_{t,1,2}$ captures the adversarial corruption effect.

A key technical challenge arises because $S_{t,1,1}$ is not a martingale when delays are chosen adaptively by an adversary---the terms $\mathbb{E}[\eta_s]$ and $\mathbb{I}_{s+\tau_s \leq t}$ may exhibit dependence. To address this, we perform a further decomposition:
\begin{align}
    S_{t,1,1} = \underbrace{\sum_{s=1}^{t-1} \omega_s \eta_s \Delta z_s}_{S_{t,1,1,1}} - \underbrace{\sum_{s=1}^{t-1} \mathbb{I}_{s+\tau_s > t} \omega_s \eta_s \Delta z_s}_{S_{t,1,1,2}}. \nonumber
\end{align}

The term $S_{t,1,1,1}$ is now a proper martingale with respect to the natural filtration, while $S_{t,1,1,2}$ captures the contribution from observations whose feedback has not yet arrived. 

Applying \Cref{lem:self-normalized_bound_final} to $S_{t,1,1,1}$ with $R=1/2$ (corresponding to the Gaussian prior $\mathcal{N}(0, \lambda^{-1} I)$ settings) and $\bar{\omega}=1$, we obtain that with probability at least $1 - \delta$:
\begin{align}
    \|S_{t,1,1,1}\|_{\widetilde{V}_t^{-1}} \leq C\sqrt{d \log\left( \frac{1 + tL^2/(d\lambda)}{\delta} \right)}. \nonumber
\end{align} for some constant $C=C(\kappa)>0.$

For the delayed feedback term $S_{t,1,1,2}$, note that $\eta_s$ may not be observed to the leaner yet. However, we already know that $|\eta_s|\leq 1$. Accordingly,
we have:
\begin{align}
    \|S_{t,1,1,2}\|_{\widetilde{V}_t^{-1}} \leq \sum_{s=1}^{t-1} \mathbb{I}_{s+\tau_s > t} \omega_s \|\Delta z_s\|_{\widetilde{V}_t^{-1}} \leq \alpha D_t, \nonumber
\end{align} due to $\omega_s = \min\bigl(1, \alpha / \|\Delta z_s\|_{\widetilde{V}_{s-1}^{-1}}\bigr)$, 
where $D_t \coloneqq \sum_{s=1}^{t-1} \mathbb{I}_{s+\tau_s > t}$ denotes the number of invisible (pending) feedbacks at time $t$. Thus,
\begin{align}
    \|S_{t,1,1}\|_{\widetilde{V}_t^{-1}} \leq C\sqrt{d \log\left( \frac{1 + tL^2/(d\lambda)}{\delta} \right)} + \alpha D_t.\label{eq:s11}
\end{align}

For the corruption term $S_{t,1,2}$, the design of adaptive weights $\omega_s = \min\bigl(1, \alpha / \|\Delta z_s\|_{\widetilde{V}_{s-1}^{-1}}\bigr)$ yields:
\begin{align}
    \|S_{t,1,2}\|_{\widetilde{V}_{t-1}^{-1}} \leq \sum_{\substack{s=1 \\ C_s = 1}}^{t-1} \omega_s \|\Delta z_s\|_{\widetilde{V}_{t-1}^{-1}} \leq \mathcal{C} \alpha, \label{eq:corruption_bound}
\end{align}
where $\mathcal{C}$ denotes the total corruption budget.
As a result, combining \Cref{eq:s11,eq:corruption_bound}, we obtain
\begin{align}
    \|S_{t,1}\|_{\widetilde{V}_{t-1}^{-1}} \leq C\sqrt{d \log\left( \frac{1 + tL^2/(d\lambda)}{\delta} \right)} + \alpha \mathcal{C} + \alpha D_t. \label{eq:st1}
\end{align}

\subsection*{Step 2: Calculation of $S_{t,2}$ and $S_{t,3}$}

For the terms $S_{t,2}$ and $S_{t,3}$ arising from pending feedback, using $\omega_s > 0$ and $|g(\Theta_*^\top \Delta z_s) - o_s| \leq 1$, we obtain:
\begin{align}
    \|S_{t,2}\|_{\widetilde{V}_{t-1}^{-1}} + \|S_{t,3}\|_{\widetilde{V}_{t-1}^{-1}} &\leq 2 \sum_{s=1}^{t-1} \mathbb{I}_{s+\tau_s > t} \omega_s \|\Delta z_s\|_{\widetilde{V}_{t-1}^{-1}} \nonumber \\
    &\leq 2\alpha \sum_{s=1}^{t-1} \mathbb{I}_{s+\tau_s > t} = 2\alpha D_t. \label{eq:pending_bound}
\end{align}

\subsection*{Step 3: Final Error Bound}
Utilizing
\Cref{lemma:invisible_feedbacks}, we can obtain
\begin{align}
    \alpha D_t = \alpha\widetilde{O}(\mathcal{D}). \nonumber
\end{align} where $\mathcal{D}=\max(\Lambda^{1/2},\mu_\tau)$.
Accordingly, 
setting the robustness parameter $\alpha$ as:
\begin{align}
    \alpha = \frac{\sqrt{d}}{\mathcal{C} + \mathcal{D}}, \nonumber
\end{align} 
and combining the bounds from \cref{eq:monotonicity,eq:st1,eq:pending_bound}, we obtain:
\begin{align}
    \|\Theta_{t-1} - \Theta_*\|_{\widetilde{V}_{t-1}} = \widetilde{\mathcal{O}}(\sqrt{d}). \nonumber
\end{align} due to \Cref{lemma:invisible_feedbacks}.
Finally, since $\widetilde{W}_{t-1} \preceq \widetilde{V}_{t-1}$, we have:
\begin{align}
    \|\Theta_{t-1} - \Theta_*\|_{\widetilde{W}_{t-1}} \leq \|\Theta_{t-1} - \Theta_*\|_{\widetilde{V}_{t-1}} \leq \beta_t, \nonumber
\end{align}
where the confidence radius is given by:
\begin{align}
    \beta_t \coloneqq \frac{1}{2} \sqrt{d \log\left( \frac{1 + tL^2/(d\lambda)}{\delta} \right)} + \sqrt{d}\frac{\mathcal{C}}{\mathcal{C}+\mathcal{D}} + \sqrt{\lambda} \|\Theta_*\| + \sqrt{d}\frac{3\mathcal{D}}{\mathcal{C}+\mathcal{D}}. \nonumber
\end{align}
The form $\|\Theta_*-\Theta_{t-1}\|_{\widetilde V_{t-1}}\le\beta_t$ is exactly the statement used in \Cref{sec:proof_upper} and aligns with the arm-selection rule employing $\Theta_{t-1}$ at round $t$.

This completes the proof. \qed
\section{Proof of \Cref{theorem:upper}}
\label{sec:proof_upper}

We first state the generalized elliptic potential lemma. 
\begin{lemma}[Generalized Elliptical Potential Lemma~\cite{post-serving}] \label[lemma]{lemma:generalized_elliptic_potential_lemma} Suppose 
\begin{enumerate}
    \item $X_0 \in \mathbb{R}^{d \times d}$ is any positive definite matrix;
    \item $x_1, \ldots, x_T \in \mathbb{R}^d$ is a sequence of vectors with bounded $\ell_2$ norm $\max_t \|x_t\| \leq L_x$;
    \item $\epsilon_1, \ldots, \epsilon_T \in \mathbb{R}^d$ is a sequence of independent (not necessarily identical) bounded zero-mean noises satisfying $\max_t \|\epsilon_t\| \leq L_\epsilon$ and $\mathbb{E}[\epsilon_t \epsilon_t^\top] \succeq \sigma_\epsilon^2 I$ for any $t$; and
    \item $\widetilde{X}_t$ is defined as follows:
    \[
    \widetilde{X}_t = X_0 + \sum_{s=1}^{t} (x_s + \epsilon_s)(x_s + \epsilon_s)^\top \in \mathbb{R}^{d \times d}.
    \]
\end{enumerate}
Then, for any $p \in [0, 1]$, the following inequality holds with probability at least $1 - \delta$,
\[
\sum_{t=1}^{T} \left( 1 \wedge \|x_t\|_{\widetilde{X}_{t-1}^{-1}}^2 \right)^p \leq 2^p T^{1-p} \log^p \left( \frac{\det X_T}{\det X_0} \right) + \frac{8 L_\epsilon^2 (L_\epsilon + L_x)^2}{\sigma_\epsilon^4} \log \left( \frac{32 d L_\epsilon^2 (L_\epsilon + L_x)^2}{\delta \sigma_\epsilon^4} \right)
\]
where $a \wedge b = \min\{a, b\}$.
\end{lemma}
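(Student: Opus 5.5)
We follow the route of the original lemma in \citet{post-serving}: split the sum according to whether the noisy Gram matrix $\widetilde{X}_{t-1}$ dominates its noiseless counterpart, and apply the classical elliptical potential lemma to the clean sequence. Write $\widetilde{x}_t = x_t+\epsilon_t$. Since $\mathbb{E}[\epsilon_t]=0$ and the $\epsilon_t$ are independent, we have
\[
\mathbb{E}[\widetilde{x}_t\widetilde{x}_t^\top] = x_tx_t^\top + \mathbb{E}[\epsilon_t\epsilon_t^\top] \succeq x_t x_t^\top + \sigma_\epsilon^2 I .
\]
So in expectation each noisy rank-one update dominates the clean update $x_tx_t^\top$ plus an isotropic $\sigma_\epsilon^2 I$. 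The first goal is a high-probability version of this. Using a matrix concentration inequality (matrix Bernstein or Freedman), with summands bounded by $(L_\epsilon+L_x)^2$ in operator norm, we aim to show that for all $t$ beyond a burn-in index $t_0$,
\[
\widetilde{X}_{t} \succeq \tfrac12\Bigl(X_0 + \sum_{s\le t} x_sx_s^\top\Bigr) =: \tfrac12 X_t .
\]
The isotropic surplus $t\sigma_\epsilon^2 I/2$ absorbs the deviation $\widetilde{O}(\sqrt{t}(L_\epsilon+L_x)^2)$ once $t \gtrsim L_\epsilon^2(L_\epsilon+L_x)^2\sigma_\epsilon^{-4}\log(d/\delta)$. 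This fixes $t_0$, which matches the additive second term of the bound.

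Next, split the sum at $t_0$. For $t\le t_0$, each summand $(1\wedge\|x_t\|^2_{\widetilde{X}_{t-1}^{-1}})^p$ is at most $1$, so these rounds contribute at most $t_0$. That is the additive constant $\frac{8L_\epsilon^2(L_\epsilon+L_x)^2}{\sigma_\epsilon^4}\log(\cdot)$ after tracking constants.

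For $t>t_0$, the domination gives $\|x_t\|^2_{\widetilde{X}_{t-1}^{-1}}\le 2\|x_t\|^2_{X_{t-1}^{-1}}$, hence
\[
1\wedge\|x_t\|^2_{\widetilde{X}_{t-1}^{-1}}\le 2\bigl(1\wedge \|x_t\|^2_{X_{t-1}^{-1}}\bigr).
\]
Hölder's inequality (concavity of $u\mapsto u^p$) then gives
\[
\sum_t (\cdot)^p \le T^{1-p}\Bigl(\sum_t(\cdot)\Bigr)^p .
\]
The standard elliptical potential lemma bounds $\sum_t (1\wedge\|x_t\|^2_{X_{t-1}^{-1}})$ by $2\log(\det X_T/\det X_0)$. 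Combining the two yields the first term $2^pT^{1-p}\log^p(\det X_T/\det X_0)$, with the factor $2$ bookkept inside the $2^p$.

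The main obstacle is the uniform-in-$t$ matrix domination. The increments are not identically distributed, so we will use a matrix Freedman inequality for the martingale $\sum_s(\widetilde{x}_s\widetilde{x}_s^\top - \mathbb{E}[\widetilde{x}_s\widetilde{x}_s^\top])$. The cross terms $x_s\epsilon_s^\top$ have zero mean, and the variance proxy is $L_\epsilon^2(L_\epsilon+L_x)^2$. A union bound over $t$ should then be handled with a peeling or stitching argument so that only a $\log(d/\delta)$ factor appears, rather than a $\log T$. If the factor $1/2$ in the domination is too lossy for the stated constants, we instead compare directly to $X_0+\sum x_sx_s^\top$ by absorbing the negative deviation into the $\sigma_\epsilon^2 tI$ surplus.
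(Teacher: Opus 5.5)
The paper contains no proof of this lemma. It is quoted verbatim from \citet{post-serving} and only invoked in \Cref{sec:proof_upper}, so there is no in-paper argument to compare against. Your argument is correct and follows the standard route: a burn-in via matrix concentration, the trivial bound on burn-in rounds, then H\"older plus the classical elliptical potential lemma. The shape of the additive term, a burn-in length $c^{-1}\log\bigl(d/(c\delta)\bigr)$, suggests the cited source argues the same way.

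Two refinements make your constants match the statement exactly.

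\textbf{Drop the factor $\tfrac12$.} On the good event, the surplus $\sum_{s\le t}\mathbb{E}[\epsilon_s\epsilon_s^\top]\succeq t\sigma_\epsilon^2 I$ absorbs the entire negative deviation. This gives $\widetilde X_t\succeq X_0+\sum_{s\le t}x_sx_s^\top$ directly, so the summands compare with factor $1$. The $2^p$ then comes solely from the classical bound $\sum_t\bigl(1\wedge\|x_t\|^2_{X_{t-1}^{-1}}\bigr)\le 2\log(\det X_T/\det X_0)$. Your $\tfrac12$-version would give $4^p$; your stated fallback is the right one.

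\textbf{No peeling or stitching is needed.} The centered increments $Y_s=x_s\epsilon_s^\top+\epsilon_sx_s^\top+\epsilon_s\epsilon_s^\top-\mathbb{E}[\epsilon_s\epsilon_s^\top]$ satisfy $\|Y_s\|\le L_\epsilon(2L_x+L_\epsilon)$. The $x_sx_s^\top$ part is predictable and cancels, so $(L_\epsilon+L_x)^2$ is not the right scale. The threshold $t\sigma_\epsilon^2$ and the variance proxy both grow linearly in $t$, so matrix Azuma gives a per-round failure probability of at most $d\,e^{-ct}$ with $c\asymp\sigma_\epsilon^4/\bigl(L_\epsilon^2(L_\epsilon+L_x)^2\bigr)$. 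A plain union bound over $t\ge t_0$ then sums to $\lesssim d\,e^{-ct_0}/c$, so $t_0\asymp c^{-1}\log\bigl(d/(c\delta)\bigr)$. This is exactly the form of the additive term, including the ratio $L_\epsilon^2(L_\epsilon+L_x)^2/\sigma_\epsilon^4$ inside the logarithm, which your $\log(d/\delta)$ heuristic drops.

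Two small points remain. Your proof identifies the undefined $X_T$ in the statement as the noiseless Gram matrix $X_0+\sum_{s\le T}x_sx_s^\top$. The conditional-mean identity you use also requires $x_t$ to be $\mathcal{F}_{t-1}$-measurable and $\epsilon_t$ to be independent of $\mathcal{F}_{t-1}$.
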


For clarity, we  define the key quantities used throughout the proof. For each round $t \in [T]$ and arm $k \in [K]$, we define three variants of the joint feature vector:
\begin{align}
    \widehat{z}_{t,k} &\coloneqq \bigl(x_{t,k}, \widehat{\phi}_t(x_{t,k})\bigr), \nonumber \\
    z^*_{t,k} &\coloneqq \bigl(x_{t,k}, \phi_*(x_{t,k})\bigr), \nonumber \\
    z_{t,k} &\coloneqq \bigl(x_{t,k}, y_{t,k}\bigr), \nonumber
\end{align}
where $\widehat{\phi}_t$ denotes the learner's estimate of the true post-serving function $\phi_*$ at round $t$, and $y_{t,k}$ is the observed post-serving context. Additionally, we use the shorthand notation $\Delta z_{t,k,k'} \coloneqq z_{t,k} - z_{t,k'}$ for differences between feature vectors, and analogously for $\Delta \widehat{z}_{t,k,k'}$ and $\Delta z^*_{t,k,k'}$.

\subsection*{Step 1: Analysis of Instantaneous Regret}

The proof proceeds by decomposing the instantaneous regret into three interpretable components and bounding each term separately.
Let $k_t^*$ denote the optimal arm at round $t$, and let $a_t$ and $b_t$ denote the arms selected by our algorithm according to \Cref{eq:firstarm}--\Cref{eq:secondarm}. The instantaneous regret $r_t$ satisfies:
\begin{align}
    2r_t = \langle \Theta_*, \Delta z^*_{t,k_t^*,a_t} + \Delta z^*_{t,k_t^*,b_t} \rangle. \nonumber
\end{align}
We decompose this quantity into three components by adding and subtracting appropriate terms:
\begin{align}
\begin{aligned}
    2r_t &= \underbrace{\langle \Theta_{t-1}, \Delta \widehat{z}_{t,k_t^*,a_t} + \Delta \widehat{z}_{t,k_t^*,b_t} \rangle}_{\mathcal{B}_{t,1}: \text{Confidence Interval Term}} \\
         &\quad + \underbrace{\langle \Theta_*, (\Delta z^*_{t,k_t^*,a_t} - \Delta \widehat{z}_{t,k_t^*,a_t}) + (\Delta z^*_{t,k_t^*,b_t} - \Delta \widehat{z}_{t,k_t^*,b_t}) \rangle}_{\mathcal{B}_{t,2}: \text{Approximation Error Term}} \\
         &\quad + \underbrace{\langle \Theta_* - \Theta_{t-1}, \Delta \widehat{z}_{t,k_t^*,a_t} + \Delta \widehat{z}_{t,k_t^*,b_t} \rangle}_{\mathcal{B}_{t,3}: \text{Estimation Error Term}}.
\end{aligned}
\nonumber
\end{align}

The first term $\mathcal{B}_{t,1}$ captures the suboptimality arising from the algorithm's arm selection rule under the current parameter estimate. The second term $\mathcal{B}_{t,2}$ quantifies the regret induced by the discrepancy between the true post-serving function and its approximation. The third term $\mathcal{B}_{t,3}$ reflects the estimation error in the parameter.
\subsection*{Step 1: Bounding the Confidence Interval Term $\mathcal{B}_{t,1}$}

To estimate $\mathcal{B}_{t,1}$, we consider the arm selection strategy specified in \Cref{eq:firstarm}--\Cref{eq:secondarm}. The selected arms $a_t$ and $b_t$ satisfy:
\begin{align}
    \langle \Theta_{t-1}, \widehat{z}_{t,k_t^*} \rangle &\leq \langle \Theta_{t-1}, \widehat{z}_{t,a_t} \rangle, \label{eq:opt_first} \\
    \langle \Theta_{t-1}, \widehat{z}_{t,k} \rangle + c_t \|\Delta\widehat{z}_{t,k,a_t}\|_{\widetilde{V}_{t-1}^{-1}} &\leq \langle \Theta_{t-1}, \widehat{z}_{t,b_t} \rangle + c_t \|\Delta\widehat{z}_{t,b_t,a_t}\|_{\widetilde{V}_{t-1}^{-1}}, \label{eq:opt_second}
\end{align}
for all $k \in \mathcal{K}_t$. Using the identity 
\begin{align}
    \Delta \widehat{z}_{t,k_t^*,a_t} + \Delta \widehat{z}_{t,k_t^*,b_t} = 2\Delta \widehat{z}_{t,k_t^*,a_t} + \Delta \widehat{z}_{t,a_t,b_t}\label{eq:identity}
\end{align}
and applying \eqref{eq:opt_first}--\eqref{eq:opt_second}, we obtain:
\begin{align}
    \mathcal{B}_{t,1} &= \langle \Theta_{t-1}, 2\Delta \widehat{z}_{t,k_t^*,a_t} + \Delta \widehat{z}_{t,a_t,b_t} \rangle \nonumber \\
    &\leq \langle \Theta_{t-1}, \Delta \widehat{z}_{t,k_t^*,a_t} \rangle + \langle \Theta_{t-1}, \Delta \widehat{z}_{t,a_t,b_t} \rangle \nonumber \\
    &\leq c_t \bigl( \|\Delta\widehat{z}_{t,b_t,a_t}\|_{\widetilde{V}_{t-1}^{-1}} - \|\Delta\widehat{z}_{t,k_t^*,a_t}\|_{\widetilde{V}_{t-1}^{-1}} \bigr). \label{eq:B1t_bound}
\end{align}

The term $\mathcal{B}_{t,2}$ captures the regret induced by the discrepancy between the true post-serving contexts $z^*_{t,k}$ and the learner's approximations $\widehat{z}_{t,k}$. Observe that by construction:
\begin{align}
    z^*_{t,k} - \widehat{z}_{t,k} = \bigl(0, \phi_*(x_{t,k}) - \widehat{\phi}_t(x_{t,k})\bigr). \nonumber
\end{align}
Applying the Cauchy--Schwarz inequality and \Cref{assum:thetabounded}, we have for any arm $k$:
\begin{align}
    \bigl| \langle \Theta_*, z^*_{t,k} - \widehat{z}_{t,k} \rangle \bigr| \leq \|\Theta_*\|_2 \cdot \|\phi_*(x_{t,k}) - \widehat{\phi}_t(x_{t,k})\|_2 \leq M \|\phi_*(x_{t,k}) - \widehat{\phi}_t(x_{t,k})\|_2. \nonumber
\end{align} 
By \Cref{assum:learnability} (General Learnability), with probability at least $1 - \delta$, the approximation error satisfies:
\begin{align}
    \|\phi_*(x_{t,k}) - \widehat{\phi}_t(x_{t,k})\|_2 \leq C_0 \sqrt{d_x} \, t^{-a} \log(t/\delta),
    \nonumber
\end{align}
for some universal constant $C_0 > 0$. Combining these bounds and applying the triangle inequality over both arm pairs, we obtain:
\begin{align}
    \mathcal{B}_{t,2} \leq 4 M C_0 \sqrt{d_x} \, t^{-a} \log(t/\delta).
    \label{eq:B2t_bound}
\end{align}

For the estimation error term $\mathcal{B}_{t,3}$, we apply the confidence ellipsoid property. By \Cref{lemma:concentration}, the parameter estimate $\Theta_{t-1}$ used by the algorithm at round $t$ (see \Cref{eq:firstarm}--\Cref{eq:secondarm}) satisfies:
\begin{align}
    \|\Theta_* - \Theta_{t-1}\|_{\widetilde{V}_{t-1}} \leq \beta_t, \nonumber
\end{align}
where $\beta_t$ is the confidence radius. Crucially, this bound shows that $\|\Theta_*-\Theta_{t-1}\|_{\widetilde{V}_{t-1}}$ is controlled by $\beta_t$ uniformly in $t$, which directly bounds the first factor in the Cauchy--Schwarz decomposition of $\mathcal{B}_{t,3}$ below. Applying this bound together with the Cauchy--Schwarz inequality and \Cref{eq:identity} yields:
\begin{align}
    \mathcal{B}_{t,3} &= \langle \Theta_* - \Theta_{t-1}, \Delta \widehat{z}_{t,k_t^*,a_t} + \Delta \widehat{z}_{t,k_t^*,b_t} \rangle \nonumber \\
    &\leq \|\Theta_* - \Theta_{t-1}\|_{\widetilde{V}_{t-1}} \bigl(2\|\Delta \widehat{z}_{t,k_t^*,a_t}\|_{\widetilde{V}_{t-1}^{-1}} + \|\Delta \widehat{z}_{t,a_t,b_t}\|_{\widetilde{V}_{t-1}^{-1}} \bigr) \nonumber \\
    &\leq 2\beta_t \|\Delta \widehat{z}_{t,k_t^*,a_t}\|_{\widetilde{V}_{t-1}^{-1}} + \beta_t \|\Delta \widehat{z}_{t,a_t,b_t}\|_{\widetilde{V}_{t-1}^{-1}}.
    \label{eq:B3t_bound}
\end{align}
Setting $c_t = 2\beta_t$ and combining \eqref{eq:B1t_bound}, \eqref{eq:B2t_bound}, and \eqref{eq:B3t_bound}, the instantaneous regret satisfies:
\begin{align}
    2r_t \leq 3\beta_t \|\Delta \widehat{z}_{t,a_t,b_t}\|_{\widetilde{V}_{t-1}^{-1}} + 4 C_0 M \sqrt{d_x} \, t^{-a} \log(t/\delta).
    \nonumber
\end{align}
\subsection*{Step 2: Cumulative Regret Analysis}

Summing over all rounds $t \in [T]$, the cumulative regret $R_T = \sum_{t=1}^{T} r_t$ satisfies:
\begin{align}
    2R_T &\leq 3\beta_T \sum_{t=1}^{T} \|\Delta \widehat{z}_{t,a_t,b_t}\|_{\widetilde{V}_{t-1}^{-1}} + 4 C_0 M \sqrt{d_x} \log(T/\delta) \sum_{t=1}^{T} t^{-a}.
    \label{eq:cumulative_raw}
\end{align}

We further decompose the first sum by relating the approximated features to the true features:
\begin{align}
    \sum_{t=1}^{T} \|\Delta \widehat{z}_{t,a_t,b_t}\|_{\widetilde{V}_{t-1}^{-1}} &\leq \sum_{t=1}^{T} \|\Delta z^*_{t,a_t,b_t}\|_{\widetilde{V}_{t-1}^{-1}} + \sum_{t=1}^{T} \|\Delta z^*_{t,a_t,b_t} - \Delta \widehat{z}_{t,a_t,b_t}\|_{\widetilde{V}_{t-1}^{-1}}.
    \nonumber
\end{align}

By the eigenvalue bound $\|\cdot\|_{\widetilde{V}_{t-1}^{-1}} \leq \lambda^{-1/2} \|\cdot\|_2$ and \Cref{assum:learnability}:
\begin{align}
    \sum_{t=1}^{T} \|\Delta z^*_{t,a_t,b_t} - \Delta \widehat{z}_{t,a_t,b_t}\|_{\widetilde{V}_{t-1}^{-1}} &\leq \frac{1}{\sqrt{\lambda}} \sum_{t=1}^{T} \|\Delta z^*_{t,a_t,b_t} - \Delta \widehat{z}_{t,a_t,b_t}\|_2 \nonumber \\
    &\leq \frac{2}{\sqrt{\lambda}} \sum_{t=1}^{T} \bigl( \|\widehat{\phi}_t(x_{t,a_t}) - \phi_*(x_{t,a_t})\|_2 + \|\widehat{\phi}_t(x_{t,b_t}) - \phi_*(x_{t,b_t})\|_2 \bigr) \nonumber \\
    &\leq \frac{4 C_0 \sqrt{d_x}}{\sqrt{\lambda}} \log(T/\delta) \sum_{t=1}^{T} t^{-a}.
    \nonumber
\end{align}
For $\alpha \in (0,1)$, we employ the integral comparison:
\begin{align}
    \sum_{t=1}^{T} t^{-a} \leq \int_0^T x^{-\alpha} \, dx = \frac{T^{1-\alpha}}{1-\alpha}.
    \nonumber
\end{align}Thus,
\begin{align}
    \sum_{t=1}^{T} \|\Delta z^*_{t,a_t,b_t} - \Delta \widehat{z}_{t,a_t,b_t}\|_{\widetilde{V}_{t-1}^{-1}} \leq  \frac{4 C_0 \sqrt{d_x}}{\sqrt{\lambda}} \log(T/\delta) \frac{T^{a-\alpha}}{1-\alpha}. \nonumber
\end{align}

We partition the sum based on the weighting scheme:
\begin{align}
    \sum_{t=1}^{T} \|\Delta z^*_{t,a_t,b_t}\|_{\widetilde{V}_{t-1}^{-1}} &= \sum_{t: \omega_t = 1} \|\Delta z^*_{t,a_t,b_t}\|_{\widetilde{V}_{t-1}^{-1}} + \sum_{t: \omega_t < 1} \|\Delta z^*_{t,a_t,b_t}\|_{\widetilde{V}_{t-1}^{-1}}
    \nonumber\\
    &\leq \sum_{t: \omega_t = 1} \|\Delta z^*_{t,a_t,b_t}\|_{\widetilde{V}_{t-1}^{-1}} + \frac{1}{\alpha}\sum_{t: \omega_t < 1}\omega_t \|\Delta z^*_{t,a_t,b_t}\|_{\widetilde{V}_{t-1}^{-1}}\|\Delta z_{t,a_t,b_t}\|_{\widetilde{V}_{t-1}^{-1}}
    \nonumber\\
    &\leq\sum_{t: \omega_t = 1} \|\Delta z^*_{t,a_t,b_t}\|_{\widetilde{V}_{t-1}^{-1}} + \frac{1}{\alpha}\left(\sum_{t: \omega_t < 1}\omega_t \|\Delta z^*_{t,a_t,b_t}\|^2_{\widetilde{V}_{t-1}^{-1}}+\sum_{t: \omega_t < 1}\omega_t \|\Delta z_{t,a_t,b_t}\|^2_{\widetilde{V}_{t-1}^{-1}}\right)
    \nonumber
    \\
    &\leq \sum_{t: \omega_t = 1} \|\Delta z^*_{t,a_t,b_t}\|_{\widetilde{V}_{t-1}^{-1}} + \frac{\mathcal{C}+\mathcal{D}}{\sqrt{d}}\sum_{t: \omega_t < 1} \|\omega_t^{1/2}\Delta z^*_{t,a_t,b_t}\|^2_{\widetilde{V}_{t-1}^{-1}} \nonumber\\ &\phantom{={}}+ \frac{\mathcal{C}+\mathcal{D}}{\sqrt{d}}\sum_{t: \omega_t < 1} \|\omega_t^{1/2}\Delta z_{t,a_t,b_t}\|^2_{\widetilde{V}_{t-1}^{-1}}. \label{eq:bound_elliptic}
\end{align}
$\sum_{t: \omega_t < 1} \|\omega_t^{1/2}\Delta z_{t,a_t,b_t}\|^2_{\widetilde{V}_{t-1}^{-1}}$ is bounded as $\widetilde{O}(d)$ by a standard elliptic potential lemma~\cite{lattimore2020bandit}. In the case of $\sum_{t: \omega_t < 1} \|\omega_t^{1/2}\Delta z^*_{t,a_t,b_t}\|^2_{\widetilde{V}_{t-1}^{-1}},$
we will use the generalized elliptic potential lemma:

By \Cref{lemma:generalized_elliptic_potential_lemma}, the upper bound of the potential sum depends on the choice of $p \in [0,1]$. Note that the log-determinant term is bounded as $\log (\det \widetilde{V}_T / \det \widetilde{V}_0) = \mathcal{O}(d \log T)$, and the noise-dependent terms are independent of $T$. We apply the lemma for two specific cases:

\begin{itemize}
    \item \textbf{Case 1 ($p=1/2$):} Substituting $p=1/2$ into \Cref{lemma:generalized_elliptic_potential_lemma}, the leading term scales with $T^{1-1/2} (\mathcal{O}(d \log T))^{1/2} = \widetilde{\mathcal{O}}(\sqrt{dT})$. Since the sum over a subset of indices $\{t : \omega_t = 1\}$ is bounded by the sum over all $t \in [T]$, we have:
    \begin{align} \label{eq:elliptic_full}
        \sum_{t: \omega_t = 1} \|\Delta z^*_{t,a_t,b_t}\|_{\widetilde{V}_{t-1}^{-1}} \leq \sum_{t=1}^{T} \|\Delta z^*_{t,a_t,b_t}\|_{\widetilde{V}_{t-1}^{-1}} = \widetilde{\mathcal{O}}(\sqrt{dT}).
    \end{align} ignoring logarithm terms and omit sub-leading terms.

    \item \textbf{Case 2 ($p=1$):} Substituting $p=1$, the leading term scales with $T^{1-1} (\mathcal{O}(d \log T))^1 = \widetilde{\mathcal{O}}(d)$. Similarly, bounding the partial sum by the total sum yields:
    \begin{align} \label{eq:elliptic_partial}
        \sum_{t: \omega_t < 1} \|\Delta z^*_{t,a_t,b_t}\|^2_{\widetilde{V}_{t-1}^{-1}} \leq \sum_{t=1}^{T} \|\Delta z^*_{t,a_t,b_t}\|^2_{\widetilde{V}_{t-1}^{-1}} = \widetilde{\mathcal{O}}(d).
    \end{align} ignoring logarithm terms and omit sub-leading terms.
\end{itemize}
Combining \Cref{eq:cumulative_raw,eq:bound_elliptic,eq:elliptic_partial,eq:elliptic_full}
and recalling that $\beta_T = \widetilde{\mathcal{O}}(\sqrt{d} + \lambda M)$, the cumulative regret decomposes as:
\begin{align}
    2R_T &= I_1 + I_2,\nonumber
\end{align}
where:
\begin{align}
    I_1 &\coloneqq C_0 (L_\Theta + \beta_T) \sqrt{d_x} \log(T/\delta) \sum_{t=1}^{T} t^{-a} = \widetilde{\mathcal{O}}\bigl(\sqrt{d \cdot d_x} \, T^{1-\alpha}\bigr), \nonumber \\
    I_2 &\coloneqq 3\beta_T \sum_{t=1}^{T} \|\Delta z^*_{t,a_t,b_t}\|_{\widetilde{V}_{t-1}^{-1}} = \widetilde{\mathcal{O}}\bigl(d\sqrt{T} + d(\mathcal{C} + \mathcal{D})\bigr). \nonumber
\end{align}

Therefore, the total cumulative regret satisfies:
\begin{align}
    \boxed{R_T = \widetilde{\mathcal{O}}\Bigl(\bigl(d + \sqrt{d \cdot d_x}\bigr)\sqrt{T} + d(\mathcal{C} + \mathcal{D})\Bigr).} \nonumber
\end{align}
This completes the proof of \Cref{theorem:upper}. \qed

\section{Proof of \Cref{thm:lowerbound}}
\label{sec:proof_lower}
We adopt the piecewise linear link function $\sigma(x) = \frac{1}{2} + x$ for $x \in [-\frac{1}{2}, \frac{1}{2}]$. For any scaling factor $\kappa > 0$, we define the scaled link function as $\sigma_\kappa(x) \coloneqq \sigma(\kappa x)$. Throughout this analysis, we assume that post-serving contexts are absent. We first introduce a fundamental lower bound that encapsulates both statistical variance and adversarial corruptions.

\begin{lemma}[Statistical and Corruption Lower Bound {\citep[Theorem 5.4]{di2024nearly}}]
\label[lemma]{lem:stat_corr_lb}
For any learning algorithm $\mathcal{A}$, there exists an environment $\mathcal{I}_1$ such that the expected regret is lower-bounded as:
\begin{align}
    \mathbb{E}[\mathrm{Regret}(\mathcal{A}; \mathcal{I}_1)] \geq \Omega(d\sqrt{T} + dC), \nonumber
\end{align}
where $C$ denotes the total corruption budget.
\end{lemma}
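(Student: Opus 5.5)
Since the lemma is quoted from \citet[Theorem 5.4]{di2024nearly}, the cleanest plan is a reduction. First I would check that the hard instances constructed there are admissible instances of our problem. Then I would recall why those instances force the two terms, and combine them.

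\textbf{Step 1 (embedding).} Take $d_y=0$, so $z_{t,k}=x_{t,k}$ and $\phi_*$ is trivial; this matches the standing premise that post-serving contexts are absent. Set all delays to $\tau_t=0$, which is admissible in both regimes: it is the degenerate sub-Gaussian law with $\mu_\tau=0$, and it spends no adversarial budget. With these choices, $\mathcal{H}_t=[t]$, and our feedback and regret definitions coincide with the strong-corruption dueling model of \citet{di2024nearly}. The link $\sigma_\kappa(x)=\tfrac12+\kappa x$ is linear on the relevant range, with $\dot\sigma_\kappa=\kappa$, so \Cref{as:kappa} holds. I would rescale the instance features so that \Cref{as:zbound} holds, which moves only constants.

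\textbf{Step 2 (statistical term).} Use a hypercube family: $\Theta\in\{\pm\Delta\}^d$, with an action set for which a pair's regret decomposes coordinatewise. A duel's outcome is Bernoulli with mean $\tfrac12+\kappa\langle\Theta,\Delta z\rangle$. Flipping one coordinate changes the KL divergence per round by $O(\kappa^2\Delta^2)$ times that coordinate's "exploration mass". Assouad's lemma then gives regret $\Omega(d\Delta T)$ unless $T\kappa^2\Delta^2/d\gtrsim 1$. Choosing $\Delta\asymp \sqrt{d/T}$, or $1/(\kappa\sqrt{T})$ in the normalized form, yields $\Omega(d\sqrt T)$. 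The hypothesis $T\ge 4d^2/25$ from \Cref{thm:lowerbound} keeps $\Delta$ in the linear range of the link.

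\textbf{Step 3 (corruption term).} Here the adversary simulates a reference instance. For a pair of instances that differ in sign, the adversary rewrites each outcome so that its law equals that under $\Theta=0$, i.e. Bernoulli$(1/2)$. The expected corruption per round is $O(|\kappa\langle\Theta,\Delta z\rangle|)$, and the per-coordinate gap is chosen constant. The learner then receives no information for roughly $C$ rounds, during which it pays constant regret in each of the $d$ coordinates. This gives $\Omega(dC)$; the condition $T\ge dC$ keeps this phase inside the horizon.

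\textbf{Step 4 (combine).} Take whichever of the two instances gives the larger regret, using $\max(u,v)\ge (u+v)/2$.

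\textbf{Main obstacle.} The delicate step is Step 3: obtaining a factor $d$, not $1$, in front of $C$. A naive simulation spends budget proportional to the full inner product $|\langle\Theta,\Delta z\rangle|$, and that only yields $\Omega(C)$. I would first try the coordinatewise (block) construction of \citet{di2024nearly}, in which each round's comparison involves a single coordinate. Then per-round corruption cost and per-round regret are of the same order, while information is suppressed across all $d$ blocks in parallel. I would verify the $d$-scaling against their normalization before relying on it.
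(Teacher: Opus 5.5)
Your reduction is how the paper handles this lemma. The paper gives no argument beyond citing Theorem~5.4 of Di et al. Your Step~1 admissibility check ($d_y=0$, $\tau_t\equiv 0$, the slope-$\kappa$ piecewise-linear link, rescaled features) is correct and supplies more than the paper does, so citing after Step~1 suffices.

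\textbf{Step 3 has a genuine gap.} Your own reconstruction of the $dC$ term does not stand on its own. Simulating $\Theta=0$ costs the adversary $\kappa|\langle\Theta,a_t-b_t\rangle|\le 2\kappa r_t$ per round, because both gaps are nonnegative. So whenever spending and regret are of the same order, the budget certifies only $\Omega(C/\kappa)$. Equal order is exactly what your ``main obstacle'' paragraph proposes. Your first version fails for a related reason. On a hypercube, where regret splits over $d$ coordinates, keeping $\kappa\langle\Theta,a-b\rangle$ in the linear range forces $\Delta\lesssim 1/(\kappa d)$. Per-round regret is then $O(1/\kappa)$, not $d$ times a constant.

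The factor $d$ needs an asymmetry you never state. The adversary hides \emph{one} unknown coordinate and pays only on the rare rounds that touch it, while the learner pays regret on essentially every round. This is the ``discrete action set with $d$ sub-instances'' in the paper's remark. Concretely:
\begin{itemize}
\item Take arms $e_1,\dots,e_d$ (rescaled) and $\Theta_*=\Delta e_{i^*}$ with $i^*$ uniform and $\kappa\Delta$ a constant. The adversary replaces the outcome of every duel involving $e_{i^*}$ by a fair coin.
\item Duels avoiding $e_{i^*}$ are already fair coins and cost the adversary nothing. Yet every round except $a_t=b_t=i^*$ has $r_t\ge\Delta/2$.
\item Let $N_i(n)$ count the rounds among the first $n$ in which $e_i$ is dueled. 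Under the all-fair-coins environment, whose law does not depend on $i^*$, $\sum_i\mathbb{E}_0[N_i(n)]\le 2n$. By Markov, $\mathbb{P}(N_{i^*}(n)>C)\le 2n/(dC)$.
\item Take $n=dC/4\le T$. With probability at least $1/2$ the adversary never exhausts its budget and the learner sees exactly the null data. Its regret is then at least $(\Delta/2)(n-N_{i^*}(n))=\Omega(dC/\kappa)$.
\end{itemize}

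\textbf{Step 2 has a smaller inconsistency.} You pair the hypercube's per-round regret scale $d\Delta$ with the unit ball's information threshold $T\kappa^2\Delta^2/d\gtrsim 1$. Taken literally, $\Delta\asymp\sqrt{d/T}$ would give $\Omega(d^{3/2}\sqrt T)$, which exceeds the known $\widetilde{\mathcal{O}}(d\sqrt T)$ upper bounds. On the hypercube one duel probes every coordinate, so the threshold is $T\kappa^2\Delta^2\gtrsim 1$ and $\Delta\asymp 1/(\kappa\sqrt T)$. On the unit ball the per-round regret is only $\sqrt d\,\Delta$, and $\Delta\asymp\sqrt{d/T}/\kappa$. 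Either consistent choice yields $\Omega(d\sqrt T/\kappa)$.
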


To further account for the impact of feedback latency, we characterize the fundamental limits under an adversarial delay model.

\begin{lemma}[Adversarial Delay Lower Bound]
\label[lemma]{lem:delay_lb}
For any algorithm $\mathcal{A}$ operating within an adversarial delay budget $\Lambda$, there exists an instance $\mathcal{I}_2$ such that:
\begin{align}
    \mathbb{E}[\mathrm{Regret}(\mathcal{A}; \mathcal{I}_2)] \geq \Omega(\sqrt{d\Lambda}). \nonumber
\end{align}
\end{lemma}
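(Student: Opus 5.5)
We prove \Cref{lem:delay_lb} with a batching construction. A learner facing delays can be forced to act on stale information for long stretches, so on a hard instance it pays roughly the full-information regret of a shortened horizon, multiplied by the length of each stale stretch. Throughout we use the scaled piecewise-linear link $\sigma_\kappa$, so the preference probabilities are affine in $\langle \Theta, \Delta z\rangle$ and KL divergences are easy to control. We also assume post-serving contexts are absent, so $z_{t,k}=x_{t,k}$.

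\textbf{Step 1 (batched delay schedule).} Fix a block length $B$ and split $[T]$ into consecutive blocks of length $B$. Within each block, the adversary delays every round's feedback until the end of that block: round $s$ in a block ending at $t_j$ gets $\tau_s = t_j - s + 1$. The delay spent on one block is $\sum_{k=1}^{B} k \approx B^2/2$. We only apply this to the first $m$ blocks, and the budget constraint $\sum_t\tau_t\le\Lambda$ requires $mB^2/2\le \Lambda$. All later rounds get zero delay. Under this schedule, the learner's choices inside a block depend only on feedback from earlier blocks. An algorithm facing these delays therefore induces a \emph{batched} algorithm that makes $m$ adaptive decisions, each replicated (possibly randomized) over $B$ rounds.

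\textbf{Step 2 (hard instance and reduction).} Use the standard hypercube instance for linear dueling bandits. Take $\Theta\in\{\pm\epsilon\}^d$ and let the arms be the hypercube vertices scaled so that $\|z\|\le 1/2$. Each pairwise comparison yields a Bernoulli observation whose mean depends linearly on $\Theta$ through $\sigma_\kappa$. Regret decomposes coordinatewise: on each coordinate where the chosen arms disagree in sign with $\Theta$, the learner loses $\Omega(\epsilon/\sqrt d)$ per round. By Assouad's lemma, flipping one coordinate changes the law of the observations seen during the $m$ delayed blocks by KL at most $O(mB\,\kappa^2\epsilon^2/d)$. Take $\epsilon \asymp \sqrt{d/(\kappa^2 mB)}$, which keeps these per-coordinate KLs bounded. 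Then a constant fraction of coordinates are misidentified with constant probability, and within the first $mB$ rounds this costs $\Omega(mB\cdot d\cdot\epsilon/\sqrt d)=\Omega(\sqrt{d\,mB}/\kappa)$.

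The batched structure is what improves on the plain $\sqrt{d\,mB}$ rate. Inside a block the learner cannot adapt, so within the first block, before any feedback arrives, it suffers the full gap on every round. We can make $\epsilon$ as large as a constant there, since no information is available at all. This gives $\Omega(B)$ regret per fully blind block, and summing over a geometric sequence of blocks gives $\Omega(\sqrt{d}\,B)$ with a suitable per-coordinate gap $\epsilon\asymp\sqrt d/B$. Concretely, we choose a single block, $m=1$, with $B=\sqrt{2\Lambda}$ (admissible since $\Lambda\le T^2$). During that block the learner has no feedback, and each of the $d$ coordinates with gap $\epsilon/\sqrt d$ and $\epsilon=\Theta(1)$ is a coin flip. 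The expected regret is then $\Omega(B\cdot\sqrt d\,\epsilon/\sqrt d\cdot\sqrt d)$. This needs careful normalization so that the per-round loss is $\Theta(\sqrt{d}/\sqrt{d})$ summed over coordinates, which makes the total $\Omega(\sqrt{d}\,\sqrt{\Lambda})=\Omega(\sqrt{d\Lambda})$.

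\textbf{Main obstacle.} The delicate part is the normalization in Step 2. With $\|z\|\le 1/2$ and $\|\Theta\|\le M$, the per-round utility gap of a random guess is bounded by $M$. To get $\sqrt d$ per round we must let $\|\Theta_*\|$ scale like $\sqrt d\,\epsilon$ with $\epsilon$ constant, which requires $M\gtrsim\sqrt d$, and we then divide by $\kappa$ via the link scaling. I would first try the sign-vector construction $\Theta=\frac{\epsilon}{\sqrt d}\cdot s$ with arms $\pm\frac{1}{2\sqrt d}e_i$ summed over coordinates. If the per-round blind regret turns out to be only $O(1)$, the fallback is to combine a blind block of length $\sqrt{\Lambda}$ with Assouad over a block sequence of length $\sqrt{\Lambda/d}$ each, so that the $d$ in the KL bound supplies the remaining $\sqrt d$. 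Finally we verify that $\mathbb{E}[\text{Regret}]\ge\Omega(\sqrt{d\Lambda})$ holds uniformly for every algorithm via the Bayes average over $s$.
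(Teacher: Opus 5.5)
Your blind-block argument in the ``Concretely'' part of Step~2 is the paper's proof. A single block of length $B$ with $\tau_s=B-s+1$ empties the history for $B=\Theta(\sqrt{\Lambda})$ rounds. The uniform sign prior then gives every blind action zero expected utility, and with $\Theta_*\in\{\pm\epsilon\}^d$ and $\epsilon$ constant, each blind round costs $\Theta(\sqrt d\,\epsilon)$. The paper takes $\Delta=1/8$, unit-ball actions and $a^*=\mathrm{sign}(\Theta_*)/\sqrt d$.

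The batching and Assouad machinery of Step~1 and the first half of Step~2 plays no role. The per-round KL bound in Assouad's argument is the same whether or not feedback is batched, so it only reproduces the delay-free statistical bound. Its arithmetic is also off: $mB\sqrt d\,\epsilon$ with $\epsilon\asymp\sqrt{d/(\kappa^2 mB)}$ equals $d\sqrt{mB}/\kappa$, not $\sqrt{d\,mB}/\kappa$.

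The genuine gap is in your ``Main obstacle'' paragraph, where you set that argument aside and propose first trying $\Theta=\frac{\epsilon}{\sqrt d}s$ with arms $\frac{1}{2\sqrt d}\{\pm1\}^d$. There $\|\Theta\|_2=\epsilon$, so a blind round costs only $\epsilon/2$ and you get $\Omega(\sqrt{\Lambda})$, not $\Omega(\sqrt{d\Lambda})$. No fallback can recover the missing $\sqrt d$ while $\|\Theta_*\|_2=O(1)$. Since $\|\Delta z\|_2\le 1$, every round costs at most $\|\Theta_*\|_2$, so total regret is at most $\|\Theta_*\|_2\,T$. That is below $\sqrt{d\Lambda}$ once $\Lambda$ is near $T^2$ (allowed in \Cref{thm:lowerbound}) and $d$ is large.

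Your blocks of length $\sqrt{\Lambda/d}$ do yield about $2\sqrt{d\Lambda}$ delayed rounds. But with $\|\Theta\|_2=\epsilon$ the per-coordinate KL after $n$ rounds is of order $\kappa^2\epsilon^2 n/d^2$ regardless of the delays. So the learner stays uninformed only when $\Lambda\lesssim d^3$, and even then the bound owes nothing to the delays.

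The $\sqrt d$ must therefore come from the parameter norm: commit to $\|\Theta_*\|_2\asymp\sqrt d$, i.e.\ $M\gtrsim\sqrt d$. The paper does exactly this, silently. It has $\|\Theta_*\|_2=\sqrt d/8$ and actions of norm $1$, which also pushes $\langle\Theta_*,\Delta z\rangle$ outside the domain of the linear link. That is harmless during the blind phase, but the instance must be defined with a clipped link.

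With that commitment, and with $B=\lfloor\sqrt{\Lambda}\rfloor$ instead of $\sqrt{2\Lambda}$ so that $B\le T$ and $B(B+1)/2\le\Lambda$, your proof is complete.
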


\begin{proof}
We adopt the continuous action set construction from \citet{di2024nearly}. Consider a parameter $\Theta_* \in \{-\Delta, +\Delta\}^d$ where each coordinate $\Theta_{*,i}$ is drawn i.i.d.\ uniformly at random, with $\Delta = 1/8$. The action set is defined as $\mathcal{A} = \{x \in \mathbb{R}^d : \|x\|_2 \leq 1\}$, and the utility function is linear in the parameter, i.e., $u(a) = \langle \Theta_*, a \rangle$ for any action $a \in \mathcal{A}$. Under this construction, the optimal action is given by $a^* = \frac{1}{\sqrt{d}}\mathrm{sign}(\Theta_*)$, yielding an optimal utility of
\begin{align}
u(a^*) = \left\langle \Theta_*, \frac{1}{\sqrt{d}}\mathrm{sign}(\Theta_*) \right\rangle = \frac{1}{\sqrt{d}} \sum_{i=1}^{d} |\Theta_{*,i}| = \frac{1}{\sqrt{d}} \cdot d\Delta = \sqrt{d}\Delta.
\end{align}

We now describe the adversary's delay assignment strategy. Define
\begin{align}
M = \left\lfloor\frac{-1+\sqrt{1+8\Lambda}}{2}\right\rfloor = \Theta(\sqrt{\Lambda}),
\end{align}
which corresponds to the largest integer satisfying $\frac{M(M+1)}{2} \leq \Lambda$. The adversary assigns delay $\tau_t = M - t + 1$ to round $t$ for all $t \in [M]$, ensuring that all feedback is withheld until after round $M$. This creates a blind phase during which the learner receives no information.

It remains to analyze the regret incurred during this blind phase. In the dueling bandits framework, the learner selects two actions $(a_t, b_t)$ at each round, and the per-round regret is defined as
\begin{align}
\mathrm{Regret}_t = \bigl(u(a^*) - u(a_t)\bigr) + \bigl(u(a^*) - u(b_t)\bigr),
\end{align}
which measures the suboptimality of both selected actions relative to the optimal action $a^*$.

For each round $t \in [M]$, the learner operates with an empty history $\mathcal{H}_t = \emptyset$. Since the learner has received no feedback, the actions $a_t$ and $b_t$ are chosen independently of the true parameter $\Theta_*$. By the symmetry of the prior distribution, each coordinate satisfies $\mathbb{E}[\Theta_{*,i}] = \frac{1}{2}(+\Delta) + \frac{1}{2}(-\Delta) = 0$, which implies $\mathbb{E}[\Theta_*] = \mathbf{0}$. Consequently, for any action $a_t$ that is independent of $\Theta_*$, we have
\begin{align}
\mathbb{E}[u(a_t)] = \mathbb{E}[\langle \Theta_*, a_t \rangle] = \langle \mathbb{E}[\Theta_*], a_t \rangle = \langle \mathbf{0}, a_t \rangle = 0.
\end{align}
By the same argument, $\mathbb{E}[u(b_t)] = 0$. Therefore, the expected per-round regret is
\begin{align}
\mathbb{E}[\mathrm{Regret}_t] = 2u(a^*) - \mathbb{E}[u(a_t)] - \mathbb{E}[u(b_t)] = 2\sqrt{d}\Delta - 0 - 0 = 2\sqrt{d}\Delta.
\end{align}

Aggregating over all $M$ rounds in the blind phase yields
\begin{align}
\mathbb{E}[\mathrm{Regret}] \geq \sum_{t=1}^{M} \mathbb{E}[\mathrm{Regret}_t] = M \cdot 2\sqrt{d}\Delta = \Omega(\sqrt{\Lambda} \cdot \sqrt{d}) = \Omega(\sqrt{d\Lambda}),
\end{align}
which completes the proof.
\end{proof}

\begin{lemma}[Stochastic Delay Lower Bound]\label{lem:stochastic_delay_lb}
For any algorithm $\mathcal{A}$ with stochastic delays of mean $\mu_\tau$, there exists an instance $\mathcal{I}_3$ such that
\begin{align}
    \mathbb{E}[\mathrm{Regret}(\mathcal{A}; \mathcal{I}_3)] \geq \Omega(d\mu_\tau).
\end{align}
\end{lemma}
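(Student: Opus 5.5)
We would mirror the blind-phase construction of \Cref{lem:delay_lb}, now with stochastic delays. Take the instance of \citet{di2024nearly}: $\Theta_*\in\{-\Delta,+\Delta\}^d$ with i.i.d.\ uniform coordinates and a fixed constant $\Delta$ (e.g.\ $\Delta=1/8$). To obtain a linear factor $d$ rather than $\sqrt{d}$, we would replace the Euclidean ball by the hypercube-type action set $\{\pm 1/\sqrt{d}\}^d$ (or restrict to actions with $\ell_\infty$-norm $1/\sqrt{d}$), and rescale $\Delta$ so that $u(a^*)=\sqrt{d}\,\Delta$ is of order $d$ after normalization. The intended accounting is: each of the $d$ coordinates contributes a constant per-round loss when its sign is guessed blindly. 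For the delay distribution we would take the deterministic (trivially sub-Gaussian, $\sigma\to 0$) delay $\tau_t\equiv\mu_\tau$, or a tightly concentrated sub-Gaussian law around $\mu_\tau$. Under this choice, no feedback arrives before round $\lfloor\mu_\tau\rfloor+1$; the condition $\mu_\tau<T$ ensures this blind window lies inside the horizon.

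During the blind window $t\le\lfloor\mu_\tau\rfloor$ we have $\mathcal{H}_{t-1}=\emptyset$, so $(a_t,b_t)$ is independent of $\Theta_*$. Exactly as in \Cref{lem:delay_lb}, symmetry gives $\mathbb{E}[\Theta_*]=\mathbf 0$, hence $\mathbb{E}[u(a_t)]=\mathbb{E}[u(b_t)]=0$ and $\mathbb{E}[\mathrm{Regret}_t]=2u(a^*)$. Summing over the window gives $\Omega(\mu_\tau\, u(a^*))$. To pass from the $\sqrt{d}$ supplied by the ball construction to the claimed $d$, we would do a coordinate-wise decomposition: write the regret as $\sum_{i=1}^d(\text{per-coordinate gap})$, and argue that each coordinate independently incurs $\Omega(\Delta')$ loss per blind round. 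Here $\Delta'$ is the per-coordinate gap, chosen constant under the normalization $\|z\|\le 1/2$ once $\kappa$ is absorbed, in the same way the corruption bound of \Cref{lem:stat_corr_lb} attains its factor $d$. For a non-degenerate sub-Gaussian law, we would instead use a concentration step: with constant probability at least $\mu_\tau/2$ initial rounds remain unobserved, via a Chernoff bound on $\tau_t\ge\mu_\tau/2$. Then we would lower bound on that event.

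The main obstacle is the dimension scaling. The blind-phase argument naturally yields $\sqrt{d}\,\mu_\tau$ under bounded features, so reaching $d\mu_\tau$ requires either a normalization in which per-coordinate gaps stay constant, or the $1/\kappa$ scaling of the link $\sigma_\kappa$. I would first try to reuse the hard-instance scaling of \citet[Theorem 5.4]{di2024nearly}, where the $d\mathcal{C}$ term arises by an analogous mechanism, and transplant it with corruption replaced by withheld feedback. If this fails, I would settle for $\Omega(\sqrt{d}\,\mu_\tau)$, which already matches up to the same $\sqrt{d}$ gap noted for adversarial delays.
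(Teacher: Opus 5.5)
Your core argument is the paper's proof. It uses the degenerate delay $\tau_t\equiv\mu_\tau$ (a valid sub-Gaussian law) and the symmetric prior $\Theta_*\in\{\pm\Delta\}^d$. Blind actions are independent of $\Theta_*$, so they have zero expected utility, and you sum over the $\mu_\tau$ blind rounds.

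The step you hesitate on needs no extra idea in the paper. It takes the \emph{unnormalized} hypercube $\mathcal{A}=\{-1,1\}^d$ with constant $\Delta=1/4$. Then $u(a^*)=d\Delta$, each blind round costs $2d\Delta$, and the total is $2d\Delta\mu_\tau$. Your option ``$\{\pm1/\sqrt d\}^d$ with $\Delta$ rescaled'' is the same instance, provided you commit to $\Delta\asymp\sqrt d$. That just moves the $\sqrt d$ from the action into the parameter and gives identical inner products, so it already finishes the proof. The coordinate-wise decomposition and the Chernoff step for non-degenerate delays are unnecessary. The fallback $\Omega(\sqrt d\,\mu_\tau)$ would not prove the statement as written.

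Two corrections to your discussion of the obstacle. First, the $1/\kappa$ scaling of $\sigma_\kappa$ cannot supply the missing $\sqrt d$. It multiplies every term by $1/\kappa$ independently of $d$. In the blind phase the link plays no role anyway; the rescaling only inflates $\|\Theta_*\|$.

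Second, your suspicion about normalization is correct. If $\|z\|_2\le 1/2$ (\Cref{as:zbound}) and $\|\Theta_*\|_2\le M$ with $M$ fixed, every round costs $O(M)$. So no blind-phase instance exceeds $O(M\mu_\tau)$, and with $\Theta_*\in\{\pm\Delta\}^d$ and constant $\Delta$ you get at most order $\sqrt d\,\mu_\tau$. The paper's $d\mu_\tau$ comes entirely from $\|a\|_2=\sqrt d$ and $\|\Theta_*\|_2=\sqrt d/4$, i.e.\ from an instance outside \Cref{as:zbound}. The lemma holds only in that unnormalized sense, which is exactly the scaling your first construction adopts.
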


\begin{proof}
Consider the deterministic delay distribution $\tau_t = \mu_\tau$ for all $t$, which trivially satisfies $\mathbb{E}[\tau] = \mu_\tau$. Let $\Theta_* \in \{-\Delta, +\Delta\}^d$ where each coordinate $\Theta_{*,i}$ is drawn i.i.d.\ uniformly at random, with $\Delta = 1/4$. The action set is defined as the hypercube $\mathcal{A} = \{-1,1\}^d$, and the utility function is linear in the parameter, i.e., $u(a) = \langle \Theta_*, a \rangle = \sum_{i=1}^d a_i \Theta_{*,i}$ for any action $a \in \mathcal{A}$. Under this construction, the optimal action is $a^* = \mathrm{sign}(\Theta_*)$, which matches the sign of each coordinate precisely, yielding an optimal utility of
\begin{align}
u(a^*) = \sum_{i=1}^d \mathrm{sign}(\Theta_{*,i}) \cdot \Theta_{*,i} = \sum_{i=1}^d |\Theta_{*,i}| = d\Delta.
\end{align}

For all rounds $t \leq \mu_\tau$, the learner operates with an empty history $\mathcal{H}_t = \emptyset$, creating a blind phase during which no feedback is available. Since the learner has received no information, the actions $a_t$ and $b_t$ are chosen independently of the true parameter $\Theta_*$. By the symmetry of the prior distribution, each coordinate satisfies $\mathbb{E}[\Theta_{*,i}] = \frac{1}{2}(+\Delta) + \frac{1}{2}(-\Delta) = 0$, which implies $\mathbb{E}[\Theta_*] = \mathbf{0}$. Consequently, for any action $a_t$ that is independent of $\Theta_*$, we have
\begin{align}
\mathbb{E}[u(a_t)] = \mathbb{E}[\langle \Theta_*, a_t \rangle] = \langle \mathbb{E}[\Theta_*], a_t \rangle = \langle \mathbf{0}, a_t \rangle = 0.
\end{align}
By the same argument, $\mathbb{E}[u(b_t)] = 0$. In the dueling bandits framework, the learner selects two actions $(a_t, b_t)$ at each round, and the expected per-round regret is
\begin{align}
\mathbb{E}[\mathrm{Regret}_t] = 2u(a^*) - \mathbb{E}[u(a_t)] - \mathbb{E}[u(b_t)] = 2d\Delta - 0 - 0 = 2d\Delta.
\end{align}

Aggregating over all $\mu_\tau$ rounds in the blind phase yields
\begin{align}
\mathbb{E}[\mathrm{Regret}] \geq \sum_{t=1}^{\mu_\tau} \mathbb{E}[\mathrm{Regret}_t] = \mu_\tau \cdot 2d\Delta = \Omega(d\mu_\tau),
\end{align}
which completes the proof.
\end{proof}

\begin{proof}[Proof of Theorem~\ref{thm:lowerbound}]
By Lemmas~\ref{lem:stat_corr_lb}--\ref{lem:stochastic_delay_lb}, for any algorithm $\mathcal{A}$, we have
\begin{align}
\sup_{\mathcal{I}} \mathbb{E}[\mathrm{Regret}(\mathcal{A}; \mathcal{I})] 
\geq \max\left\{\Omega(d\sqrt{T} + dC),\, \Omega(\sqrt{d\Lambda}),\, \Omega(d\mu_\tau)\right\}.
\end{align}
Since adversarial and stochastic delays are mutually exclusive, and using the inequality $\max(A,B) \geq (A+B)/2$, we obtain
\begin{align}
\sup_{\mathcal{I}} \mathbb{E}[\mathrm{Regret}(\mathcal{A}; \mathcal{I})] 
\geq \Omega\left(d\sqrt{T} + dC + \max\{\sqrt{d\Lambda},\, d\mu_\tau\}\right).
\end{align}
Finally, for any $\kappa > 0$, consider the scaled link function $\sigma_\kappa(x) = \sigma(\kappa x)$. 
By the same scaling argument as in \citet[Theorem 5.4]{di2024nearly}, the lower bound becomes 
$\Omega\left((d\sqrt{T} + dC + \max\{\sqrt{d\Lambda}, d\mu_\tau\})/\kappa\right)$.
\end{proof}

\begin{remark}[Different Constructions]
The adversarial delay bound uses the continuous action set $\{x : \|x\|_2 \leq 1\}$, while the stochastic delay bound uses the discrete set $\{e_i\}_{i=1}^d$. This is because:
\begin{itemize}
    \item Adversarial delays can concentrate budget to create a blind phase of length $\Theta(\sqrt{\Lambda})$, and the continuous set yields per-round regret $\Theta(\sqrt{d})$, giving $\Omega(\sqrt{d\Lambda})$.
    \item Stochastic delays create a fixed blind phase of length $\mu_\tau$, and the discrete set with $d$ sub-instances yields $\Omega(d\mu_\tau)$ via minimax over coordinates.
\end{itemize}
Since these are different instances combined via minimax, both bounds are valid.
\end{remark}
\begin{remark}[On the Nature of the Lower Bound]
Our lower bound is established via the standard minimax framework, where different 
terms may arise from different hard instances. This is consistent with prior works 
\citep{di2024nearly,he2022nearly,gupta2019better}. The additive form 
$\Omega(A + B + C)$ follows from $\max(A,B,C) \geq (A+B+C)/3$, which is the 
standard way to express minimax lower bounds in the bandit literature.
\end{remark}

\begin{remark}[On the Additive Form]
The lower bound is established via the standard minimax framework following \citet{di2024nearly,he2022nearly,gupta2019better}. Different terms arise from different hard instance families:
\begin{itemize}
    \item $\Omega(d\sqrt{T}/\kappa)$: Statistical complexity from continuous action set with $\Delta = \Theta(\sqrt{d/T})$
    \item $\Omega(dC/\kappa)$: Corruption cost from discrete action set with $d$ sub-instances
    \item $\Omega(\sqrt{d\Lambda}/\kappa)$ or $\Omega(d\mu_\tau/\kappa)$: Delay cost from blind phase construction
\end{itemize}
\end{remark}

\section{Experimental Details}
\label{sec:exp_details}
\subsection{Implementation Details}

\begin{table}[ht]
    \centering
    \caption{Summary of Experimental Settings and Hyperparameters.}
    \label{tab:exp_settings}
    \begin{tabular}{l|c|c}
        \toprule
        \textbf{Parameter} & \textbf{Symbol} & \textbf{Value} \\
        \midrule
        \multicolumn{3}{l}{\textit{Environment}} \\
        Time Horizon & $T$ & 2,000 \\
        Pre-serving Dimension & $d_x$ & 10 \\
        Number of Arms & $K$ & 10 \\
        Number of Runs & - & 10 \\
        \midrule
        \multicolumn{3}{l}{\textit{Robustness Settings}} \\
        Corruption Budget & $\mathcal{C}$ & 25 \\
        Strategic Delay Budget & $\Lambda$ & 10,000 \\
        Stochastic Delay Mean & $\mu_\tau$ & 100 \\
        Stochastic Delay Std & $\sigma$ & 100 \\
        \midrule
        \multicolumn{3}{l}{\textit{Algorithm Hyperparameters}} \\
        Regularization & $\lambda$ & 1.0 \\
        MLP Hidden Units & - & (64, 64) \\
        MLP Learning Rate & $\eta$ & $10^{-3}$ \\
        Optimizer & Adam~\cite{kingma2014adam}\\
        \bottomrule
    \end{tabular}
\end{table}

In \Cref{sec:exp}, we empirically evaluate the performance of our proposed method, \term, alongside baseline algorithms within a synthetic contextual dueling bandit framework.
The dimension of the pre-serving context was set to $d_x=10$. At each round $t$, the environment generates a pool of $K=10$ arms. The ground-truth parameters $\Theta^* \in \mathbb{R}^d$ were sampled from a standard normal distribution and normalized to ensure valid utility scales. To investigate the method's adaptability to complex environmental dynamics, we tested four distinct mapping functions $\phi: \mathcal{X} \to \mathcal{Y}$ governing the relationship between pre-serving and post-serving contexts: \textit{Polynomial}, \textit{Absolute}, \textit{Cosine}, and \textit{Sinusoidal}. 
To verify the robustness of our method against data irregularities, we introduced both adversarial corruptions and feedback delays. The environment was constrained by a corruption budget of $\mathcal{C} = 25$ and a strategic delay budget of $\Lambda = 10,000$. Additionally, stochastic delays were modeled with a mean of $\mu_\tau = 100$ and a standard deviation of $\sigma = 100$. 
For the neural network approximation, we employed a Multi-Layer Perceptron (MLP) consisting of two hidden layers with 64 units each. The network was optimized using Adam~\cite{kingma2014adam} with a learning rate of $\eta = 10^{-3}$ and a regularization coefficient of $\lambda = 1.0$. The experiments were conducted over a time horizon of $T=2,000$ rounds, and all reported results were averaged over $10$ independent runs with different random seeds to ensure statistical reliability. 
For our method, we approximated the unknown mapping $f$ using a Multi-Layer Perceptron (MLP) with two hidden layers of 64 units each and ReLU activations.
The network was optimized using Adam with a learning rate of $10^{-3}$ and trained for 2 epochs per update.
Regarding the bandit hyperparameters, we set the regularization parameter to $\lambda = 1.0$.

 A comprehensive summary of the experimental configurations and hyperparameters is provided in \Cref{tab:exp_settings}. 

All experiments were conducted on a workstation equipped with an AMD Ryzen 9 9950X3D 16-Core Processor and 48GB of RAM.
The algorithms were implemented in Python 3.10, utilizing PyTorch for the neural network components and NumPy for vector operations.
The experiments were executed in a CPU-only environment, as the computational overhead of the multi-layer perceptron (MLP) utilized for context mapping was negligible, and no GPU acceleration was required.

\subsection{Related Algorithms for Contextual Dueling Bandits}

We review four representative algorithms for the contextual dueling bandit problem: RCDB, COLSTIM, MaxInP, and MaxPair-UCB. All these algorithms assume a linear relationship between the context-action features and the latent utility or reward.

\paragraph{RCDB (Robust Contextual Dueling Bandits).}
Proposed by \cite{di2024nearly}, RCDB is designed to handle adversarial feedback where a strong adversary may flip the preference labels. To mitigate the impact of malicious feedback, RCDB employs an \textit{uncertainty-weighted Maximum Likelihood Estimator (MLE)}. The estimator $\hat{\theta}_t$ is obtained by solving the weighted score equation:
\begin{equation}
    \lambda \theta + \sum_{\tau=1}^{t-1} w_\tau (\sigma(\phi_\tau^\top \theta) - o_\tau) \phi_\tau = 0,
\end{equation}
where $\phi_\tau = \phi(x_\tau, a_\tau) - \phi(x_\tau, b_\tau)$ denotes the differential feature vector, and $o_\tau$ is the observed feedback. The key innovation lies in the uncertainty-dependent weights $w_\tau = \min\{1, \alpha / \|\phi_\tau\|_{\Sigma_\tau^{-1}}\}$, which down-weight samples with high uncertainty to limit the adversary's influence. The algorithm selects the arm pair $(a_t, b_t)$ that maximizes a robust Upper Confidence Bound (UCB) objective:
\begin{equation}
    (a_t, b_t) = \operatorname*{argmax}_{a, b \in \mathcal{A}_t} \left\{ (\phi(x_t, a) + \phi(x_t, b))^\top \hat{\theta}_t + \beta_t \|\phi(x_t, a) - \phi(x_t, b)\|_{\Sigma_t^{-1}} \right\}.
\end{equation}

\paragraph{COLSTIM (CoLST Imitator).}
Introduced by \cite{lst}, COLSTIM operates under the Linear Stochastic Transitivity (LST) model. It adopts a randomized exploration strategy that imitates the underlying feedback process. Specifically, at each round $t$, it generates perturbed utility estimates for each arm $a$ by adding noise scaled by the confidence width:
\begin{equation}
    \widetilde{u}_{t,a} = x_{t,a}^\top \hat{\theta}_t + \epsilon_{t,a} \|x_{t,a}\|_{M_t^{-1}},
\end{equation}
where $\epsilon_{t,a}$ is a random perturbation sampled from a specific distribution (e.g., Gumbel) and $M_t$ is the Gram matrix. The first arm $a_t$ is chosen to maximize this perturbed utility, $\widetilde{u}_{t,a}$. The second arm $b_t$ is then selected as the \textit{toughest competitor} to $a_t$, maximizing the optimistic pairwise difference:
\begin{equation}
    b_t = \operatorname*{argmax}_{b \in \mathcal{A}_t} \left( (x_{t,b} - x_{t,a_t})^\top \hat{\theta}_t + c_t \|x_{t,b} - x_{t,a_t}\|_{M_t^{-1}} \right).
\end{equation}

\paragraph{MaxInP (Maximum Informative Pair).}
Developed by \cite{saha2021optimal}, MaxInP focuses on efficient exploration by maintaining a set of ``promising'' arms, $\mathcal{C}_t$, which contains arms that are plausibly the best with high probability. The algorithm avoids playing suboptimal arms by restricting selection to $\mathcal{C}_t$. Within this set, it selects the pair $(a_t, b_t)$ that maximizes the information gain, quantified by the variance of the pairwise difference:
\begin{equation}
    (a_t, b_t) = \operatorname*{argmax}_{a, b \in \mathcal{C}_t} \| \phi(x_t, a) - \phi(x_t, b) \|_{V_t^{-1}},
\end{equation}
where $V_t$ is the covariance matrix of the differential features. This \textit{max-variance} strategy ensures that the algorithm continually reduces uncertainty in the directions most relevant to distinguishing the best arms.

\paragraph{MaxPair-UCB.}
As discussed in \cite{variance-aware}, MaxPair-UCB is a UCB-based algorithm that simplifies the selection process of MaxInP by eliminating the explicit construction of a promising set. Instead, it directly selects the pair that maximizes an optimistic estimate of the sum-reward combined with an exploration bonus on the pairwise difference. The selection rule is given by:
\begin{equation}
    (a_t, b_t) = \operatorname*{argmax}_{a, b \in \mathcal{A}_t} \left\{ (x_{t,a} + x_{t,b})^\top \hat{\theta}_t + \beta_t \| x_{t,a} - x_{t,b} \|_{\Sigma_t^{-1}} \right\}.
\end{equation}
This approach naturally balances exploitation (maximizing the estimated sum score $(x_{t,a} + x_{t,b})^\top \hat{\theta}_t$) and exploration (maximizing the pairwise uncertainty $\| x_{t,a} - x_{t,b} \|_{\Sigma_t^{-1}}$), and serves as a strong baseline in variance-aware and adversarial settings.

\subsection{Implementation with PyTorch.}
Listing 1 demonstrates a simplified PyTorch implementation of our proposed algorithm, \texttt{RCDP-UCB}.
The class integrates the online learning of the post-serving context mapping with the robust bandit strategy.
To approximate the unknown mapping, a neural network is employed and trained online via the \texttt{train\_mapping} method, which updates the model using collected context-outcome pairs.
The \texttt{get\_features} method then concatenates the original context with the network's prediction to form the augmented feature vector used for decision-making.
In the \texttt{select\_arms} method, we employ an adaptive UCB strategy that relies on the \textit{Observed History} matrix (\texttt{self.W\_inv}) to construct confidence bounds based on actual localized information.
Critically, robustness is enforced by the \texttt{get\_weight} method, which computes a down-weighting factor using the \textit{Full History} matrix (\texttt{self.V\_inv}).
This ensures that updates with high forward-looking uncertainty are throttled before they can destabilize the model in the \texttt{update} step, effectively neutralizing the impact of potential strategic delays or corruptions.

\begin{lstlisting}[language=Python, caption=PyTorch Implementation of \term (Simplified)]
class RCDP_UCB(nn.Module):
    def __init__(self, dim, alpha=0.5, lambda_reg=1.0, rho=1.0, kappa=0.1):
        super().__init__()
        self.kappa = kappa
        self.dim = dim
        self.theta_hat = torch.zeros(dim)
        
        # Full History Matrix V (for Selection & Weighting)
        self.V_inv = (1.0 / lambda_reg) * torch.eye(dim)
        
        # Observed History Matrix W (for Gradient Update)
        self.W_inv = (1.0 / lambda_reg) * torch.eye(dim)
        
        self.alpha = alpha
        self.rho = rho # Robustness parameter (sqrt(d)/C_eff)

    def select_arms(self, Z):
        """
        Z: Features including predicted post-serving context (K, dim)
        Returns: a_t, b_t, omega_t
        """
        # 1. Champion (Greedy)
        utilities = Z @ self.theta_hat
        a_t = torch.argmax(utilities)
        z_a = Z[a_t]
        
        # 2. Challenger (Use V_inv for Optimism in Face of Uncertainty)
        Delta_Z = Z - z_a
        mean_diff = Delta_Z @ self.theta_hat
        
        # KEY: Use V_inv (Full History) for Confidence Width
        width_V = torch.sqrt(torch.sum((Delta_Z @ self.V_inv) * Delta_Z, dim=1))
        
        scores_b = mean_diff + self.alpha * width_V
        b_t = torch.argmax(scores_b)
        
        # 3. Calculate Robust Weight (omega_t)
        delta_z = Z[a_t] - Z[b_t]
        norm_V = torch.sqrt(delta_z @ self.V_inv @ delta_z)
        
        if norm_V < 1e-9:
            omega_t = 1.0
        else:
            omega_t = min(1.0, self.rho / norm_V)
            
        # 4. Phantom Update of V (Full History)
        # We update V immediately to reflect the "commitment" to this arm pair
        # Formula: V <- V + kappa * omega_t * outer(delta, delta)
        # Using Sherman-Morrison for V_inv
        v_vec = torch.sqrt(self.kappa * omega_t) * delta_z
        Av = self.V_inv @ v_vec
        denom = 1 + torch.dot(v_vec, Av)
        self.V_inv -= torch.outer(Av, Av) / denom
        
        return a_t, b_t, omega_t

    def update(self, delta_z, outcome, omega_t):
        """
        Real Update when Feedback Arrives
        """
        # 1. Update Observed Matrix W using Robust Weight
        w_vec = torch.sqrt(self.kappa * omega_t) * delta_z
        Aw = self.W_inv @ w_vec
        denom_w = 1 + torch.dot(w_vec, Aw)
        self.W_inv -= torch.outer(Aw, Aw) / denom_w
        
        # 2. Weighted Gradient Step
        mu_val = torch.sigmoid(torch.dot(self.theta_hat, delta_z))
        
        # Gradient direction weighted by omega_t * W_inv
        step = self.W_inv @ (omega_t * (outcome - mu_val) * delta_z)
        self.theta_hat += step
\end{lstlisting}

\section{Additional Experiments}\label{sec:app_exp}

\subsection{Experiments on Real-world Datasets}
\label{sec:appendix_real_data}
We validate the robustness and scalability of \term using eight diverse classification datasets from the UCI Machine Learning Repository and OpenML. For each dataset, we partition the feature vector into visible pre-serving contexts ($\mathbf{x}_t$) and latent post-serving features ($\mathbf{y}_t$), as summarized in \Cref{tab:real_data_stats}.

\paragraph{Dataset Preparation.}
In each round $t$, the environment randomly samples $K=30$ instances from the dataset to form the available set of arms $\mathcal{A}_t$. The ground-truth utility of an arm is defined by its class label $c \in \{0, \dots, C-1\}$, where higher-indexed classes are considered preferable. All feature values are standardized using zero-mean and unit variance before the start of the simulation.

\paragraph{Empirical Results.}
\Cref{fig:appendix_real_data} shows the results for three representative datasets: Magic Gamma Telescope ($d=6$), Statlog ($d=20$), and Spambase ($d=30$). Across all datasets and both delay regimes (strategic starvation and stochastic network latency), \term consistently outperforms robust and non-robust baselines. Notably, even on real-world distributions where the linear preference assumption is only an approximation, \term's weighting mechanism effectively mitigates the bias introduced by the combination of adversarial corruption and delayed feedback.

\begin{table}[ht]
\centering
\caption{Statistics of real-world datasets used in the experiments. The feature vectors are split into pre-serving $d$-dim and post-serving $e$-dim components. Utility is defined by class labels.}
\label{tab:real_data_stats}
\begin{small}
\begin{tabular}{lccccc}
\toprule
\textbf{Dataset} & \textbf{Samples} & \textbf{Total Features} & \textbf{Pre ($d$)} & \textbf{Post ($e$)} & \textbf{Classes} \\ \midrule
Magic  & 19,020 & 10 & 6 & 4 & 2 \\
Statlog (Satimage) & 6,430 & 36 & 20 & 16 & 6 \\
Spambase & 4,601 & 57 & 30 & 27 & 2  \\ \bottomrule
\end{tabular}
\end{small}
\end{table}

\begin{figure*}[ht]
    \centering
    \begin{subfigure}{0.31\textwidth}
        \includegraphics[width=\linewidth]{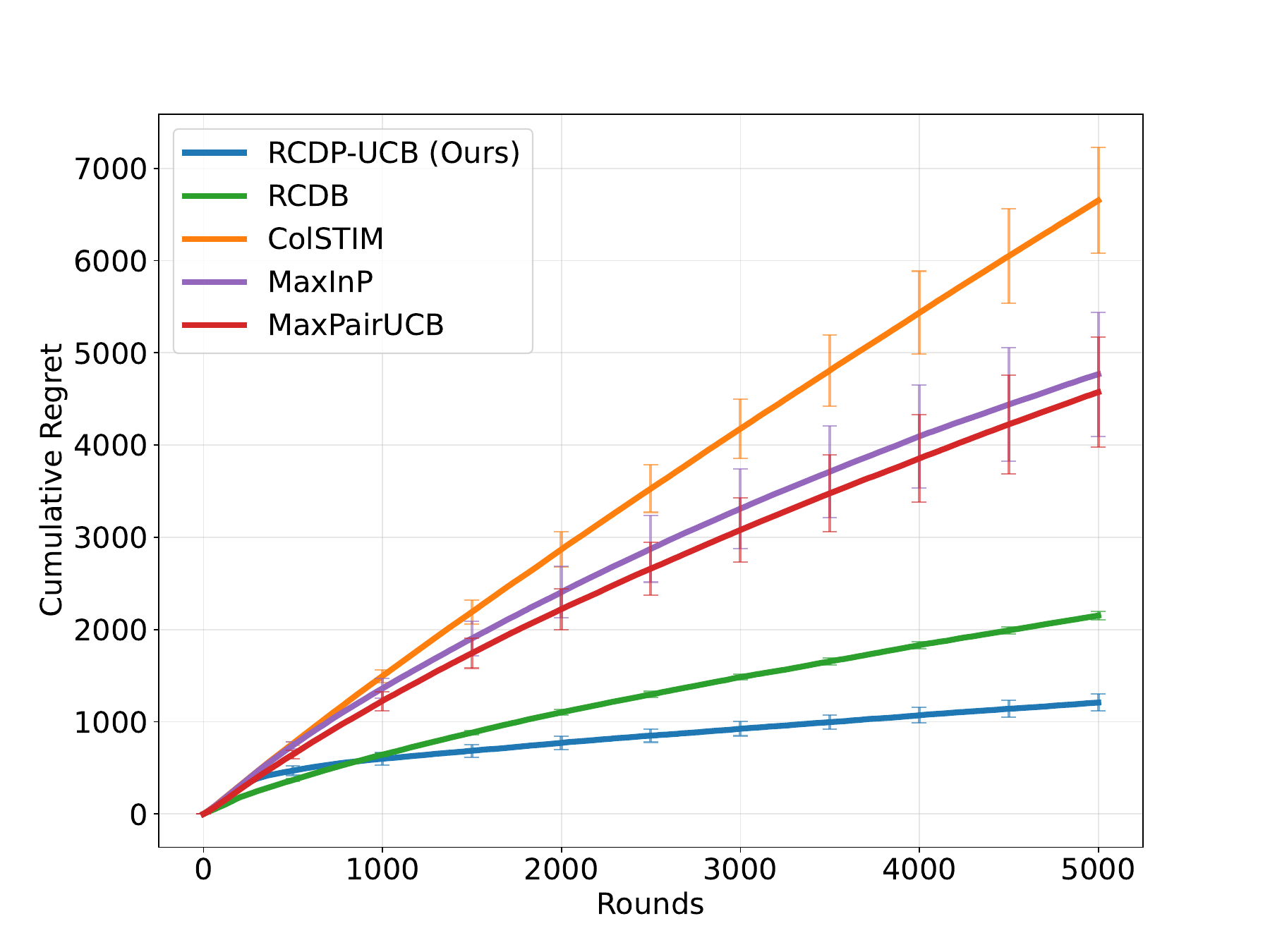}
        \caption{Magic: Strategic}
    \end{subfigure}
    \begin{subfigure}{0.31\textwidth}
        \includegraphics[width=\linewidth]{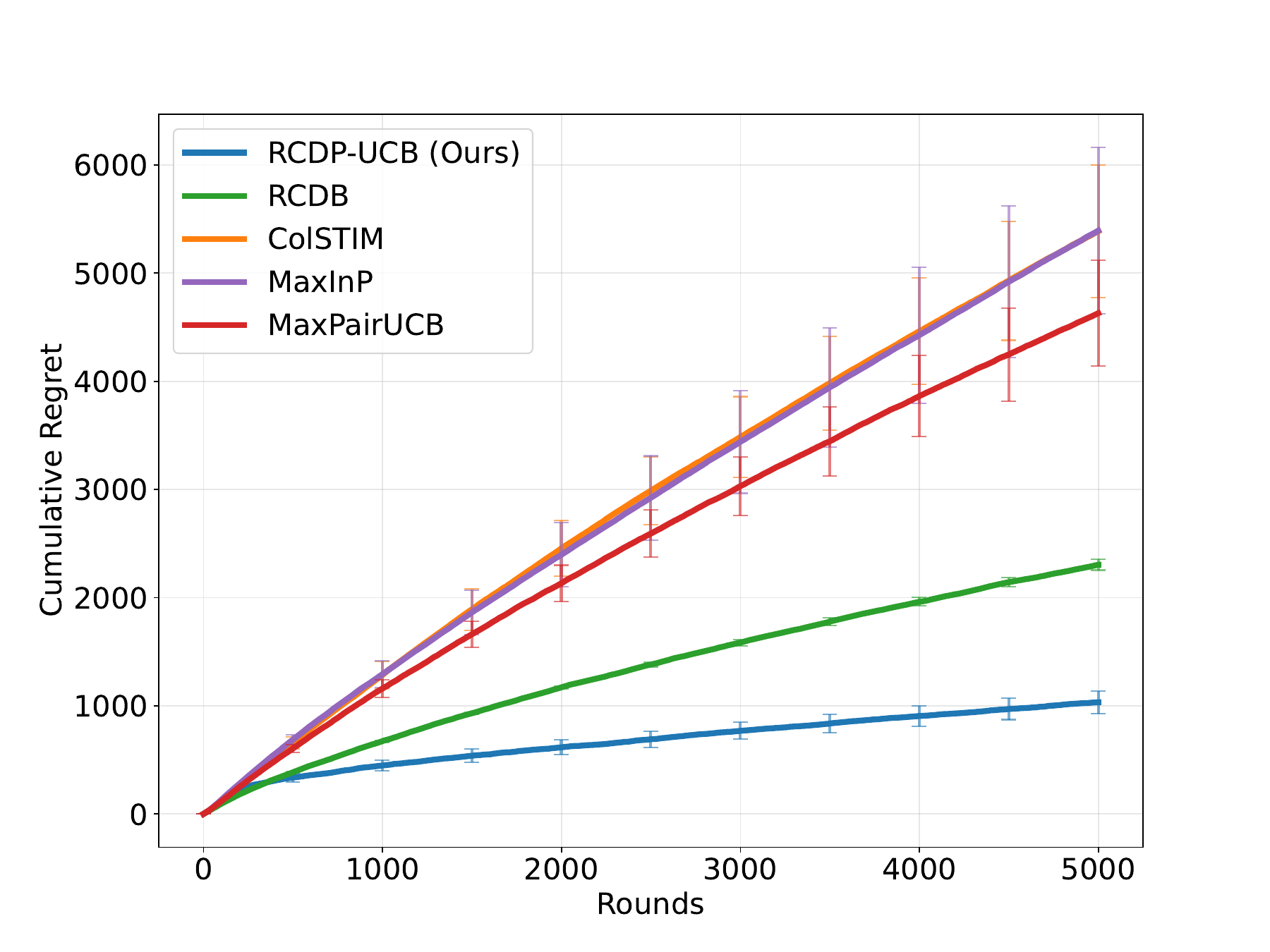}
        \caption{Magic: Stochastic}
    \end{subfigure}
    \hfill
    
    \begin{subfigure}{0.31\textwidth}
        \includegraphics[width=\linewidth]{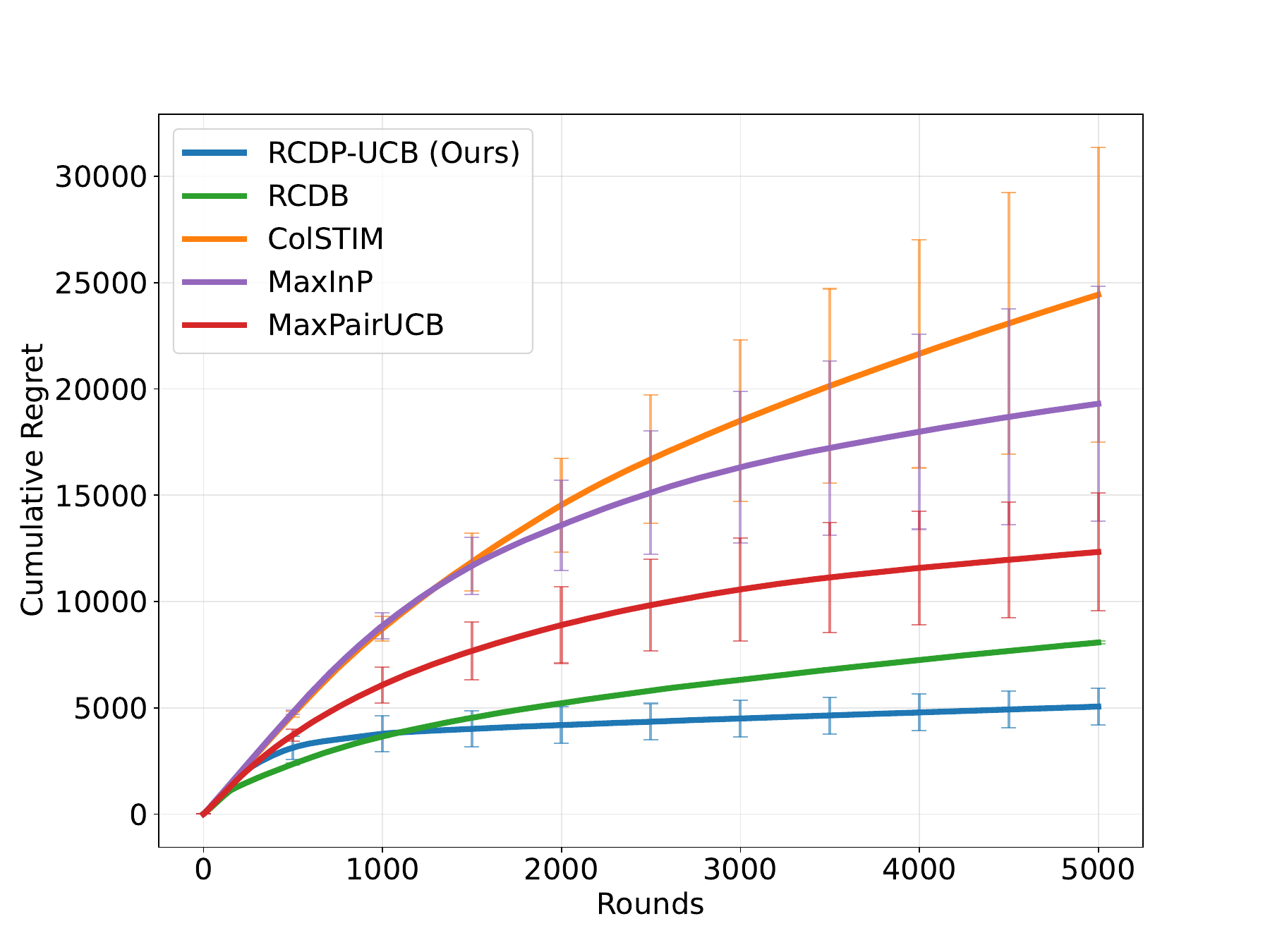}
        \caption{Statlog: Strategic}
    \end{subfigure}
    \begin{subfigure}{0.31\textwidth}
        \includegraphics[width=\linewidth]{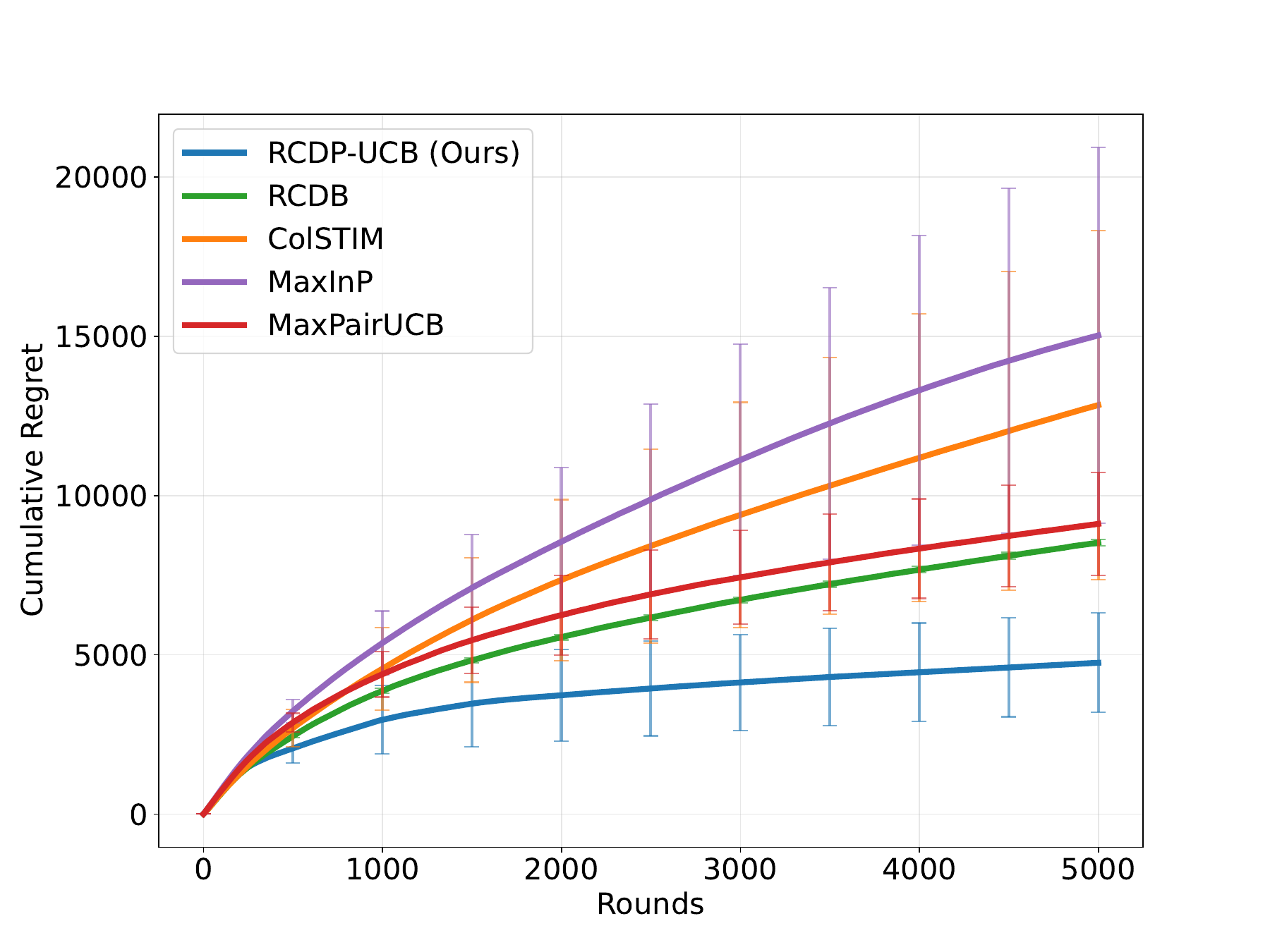}
        \caption{Statlog: Stochastic}
    \end{subfigure}
    \hfill
    
    \begin{subfigure}{0.31\textwidth}
        \includegraphics[width=\linewidth]{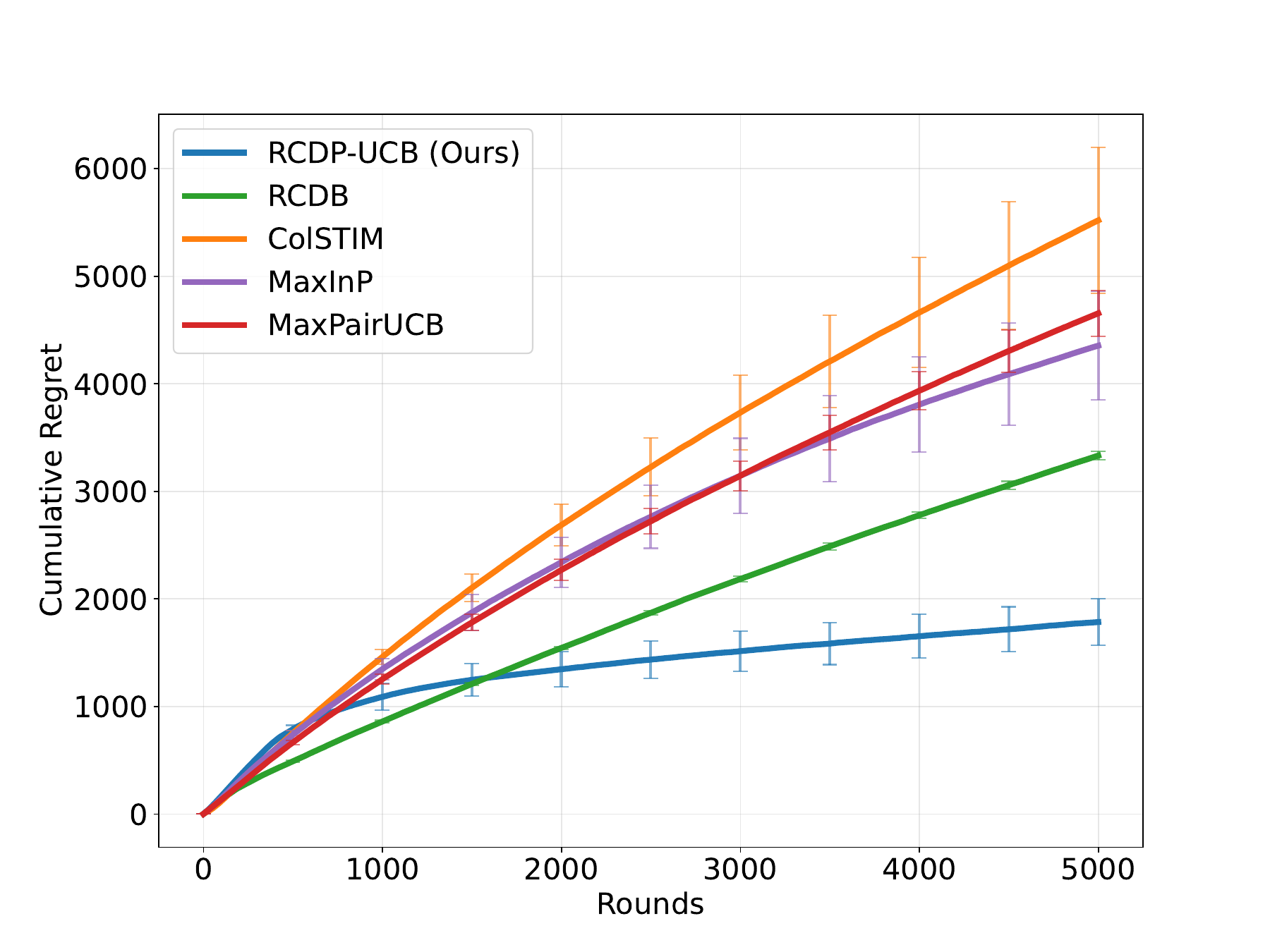}
        \caption{Spambase: Strategic}
    \end{subfigure}
    \begin{subfigure}{0.31\textwidth}
        \includegraphics[width=\linewidth]{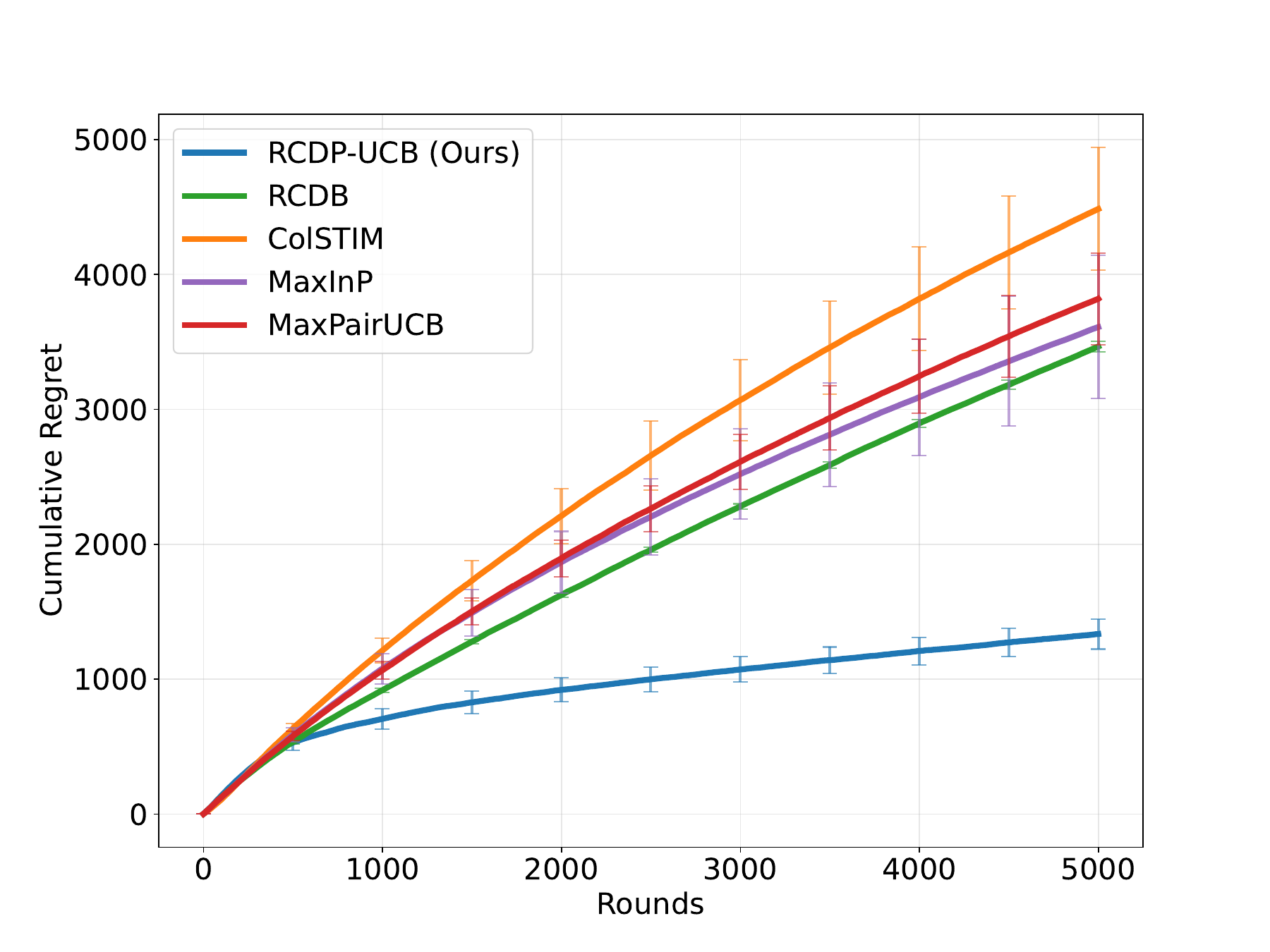}
        \caption{Spambase: Stochastic}
    \end{subfigure}
    \caption{\textbf{Real-world Dataset Results}: Cumulative regret comparison on Magic, Statlog, and Spambase datasets under adversarial corruption ($\mathcal{C}=25$), strategic delays ($\Lambda=10,000$), and stochastic delays ($\mu_\tau=100$). All experiments were conducted across 10 runs with random seeds.}
    \label{fig:appendix_real_data}
\end{figure*}

\subsection{Increasing Context Dimension}
\label{sec:appendix_dim}
In this section, we provide an extensive empirical evaluation of \term across varying pre-serving context dimensions $d_x \in \{10, 20, 30\}$. We investigate two scenarios: with and without post-serving learning.

\paragraph{Sensitivity without Baseline Post-Serving Learning.}
\Cref{fig:appendix_dimension_no_ps} illustrates the performance scaling with respect to dimension $d$ in the linear task using only pre-serving features. \term maintains robust sublinear regret across all dimensions, with the performance gap relative to RCDB widening as $d$ increases. This validates the efficacy of our weighting mechanism in high-dimensional settings under joint $\mathcal{C} + \mathcal{D}$ perturbations.

\begin{figure*}[ht]
    \centering
    \begin{subfigure}{0.31\textwidth}
        \includegraphics[width=\linewidth]{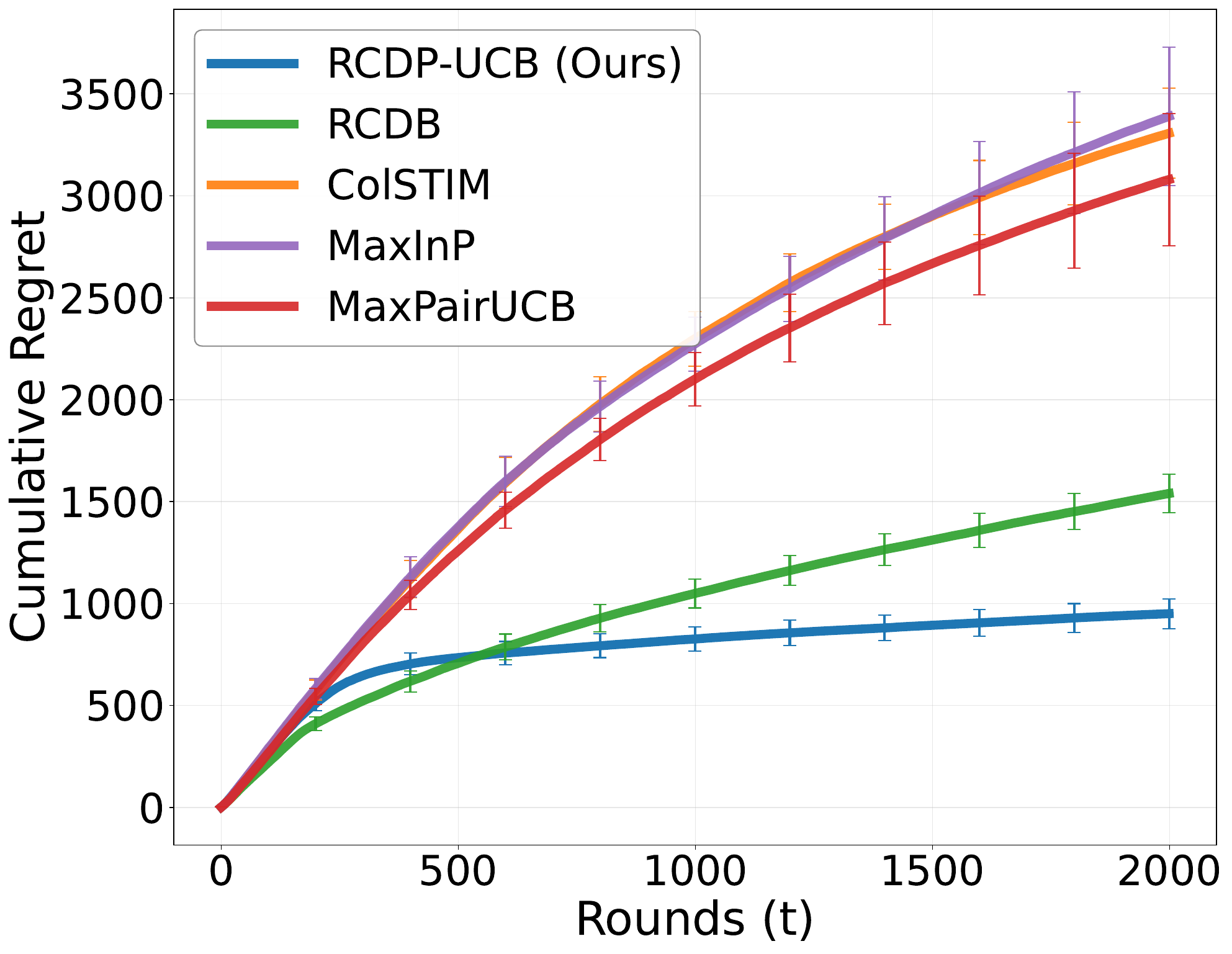}
        \caption{Strategic: $d=10$}
    \end{subfigure}
    \begin{subfigure}{0.31\textwidth}
        \includegraphics[width=\linewidth]{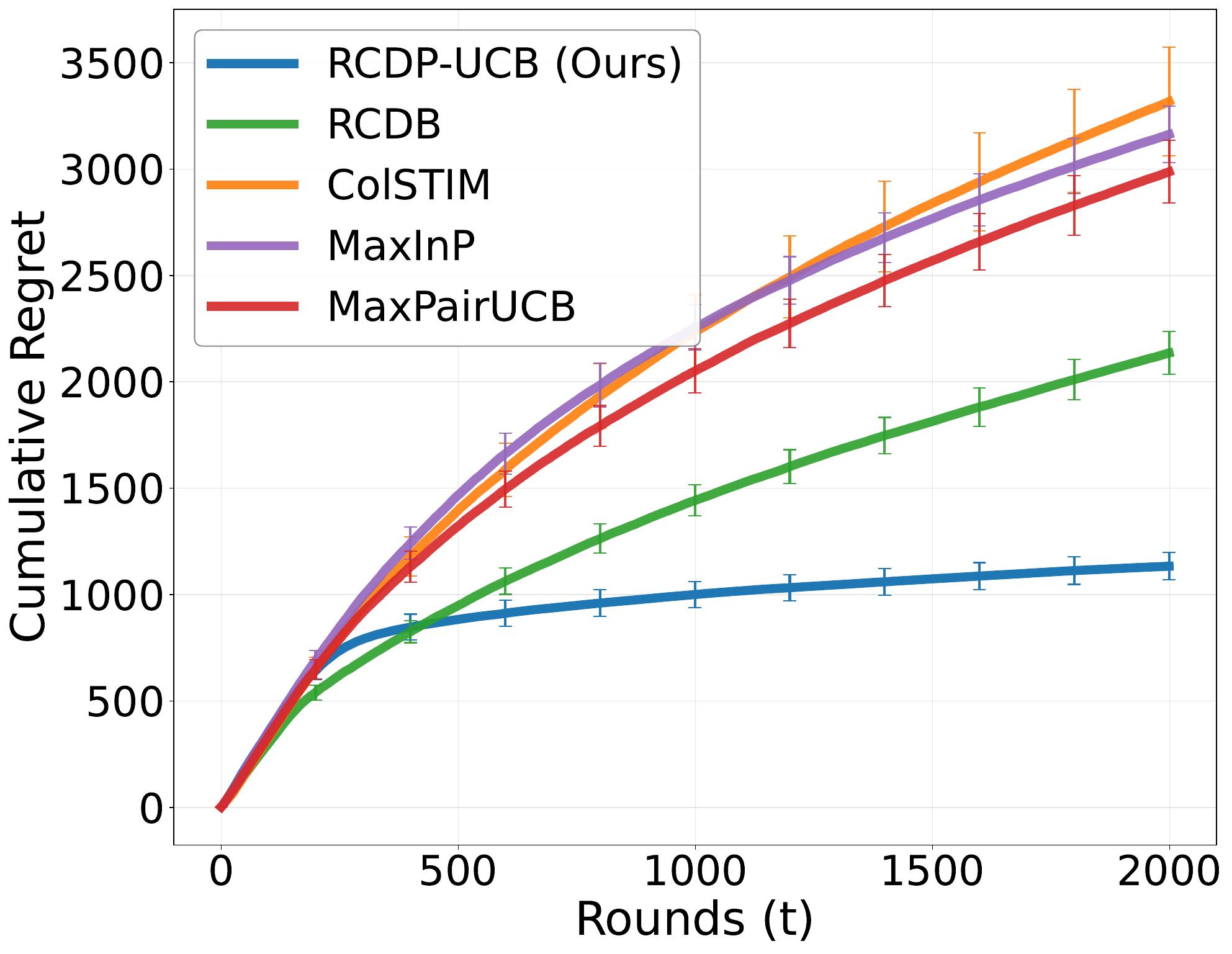}
        \caption{Strategic: $d=20$}
    \end{subfigure}
    \begin{subfigure}{0.31\textwidth}
        \includegraphics[width=\linewidth]{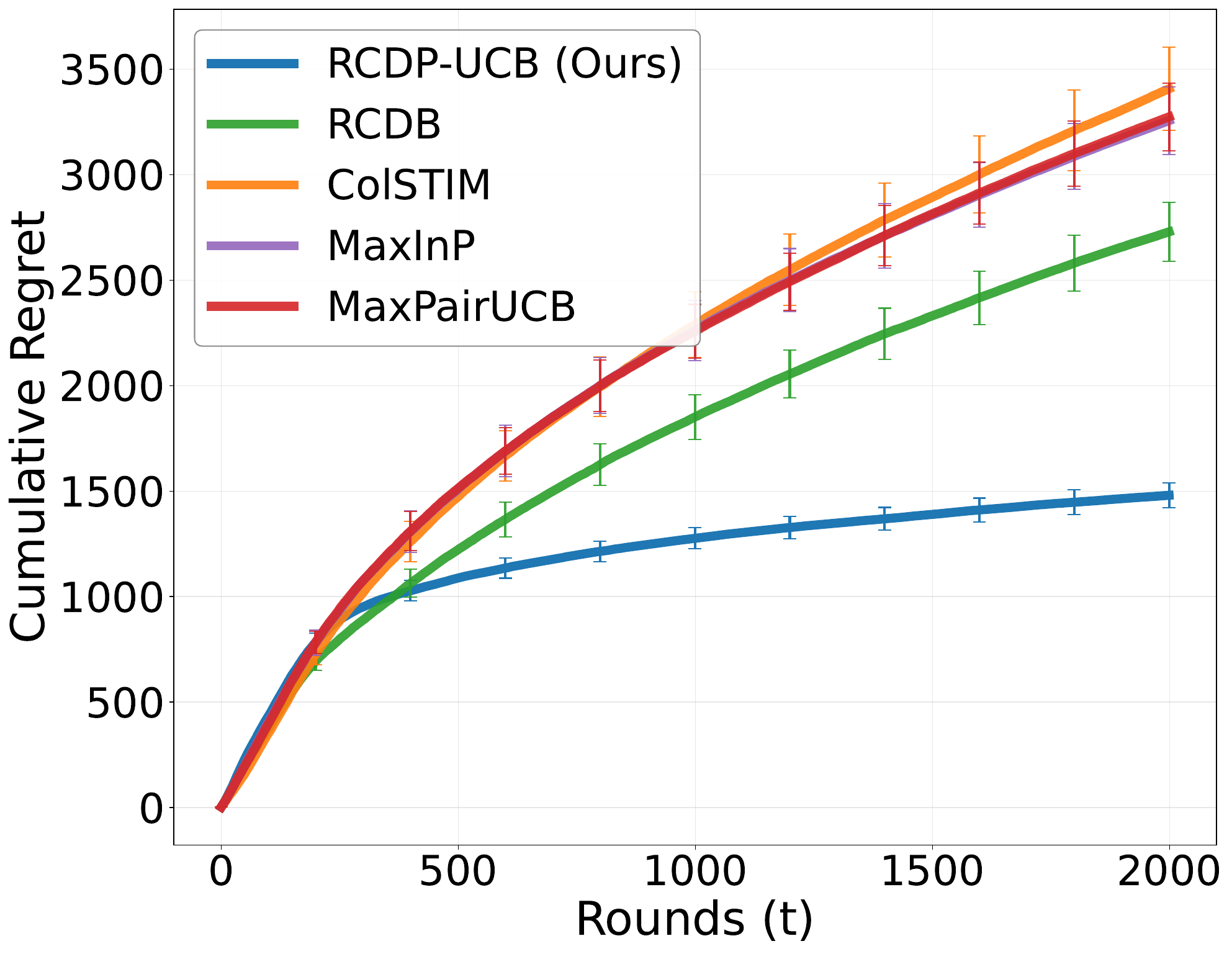}
        \caption{Strategic: $d=30$}
    \end{subfigure}
    
    \begin{subfigure}{0.31\textwidth}
        \includegraphics[width=\linewidth]{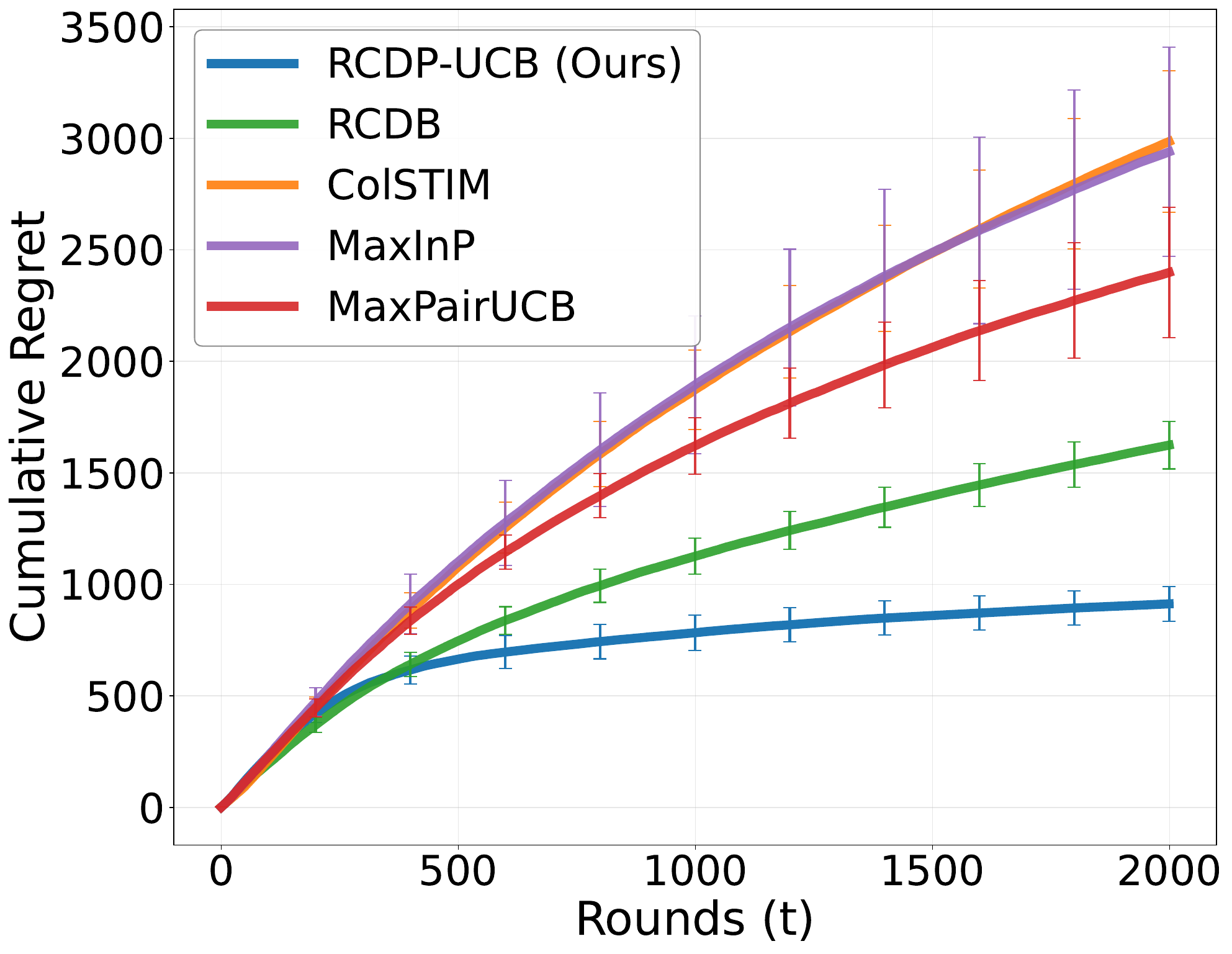}
        \caption{Stochastic: $d=10$}
    \end{subfigure}
    \begin{subfigure}{0.31\textwidth}
        \includegraphics[width=\linewidth]{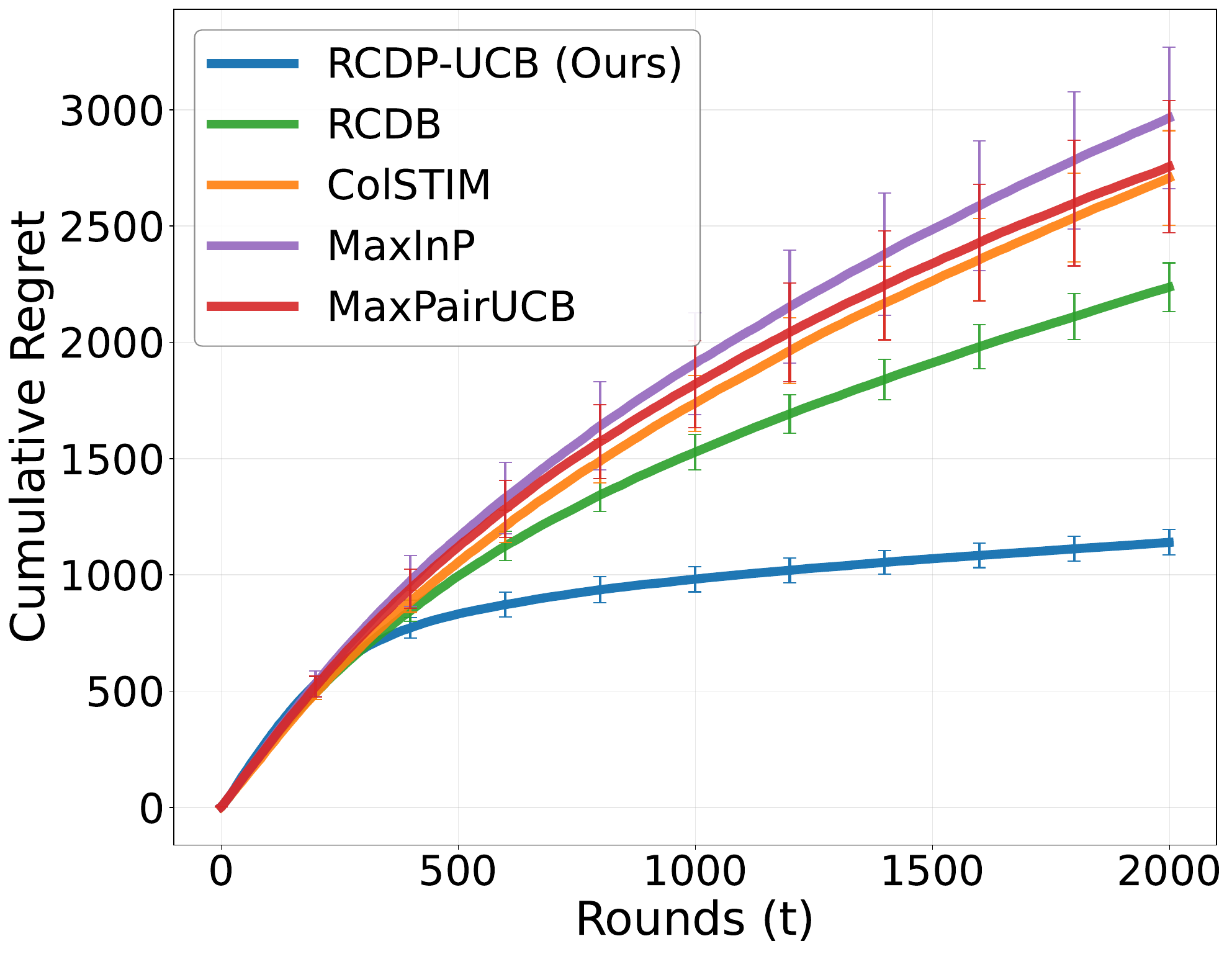}
        \caption{Stochastic: $d=20$}
    \end{subfigure}
    \begin{subfigure}{0.31\textwidth}
        \includegraphics[width=\linewidth]{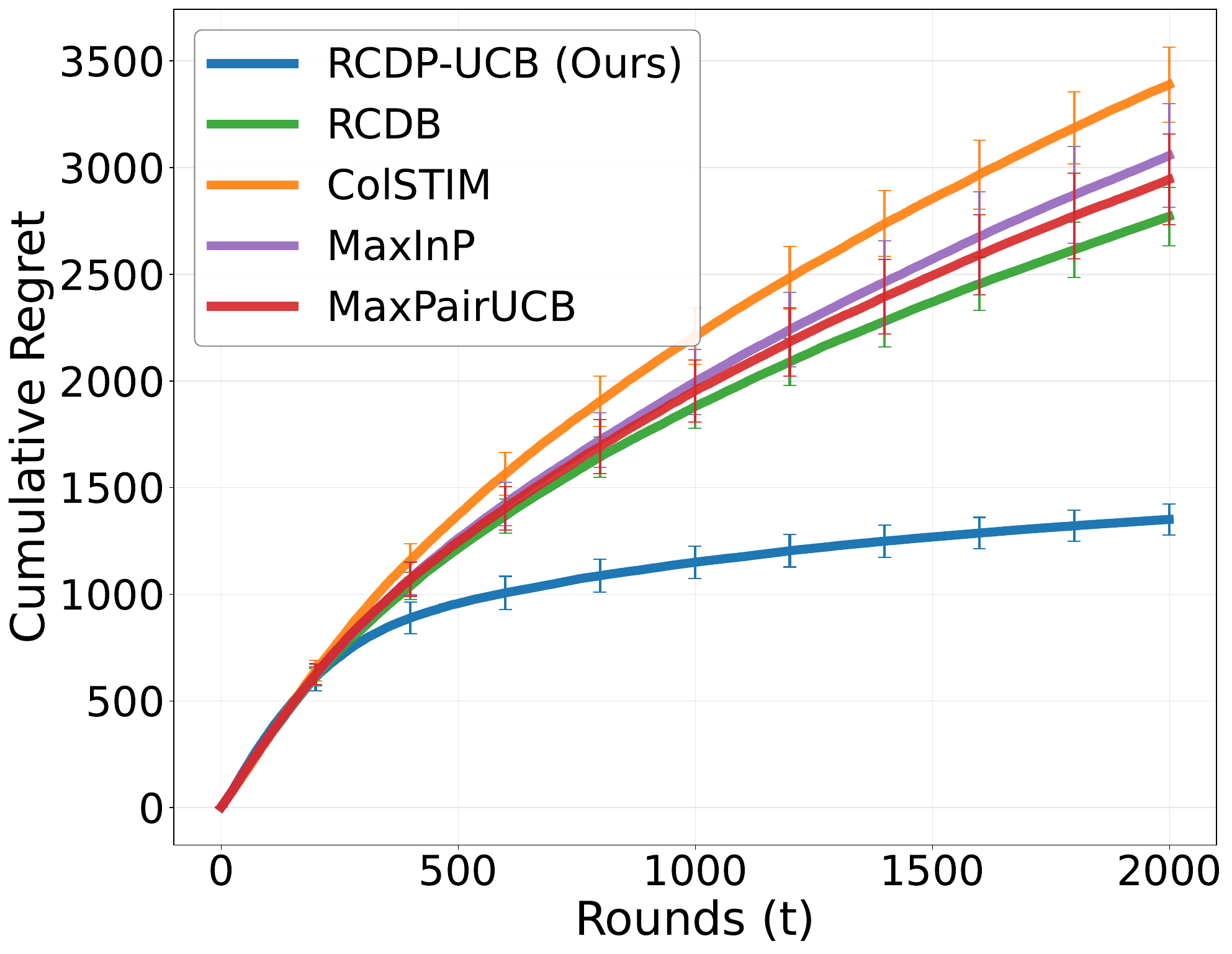}
        \caption{Stochastic: $d=30$}
    \end{subfigure}
    \caption{\textbf{Linear Problem (Pre-serving Baselines)}: Cumulative regret with increasing context dimension $d$ under adversarial corruption ($\mathcal{C}=25$), strategic delays ($\Lambda=10,000$), and stochastic delays ($\mu_\tau=100$). All experiments were conducted across 10 runs with random seeds.}
    \label{fig:appendix_dimension_no_ps}
\end{figure*}

\paragraph{Sensitivity with Baseline Post-Serving Learning.}
We further evaluate robustness across all three mappings (Absolute, Polynomial, and Sinusoidal) when all baselines are equipped with MLP-based post-serving learning (\Cref{fig:appendix_dimension_ps_abs}, \Cref{fig:appendix_dimension_ps_poly}, \Cref{fig:appendix_dimension_ps_sin}). Even with the additional information provided by $\hat{\phi}_t$, standard baselines suffer from the unified impact of corruption and delay. \term consistently maintains superior performance, demonstrating that its advantage stems from the adaptive information-weighted estimation rather than mere feature engineering. The stability across Absolute, Polynomial, and Sinusoidal tasks confirms the algorithm's versatility in handling diverse latent dynamics as the complexity of the feature manifold increases.

\begin{figure*}[ht]
    \centering
    \begin{subfigure}{0.31\textwidth}
        \includegraphics[width=\linewidth]{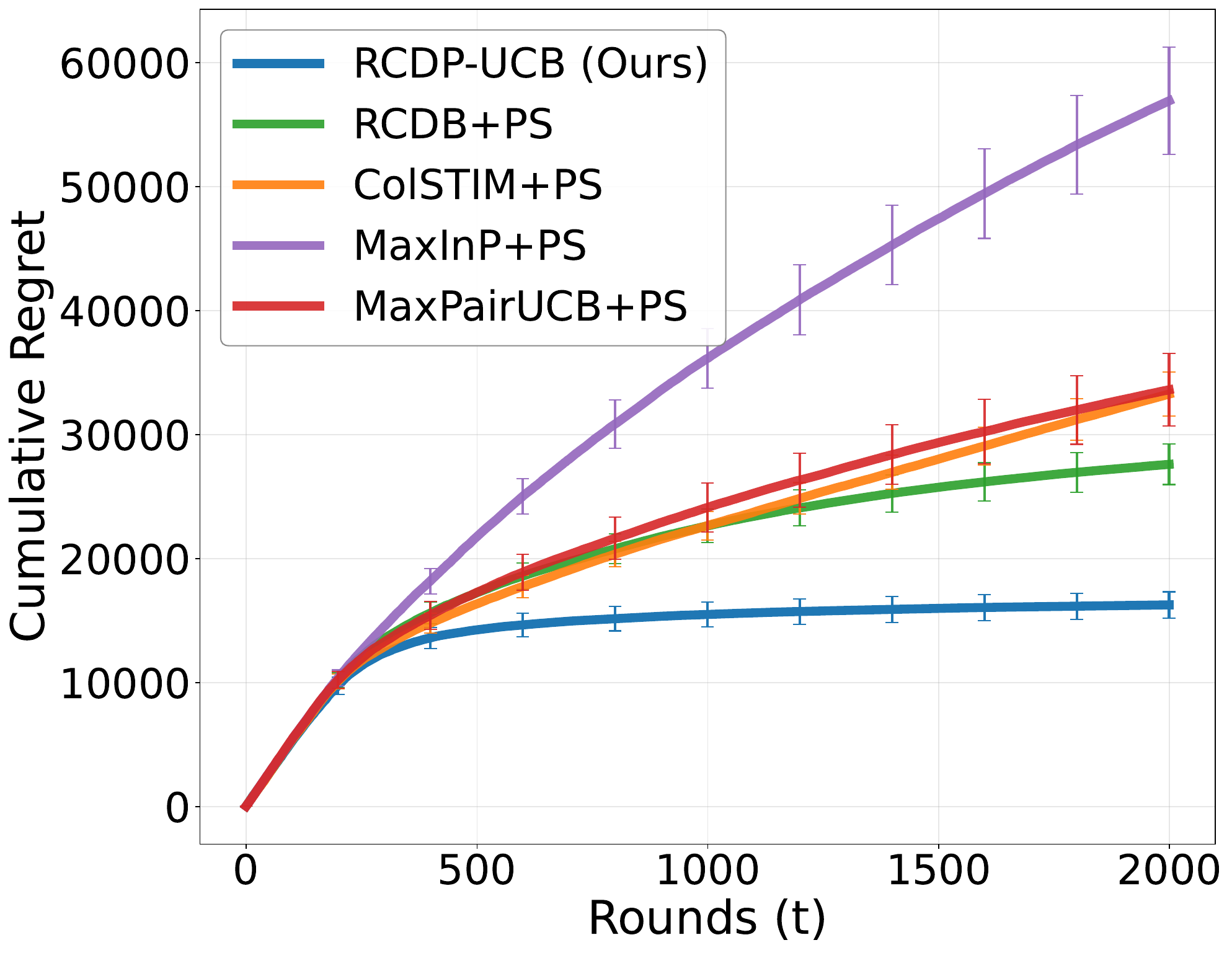}
        \caption{Strategic: $d=10$}
    \end{subfigure}
    \begin{subfigure}{0.31\textwidth}
        \includegraphics[width=\linewidth]{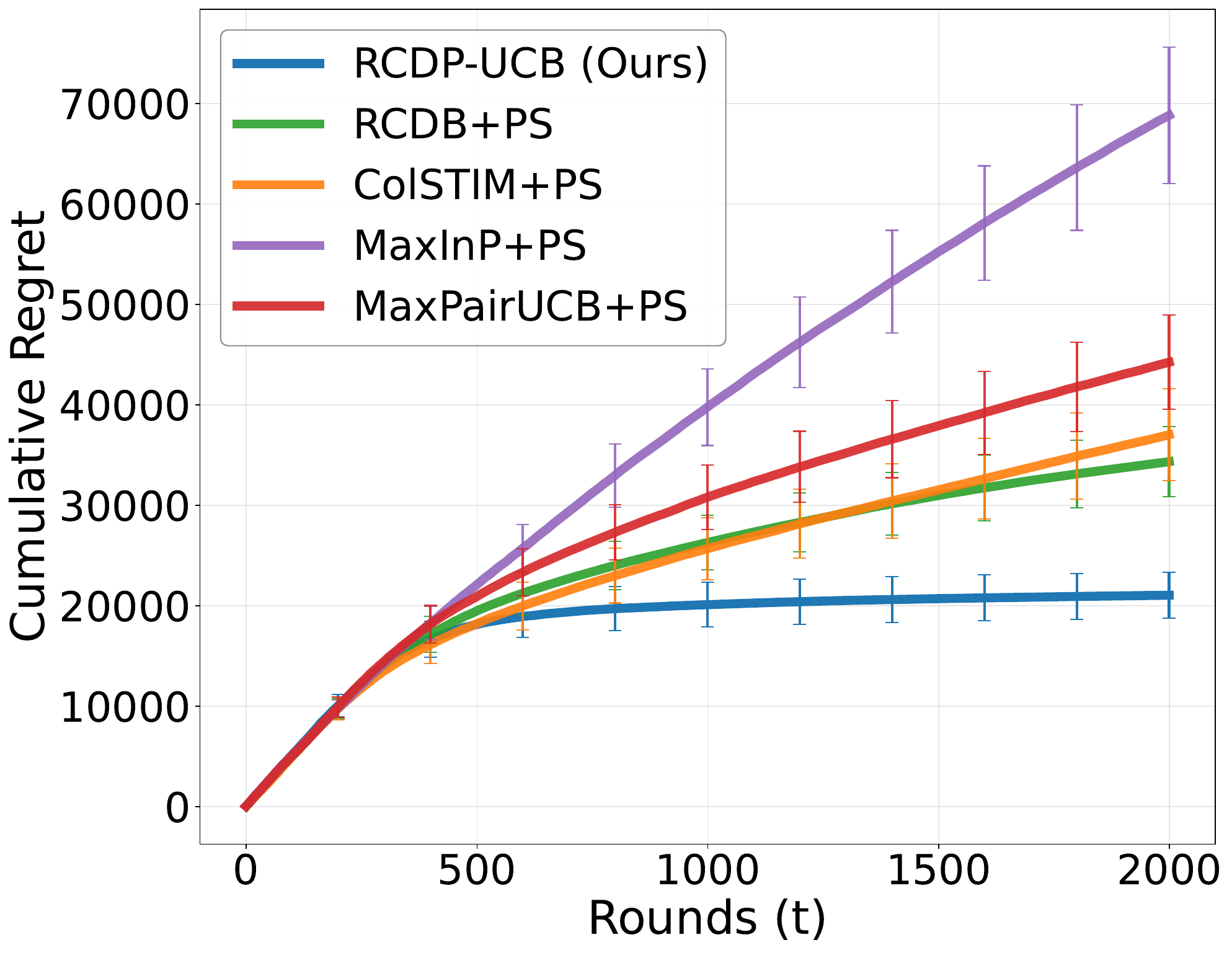}
        \caption{Strategic: $d=20$}
    \end{subfigure}
    \begin{subfigure}{0.31\textwidth}
        \includegraphics[width=\linewidth]{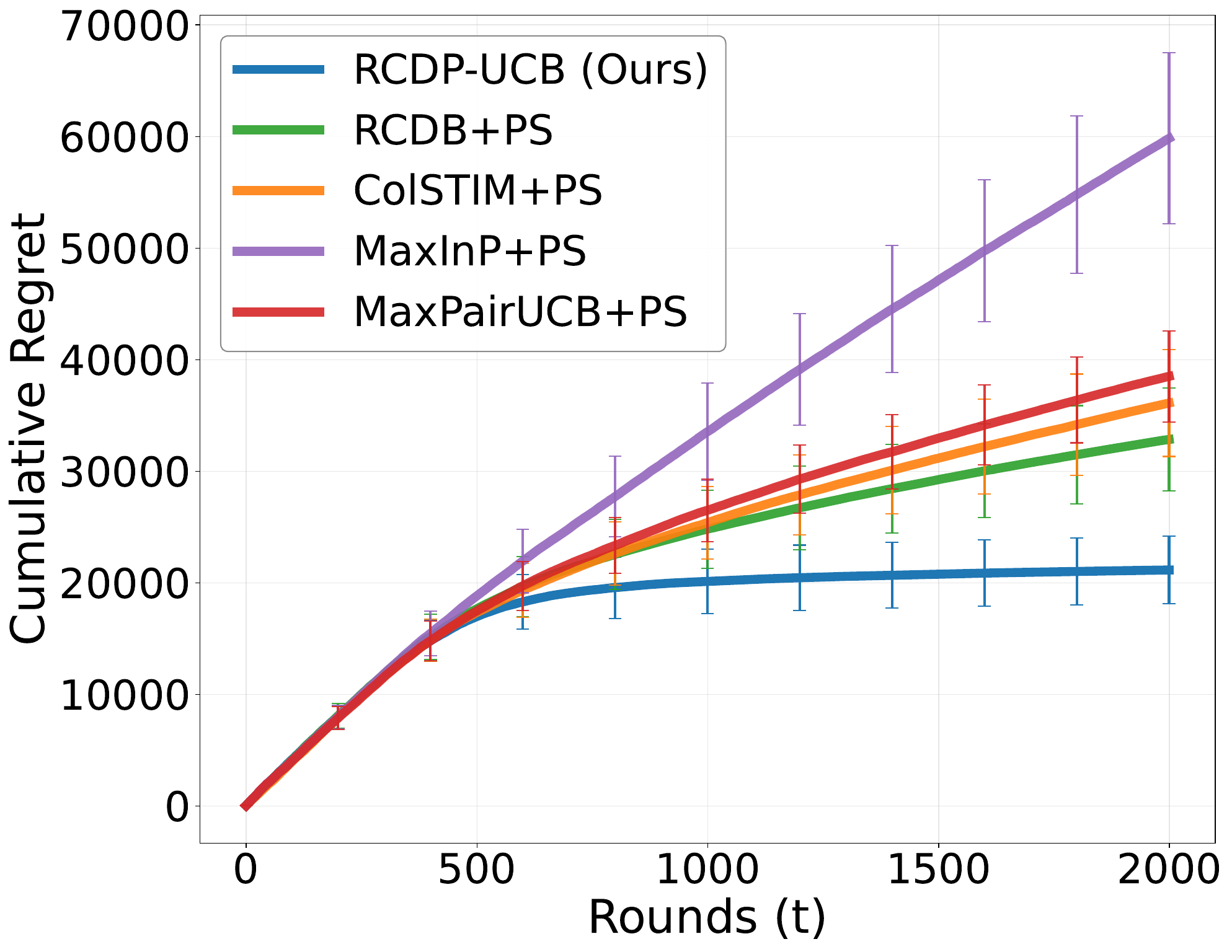}
        \caption{Strategic: $d=30$}
    \end{subfigure}
    
    \begin{subfigure}{0.31\textwidth}
        \includegraphics[width=\linewidth]{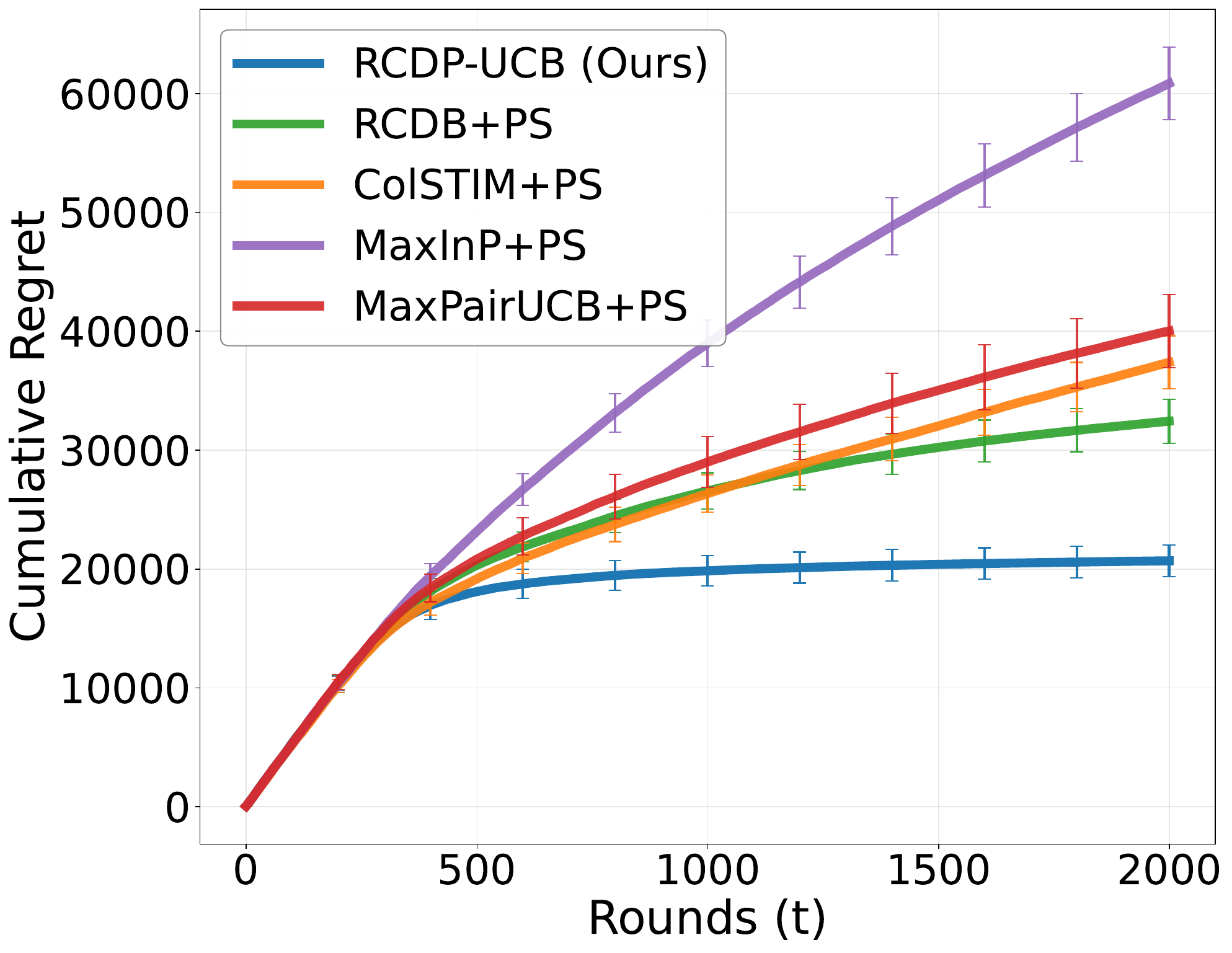}
        \caption{Stochastic: $d=10$}
    \end{subfigure}
    \begin{subfigure}{0.31\textwidth}
        \includegraphics[width=\linewidth]{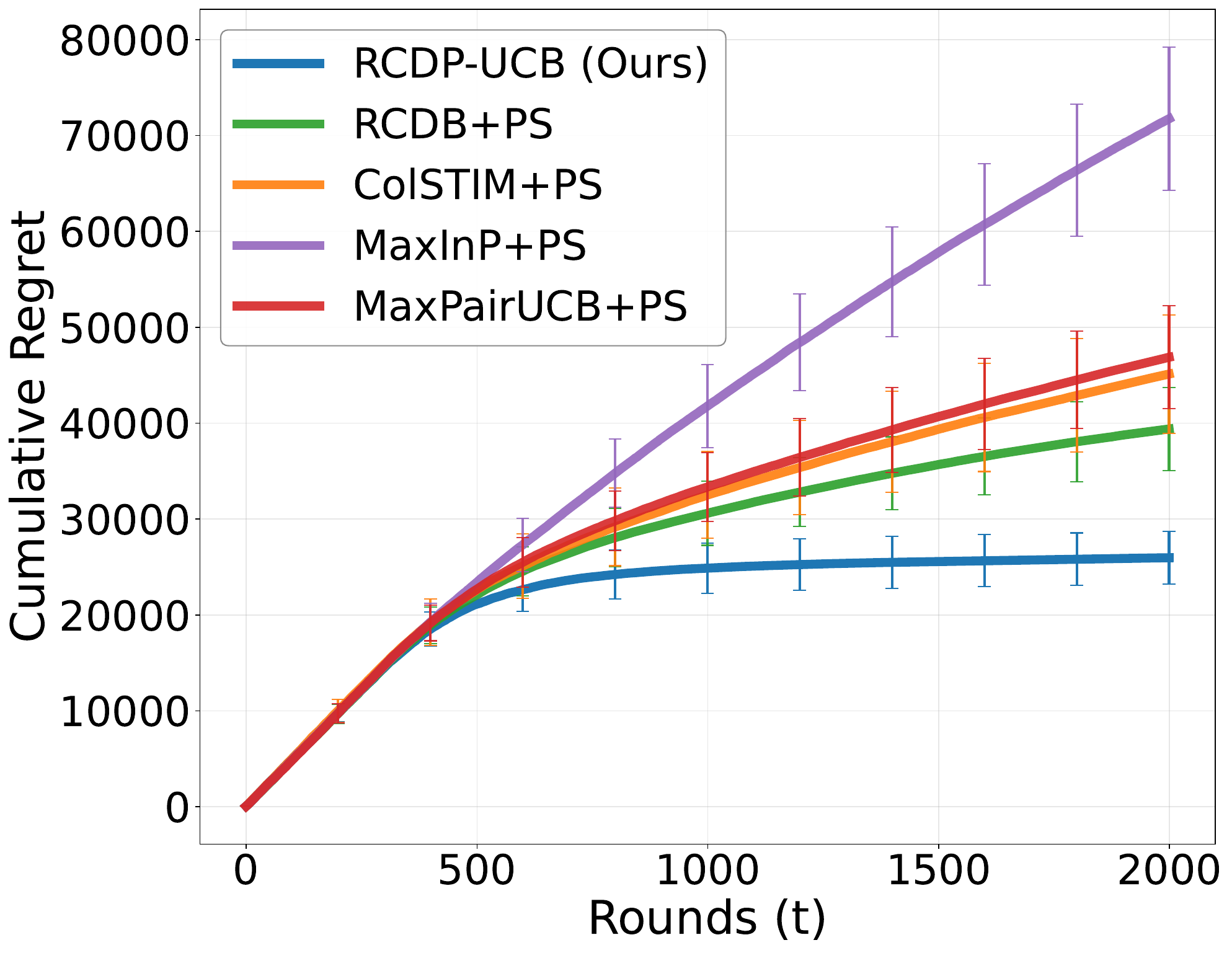}
        \caption{Stochastic: $d=20$}
    \end{subfigure}
    \begin{subfigure}{0.31\textwidth}
        \includegraphics[width=\linewidth]{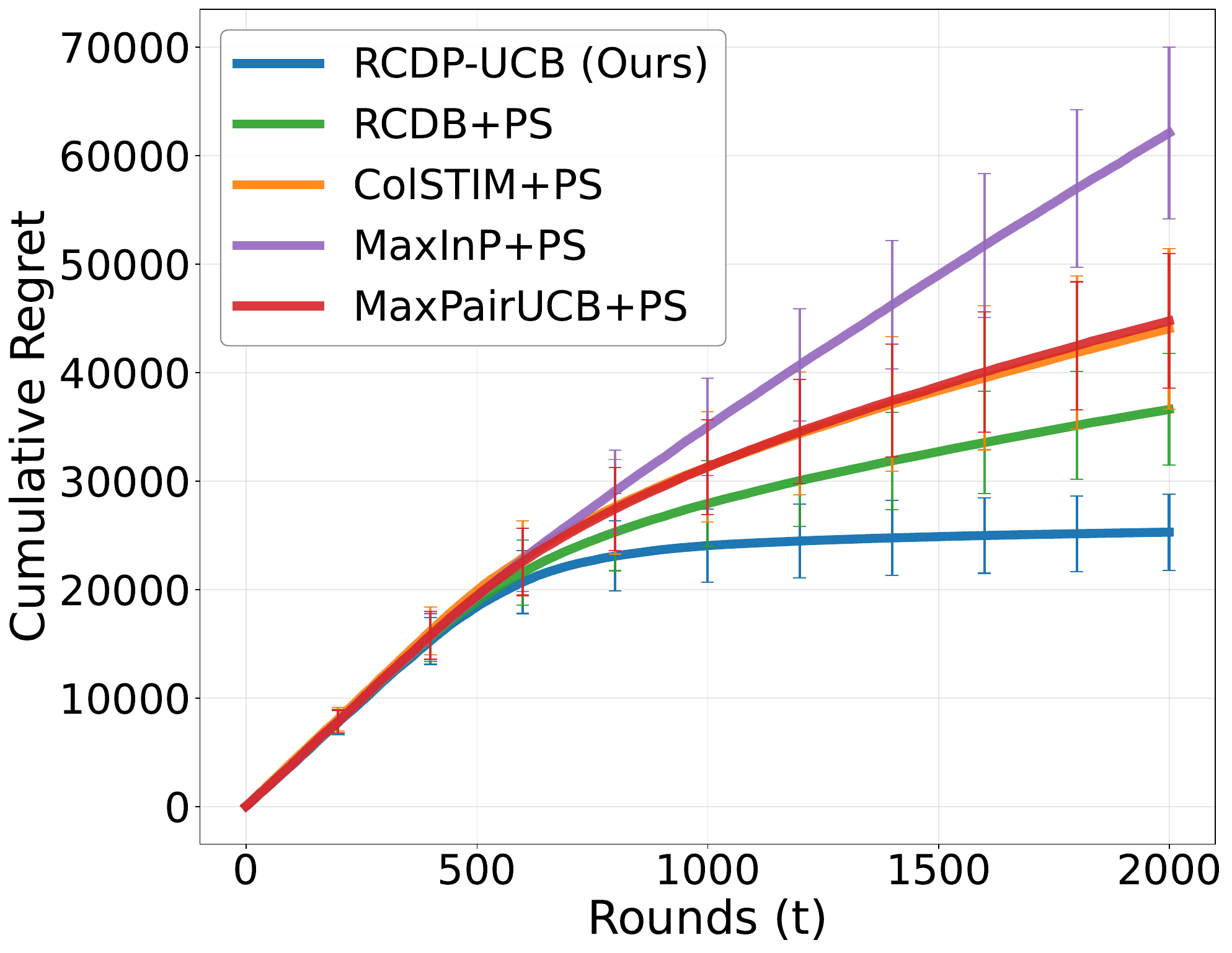}
        \caption{Stochastic: $d=30$}
    \end{subfigure}
    \caption{\textbf{Absolute Mapping (Post-serving Learning)}: Cumulative regret with increasing context dimension $d$ under adversarial corruption ($\mathcal{C}=25$), strategic delays ($\Lambda=10,000$), and stochastic delays ($\mu_\tau=100$), where all baselines learn $\phi_*$. All experiments were conducted across 10 runs with random seeds.}
    \label{fig:appendix_dimension_ps_abs}
\end{figure*}

\begin{figure*}[ht]
    \centering
    \begin{subfigure}{0.31\textwidth}
        \includegraphics[width=\linewidth]{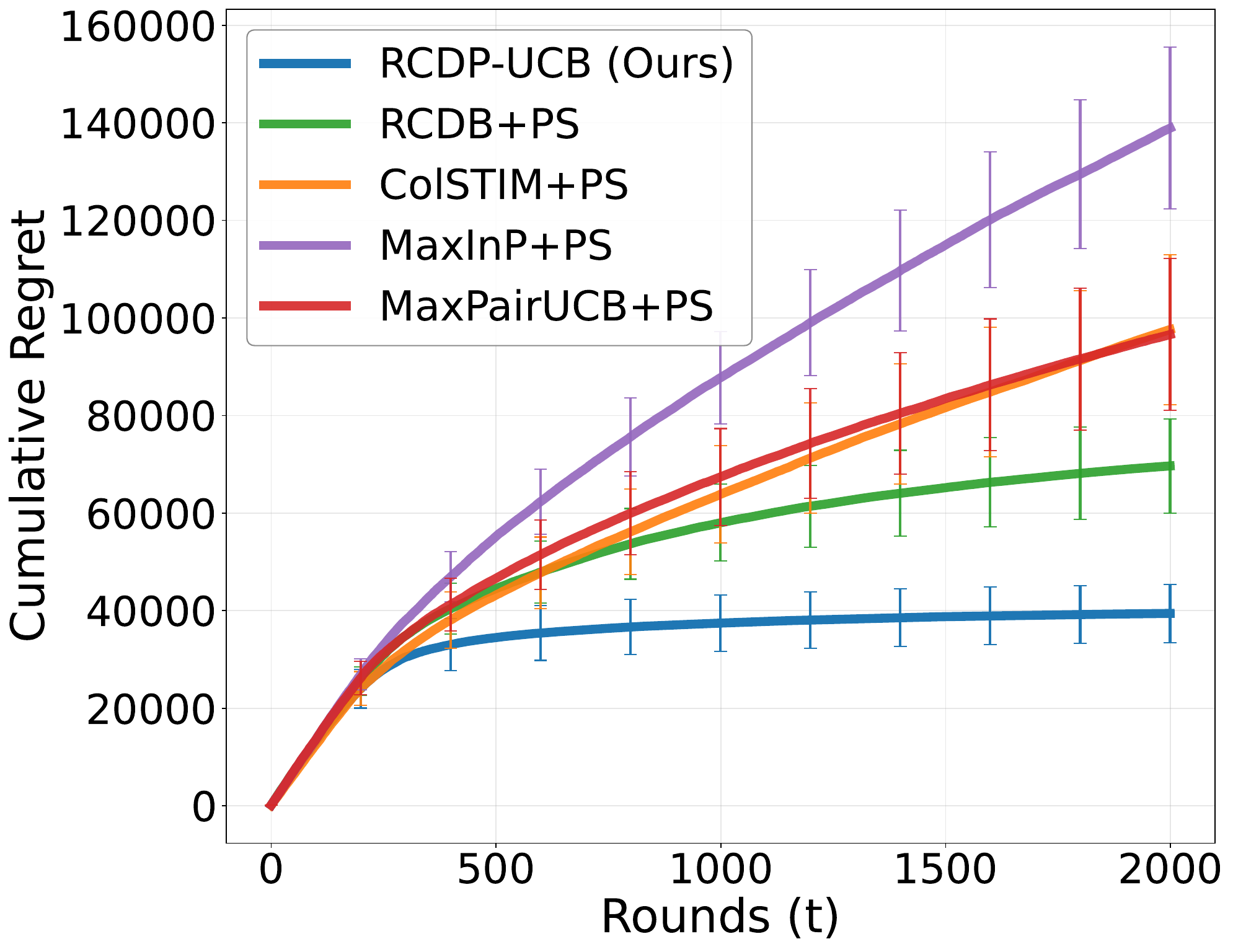}
        \caption{Strategic: $d=10$}
    \end{subfigure}
    \begin{subfigure}{0.31\textwidth}
        \includegraphics[width=\linewidth]{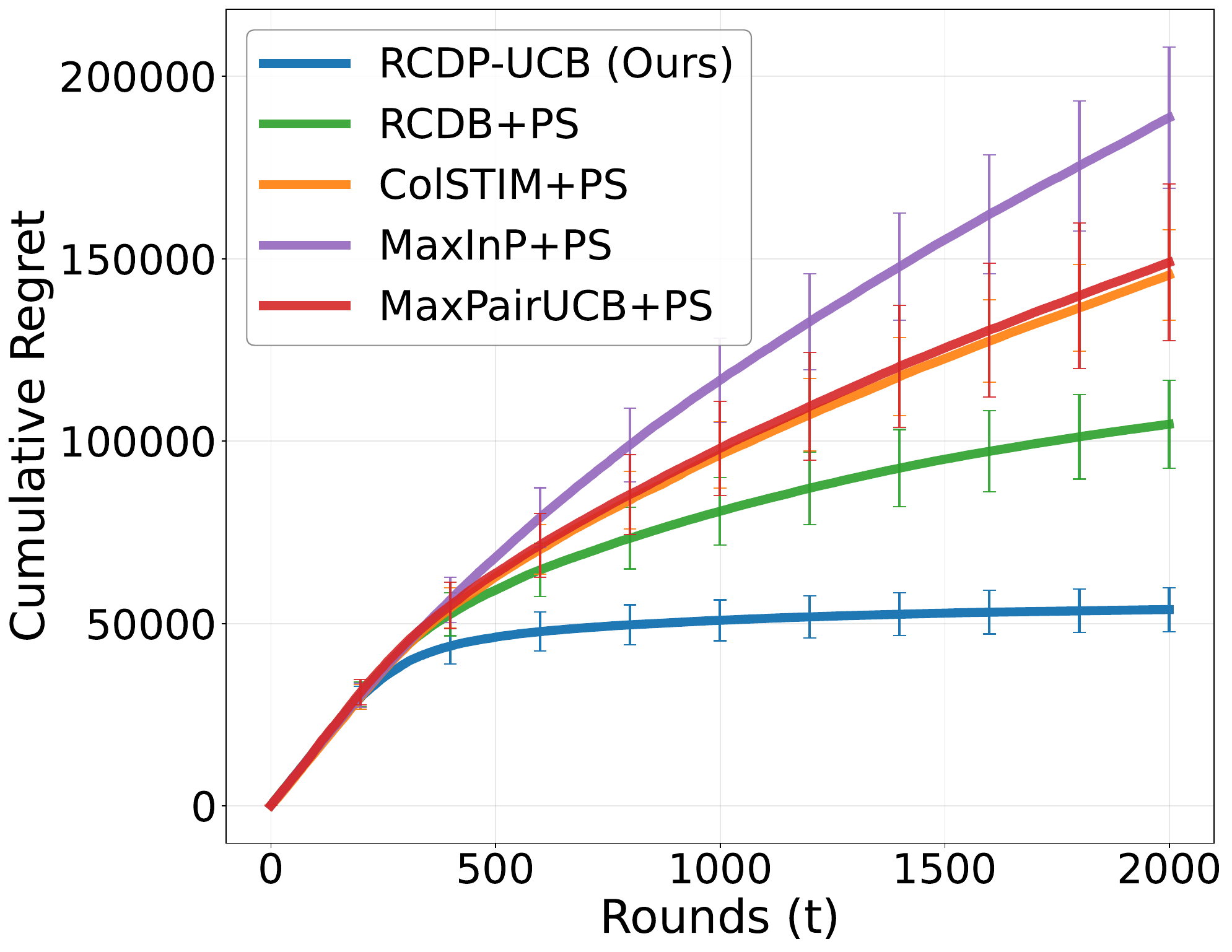}
        \caption{Strategic: $d=20$}
    \end{subfigure}
    \begin{subfigure}{0.31\textwidth}
        \includegraphics[width=\linewidth]{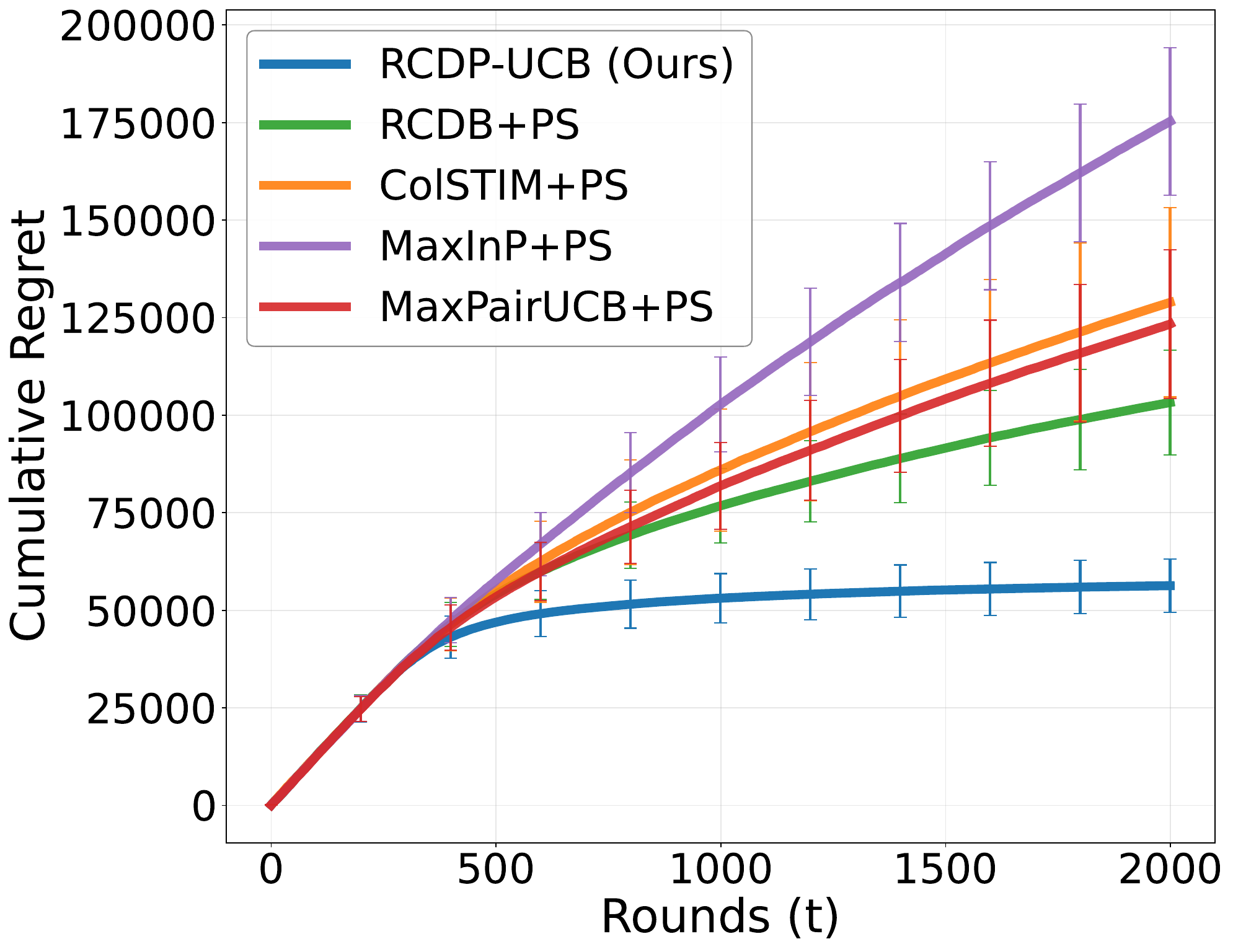}
        \caption{Strategic: $d=30$}
    \end{subfigure}
    
    \begin{subfigure}{0.31\textwidth}
        \includegraphics[width=\linewidth]{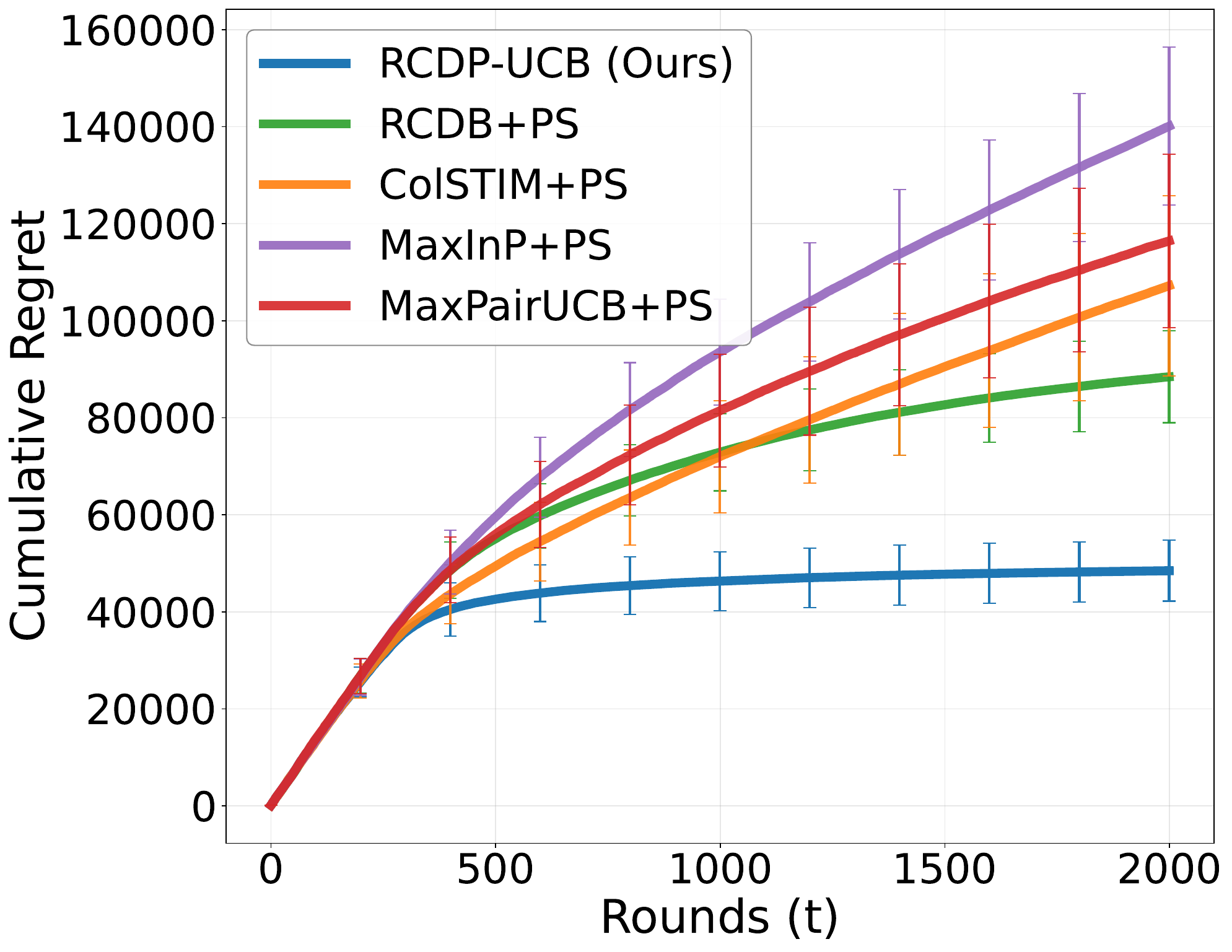}
        \caption{Stochastic: $d=10$}
    \end{subfigure}
    \begin{subfigure}{0.31\textwidth}
        \includegraphics[width=\linewidth]{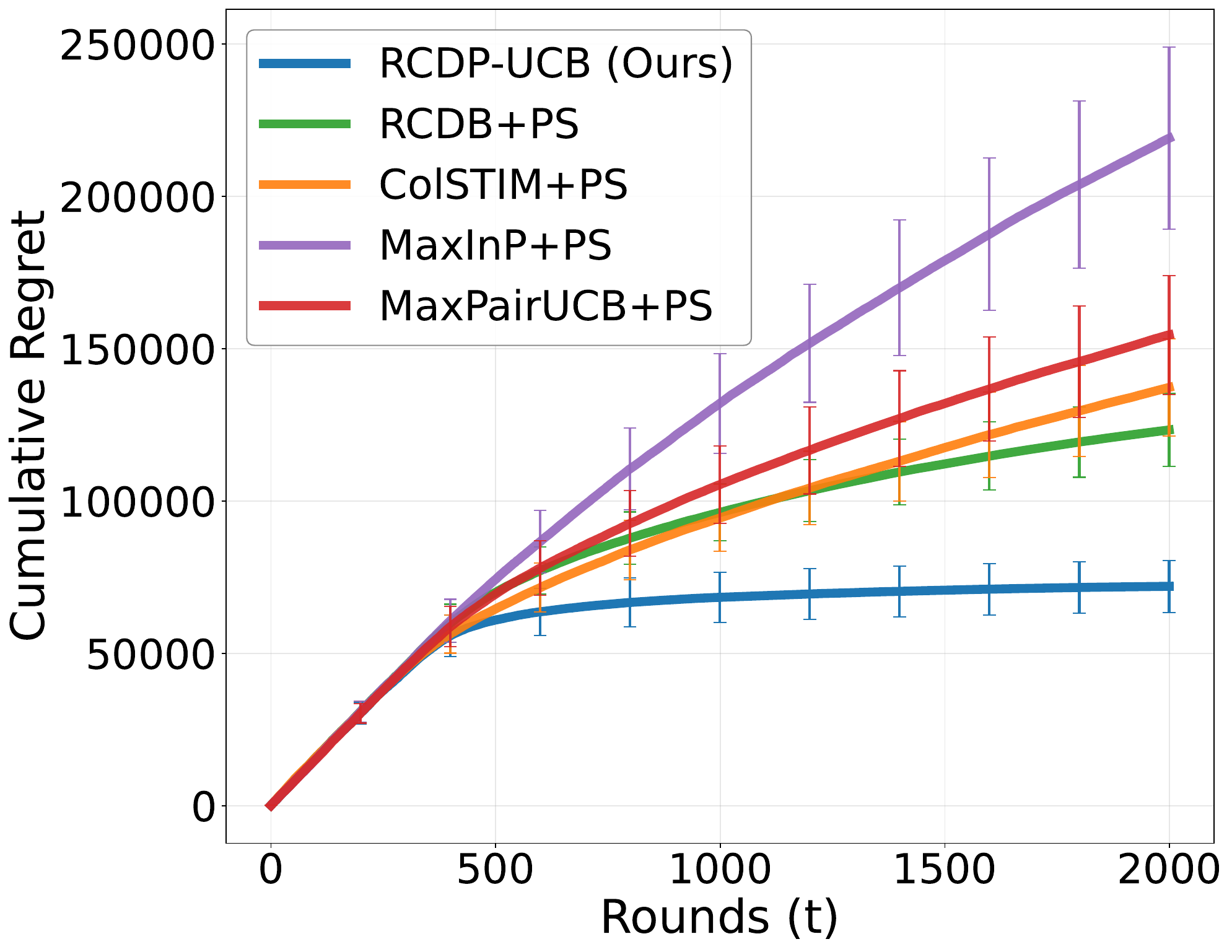}
        \caption{Stochastic: $d=20$}
    \end{subfigure}
    \begin{subfigure}{0.31\textwidth}
        \includegraphics[width=\linewidth]{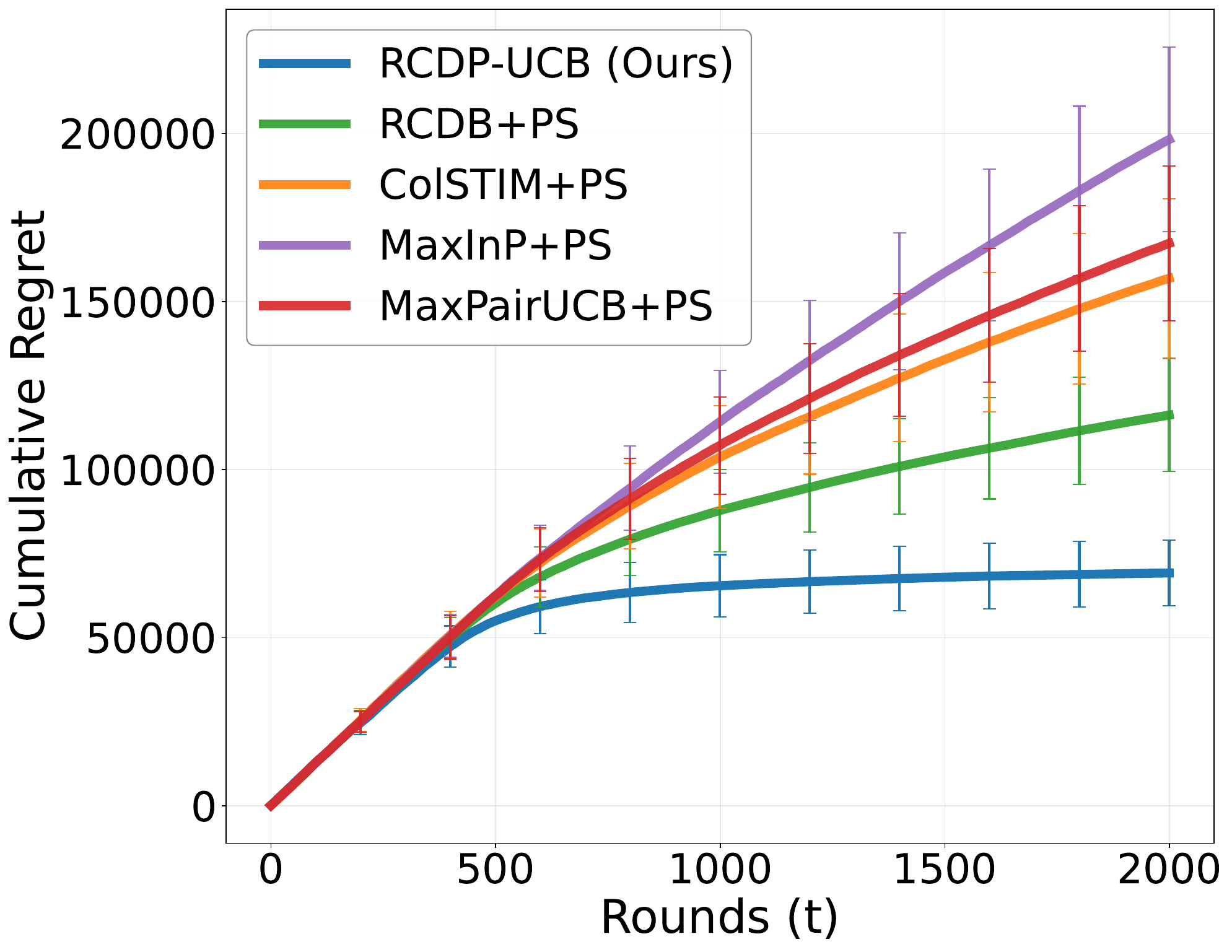}
        \caption{Stochastic: $d=30$}
    \end{subfigure}
    \caption{\textbf{Polynomial Mapping (Post-serving Learning)}: Cumulative regret with increasing context dimension $d$ under adversarial corruption ($\mathcal{C}=25$), strategic delays ($\Lambda=10,000$), and stochastic delays ($\mu_\tau=100$), where all baselines learn $\phi_*$. All experiments were conducted across 10 runs with random seeds.}
    \label{fig:appendix_dimension_ps_poly}
\end{figure*}

\begin{figure*}[ht]
    \centering
    \begin{subfigure}{0.31\textwidth}
        \includegraphics[width=\linewidth]{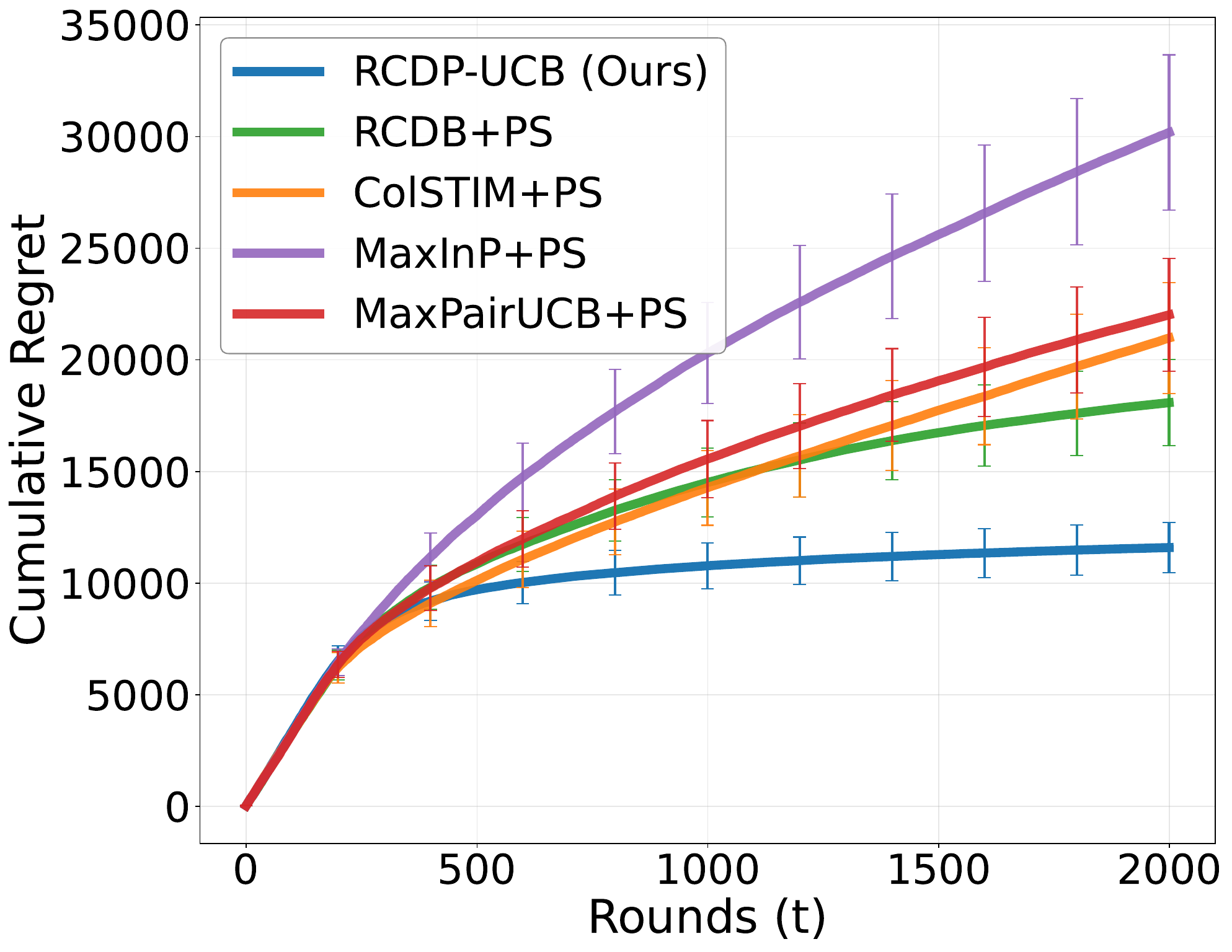}
        \caption{Strategic: $d=10$}
    \end{subfigure}
    \begin{subfigure}{0.31\textwidth}
        \includegraphics[width=\linewidth]{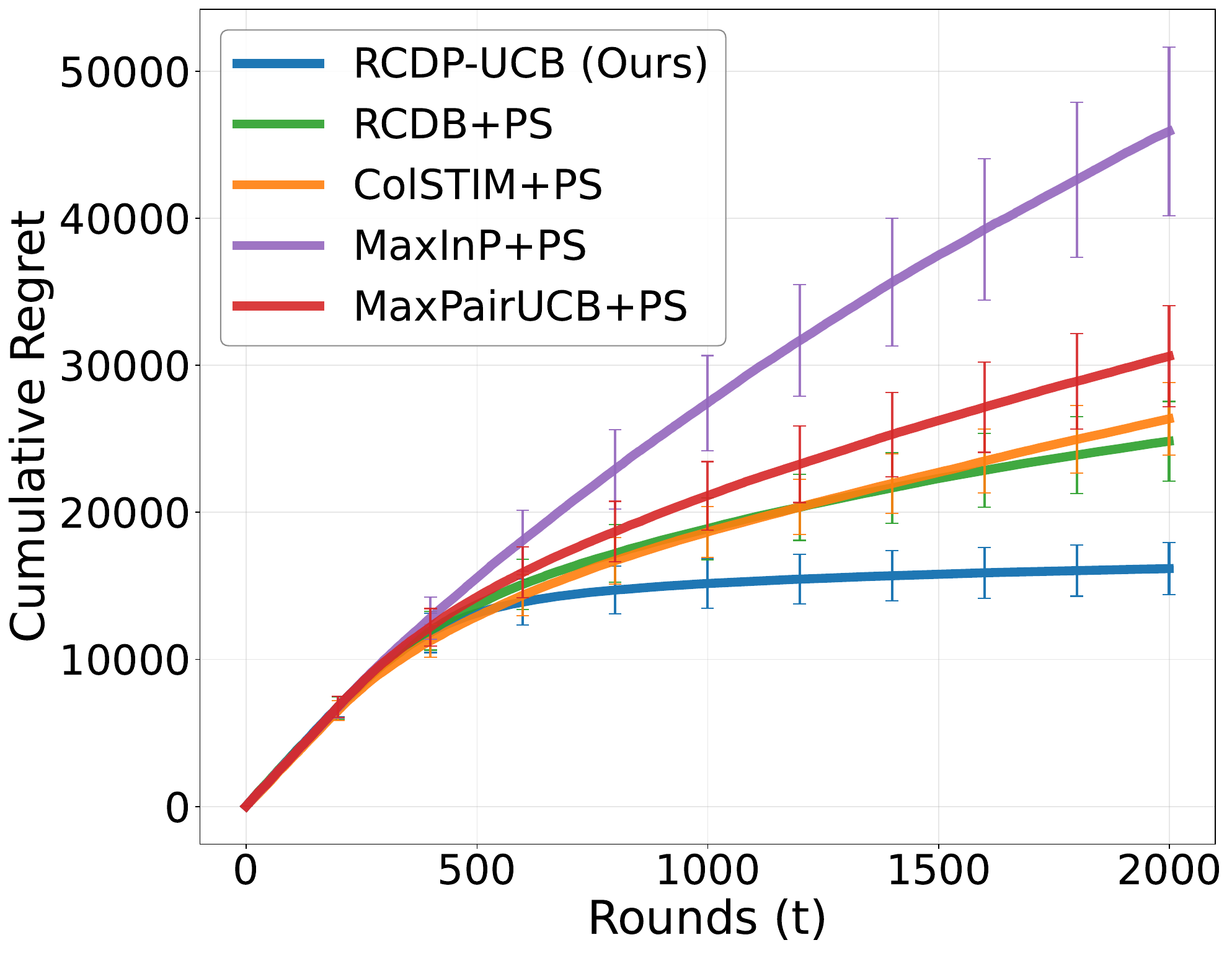}
        \caption{Strategic: $d=20$}
    \end{subfigure}
    \begin{subfigure}{0.31\textwidth}
        \includegraphics[width=\linewidth]{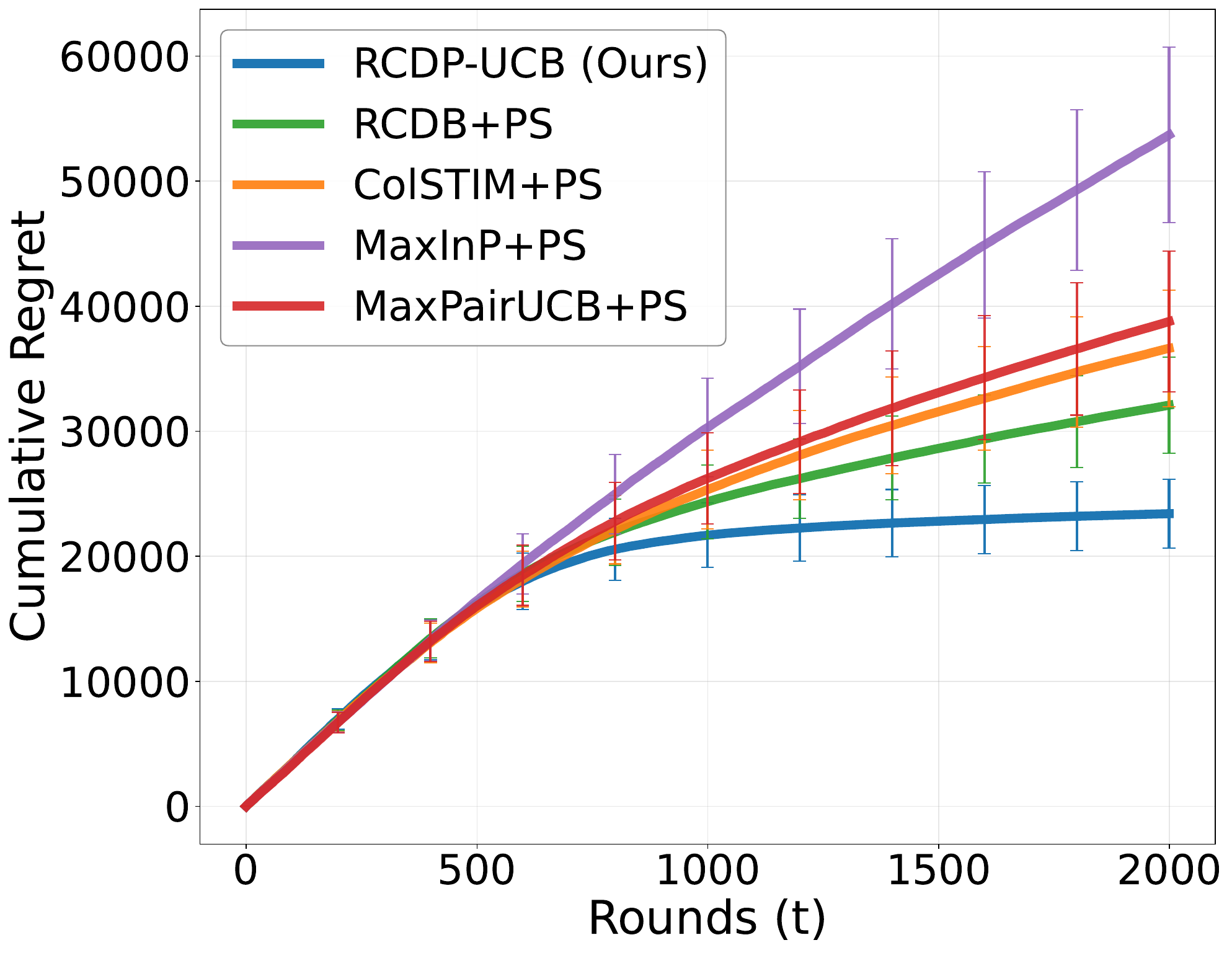}
        \caption{Strategic: $d=30$}
    \end{subfigure}
    
    \begin{subfigure}{0.31\textwidth}
        \includegraphics[width=\linewidth]{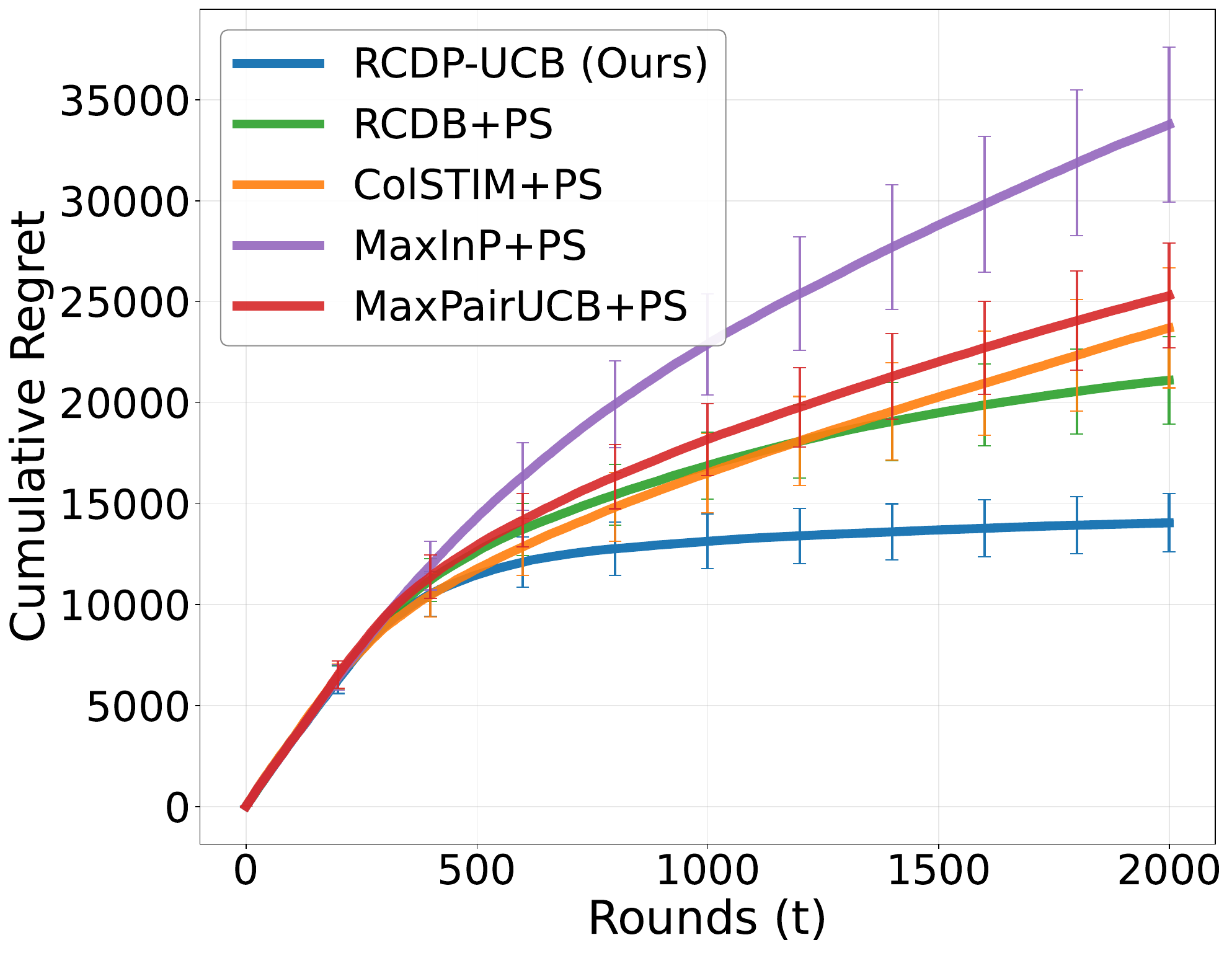}
        \caption{Stochastic: $d=10$}
    \end{subfigure}
    \begin{subfigure}{0.31\textwidth}
        \includegraphics[width=\linewidth]{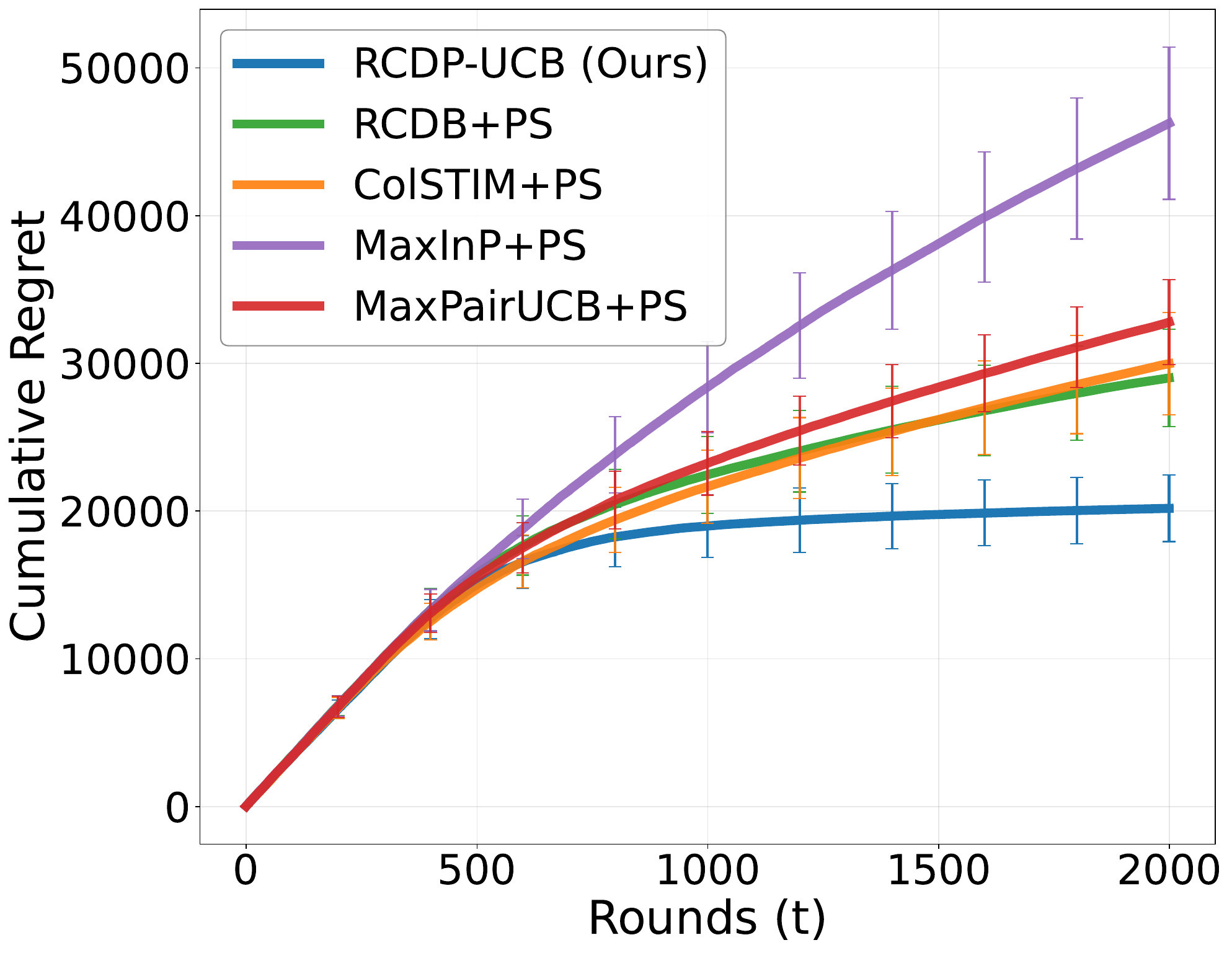}
        \caption{Stochastic: $d=20$}
    \end{subfigure}
    \begin{subfigure}{0.31\textwidth}
        \includegraphics[width=\linewidth]{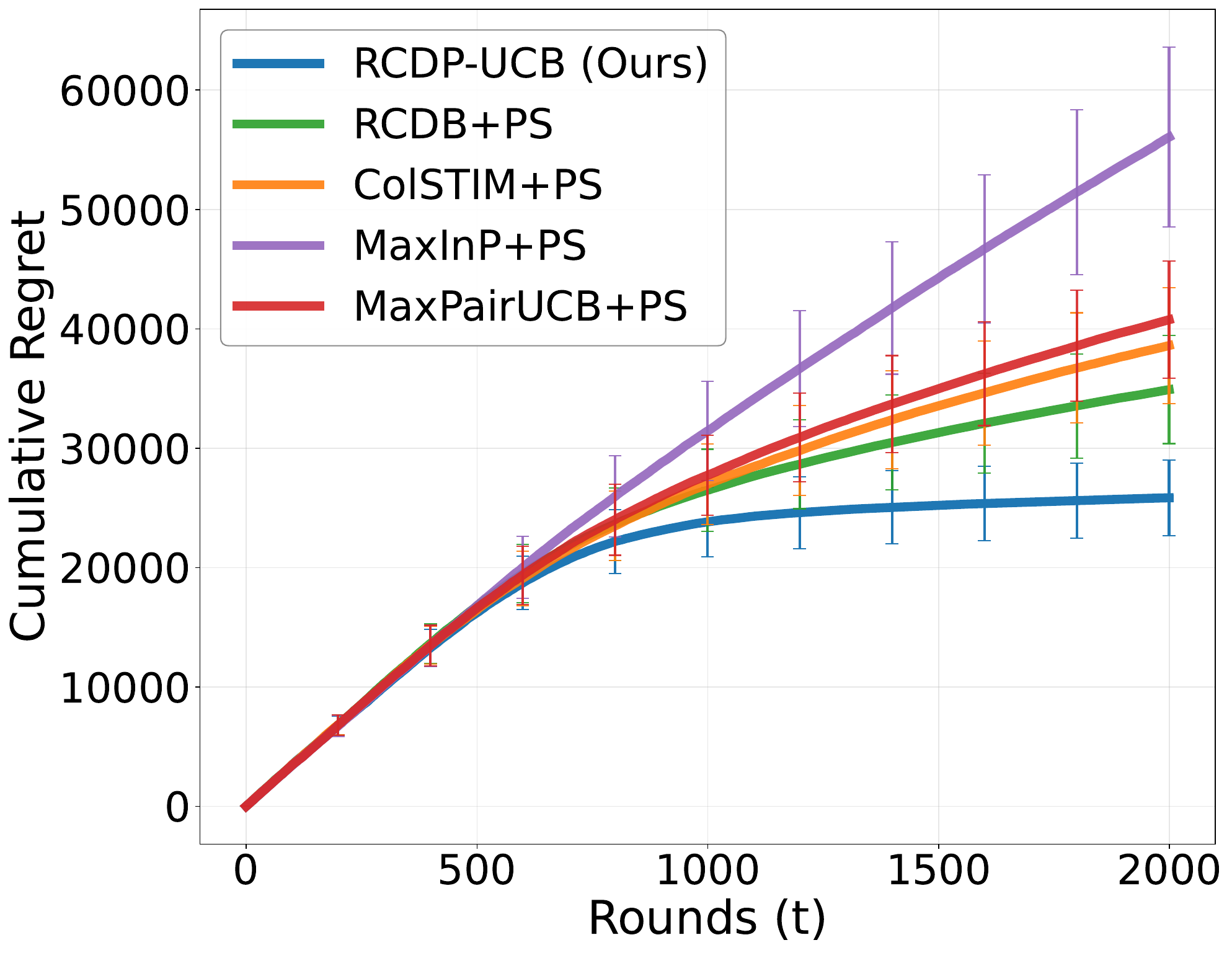}
        \caption{Stochastic: $d=30$}
    \end{subfigure}
    \caption{\textbf{Sinusoidal Mapping (Post-serving Learning)}: Cumulative regret with increasing context dimension $d$ under adversarial corruption ($\mathcal{C}=25$), strategic delays ($\Lambda=10,000$), and stochastic delays ($\mu_\tau=100$), where all baselines learn $\phi_*$. All experiments were conducted across 10 runs with random seeds.}
    \label{fig:appendix_dimension_ps_sin}
\end{figure*}

\subsection{Increasing Action Dimension}
\label{sec:appendix_arms}
We examine the scalability of \term as the number of available arms $K$ increases from 10 to 30. Similar to the context dimension analysis, we consider both pre-serving and post-serving baseline scenarios.

\paragraph{Sensitivity without Baseline Post-Serving Learning.}
\Cref{fig:appendix_arms_no_ps} illustrates the performance scaling with respect to the number of arms $K$ in the linear task. \term maintains stable sublinear regret as $K$ increases, consistently outperforming RCDB and other competitive baselines. These results underscore the robustness of our proposed framework under concurrent adversarial corruptions and feedback delays.

\begin{figure*}[ht]
    \centering
    \begin{subfigure}{0.31\textwidth}
        \includegraphics[width=\linewidth]{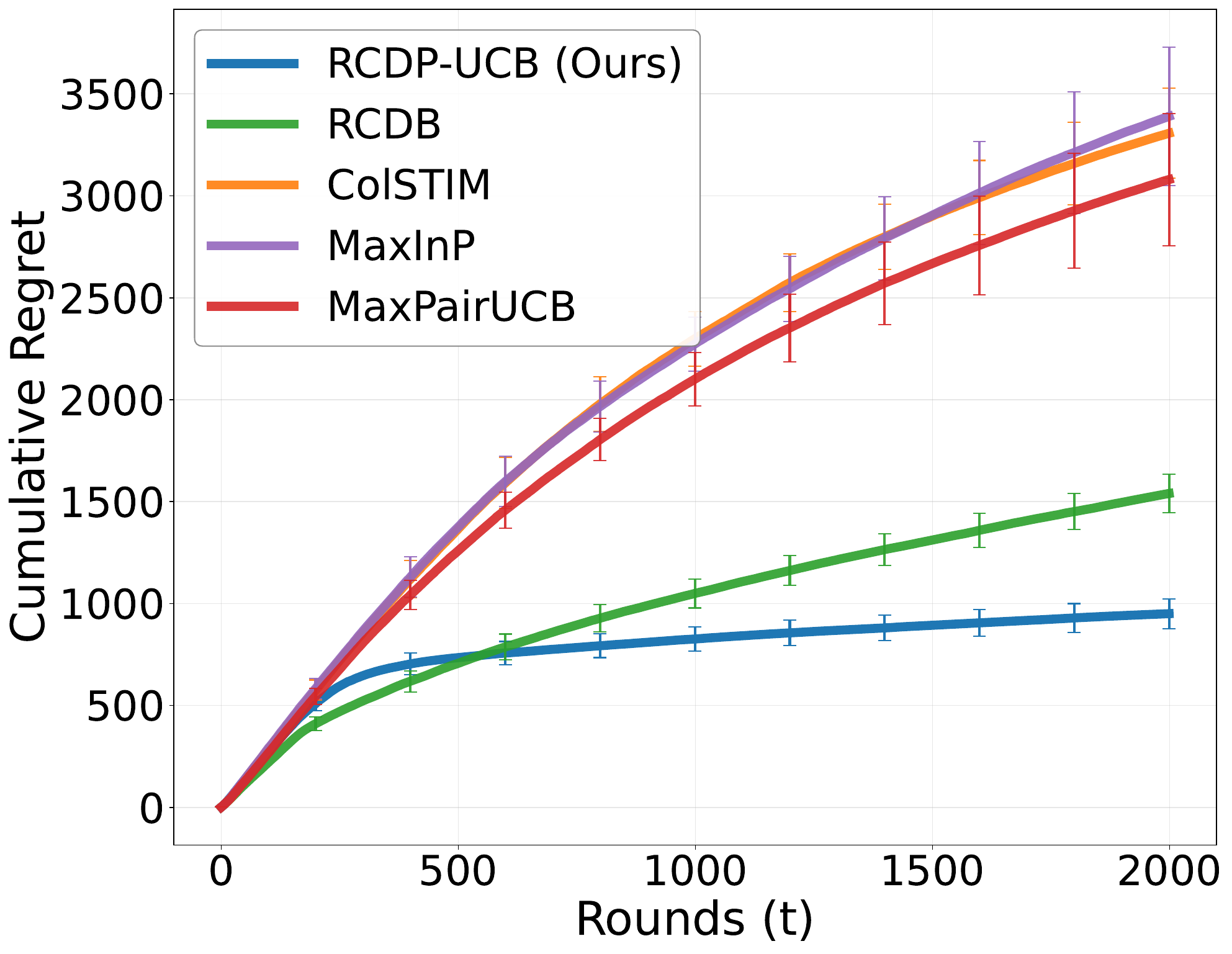}
        \caption{Strategic: $K=10$}
    \end{subfigure}
    \begin{subfigure}{0.31\textwidth}
        \includegraphics[width=\linewidth]{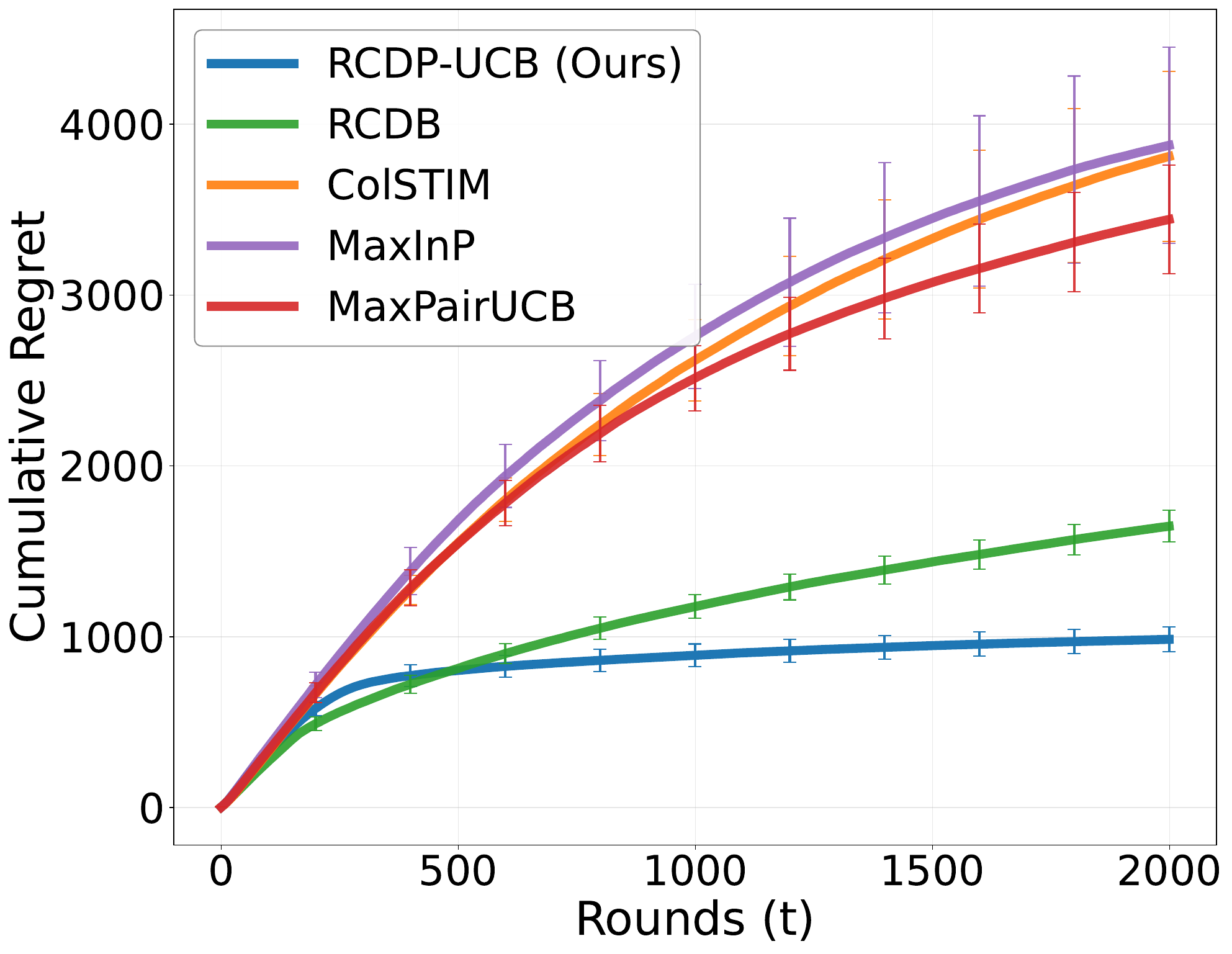}
        \caption{Strategic: $K=20$}
    \end{subfigure}
    \begin{subfigure}{0.31\textwidth}
        \includegraphics[width=\linewidth]{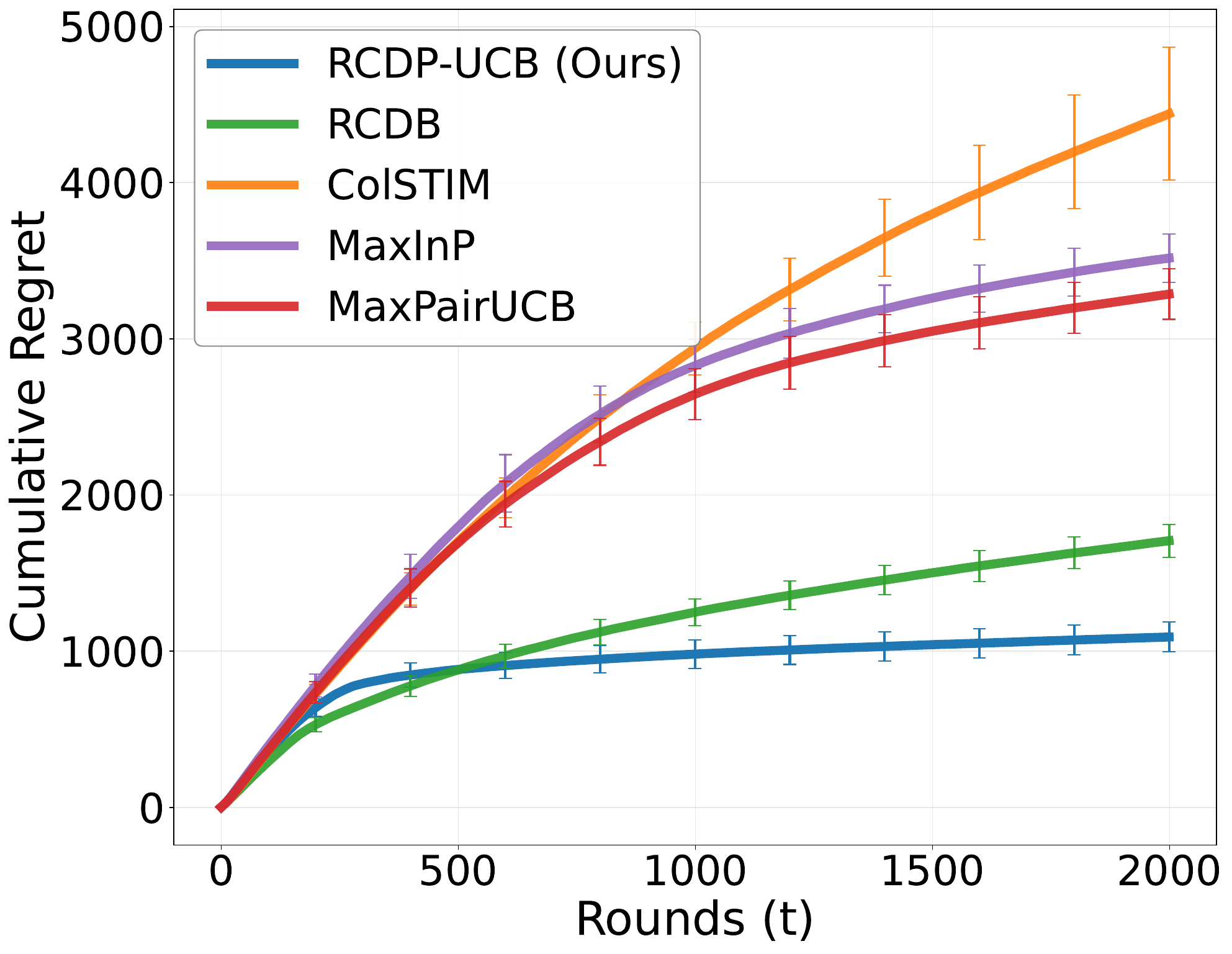}
        \caption{Strategic: $K=30$}
    \end{subfigure}
    
    \begin{subfigure}{0.31\textwidth}
        \includegraphics[width=\linewidth]{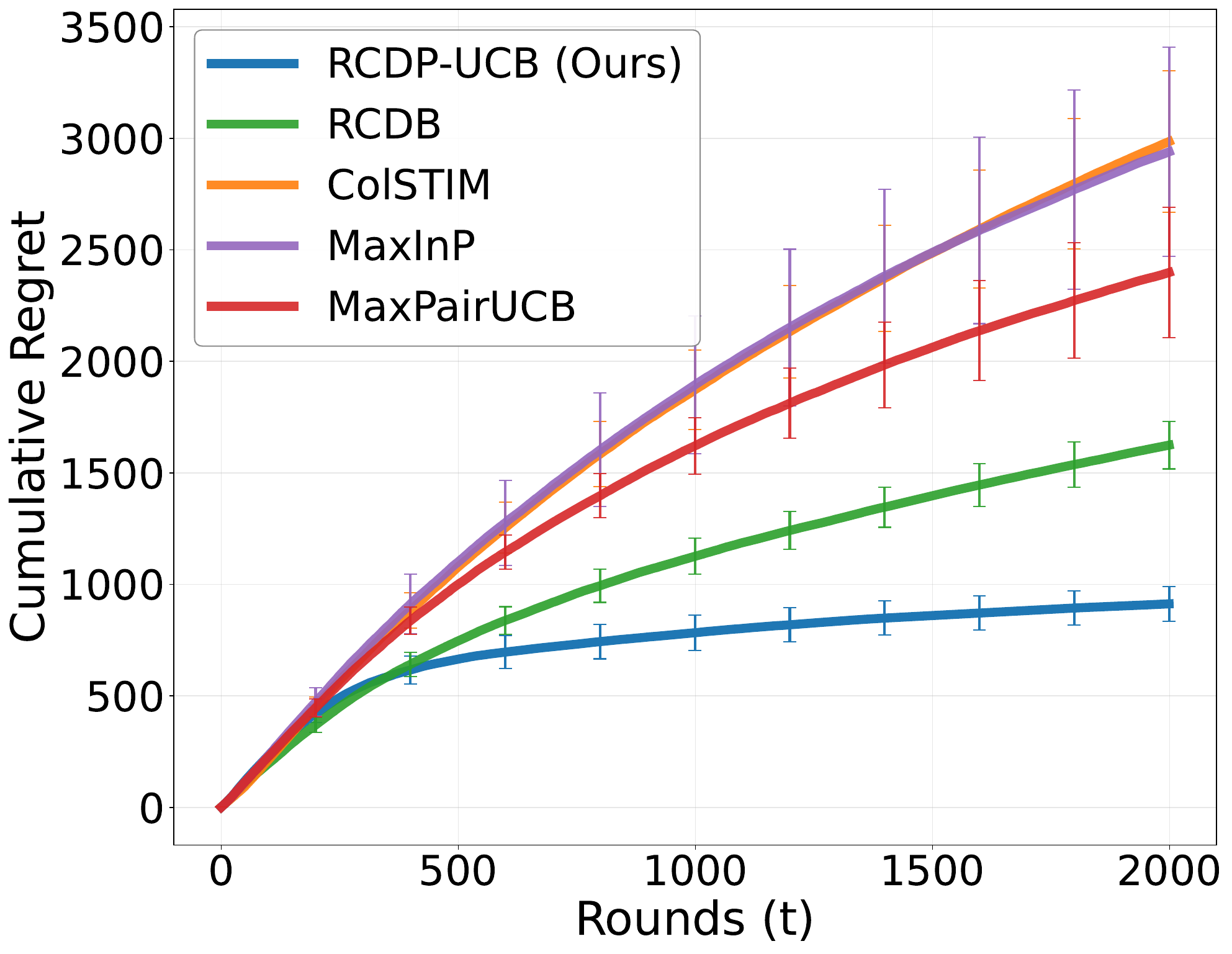}
        \caption{Stochastic: $K=10$}
    \end{subfigure}
    \begin{subfigure}{0.31\textwidth}
        \includegraphics[width=\linewidth]{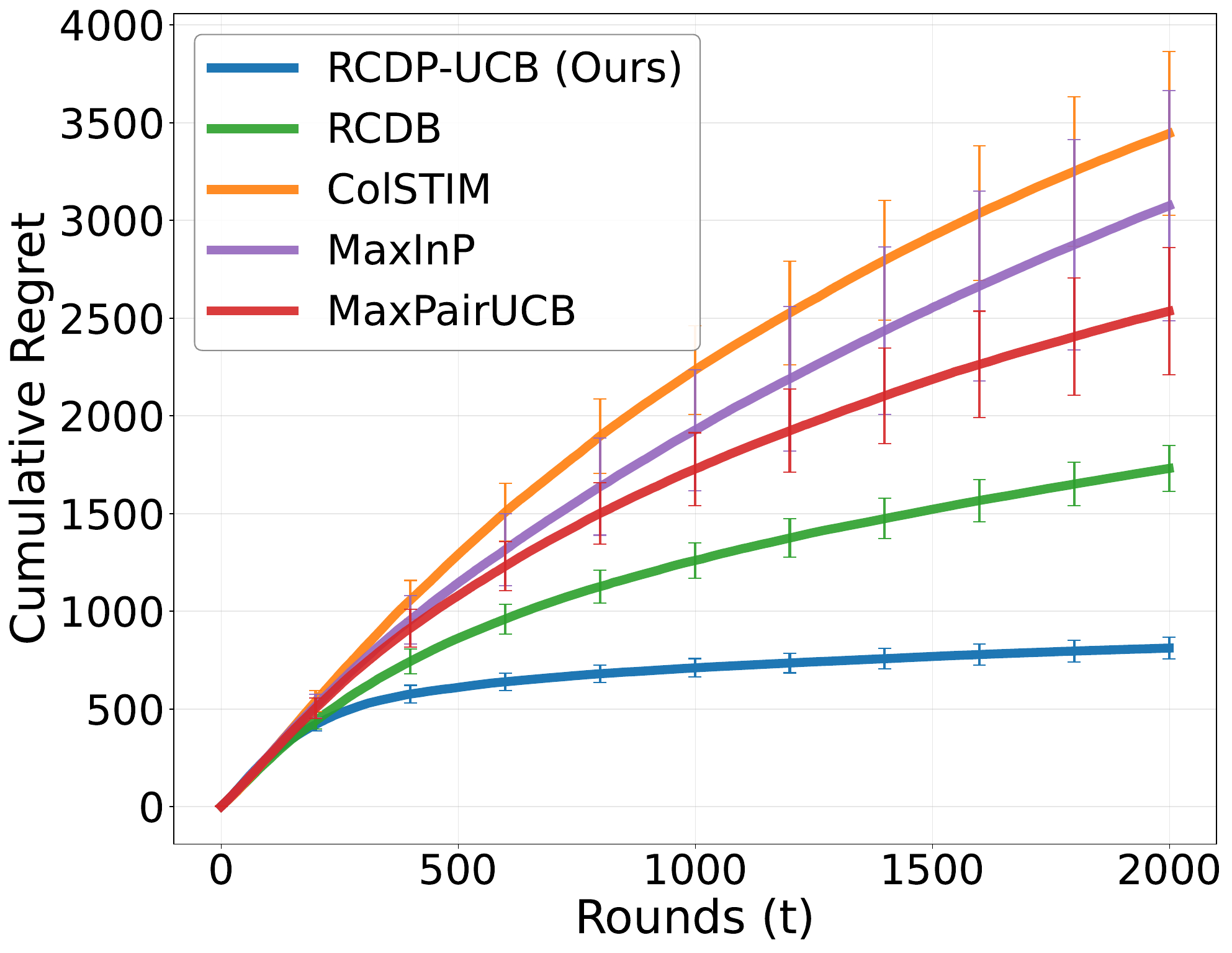}
        \caption{Stochastic: $K=20$}
    \end{subfigure}
    \begin{subfigure}{0.31\textwidth}
        \includegraphics[width=\linewidth]{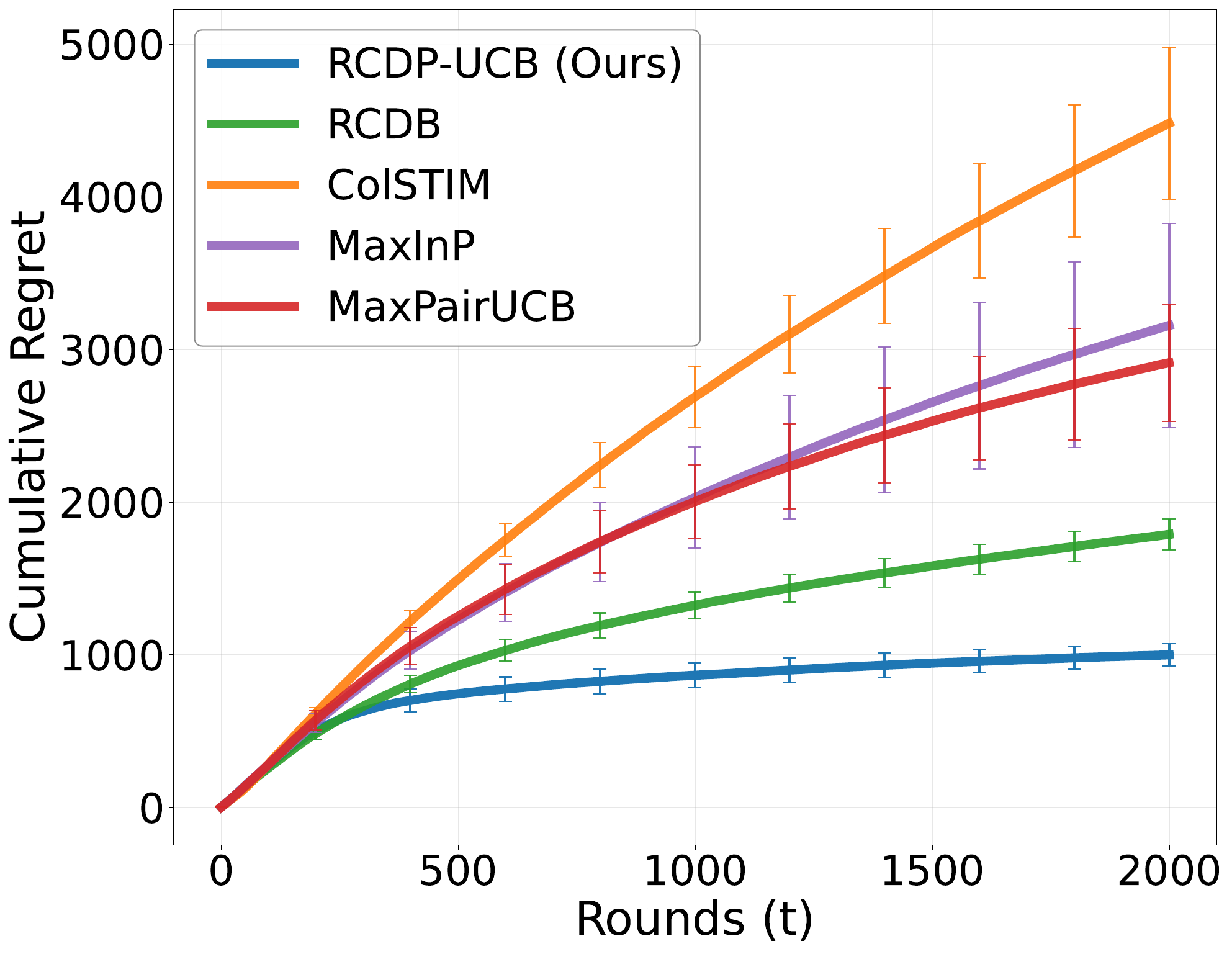}
        \caption{Stochastic: $K=30$}
    \end{subfigure}
    \caption{\textbf{Polynomial Mapping (Pre-serving Baselines)}: Cumulative regret with increasing number of arms $K$ under adversarial corruption ($\mathcal{C}=25$), strategic delays ($\Lambda=10,000$), and stochastic delays ($\mu_\tau=100$). All experiments were conducted across 10 runs with random seeds.}
    \label{fig:appendix_arms_no_ps}
\end{figure*}

\paragraph{Sensitivity with Baseline Post-Serving Learning.}
The robustness of \term is further validated across Absolute, Polynomial, and Sinusoidal mappings when all baselines incorporate post-serving learning (\Cref{fig:appendix_arms_ps_abs}, \Cref{fig:appendix_arms_ps_poly}, \Cref{fig:appendix_arms_ps_sin}). Even with the ability to predict latent contexts, baselines struggle to maintain low regret as the action space expands. In contrast, the performance of \term remains remarkably consistent. This suggests that the adaptive weighting mechanism effectively controls the variance of the MLE estimation, preventing the combined noise of corruption and delay from overwhelming the identification of the optimal arm even as the competition among arms increases.

\begin{figure*}[ht]
    \centering
    \begin{subfigure}{0.31\textwidth}
        \includegraphics[width=\linewidth]{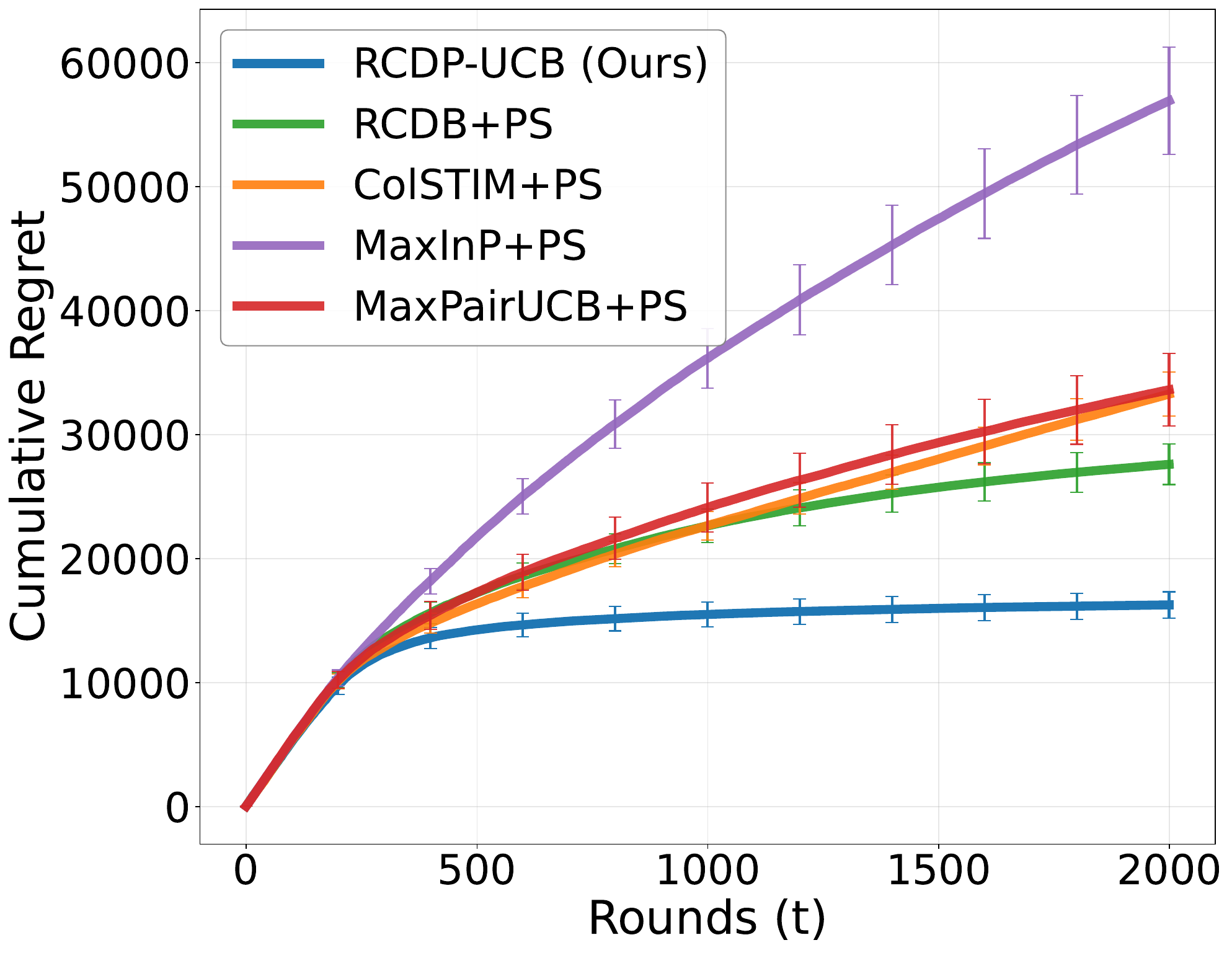}
        \caption{Strategic: $K=10$}
    \end{subfigure}
    \begin{subfigure}{0.31\textwidth}
        \includegraphics[width=\linewidth]{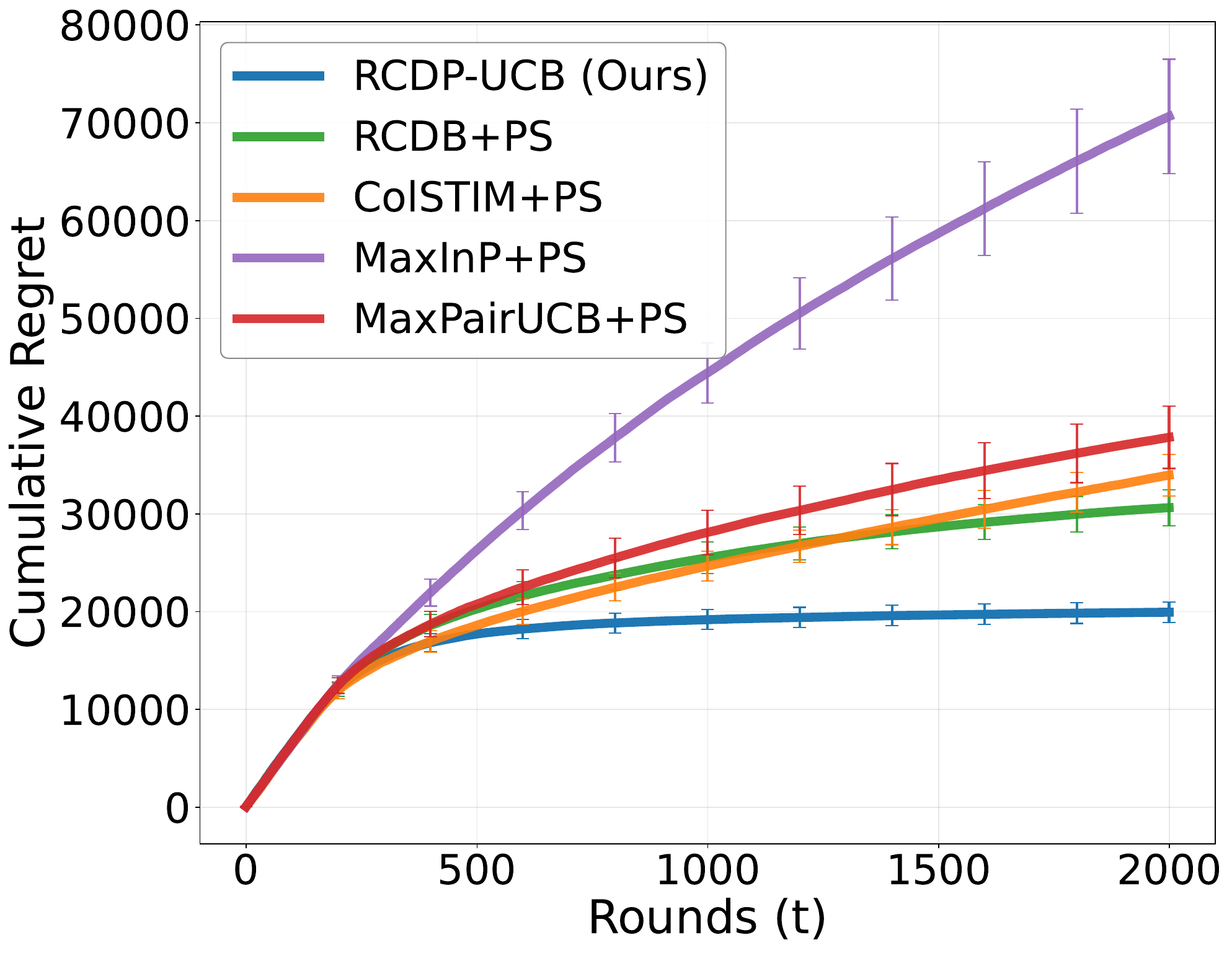}
        \caption{Strategic: $K=20$}
    \end{subfigure}
    \begin{subfigure}{0.31\textwidth}
        \includegraphics[width=\linewidth]{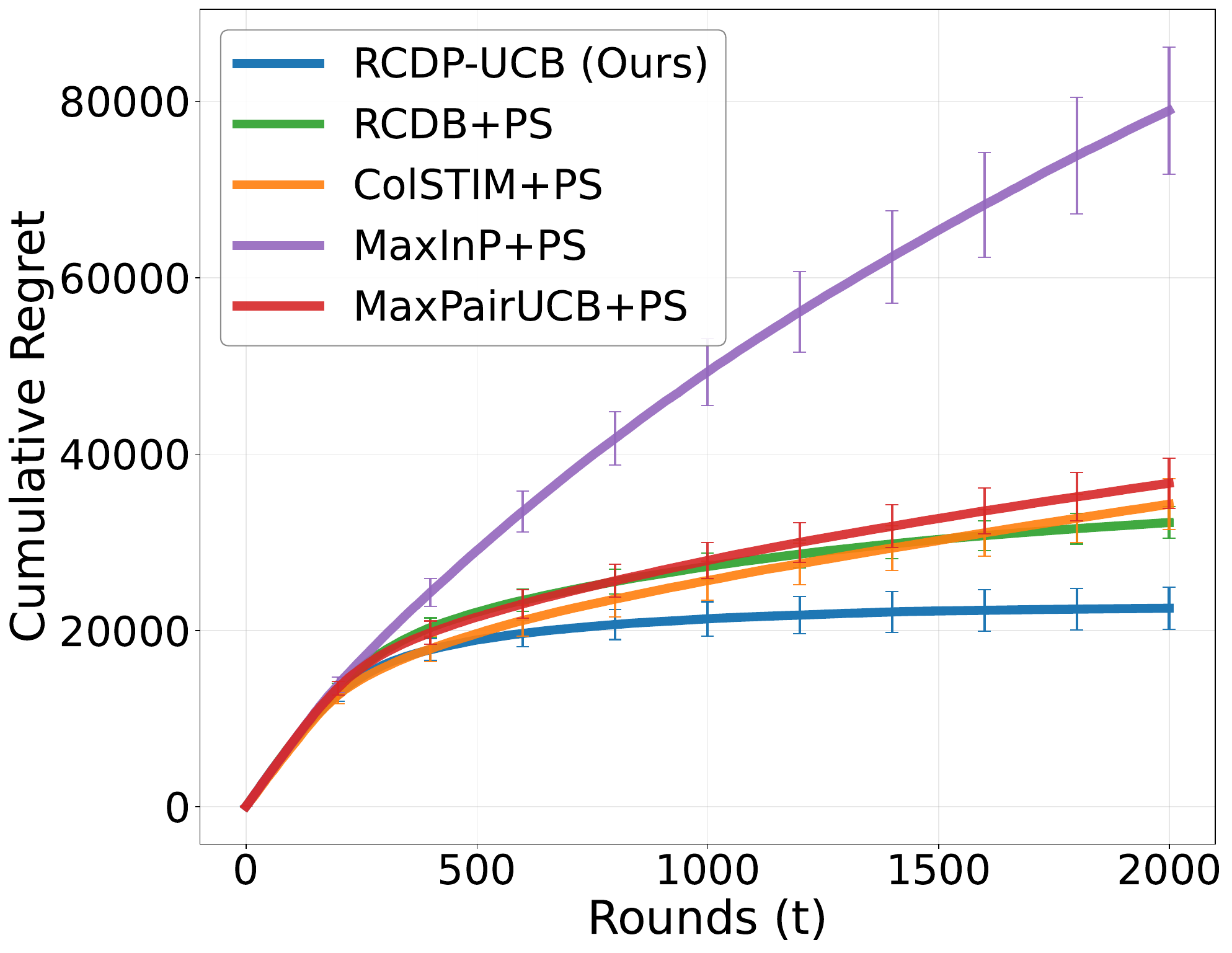}
        \caption{Strategic: $K=30$}
    \end{subfigure}
    
    \begin{subfigure}{0.31\textwidth}
        \includegraphics[width=\linewidth]{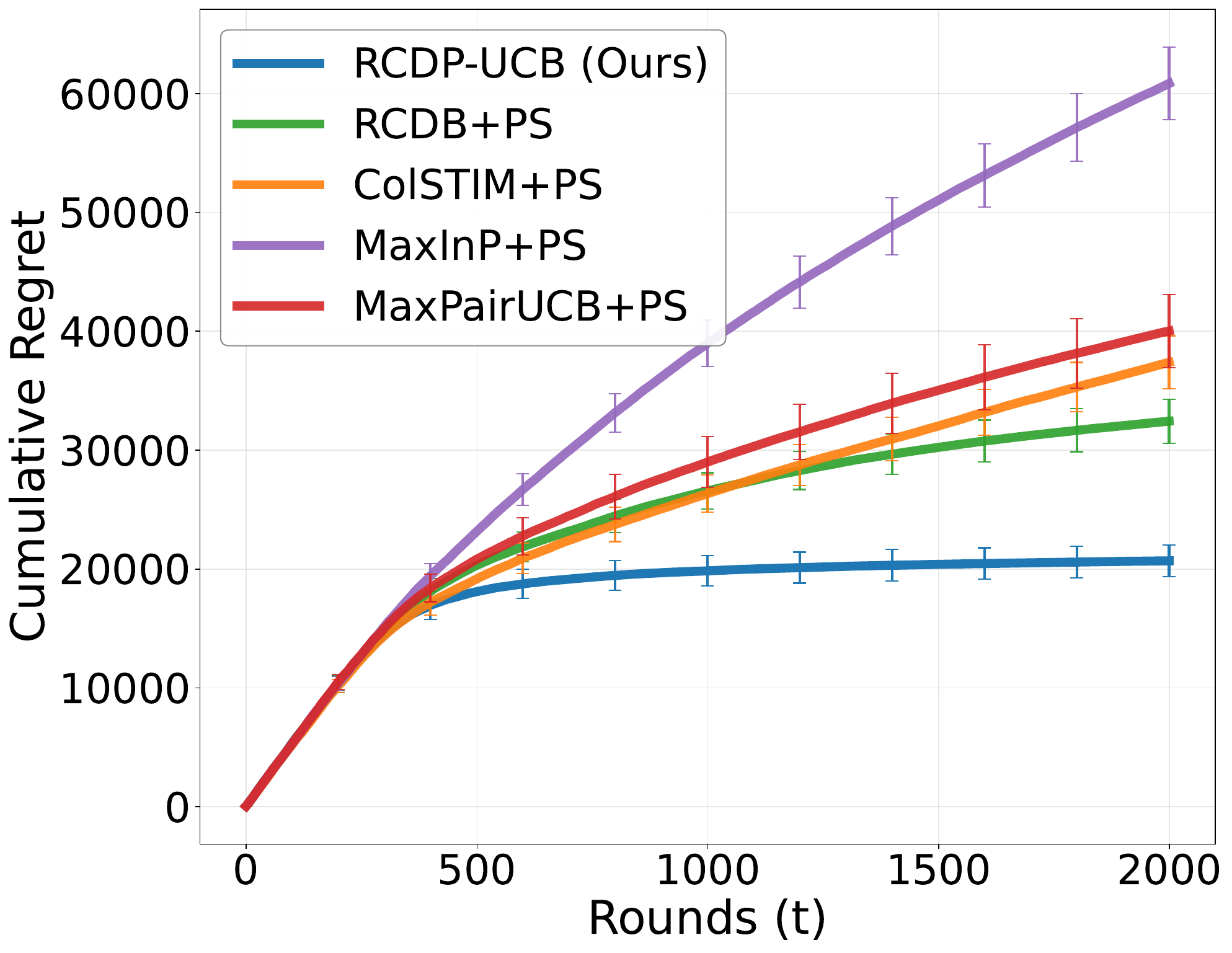}
        \caption{Stochastic: $K=10$}
    \end{subfigure}
    \begin{subfigure}{0.31\textwidth}
        \includegraphics[width=\linewidth]{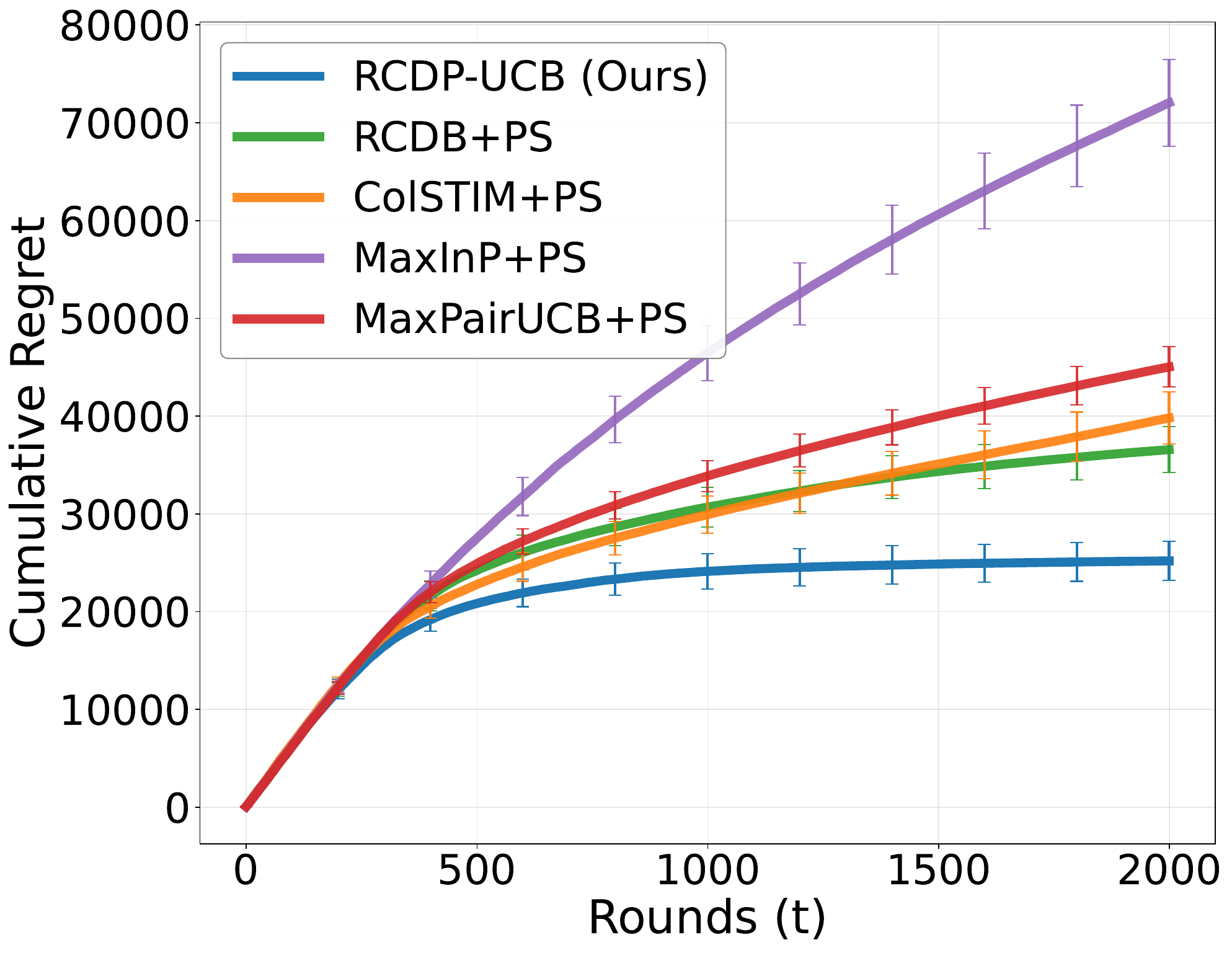}
        \caption{Stochastic: $K=20$}
    \end{subfigure}
    \begin{subfigure}{0.31\textwidth}
        \includegraphics[width=\linewidth]{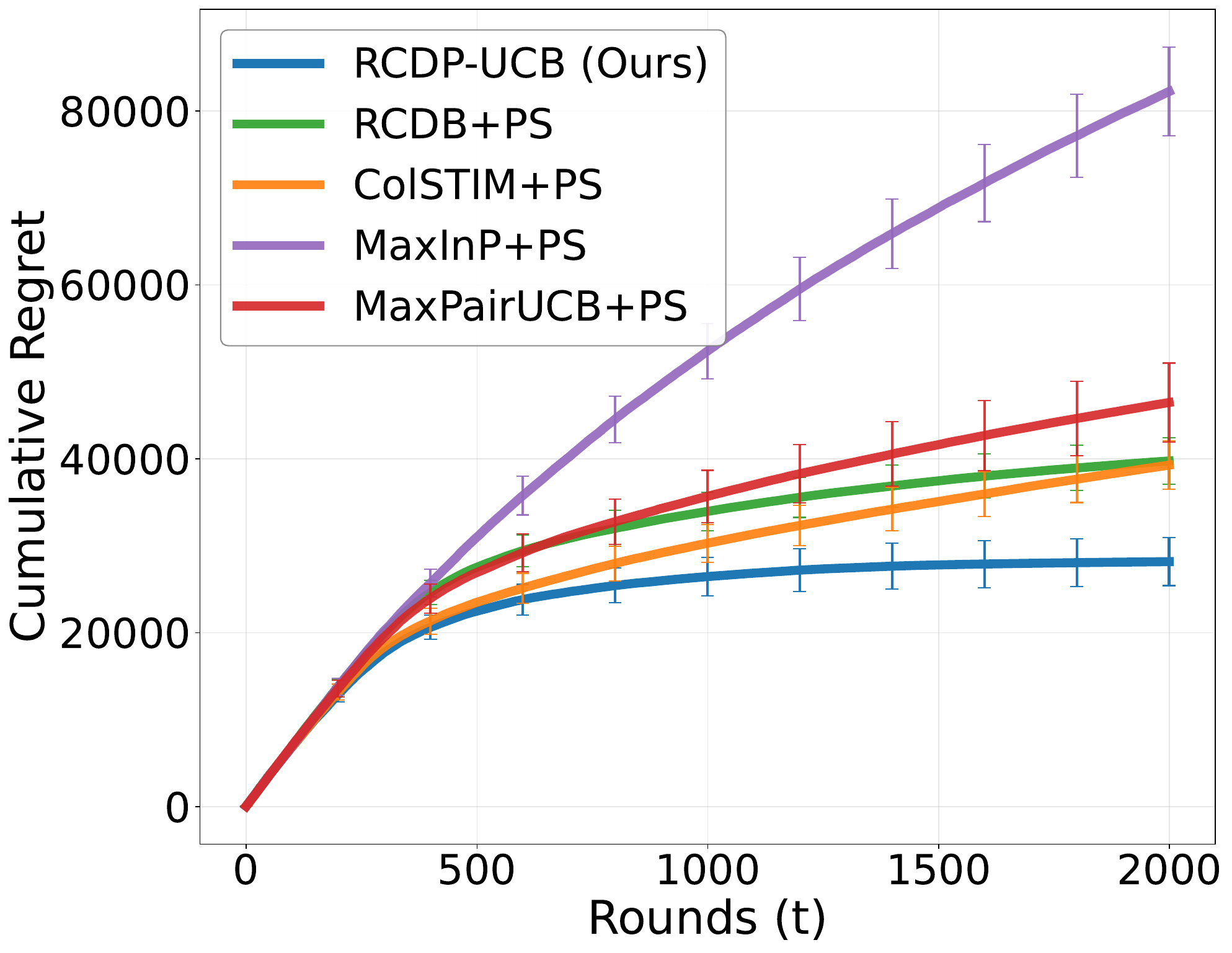}
        \caption{Stochastic: $K=30$}
    \end{subfigure}
    \caption{\textbf{Absolute Mapping (Post-serving Learning)}: Cumulative regret with increasing number of arms $K$ under adversarial corruption ($\mathcal{C}=25$), strategic delays ($\Lambda=10,000$), and stochastic delays ($\mu_\tau=100$), where all baselines learn $\phi_*$. All experiments were conducted across 10 runs with random seeds.}
    \label{fig:appendix_arms_ps_abs}
\end{figure*}

\begin{figure*}[ht]
    \centering
    \begin{subfigure}{0.31\textwidth}
        \includegraphics[width=\linewidth]{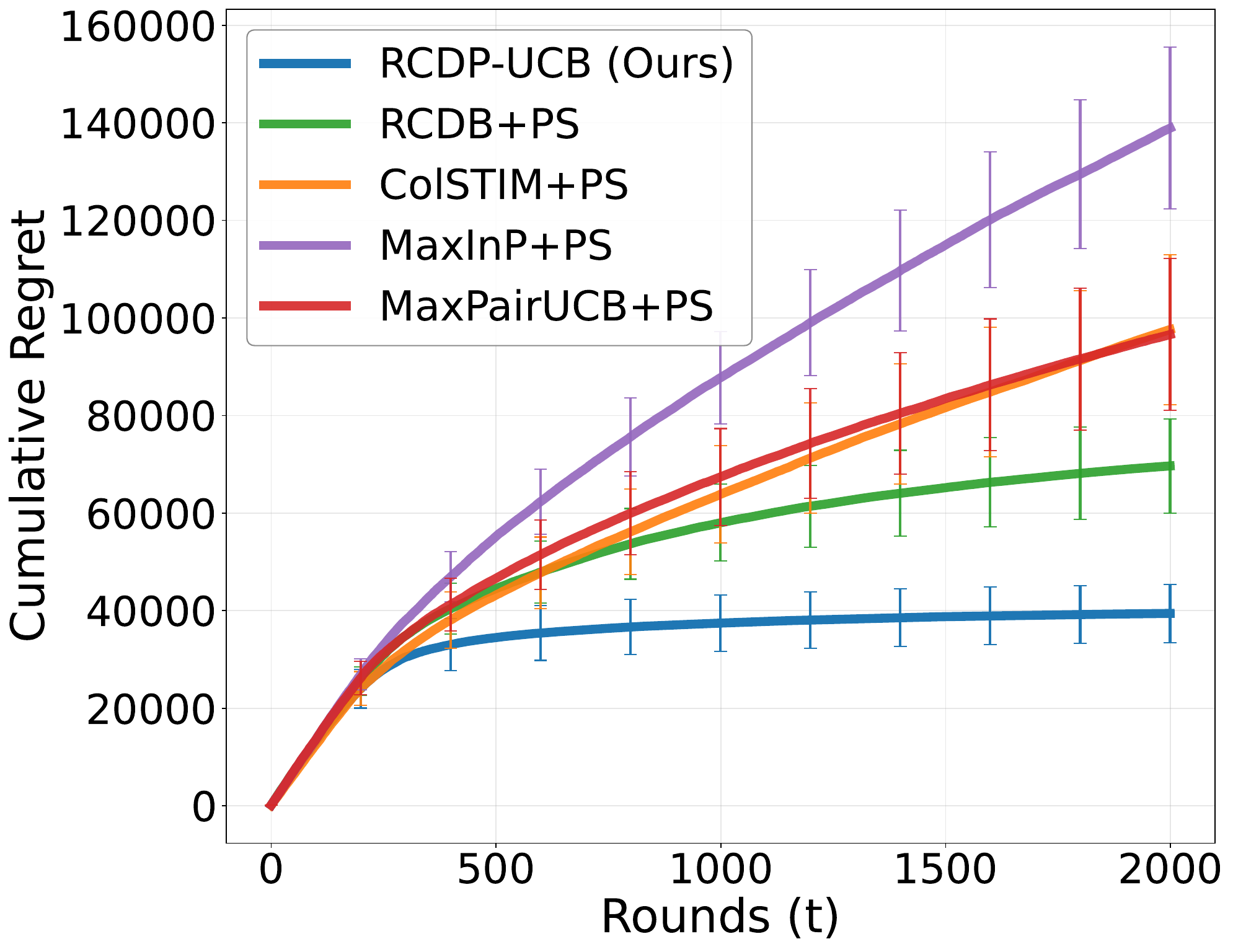}
        \caption{Strategic: $K=10$}
    \end{subfigure}
    \begin{subfigure}{0.31\textwidth}
        \includegraphics[width=\linewidth]{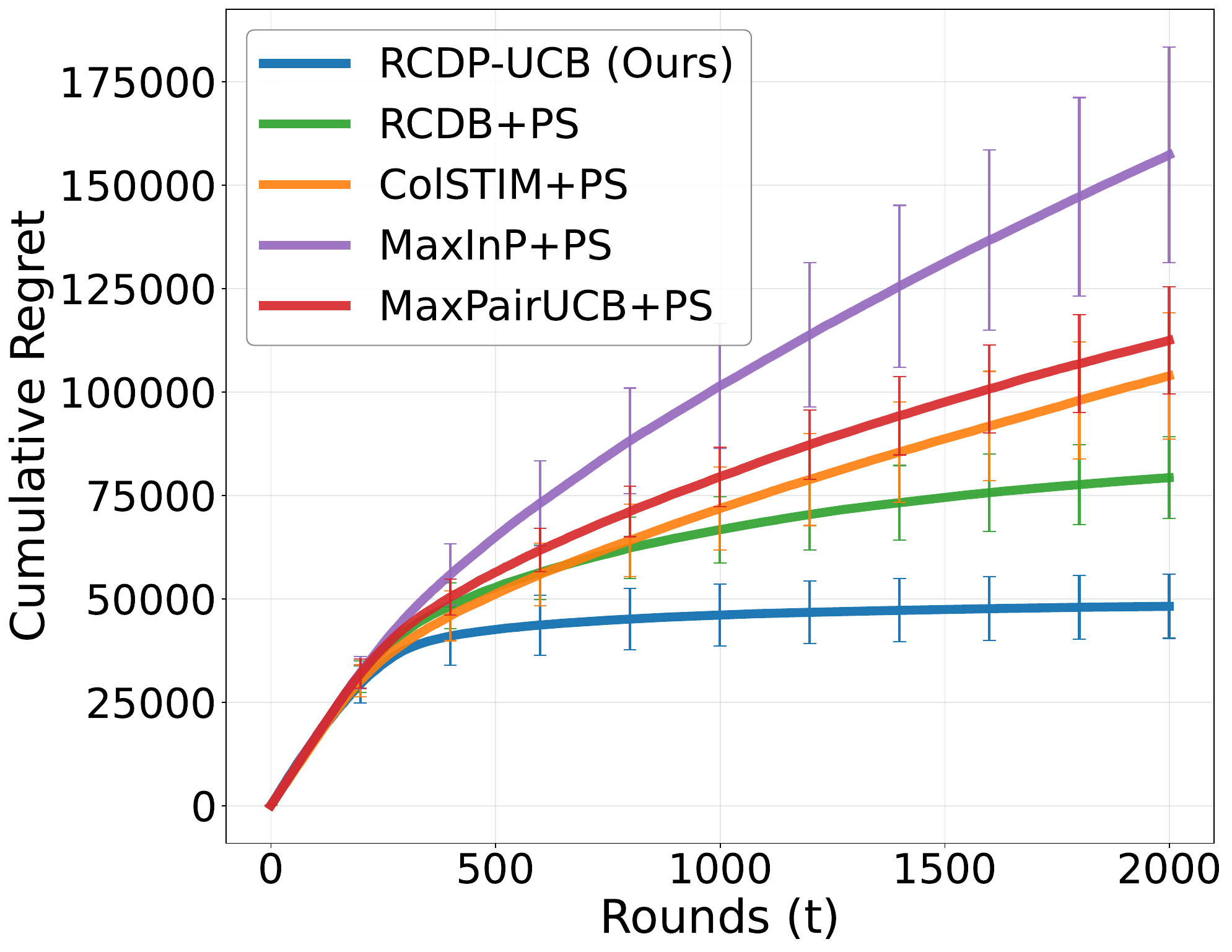}
        \caption{Strategic: $K=20$}
    \end{subfigure}
    \begin{subfigure}{0.31\textwidth}
        \includegraphics[width=\linewidth]{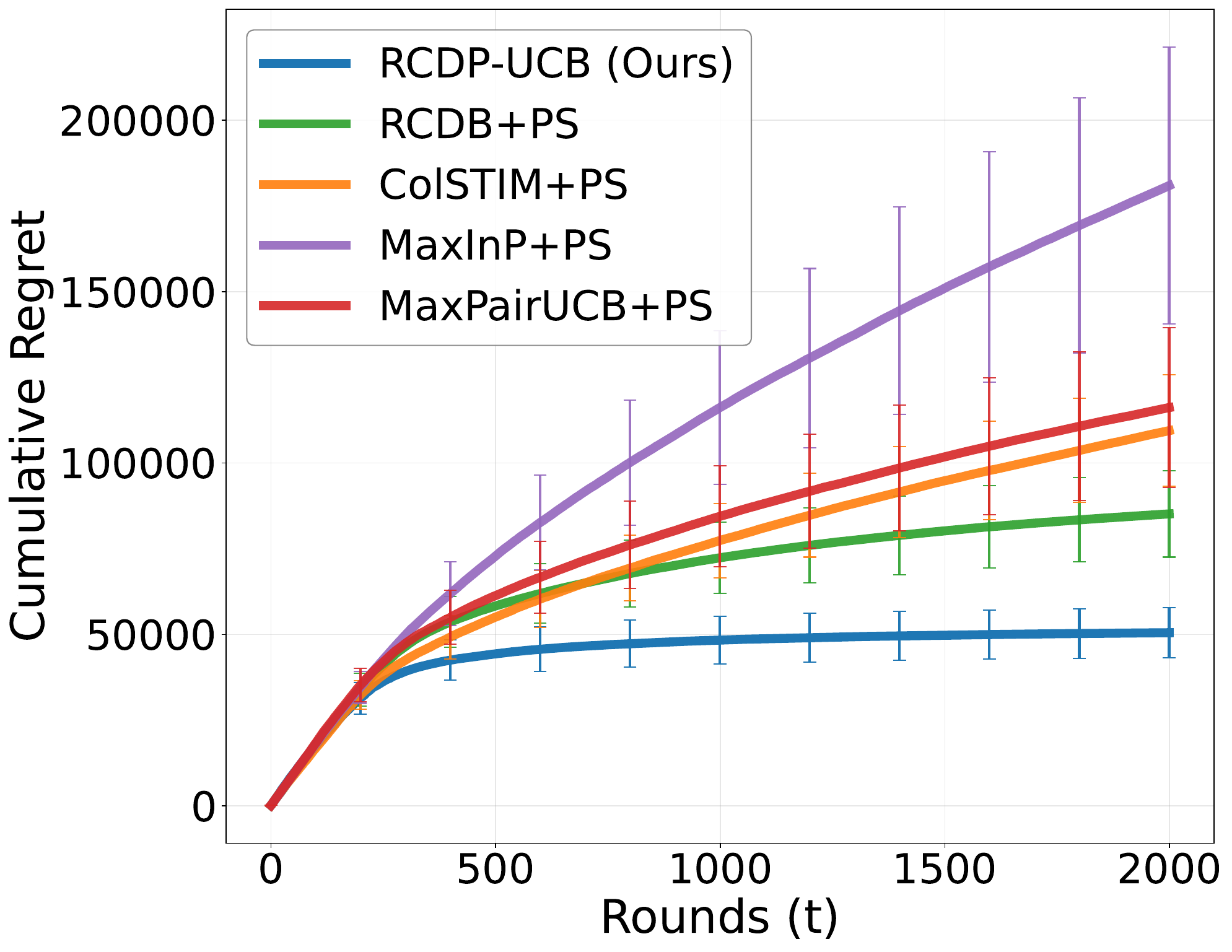}
        \caption{Strategic: $K=30$}
    \end{subfigure}
    
    \begin{subfigure}{0.31\textwidth}
        \includegraphics[width=\linewidth]{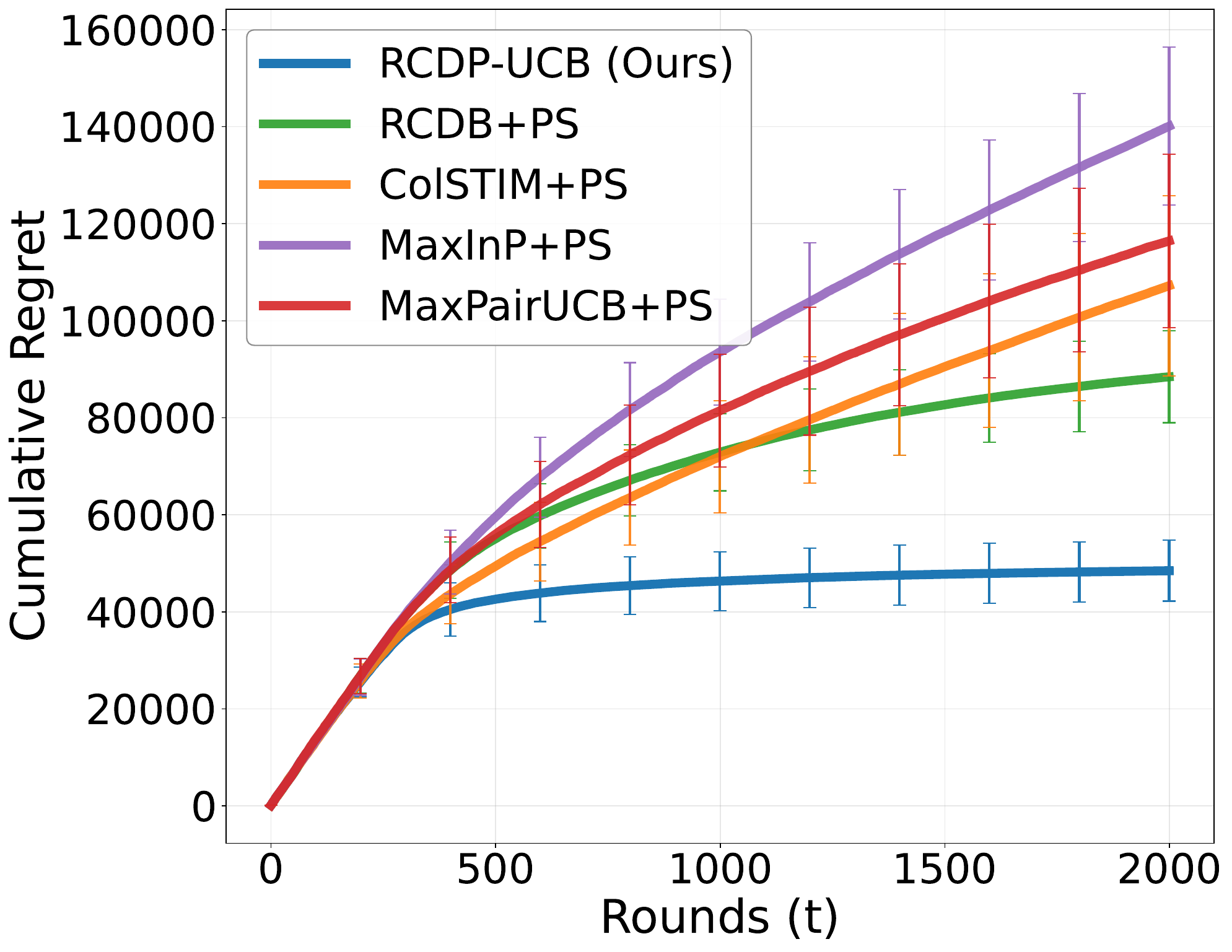}
        \caption{Stochastic: $K=10$}
    \end{subfigure}
    \begin{subfigure}{0.31\textwidth}
        \includegraphics[width=\linewidth]{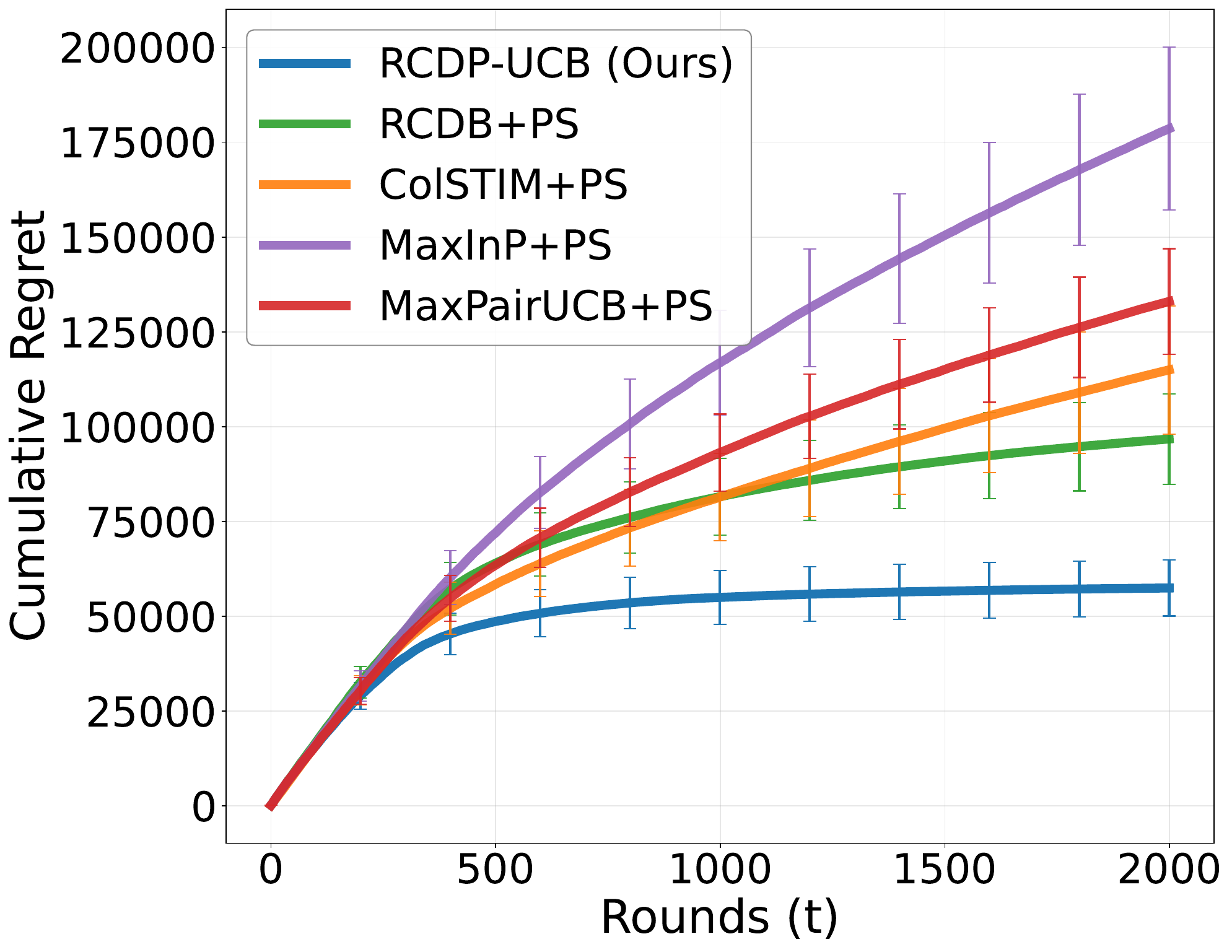}
        \caption{Stochastic: $K=20$}
    \end{subfigure}
    \begin{subfigure}{0.31\textwidth}
        \includegraphics[width=\linewidth]{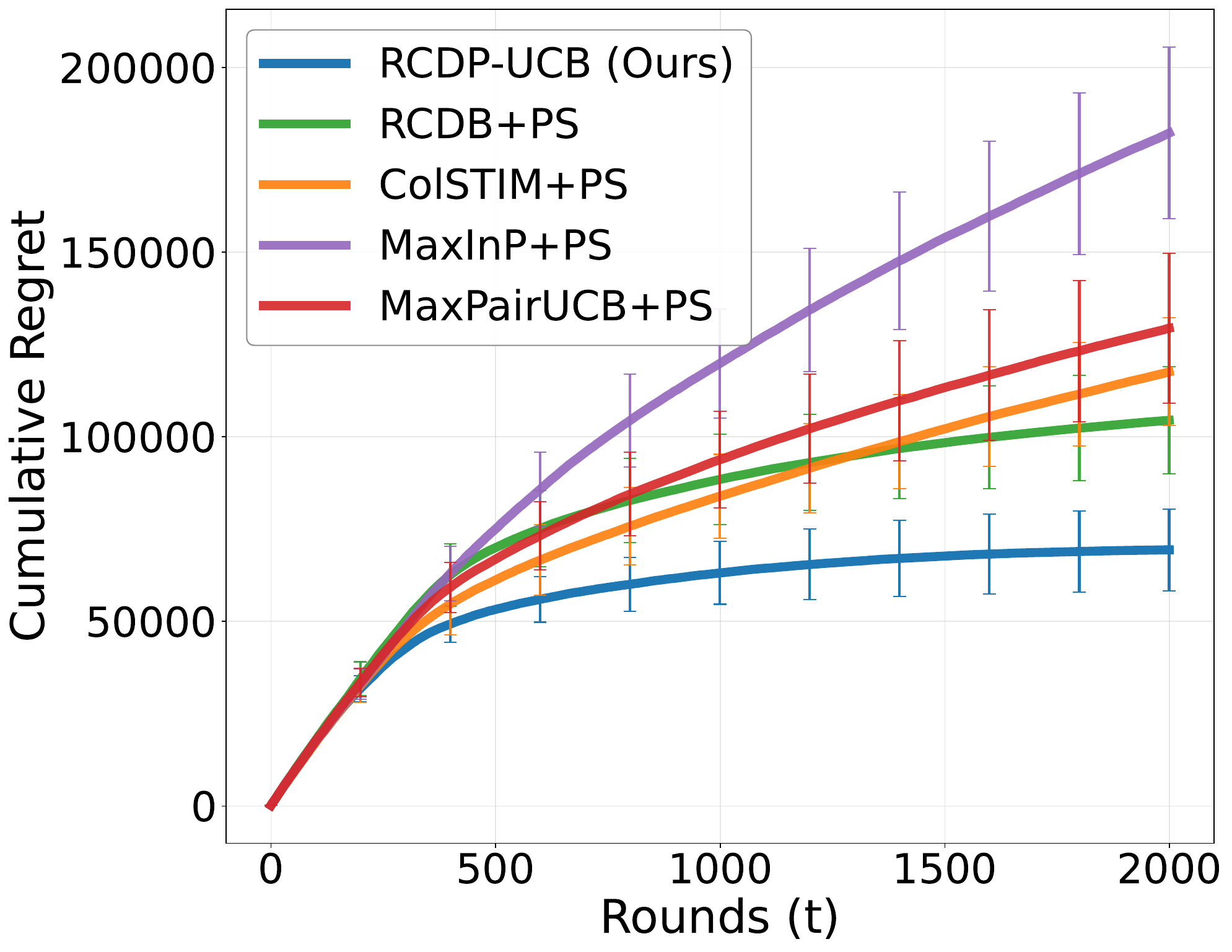}
        \caption{Stochastic: $K=30$}
    \end{subfigure}
    \caption{\textbf{Polynomial Mapping (Post-serving Learning)}: Cumulative regret with increasing number of arms $K$ under adversarial corruption ($\mathcal{C}=25$), strategic delays ($\Lambda=10,000$), and stochastic delays ($\mu_\tau=100$), where all baselines learn $\phi_*$. All experiments were conducted across 10 runs with random seeds.}
    \label{fig:appendix_arms_ps_poly}
\end{figure*}

\begin{figure*}[ht]
    \centering
    \begin{subfigure}{0.31\textwidth}
        \includegraphics[width=\linewidth]{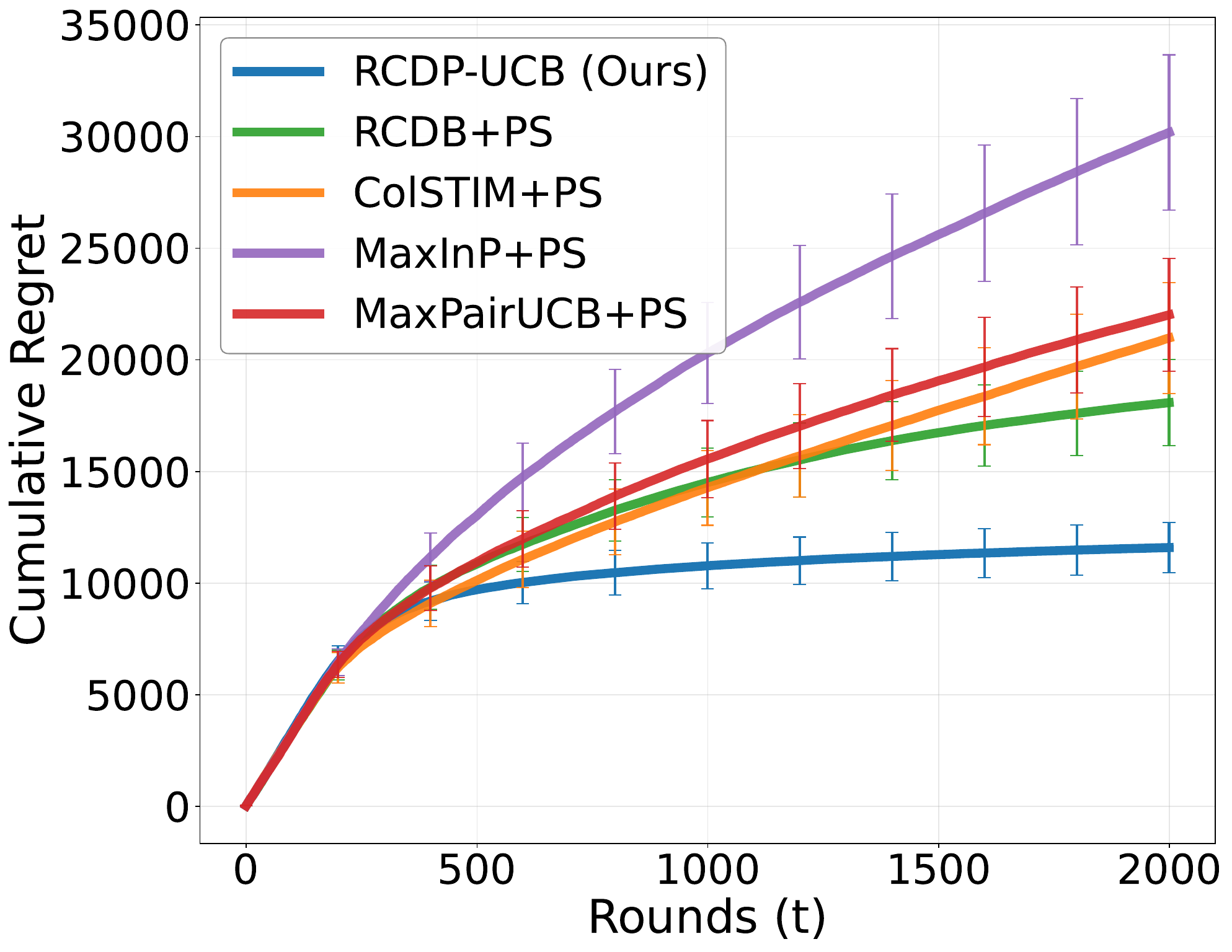}
        \caption{Strategic: $K=10$}
    \end{subfigure}
    \begin{subfigure}{0.31\textwidth}
        \includegraphics[width=\linewidth]{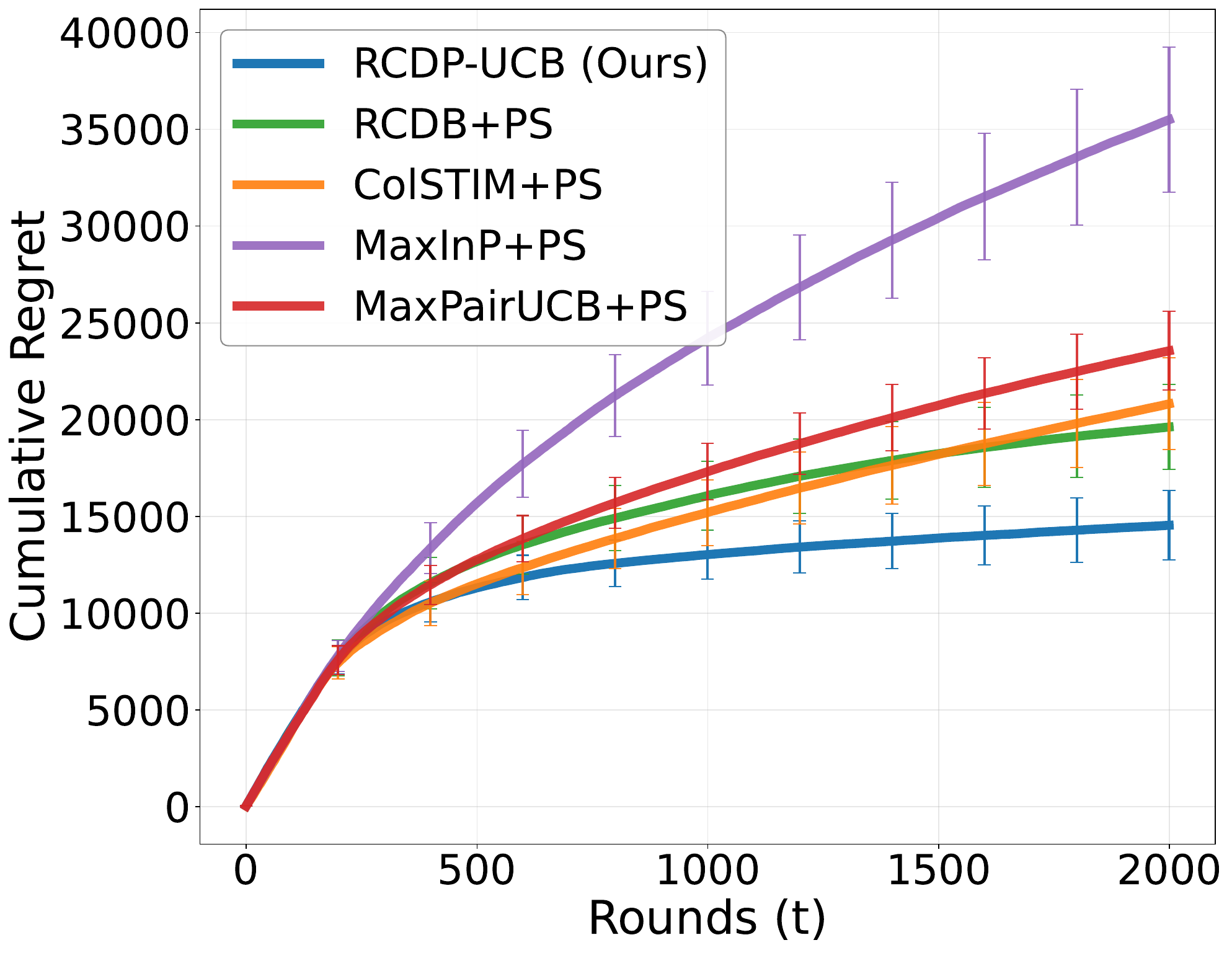}
        \caption{Strategic: $K=20$}
    \end{subfigure}
    \begin{subfigure}{0.31\textwidth}
        \includegraphics[width=\linewidth]{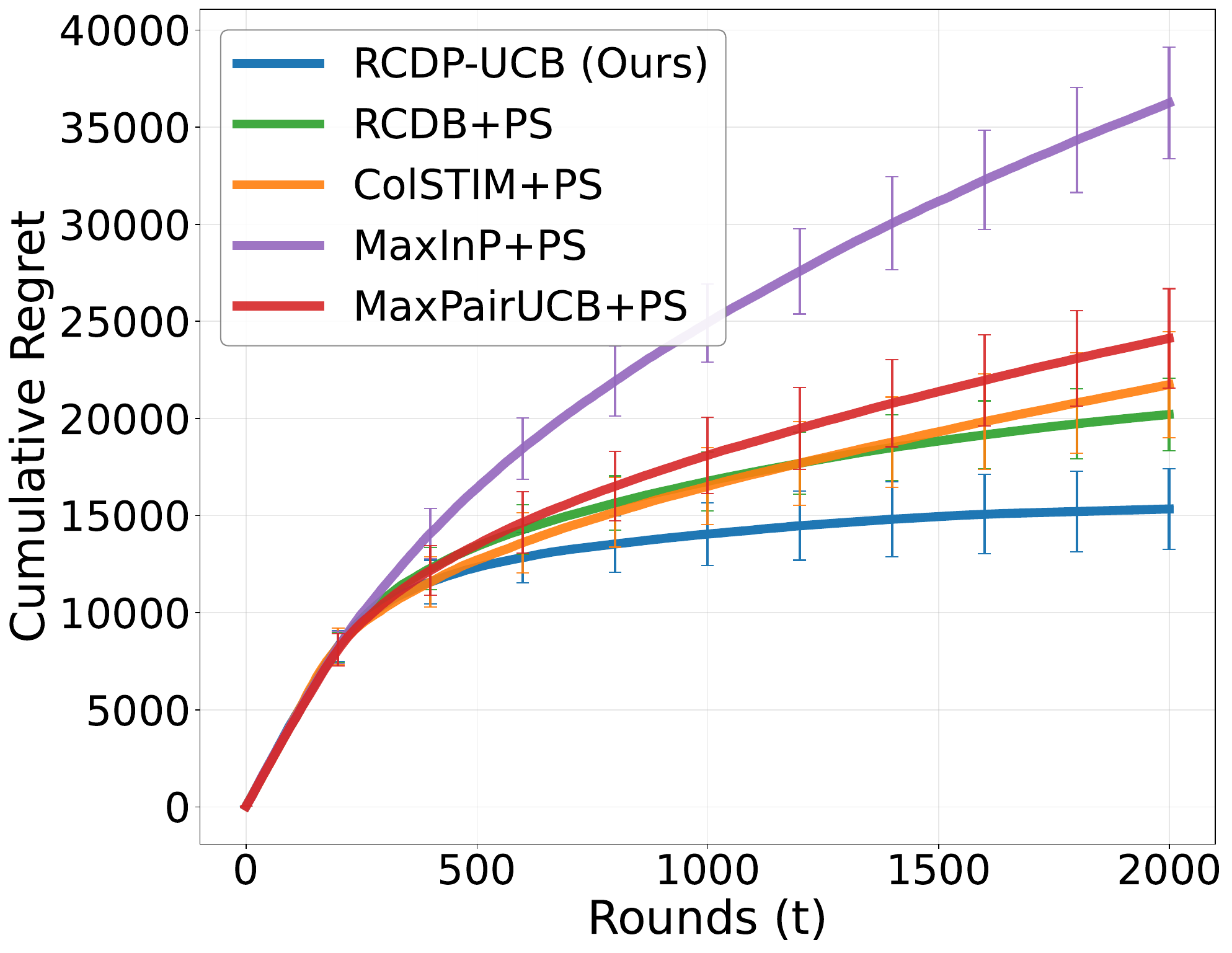}
        \caption{Strategic: $K=30$}
    \end{subfigure}
    
    \begin{subfigure}{0.31\textwidth}
        \includegraphics[width=\linewidth]{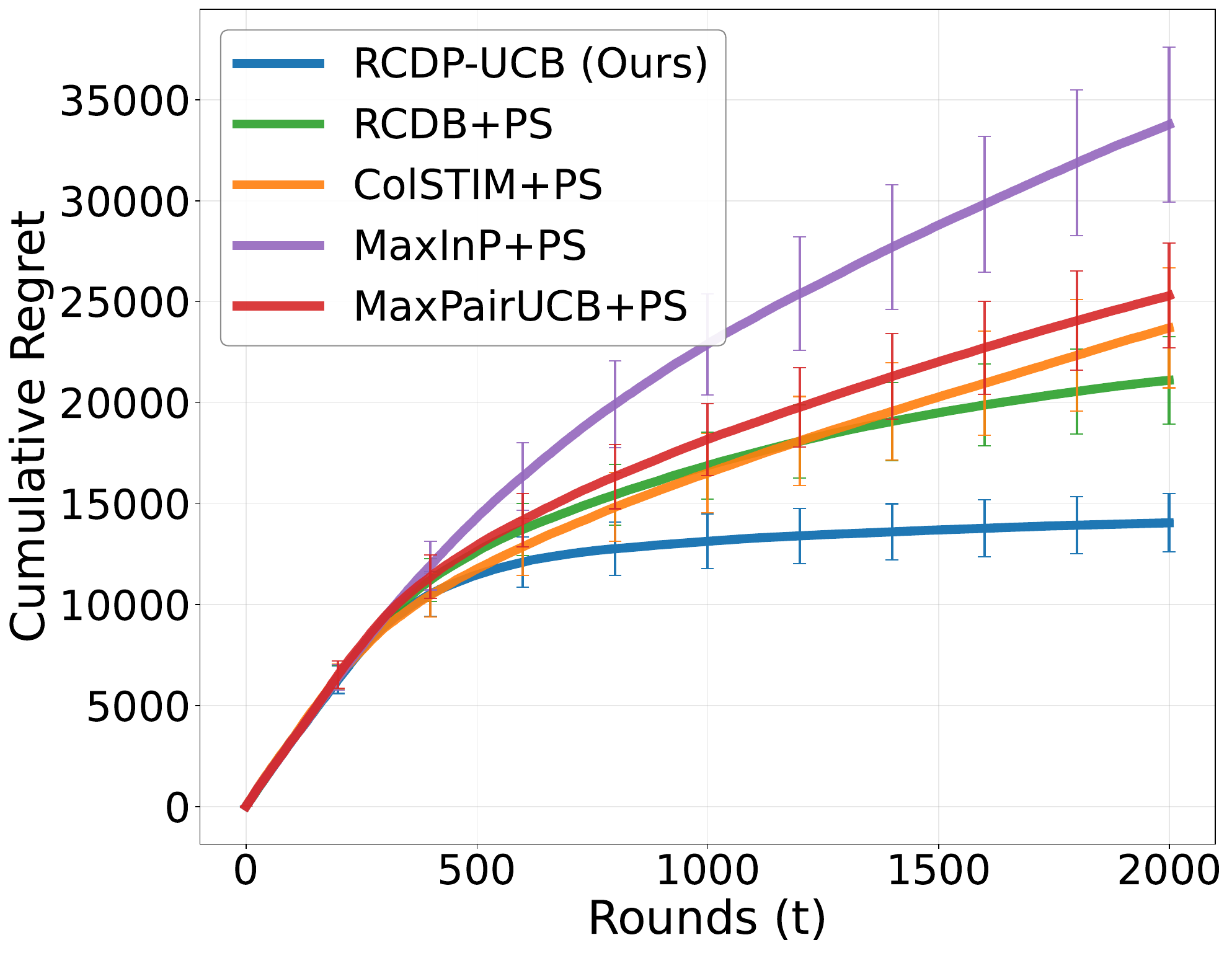}
        \caption{Stochastic: $K=10$}
    \end{subfigure}
    \begin{subfigure}{0.31\textwidth}
        \includegraphics[width=\linewidth]{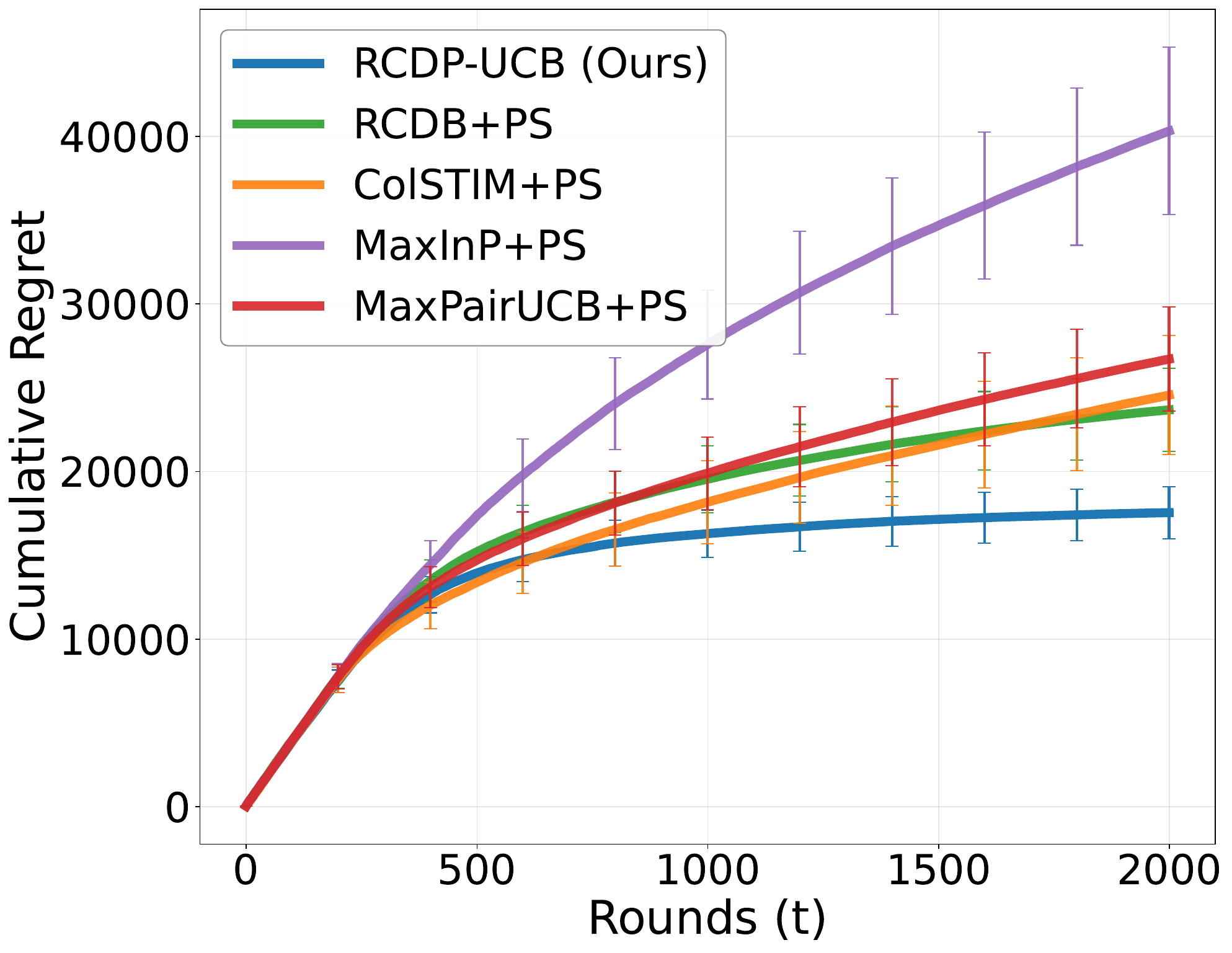}
        \caption{Stochastic: $K=20$}
    \end{subfigure}
    \begin{subfigure}{0.31\textwidth}
        \includegraphics[width=\linewidth]{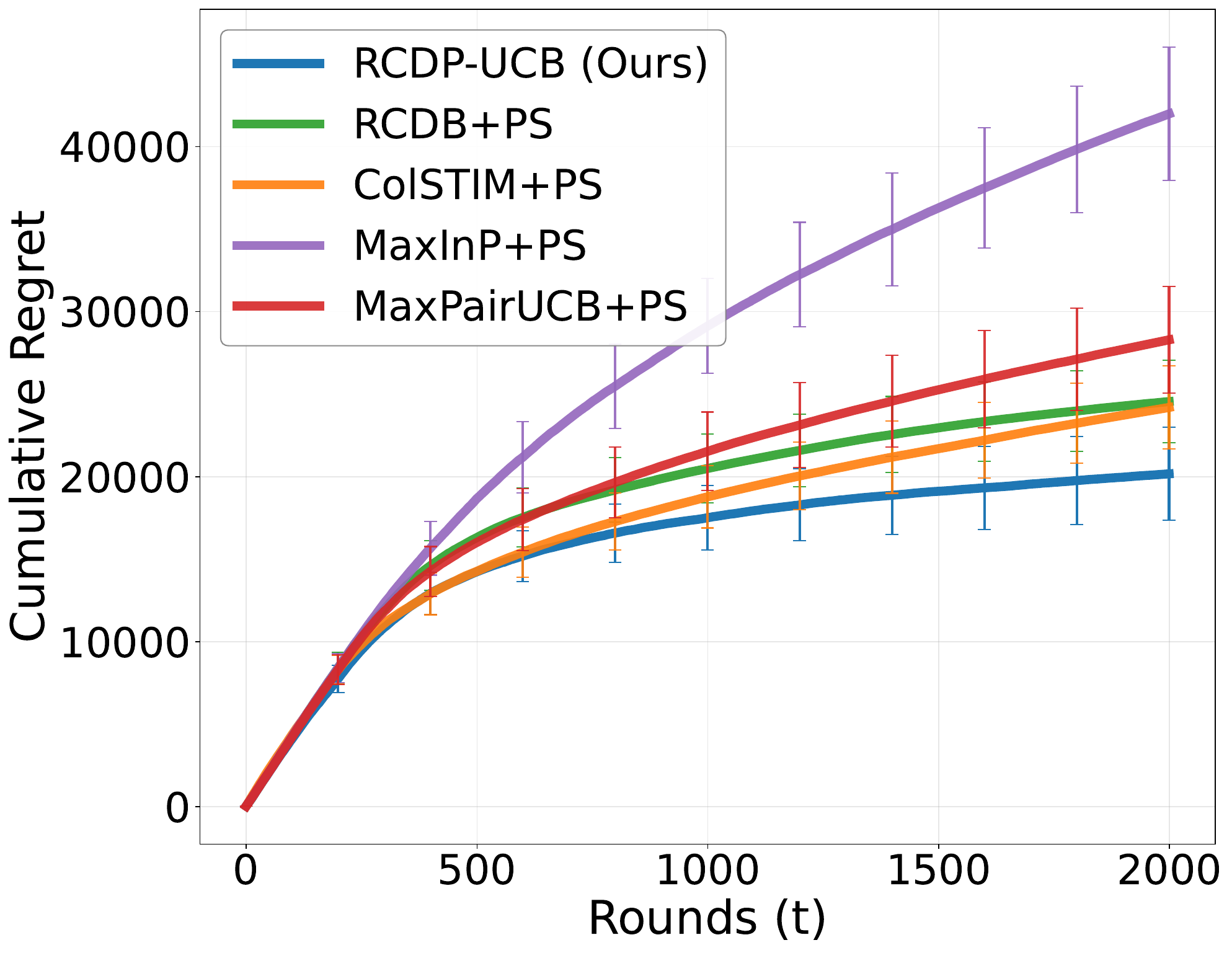}
        \caption{Stochastic: $K=30$}
    \end{subfigure}
    \caption{\textbf{Sinusoidal Mapping (Post-serving Learning)}: Cumulative regret with increasing number of arms $K$ under adversarial corruption ($\mathcal{C}=25$), strategic delays ($\Lambda=10,000$), and stochastic delays ($\mu_\tau=100$), where all baselines learn $\phi_*$. All experiments were conducted across 10 runs with random seeds.}
    \label{fig:appendix_arms_ps_sin}
\end{figure*}

\subsection{Varying Corruption Budget}
\label{sec:appendix_corruption}
We investigate the robustness of \term as the total budget of adversarial corruption $\mathcal{C}$ varies. Specifically, we test $\mathcal{C} \in \{10, 15, 20\}$ under a fixed mean stochastic delay $\mu_\tau = 100$, standard deviation $\sigma=10$.

\paragraph{Empirical Analysis.}
\Cref{fig:appendix_corruption_poly} shows the cumulative regret for the linear problem with context dimensions $d=10$ and $d=20$. As the corruption budget increases, the bias introduced by the adversary becomes more significant. However, \term consistently maintains lower regret and faster convergence compared to both robust and non-robust baselines. This is achieved through the weighted MLE framework, where the influence of potentially corrupted observations is balanced by the information gain weights $\omega_t$ and the corruption-aware parameter estimation, ensuring that the learner remains effective even when a portion of the preferences are manipulated.

\begin{figure*}[ht]
    \centering
    \begin{subfigure}{0.31\textwidth}
        \includegraphics[width=\linewidth]{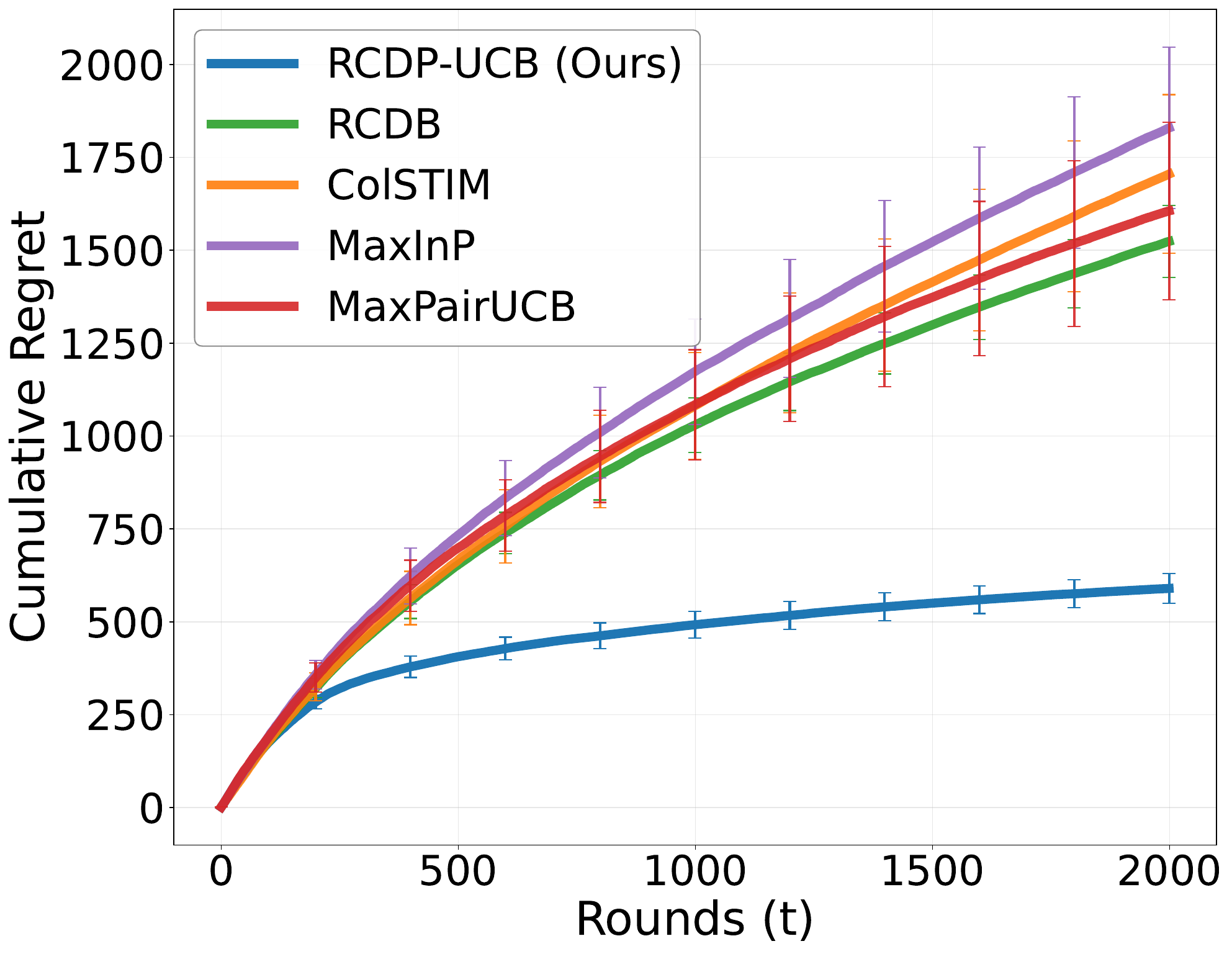}
        \caption{$d=10, \mathcal{C}=10$}
    \end{subfigure}
    \begin{subfigure}{0.31\textwidth}
        \includegraphics[width=\linewidth]{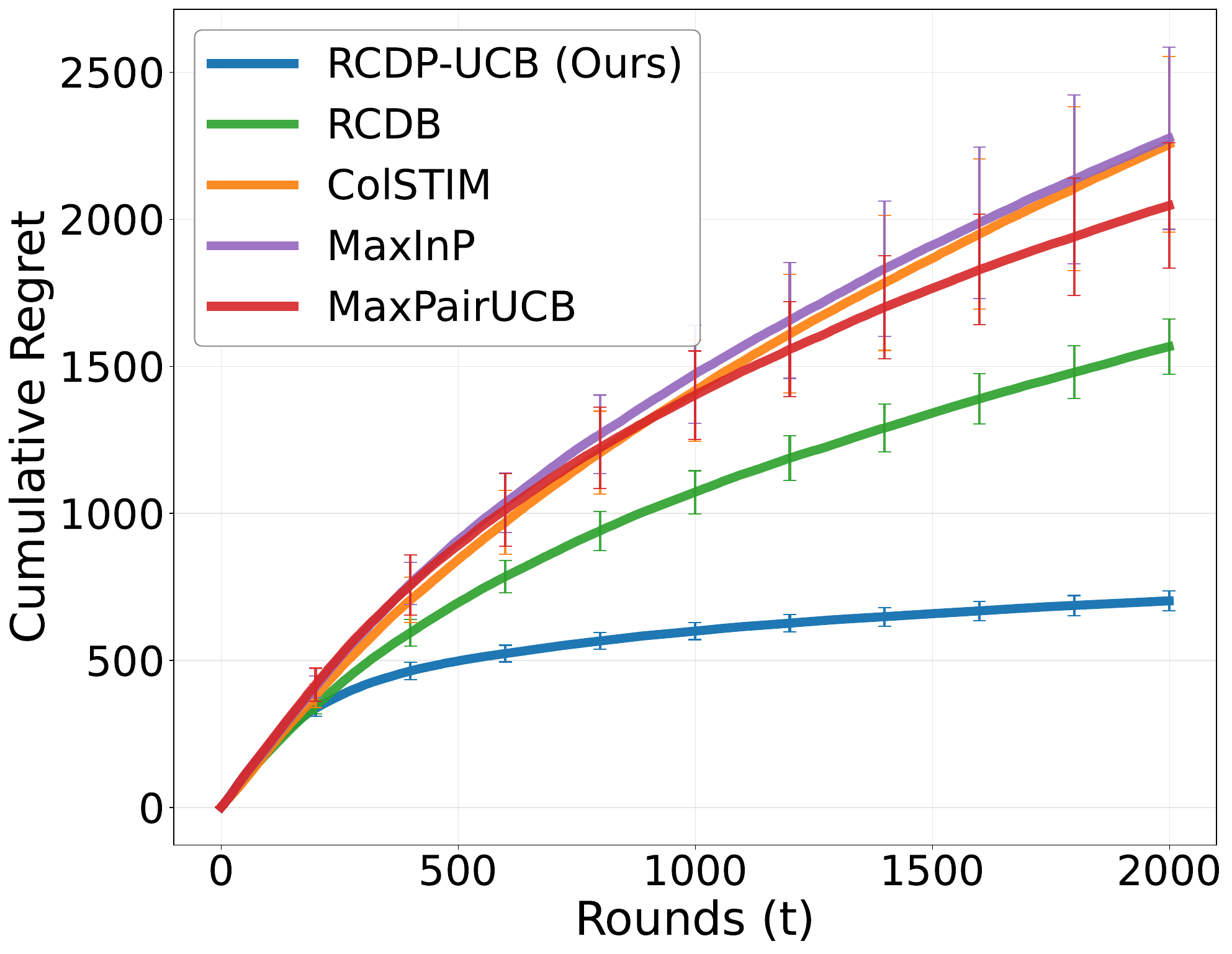}
        \caption{$d=10, \mathcal{C}=15$}
    \end{subfigure}
    \begin{subfigure}{0.31\textwidth}
        \includegraphics[width=\linewidth]{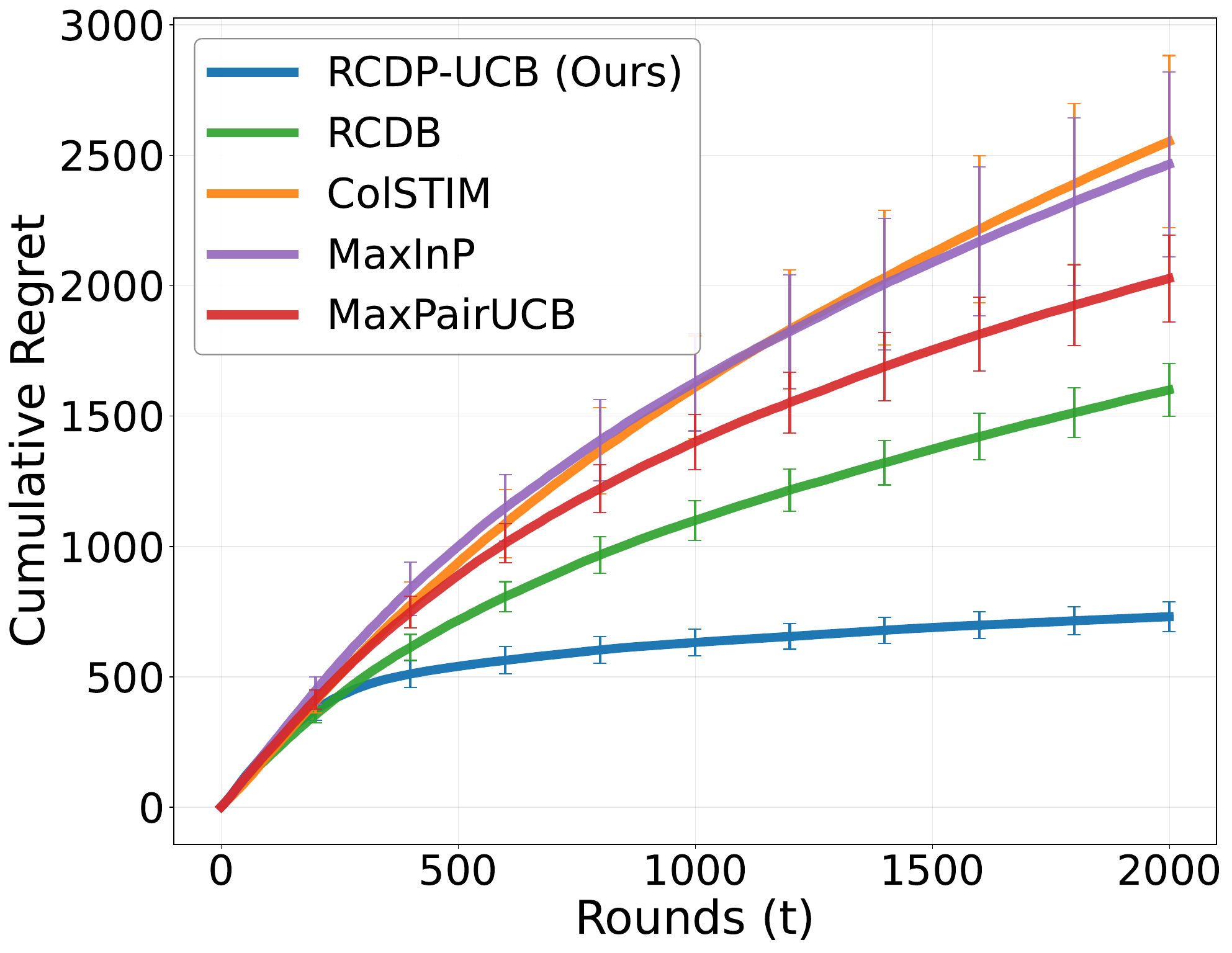}
        \caption{$d=10, \mathcal{C}=20$}
    \end{subfigure}
    
    \begin{subfigure}{0.31\textwidth}
        \includegraphics[width=\linewidth]{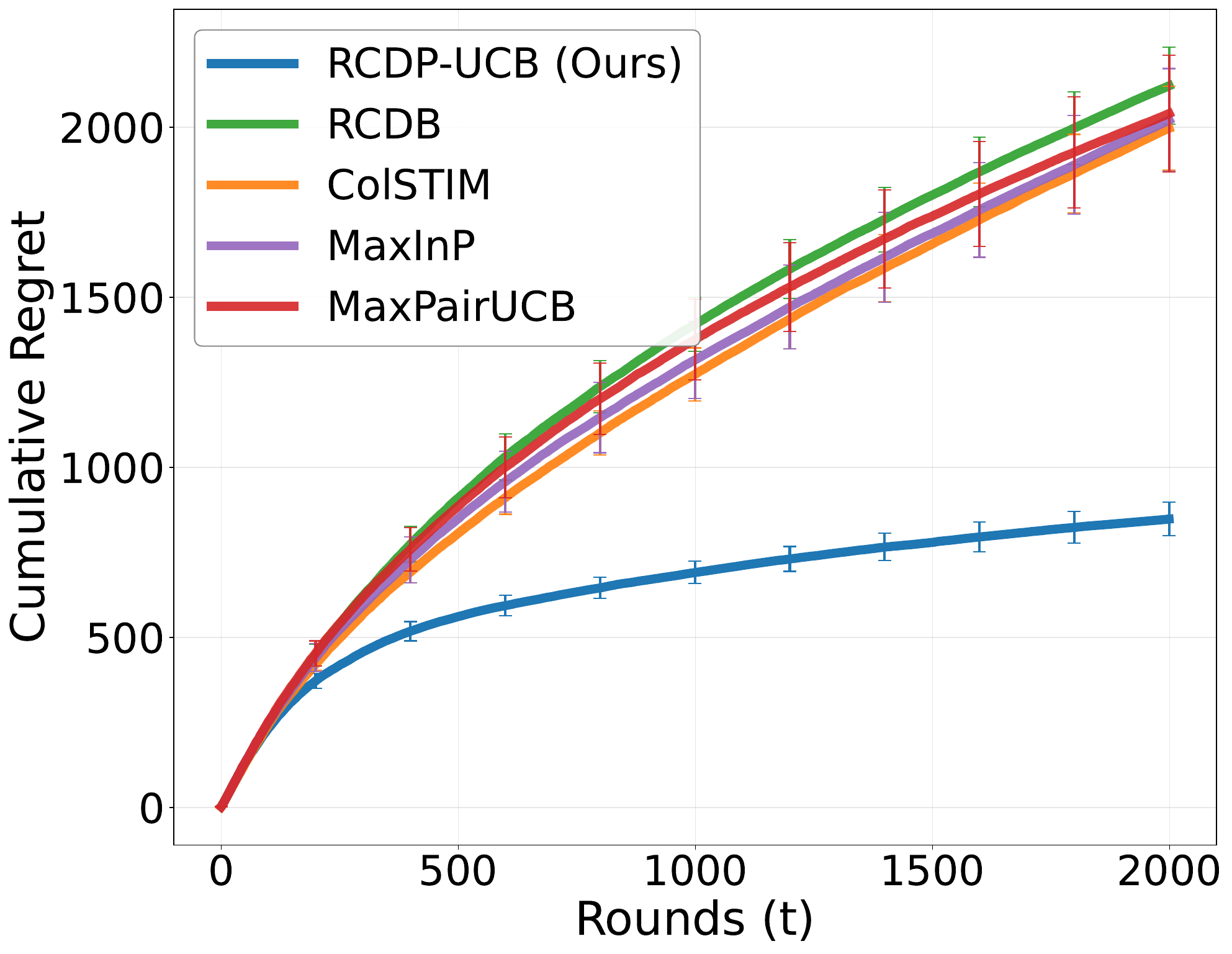}
        \caption{$d=20, \mathcal{C}=10$}
    \end{subfigure}
    \begin{subfigure}{0.31\textwidth}
        \includegraphics[width=\linewidth]{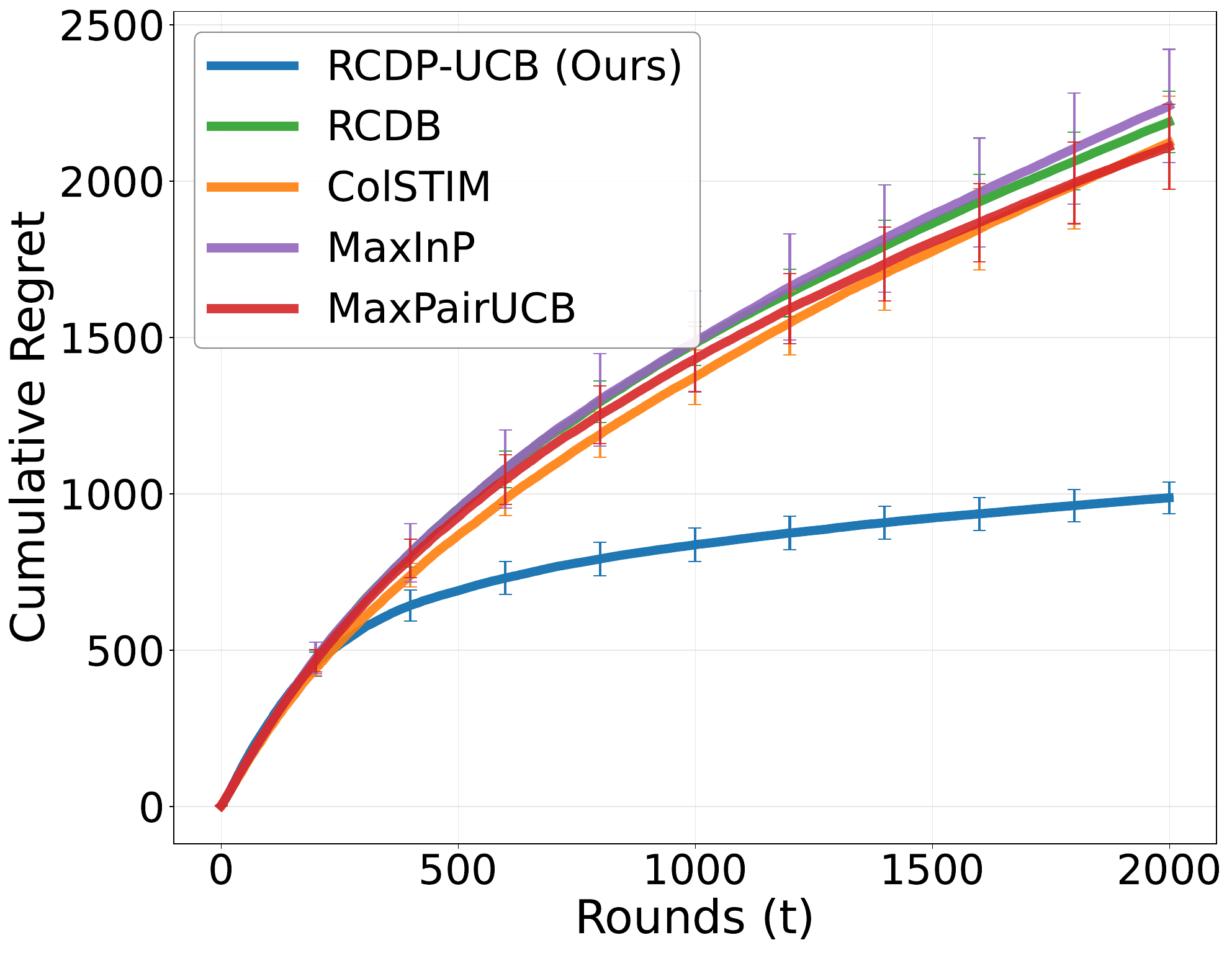}
        \caption{$d=20, \mathcal{C}=15$}
    \end{subfigure}
    \begin{subfigure}{0.31\textwidth}
        \includegraphics[width=\linewidth]{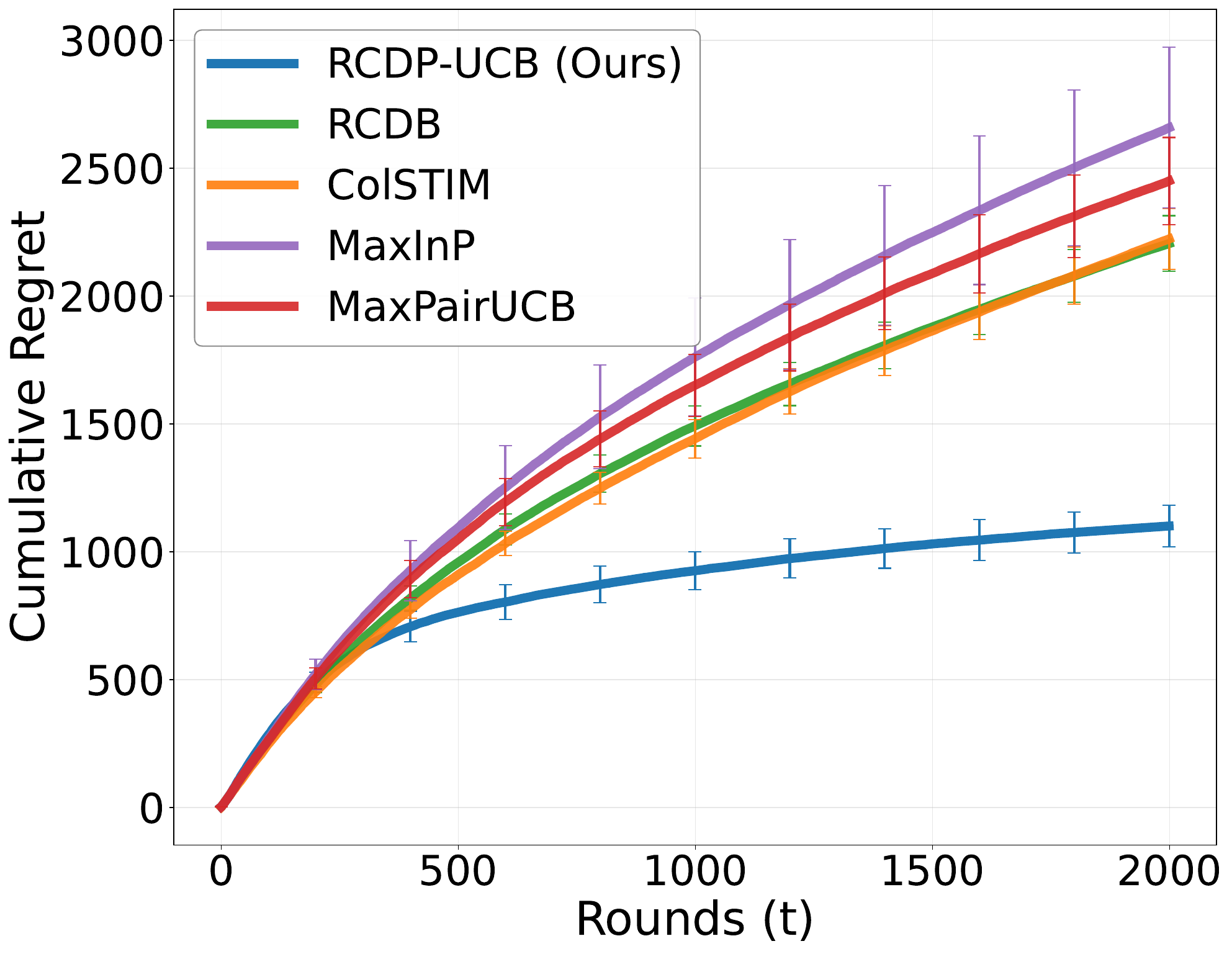}
        \caption{$d=20, \mathcal{C}=20$}
    \end{subfigure}
    \caption{\textbf{Polynomial Mapping (Varying Corruption)}: Cumulative regret with varying corruption budget $\mathcal{C}$ for different context dimensions ($d=10, 20$). All experiments were conducted under fixed stochastic delay parameters ($\mu_\tau=100, \sigma=100$) and averaged over 10 runs.}
    \label{fig:appendix_corruption_poly}
\end{figure*}

\subsection{Varying Total Budget of Adversarial Delay and Mean of Sub-Gaussian Delay}
\label{sec:appendix_delay}
We analyze the performance of \term as the severity of feedback delays increases. Specifically, we vary the strategic delay budget $\Lambda$ from 3,600 to 10,000 and the mean of stochastic sub-Gaussian delays $\mu_\tau$ from 60 to 100.

\paragraph{Sensitivity without Baseline Post-Serving Learning.}
\Cref{fig:appendix_delay_no_ps} summarizes the cumulative regret for the linear task. As the delay budget or mean delay increases, the learner receives feedback less frequently, leading to slower parameter convergence. \term demonstrates superior robustness compared to robust baselines like RCDB, maintaining lower regret even under severe starvation attacks.

\begin{figure*}[ht]
    \centering
    \begin{subfigure}{0.31\textwidth}
        \includegraphics[width=\linewidth]{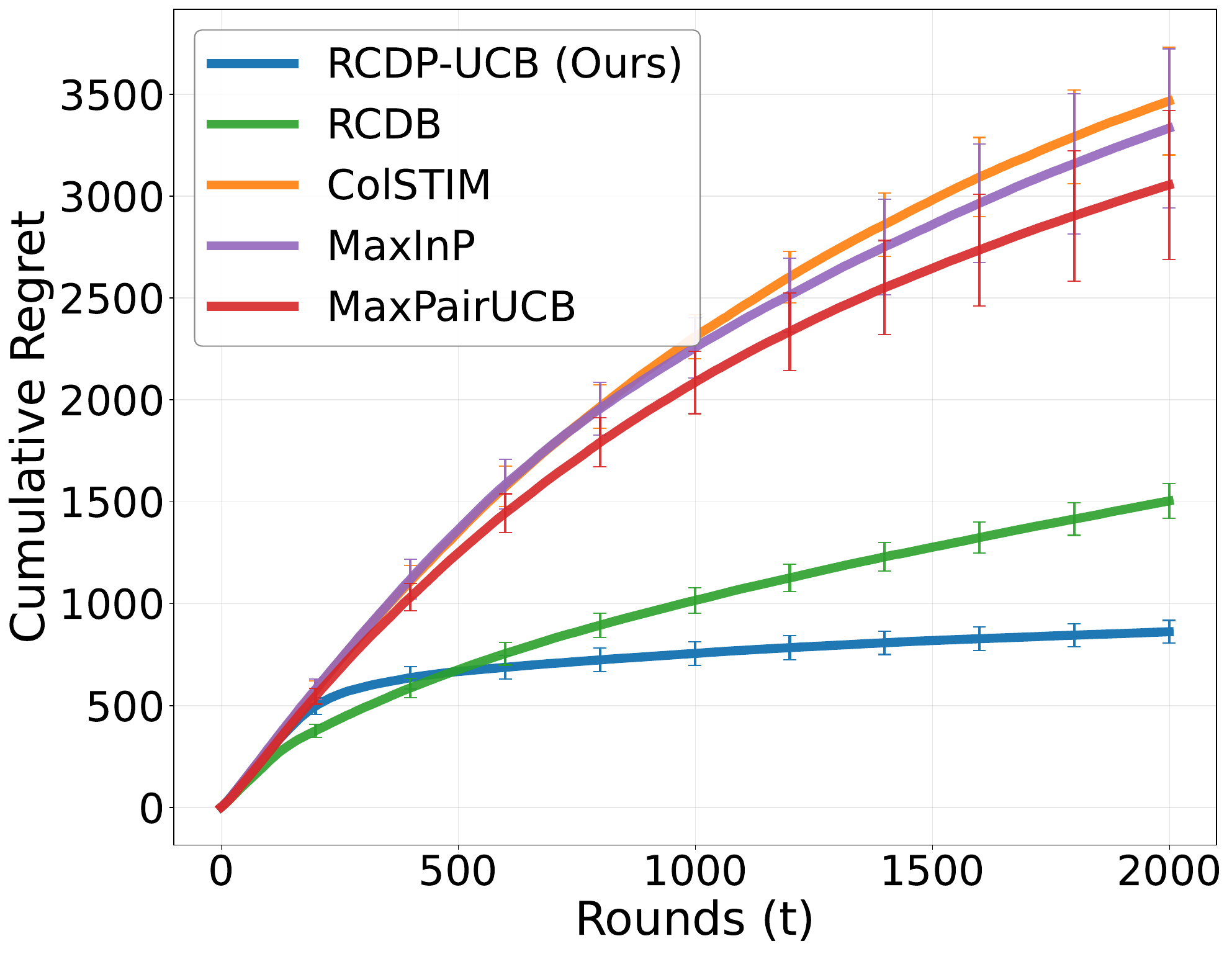}
        \caption{Strategic: $\Lambda=3,600$}
    \end{subfigure}
    \begin{subfigure}{0.31\textwidth}
        \includegraphics[width=\linewidth]{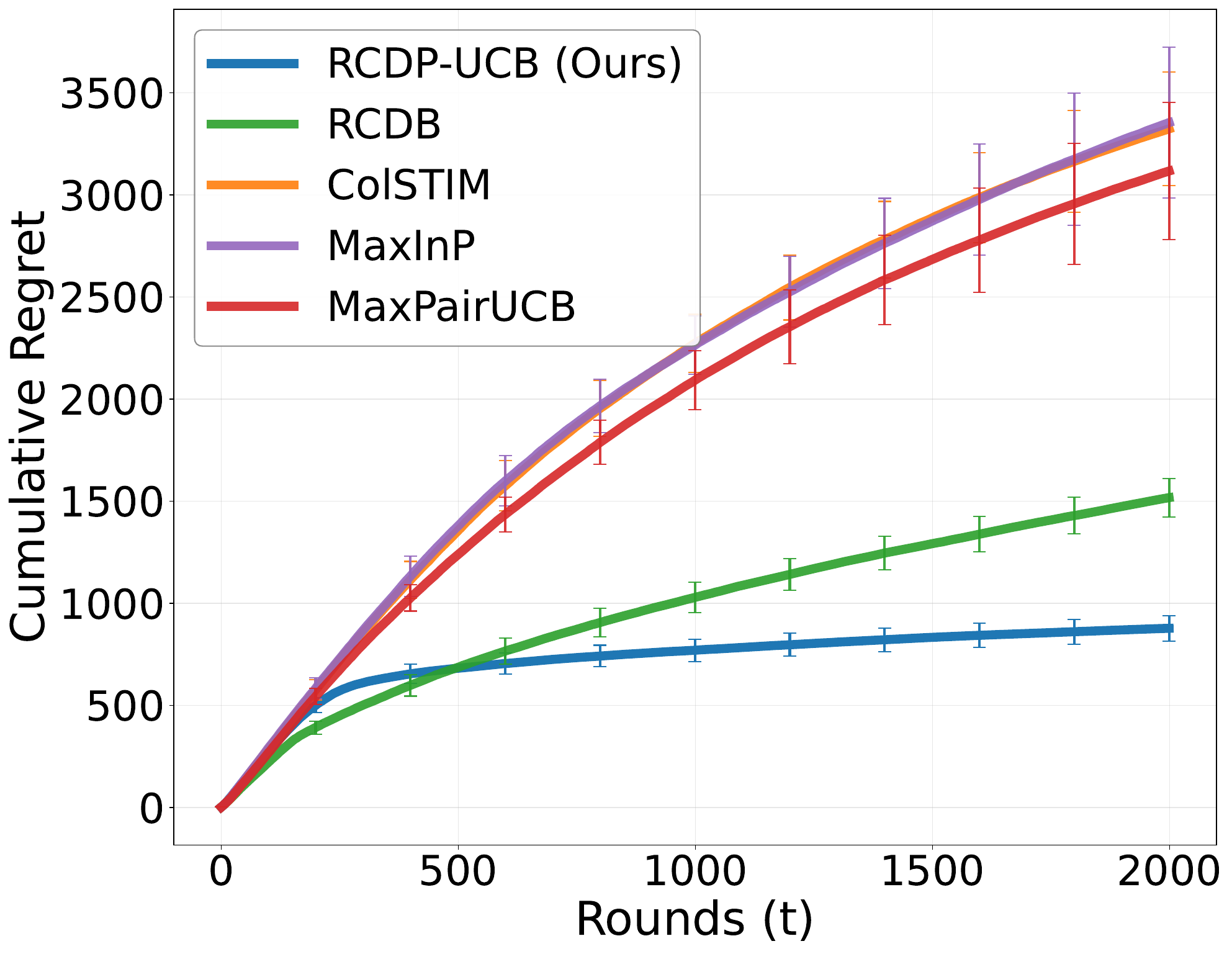}
        \caption{Strategic: $\Lambda=6,400$}
    \end{subfigure}
    \begin{subfigure}{0.31\textwidth}
        \includegraphics[width=\linewidth]{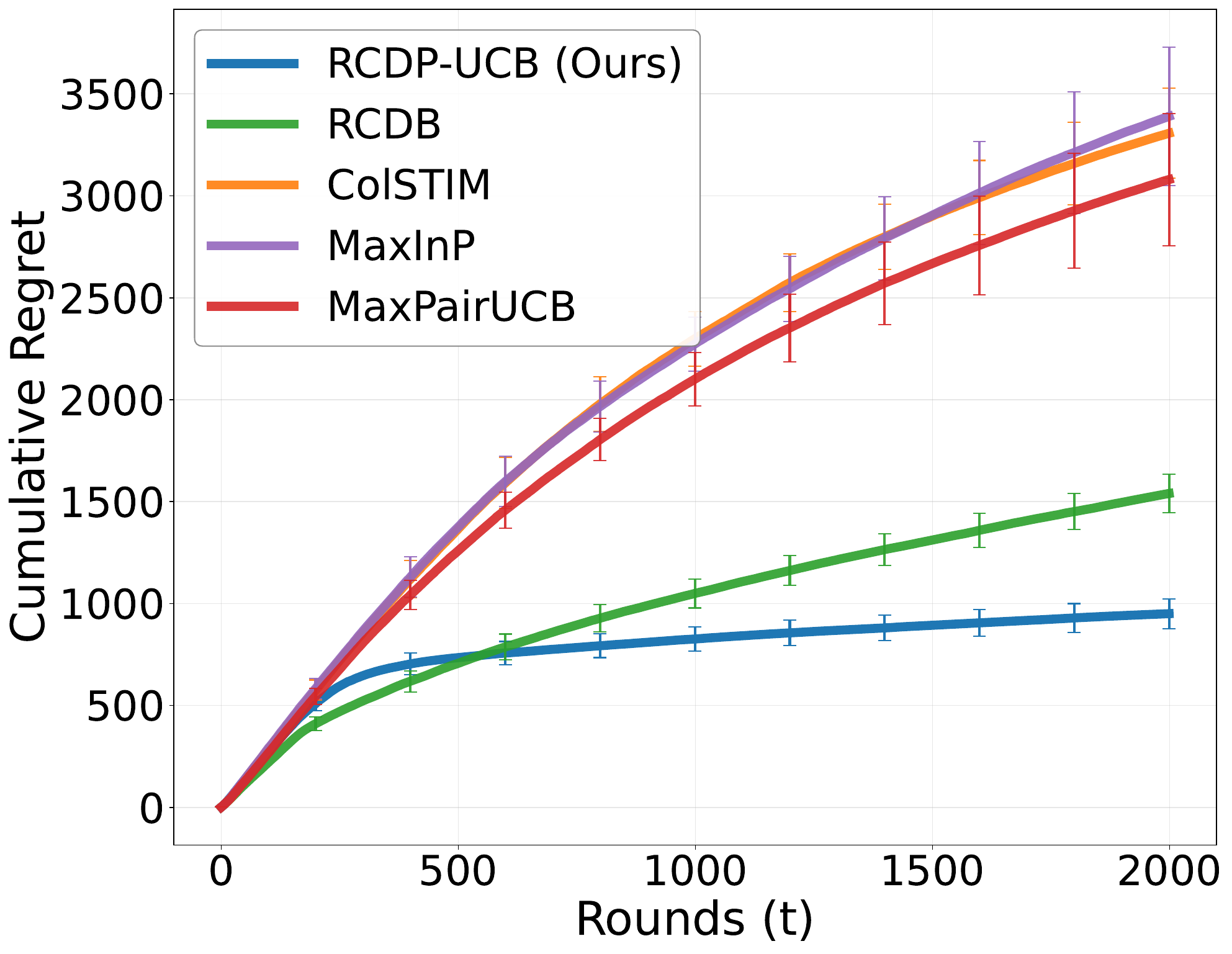}
        \caption{Strategic: $\Lambda=10,000$}
    \end{subfigure}
    
    \begin{subfigure}{0.31\textwidth}
        \includegraphics[width=\linewidth]{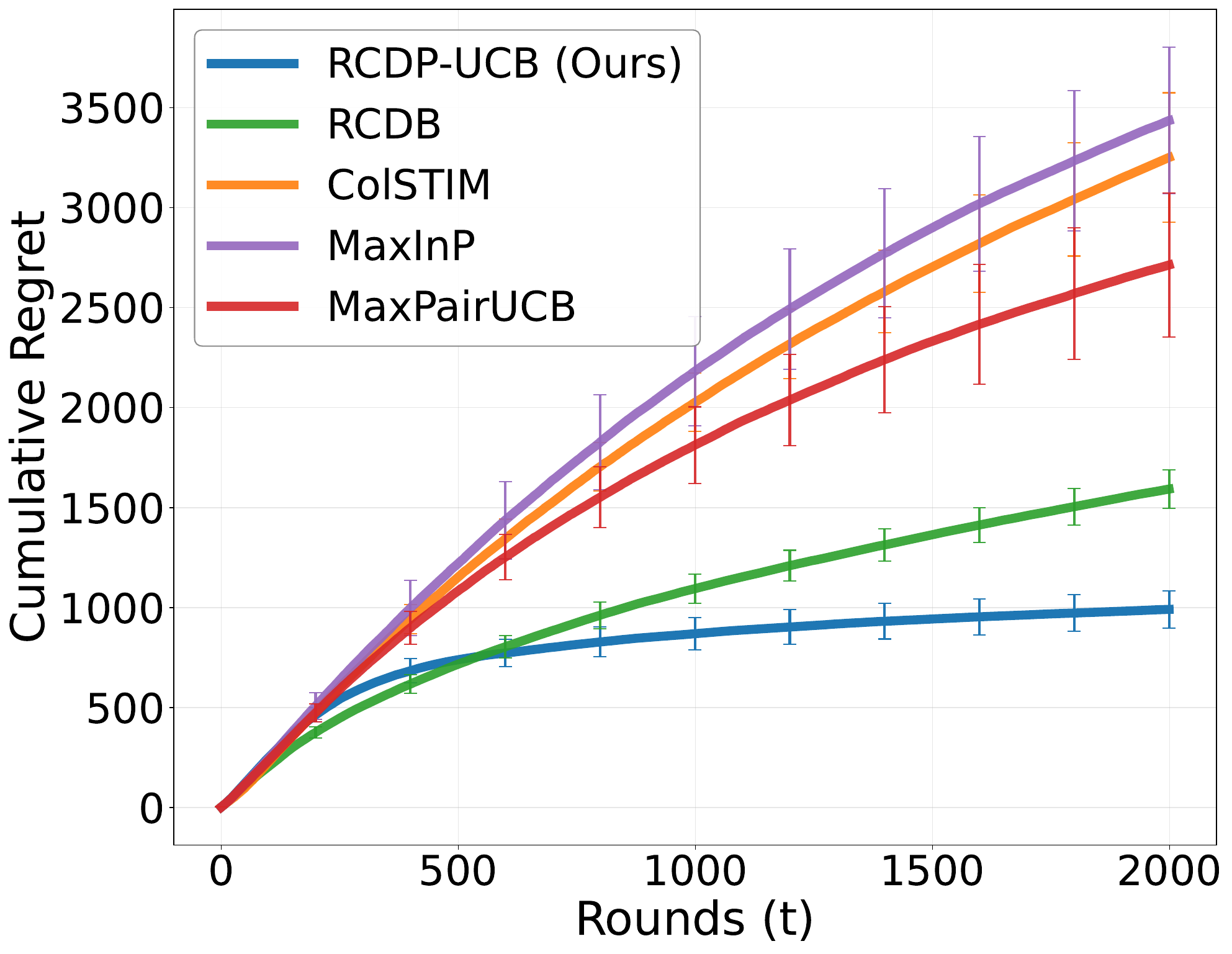}
        \caption{Stochastic: $\mu_\tau=60$}
    \end{subfigure}
    \begin{subfigure}{0.31\textwidth}
        \includegraphics[width=\linewidth]{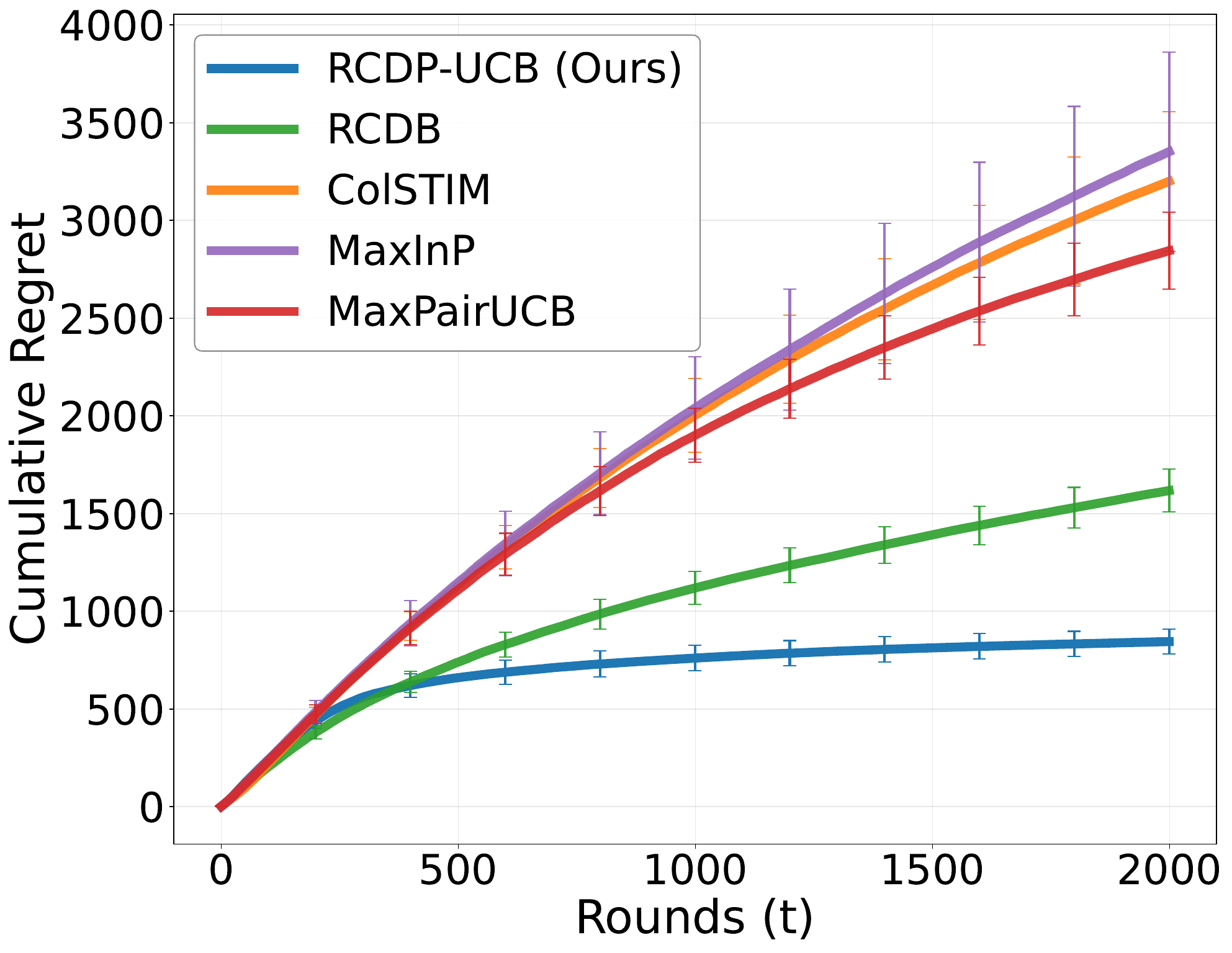}
        \caption{Stochastic: $\mu_\tau=80$}
    \end{subfigure}
    \begin{subfigure}{0.31\textwidth}
        \includegraphics[width=\linewidth]{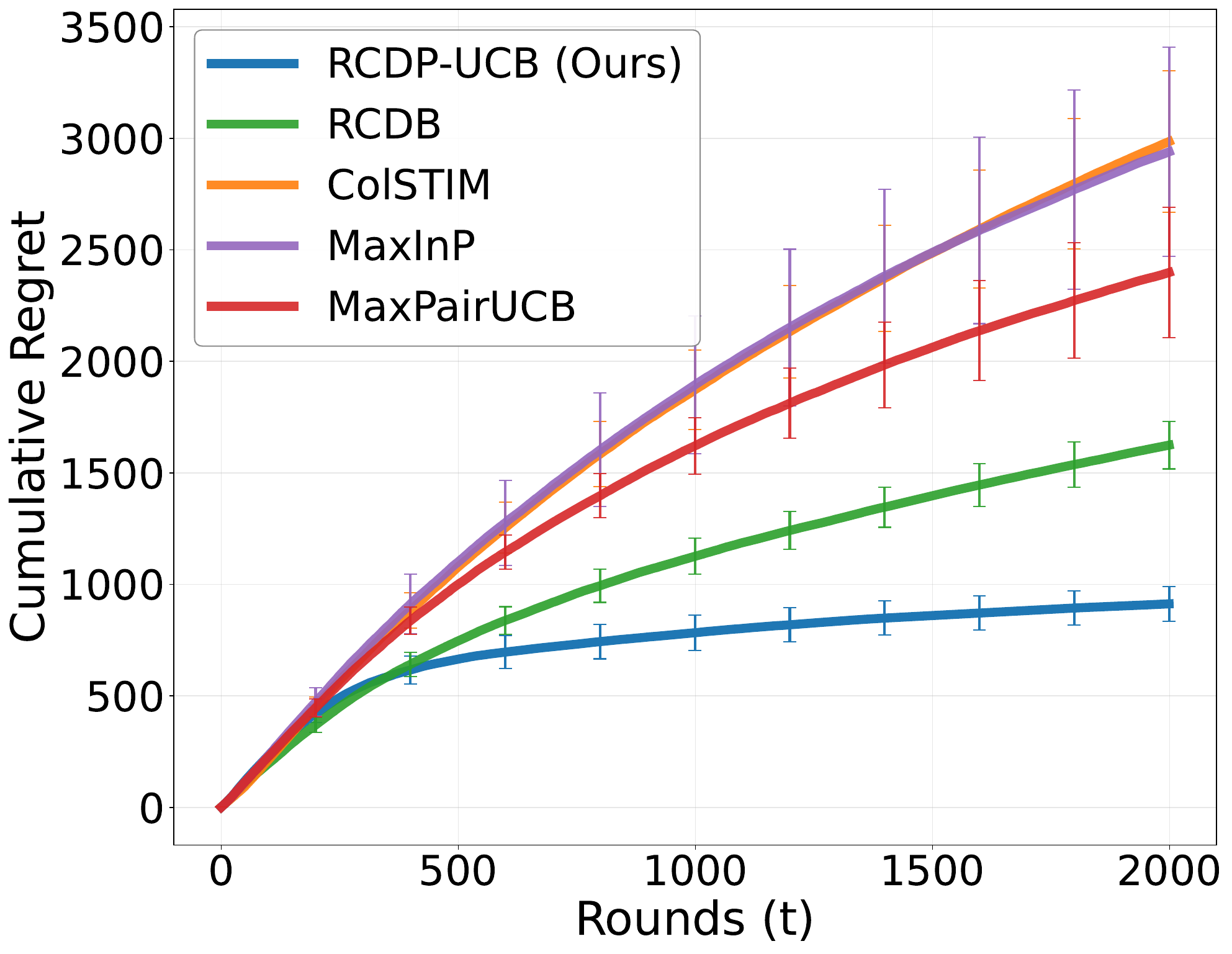}
        \caption{Stochastic: $\mu_\tau=100$}
    \end{subfigure}
    \caption{\textbf{Polynomial Mapping (Pre-serving Baselines)}: Cumulative regret with varying delay budget $\Lambda$ and mean delay $\mu_\tau$ under adversarial corruption ($\mathcal{C}=25$). All experiments were conducted across 10 runs with random seeds.}
    \label{fig:appendix_delay_no_ps}
\end{figure*}

\paragraph{Sensitivity with Baseline Post-Serving Learning.}
The robustness is further evaluated across Absolute, Polynomial, and Sinusoidal mappings where all baselines are augmented with post-serving learning (\Cref{fig:appendix_delay_ps_abs}, \Cref{fig:appendix_delay_ps_poly}, \Cref{fig:appendix_delay_ps_sin}). \term consistently outperforms the baselines across all delay levels and mappings. The adaptive weighting mechanism in \term ensures that even when delays are large, the parameter updates are properly normalized by the local information density (captured by $\omega_t$), allowing for robust learning where standard MLE-based approaches struggle.

\begin{figure*}[ht]
    \centering
    \begin{subfigure}{0.31\textwidth}
        \includegraphics[width=\linewidth]{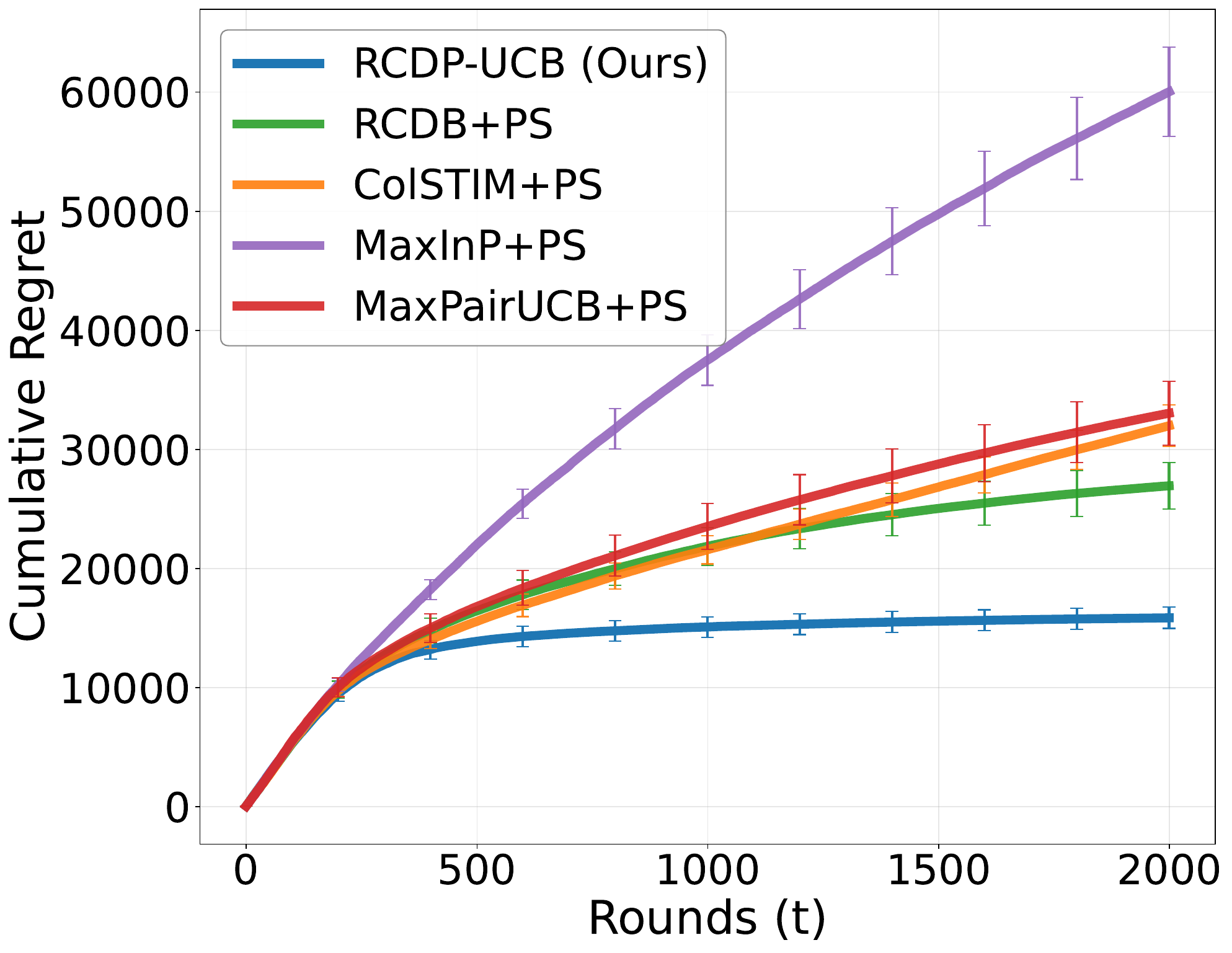}
        \caption{Strategic: $\Lambda=3,600$}
    \end{subfigure}
    \begin{subfigure}{0.31\textwidth}
        \includegraphics[width=\linewidth]{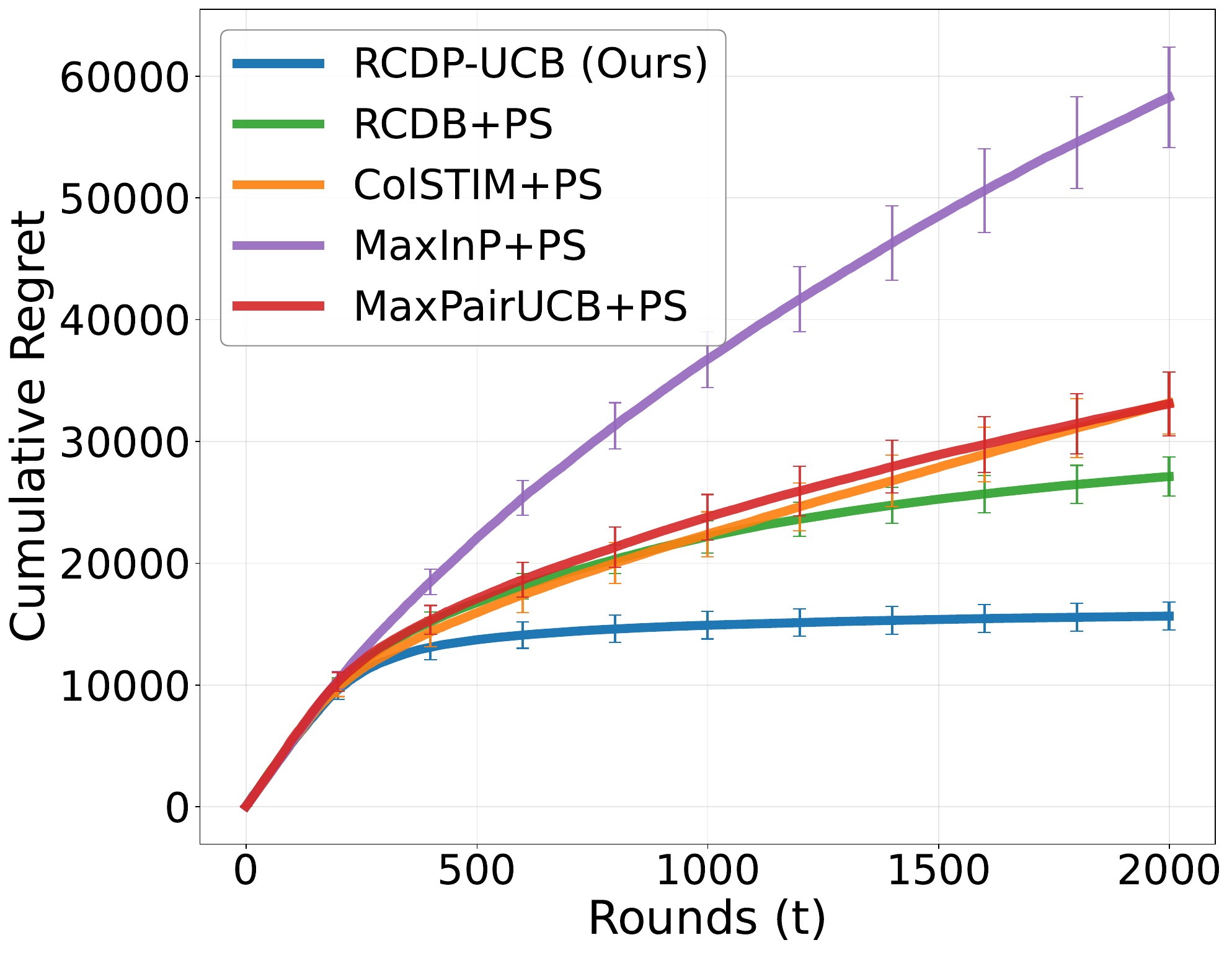}
        \caption{Strategic: $\Lambda=6,400$}
    \end{subfigure}
    \begin{subfigure}{0.31\textwidth}
        \includegraphics[width=\linewidth]{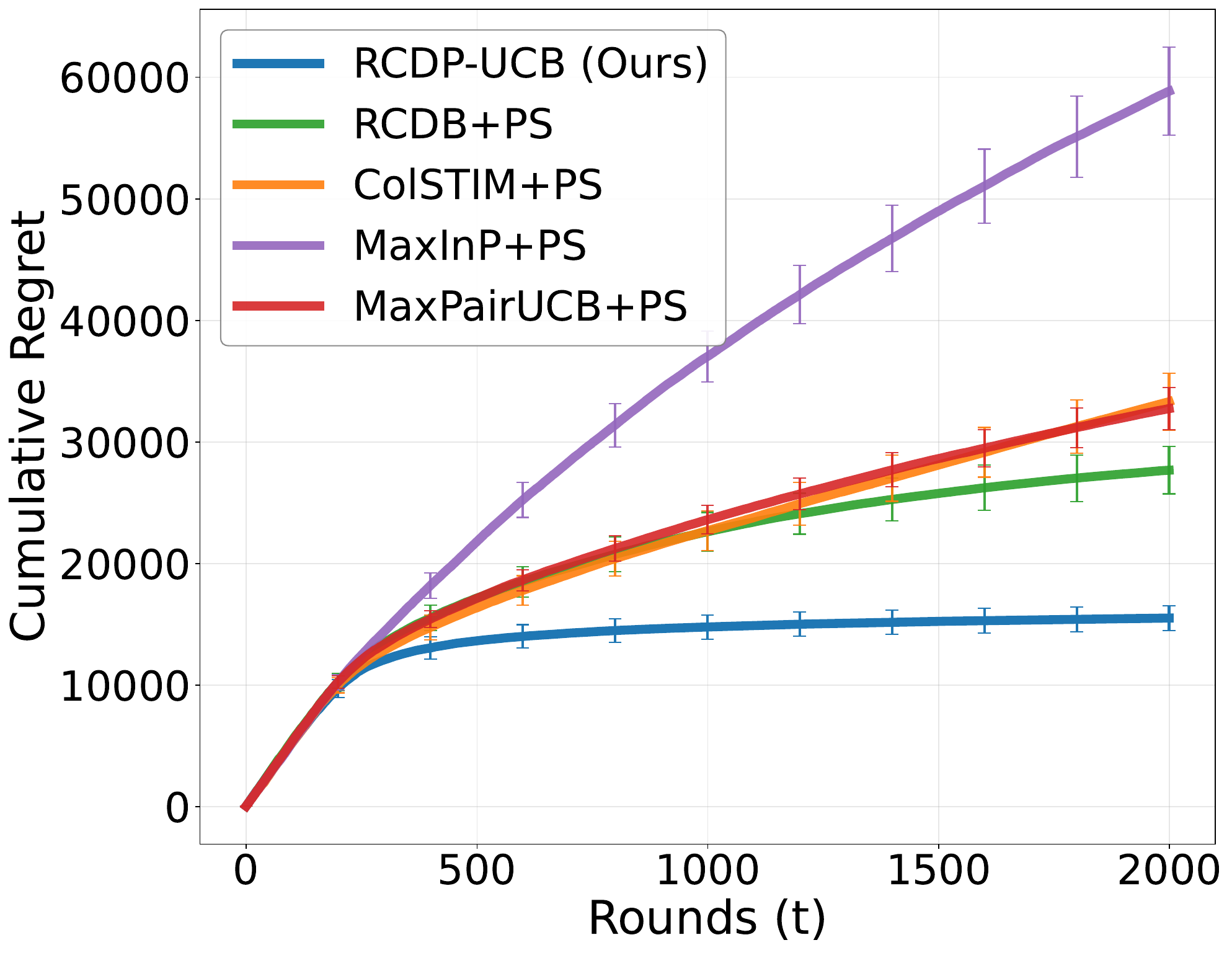}
        \caption{Strategic: $\Lambda=10,000$}
    \end{subfigure}
    
    \begin{subfigure}{0.31\textwidth}
        \includegraphics[width=\linewidth]{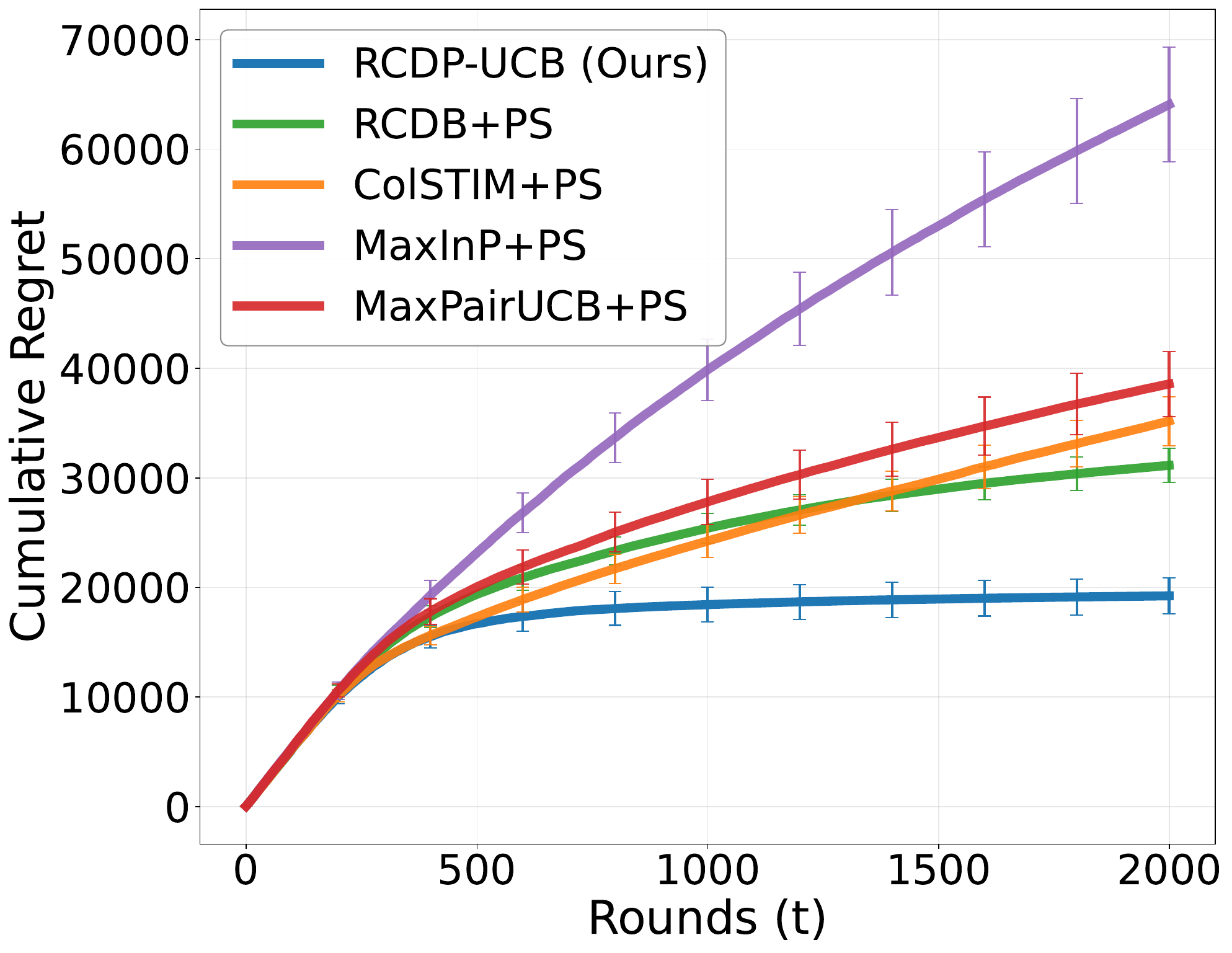}
        \caption{Stochastic: $\mu_\tau=60$}
    \end{subfigure}
    \begin{subfigure}{0.31\textwidth}
        \includegraphics[width=\linewidth]{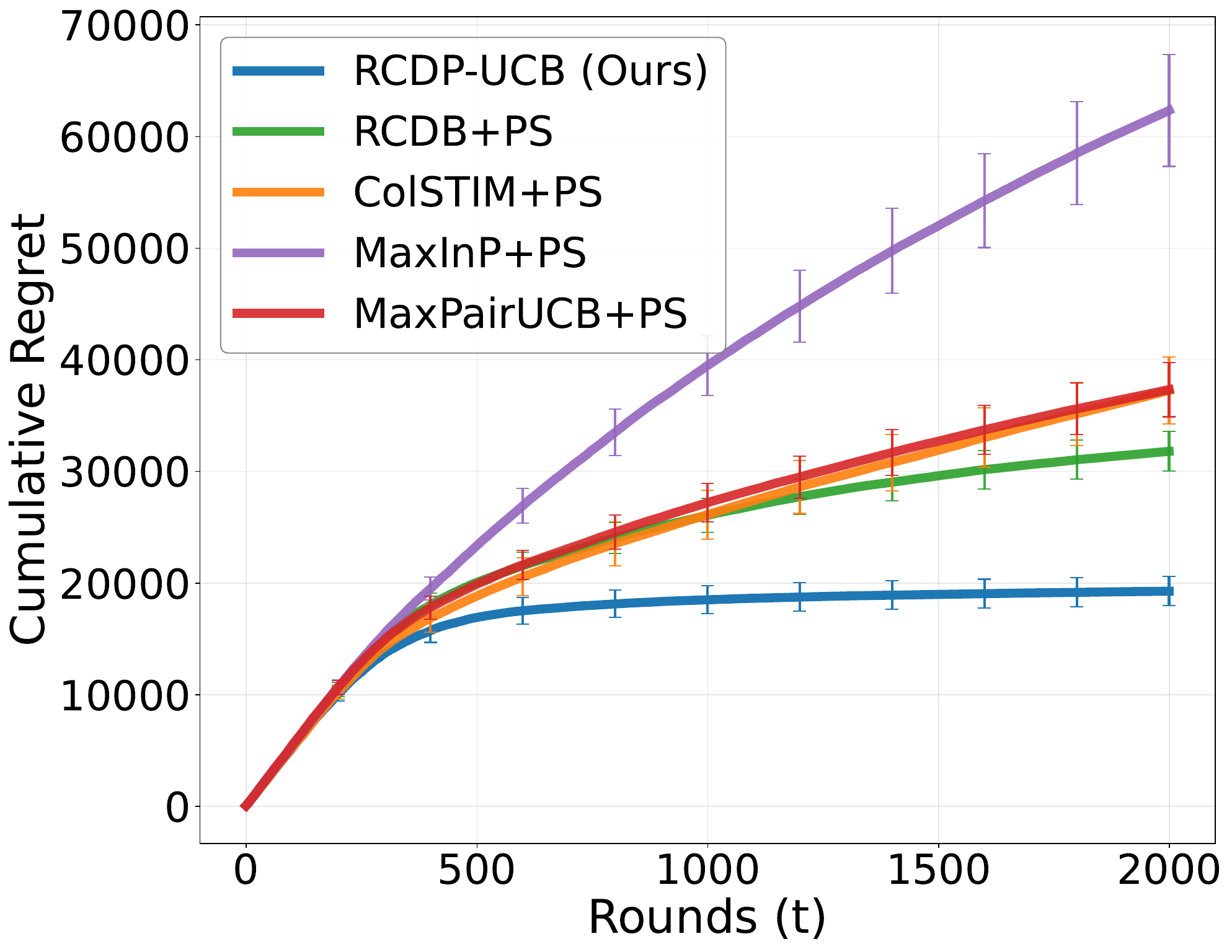}
        \caption{Stochastic: $\mu_\tau=80$}
    \end{subfigure}
    \begin{subfigure}{0.31\textwidth}
        \includegraphics[width=\linewidth]{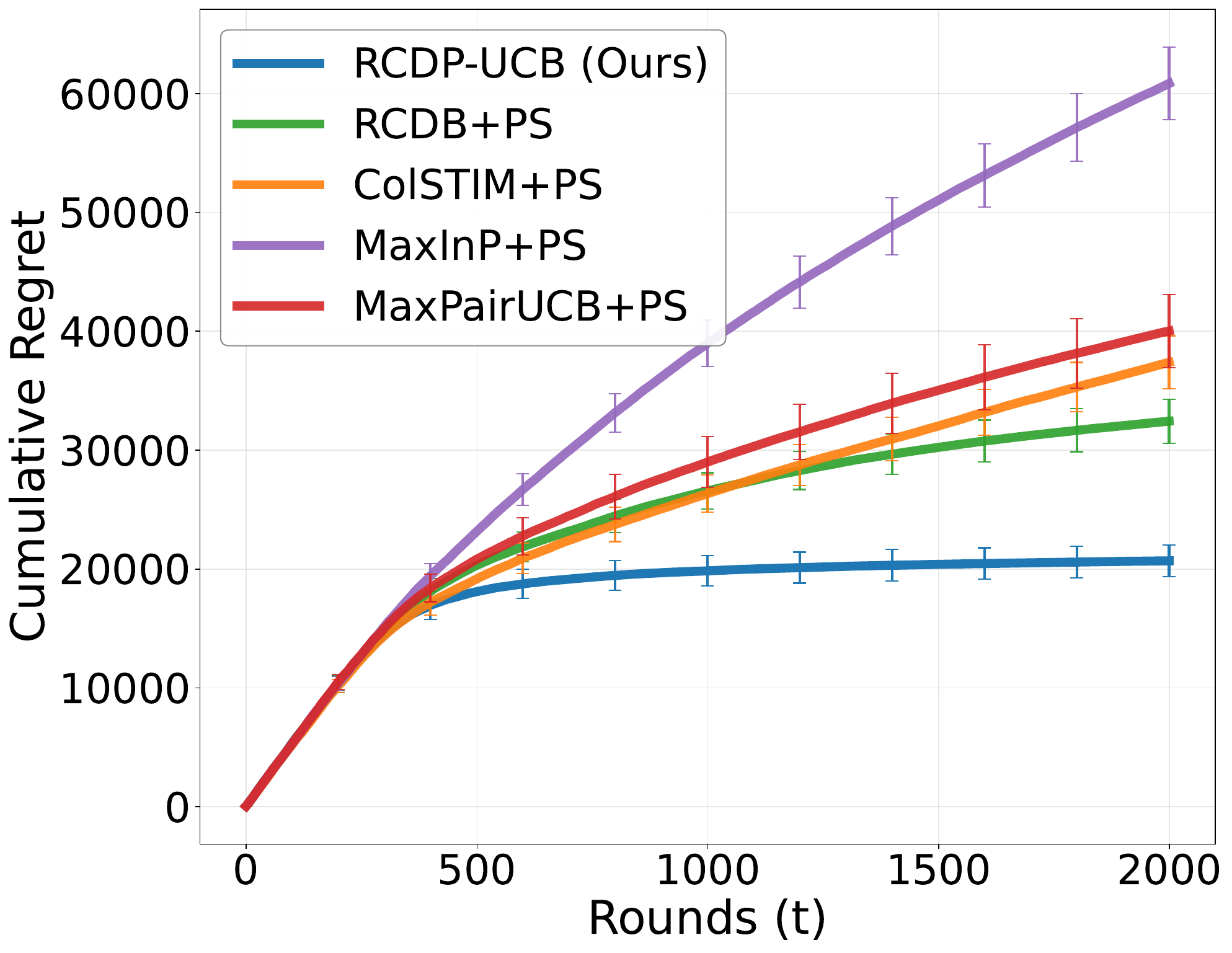}
        \caption{Stochastic: $\mu_\tau=100$}
    \end{subfigure}
    \caption{\textbf{Absolute Mapping (Post-serving Learning)}: Cumulative regret with varying delay budget $\Lambda$ and mean delay $\mu_\tau$ under adversarial corruption ($\mathcal{C}=25$, except $\Lambda=10,000$ using $\mathcal{C}=20$). All experiments were conducted across 10 runs with random seeds.}
    \label{fig:appendix_delay_ps_abs}
\end{figure*}

\begin{figure*}[ht]
    \centering
    \begin{subfigure}{0.31\textwidth}
        \includegraphics[width=\linewidth]{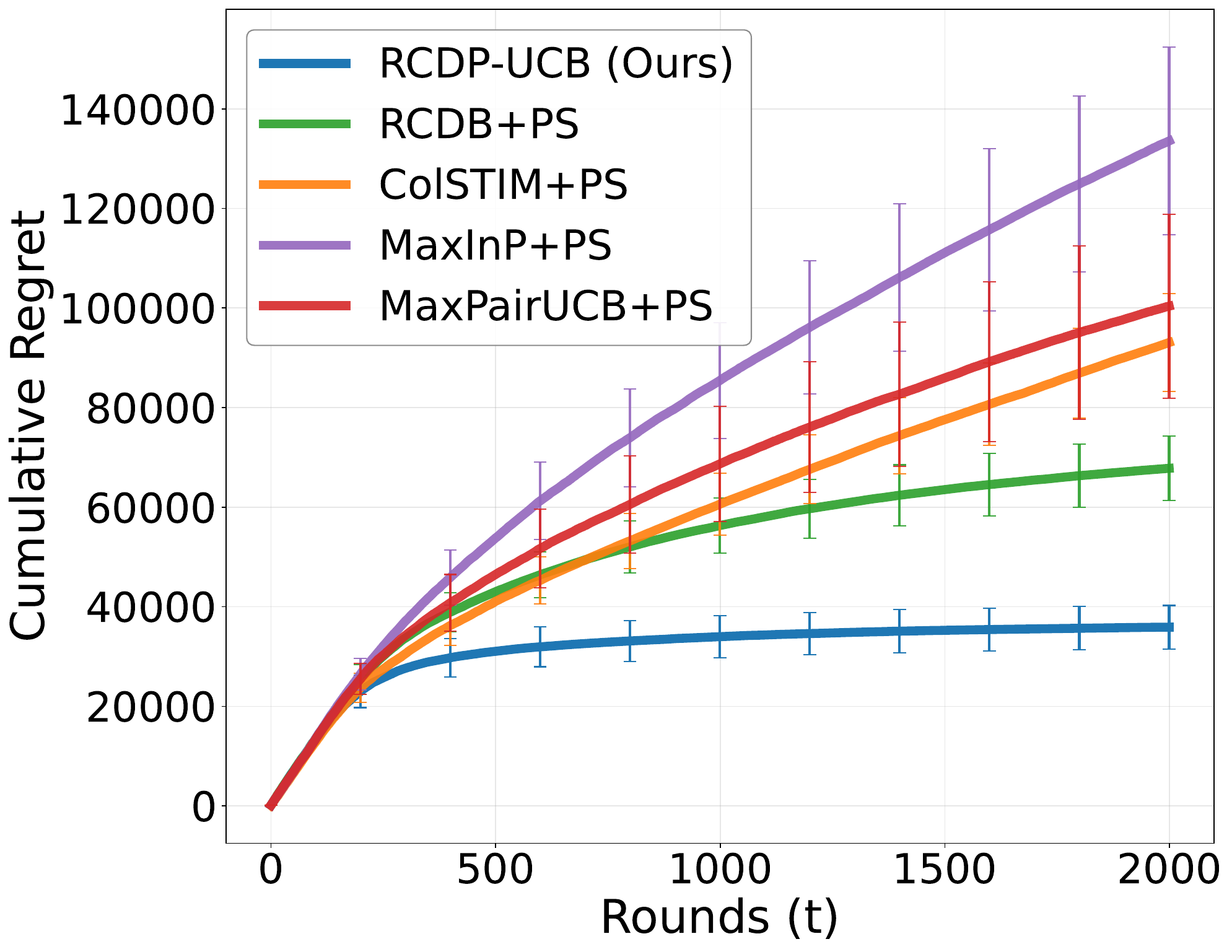}
        \caption{Strategic: $\Lambda=3,600$}
    \end{subfigure}
    \begin{subfigure}{0.31\textwidth}
        \includegraphics[width=\linewidth]{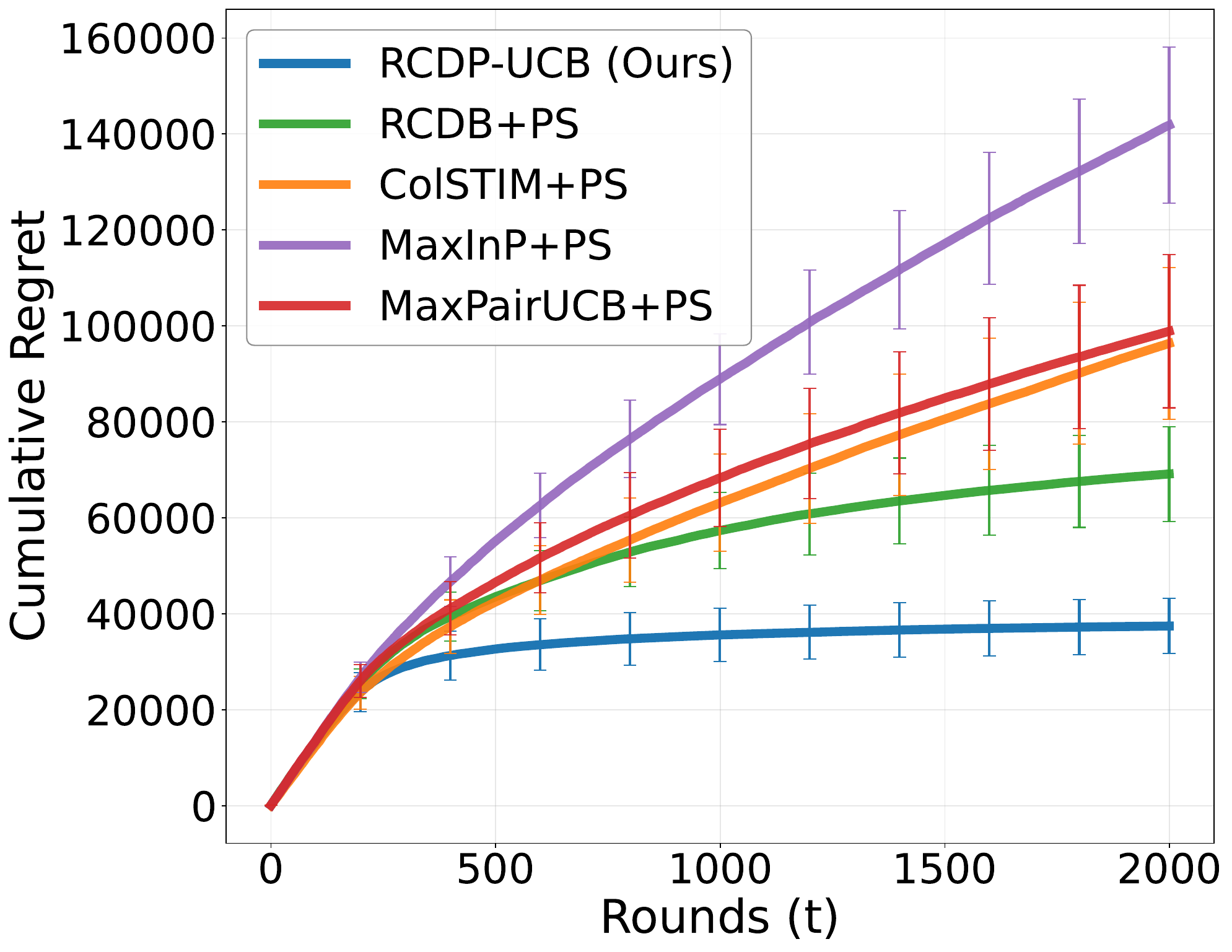}
        \caption{Strategic: $\Lambda=6,400$}
    \end{subfigure}
    \begin{subfigure}{0.31\textwidth}
        \includegraphics[width=\linewidth]{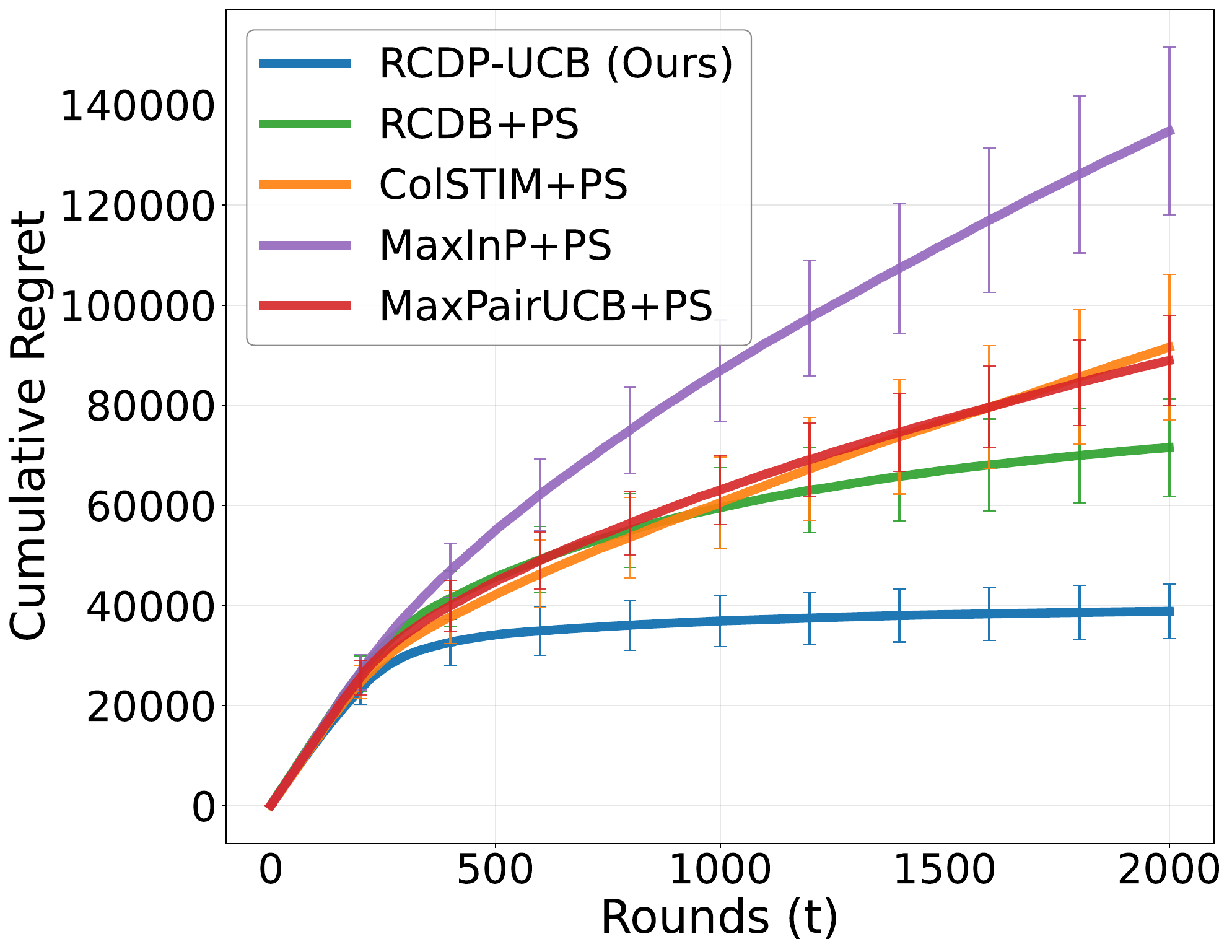}
        \caption{Strategic: $\Lambda=10,000$}
    \end{subfigure}
    
    \begin{subfigure}{0.31\textwidth}
        \includegraphics[width=\linewidth]{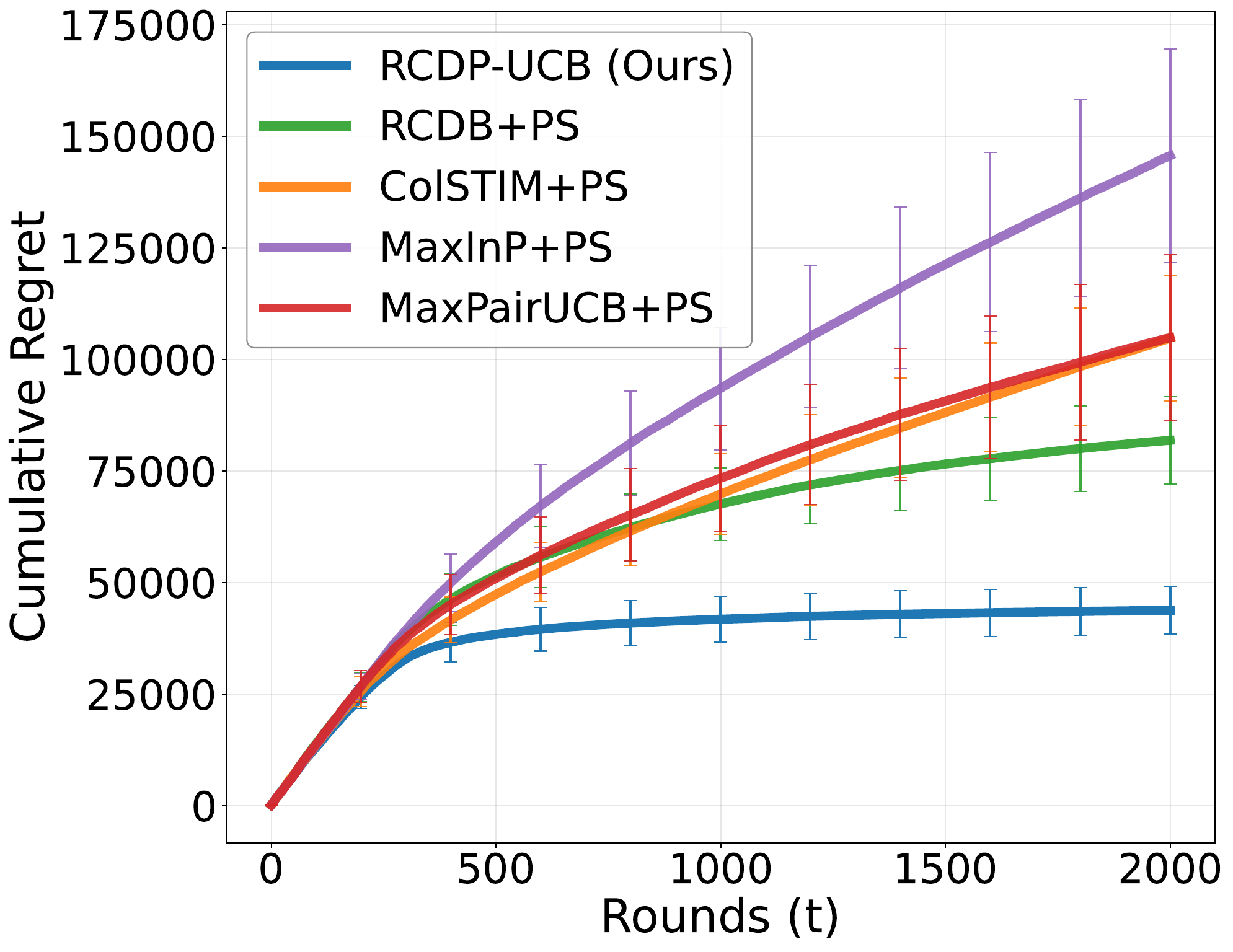}
        \caption{Stochastic: $\mu_\tau=60$}
    \end{subfigure}
    \begin{subfigure}{0.31\textwidth}
        \includegraphics[width=\linewidth]{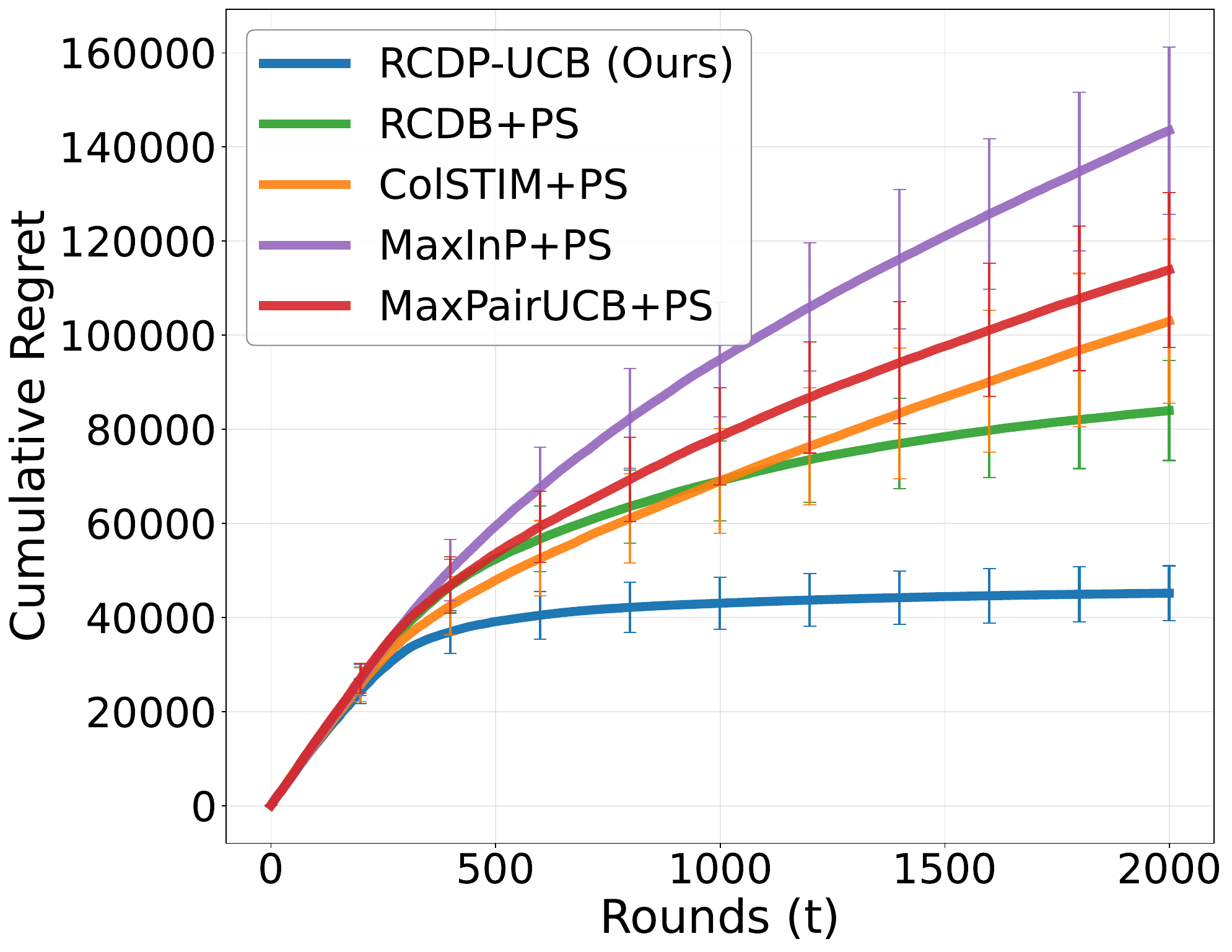}
        \caption{Stochastic: $\mu_\tau=80$}
    \end{subfigure}
    \begin{subfigure}{0.31\textwidth}
        \includegraphics[width=\linewidth]{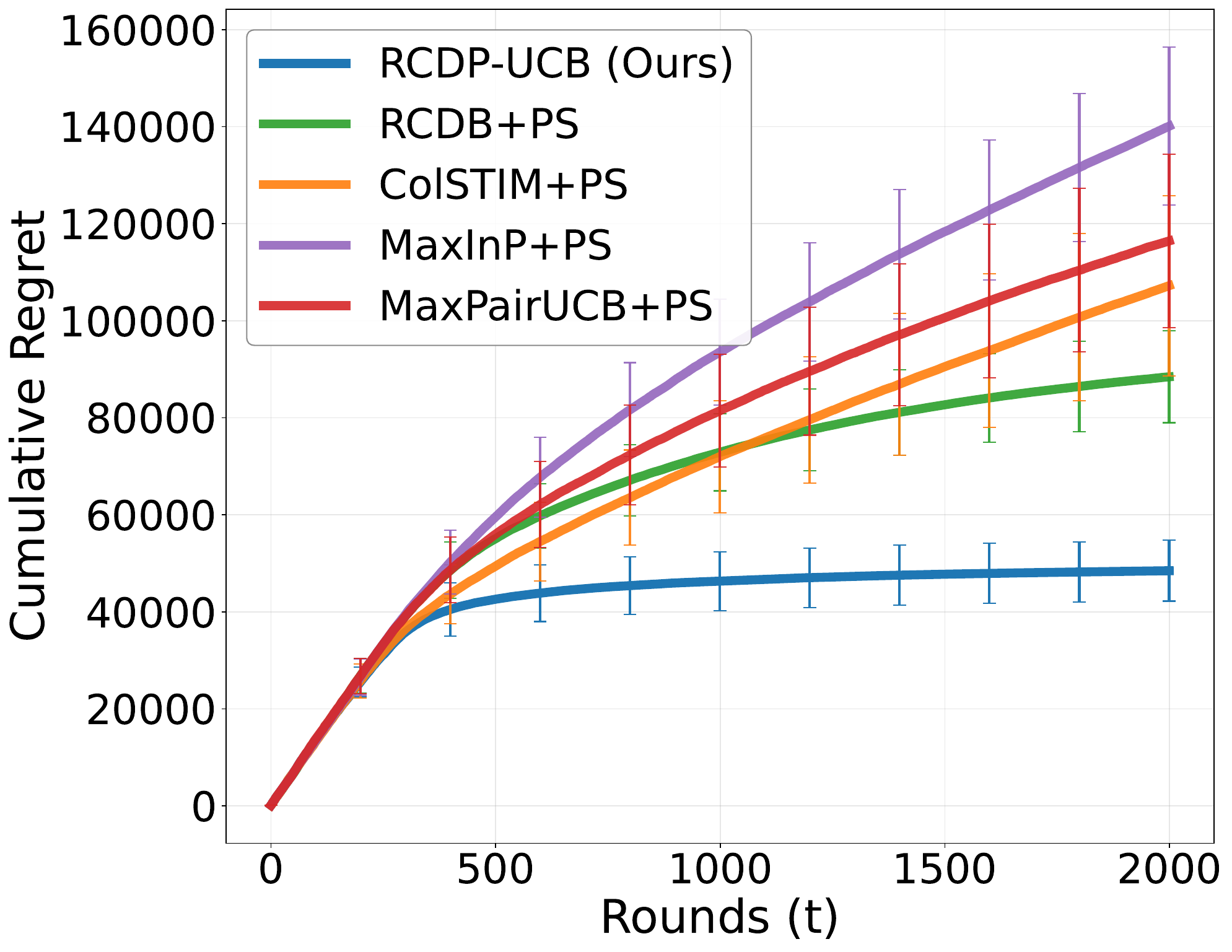}
        \caption{Stochastic: $\mu_\tau=100$}
    \end{subfigure}
    \caption{\textbf{Polynomial Mapping (Post-serving Learning)}: Cumulative regret with varying delay budget $\Lambda$ and mean delay $\mu_\tau$ under adversarial corruption ($\mathcal{C}=25$, except $\Lambda=10,000$ using $\mathcal{C}=20$). All experiments were conducted across 10 runs with random seeds.}
    \label{fig:appendix_delay_ps_poly}
\end{figure*}

\begin{figure*}[ht]
    \centering
    \begin{subfigure}{0.31\textwidth}
        \includegraphics[width=\linewidth]{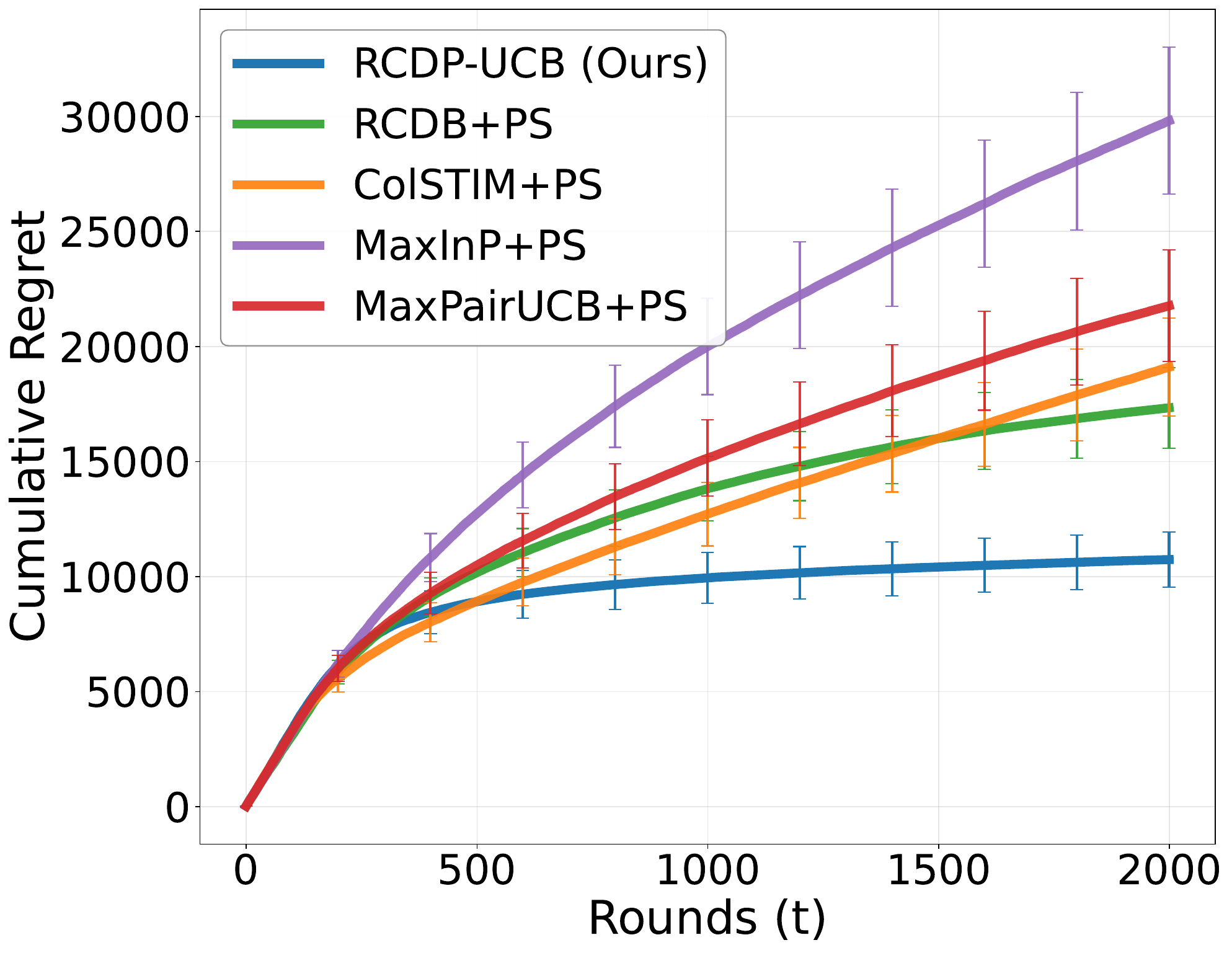}
        \caption{Strategic: $\Lambda=3,600$}
    \end{subfigure}
    \begin{subfigure}{0.31\textwidth}
        \includegraphics[width=\linewidth]{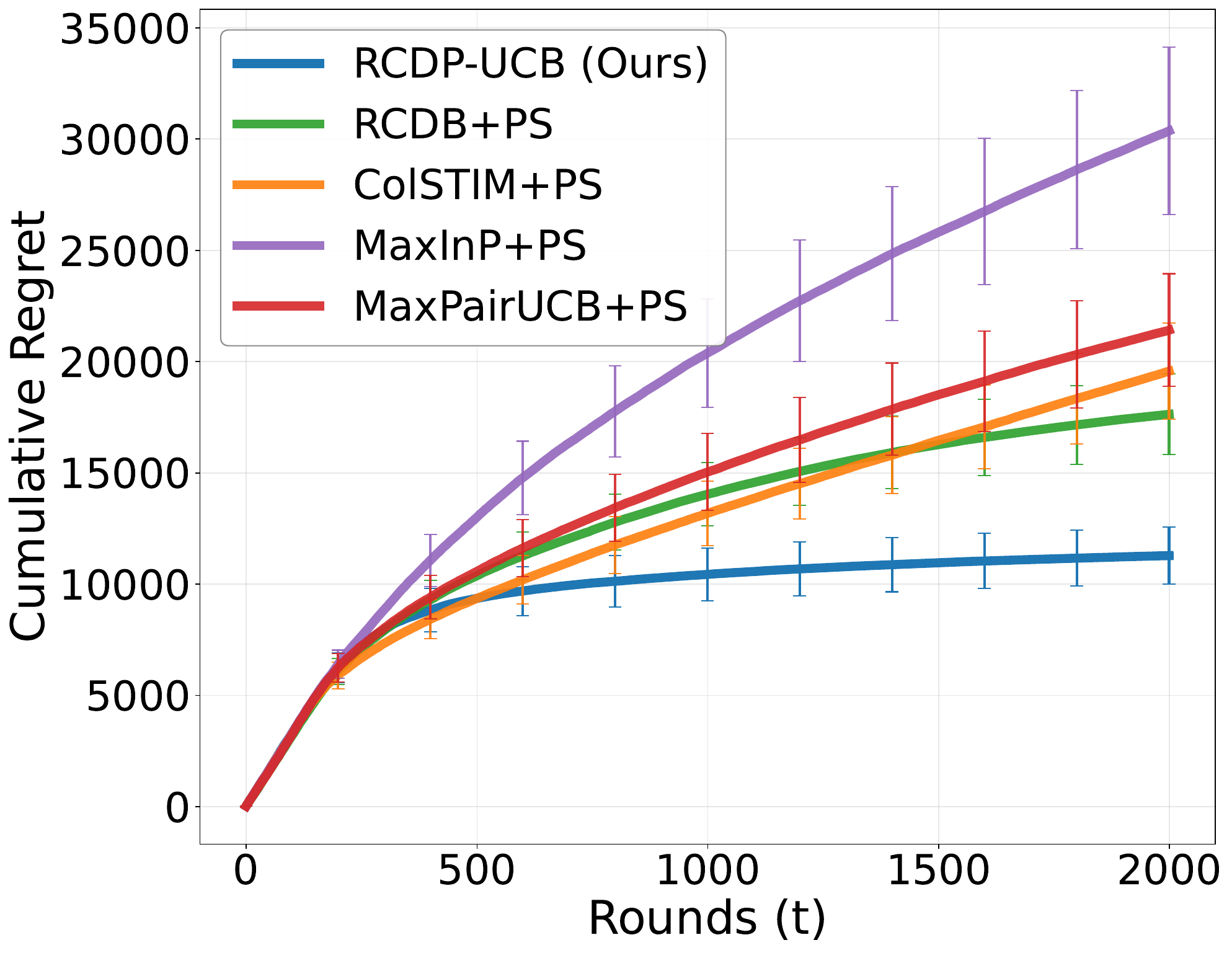}
        \caption{Strategic: $\Lambda=6,400$}
    \end{subfigure}
    \begin{subfigure}{0.31\textwidth}
        \includegraphics[width=\linewidth]{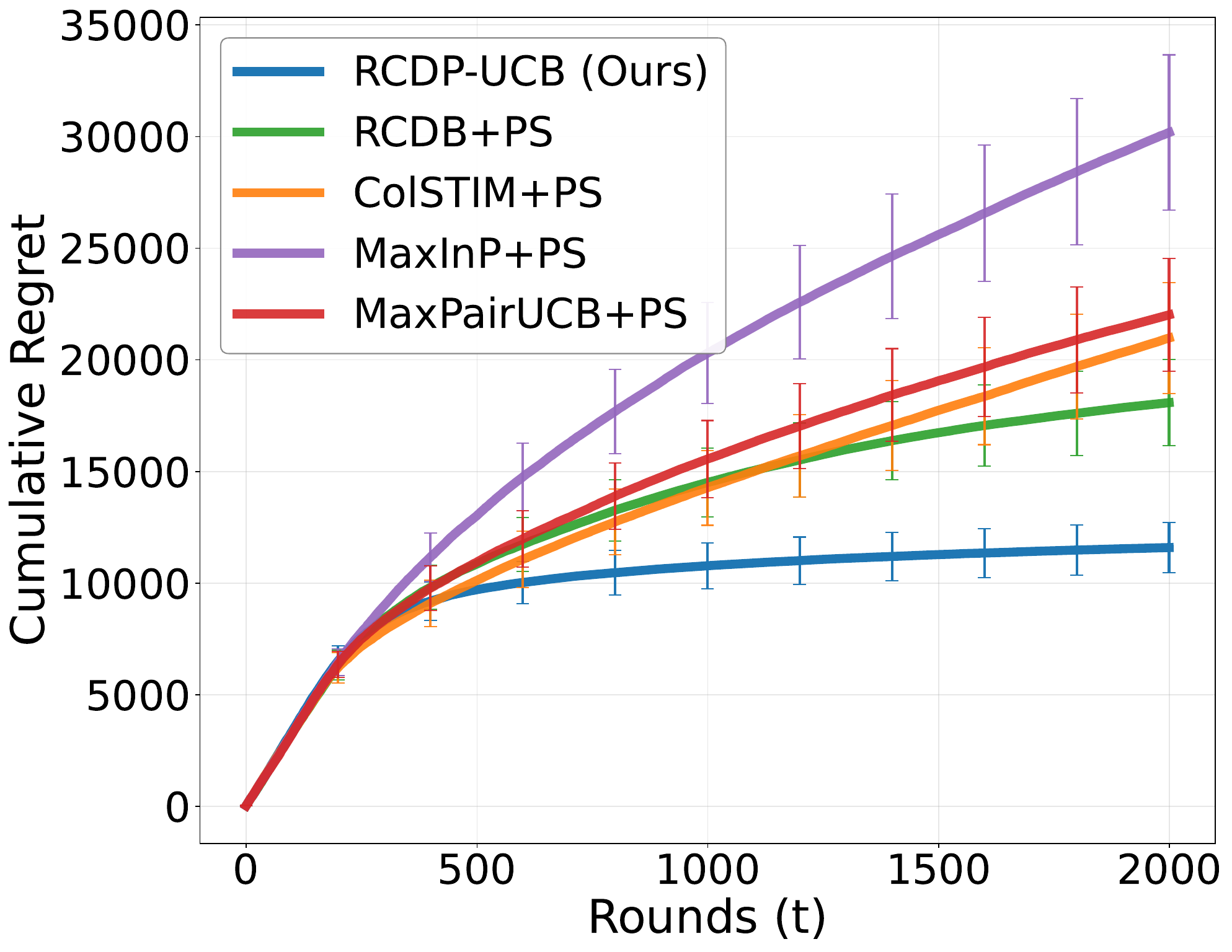}
        \caption{Strategic: $\Lambda=10,000$}
    \end{subfigure}
    
    \begin{subfigure}{0.31\textwidth}
        \includegraphics[width=\linewidth]{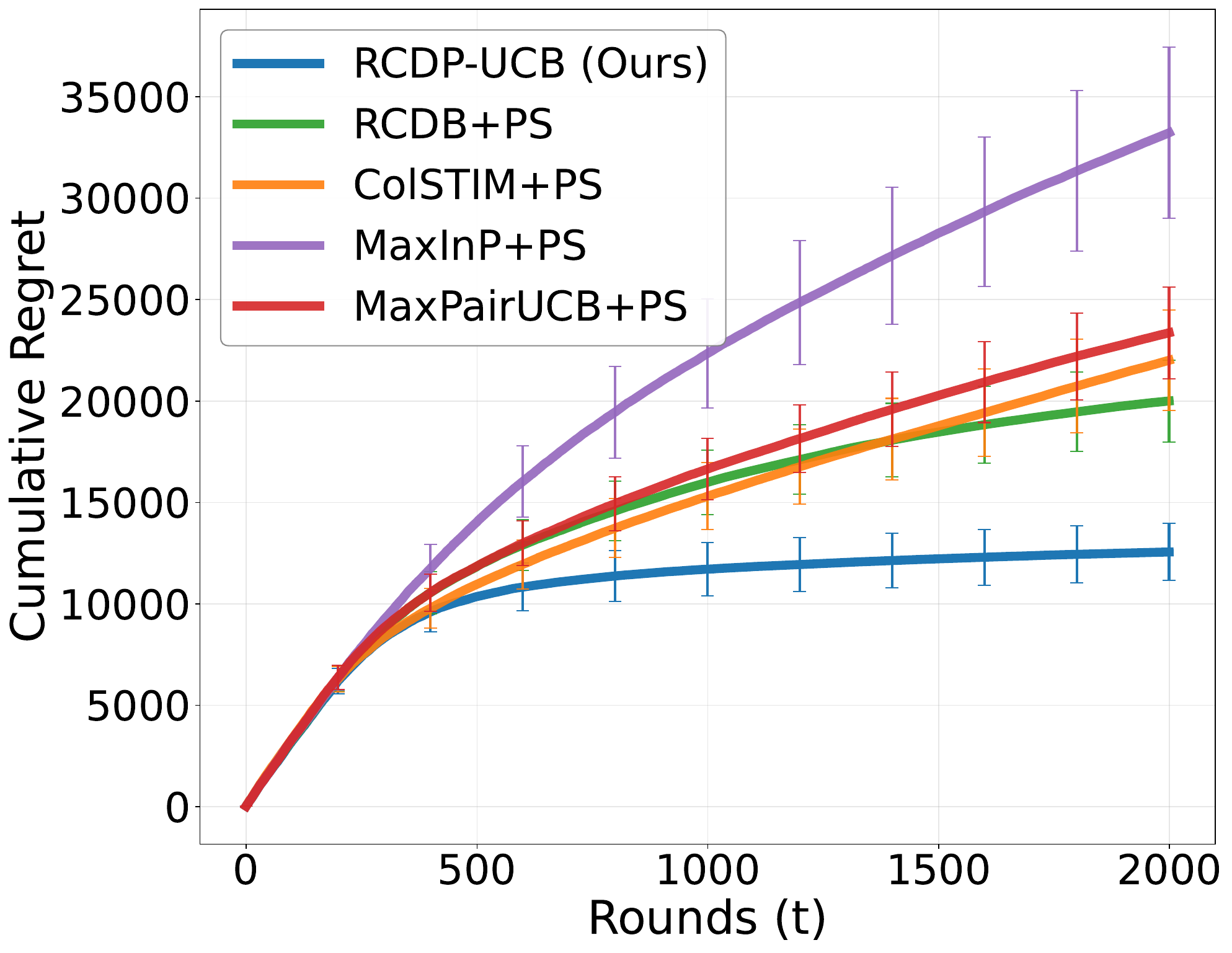}
        \caption{Stochastic: $\mu_\tau=60$}
    \end{subfigure}
    \begin{subfigure}{0.31\textwidth}
        \includegraphics[width=\linewidth]{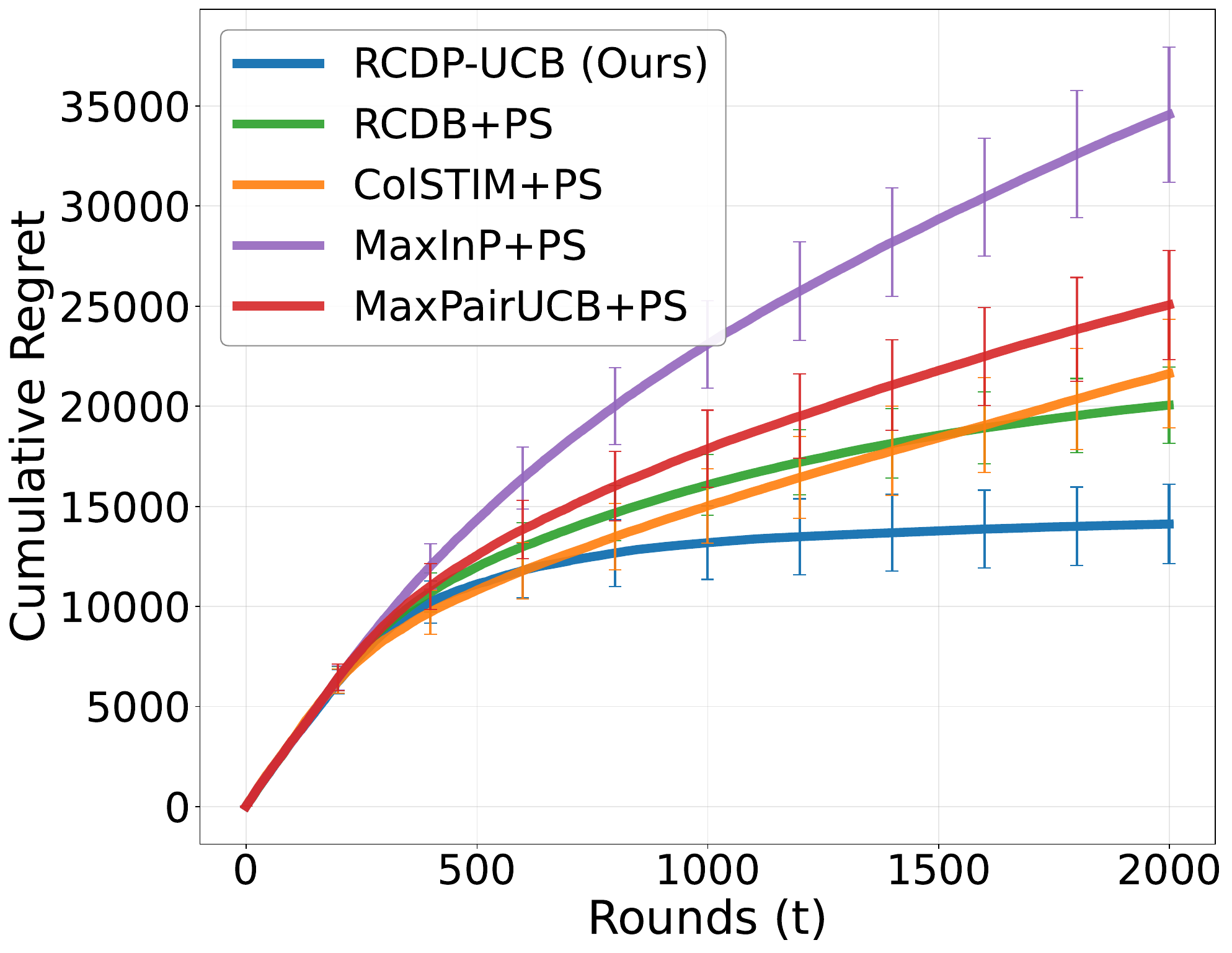}
        \caption{Stochastic: $\mu_\tau=80$}
    \end{subfigure}
    \begin{subfigure}{0.31\textwidth}
        \includegraphics[width=\linewidth]{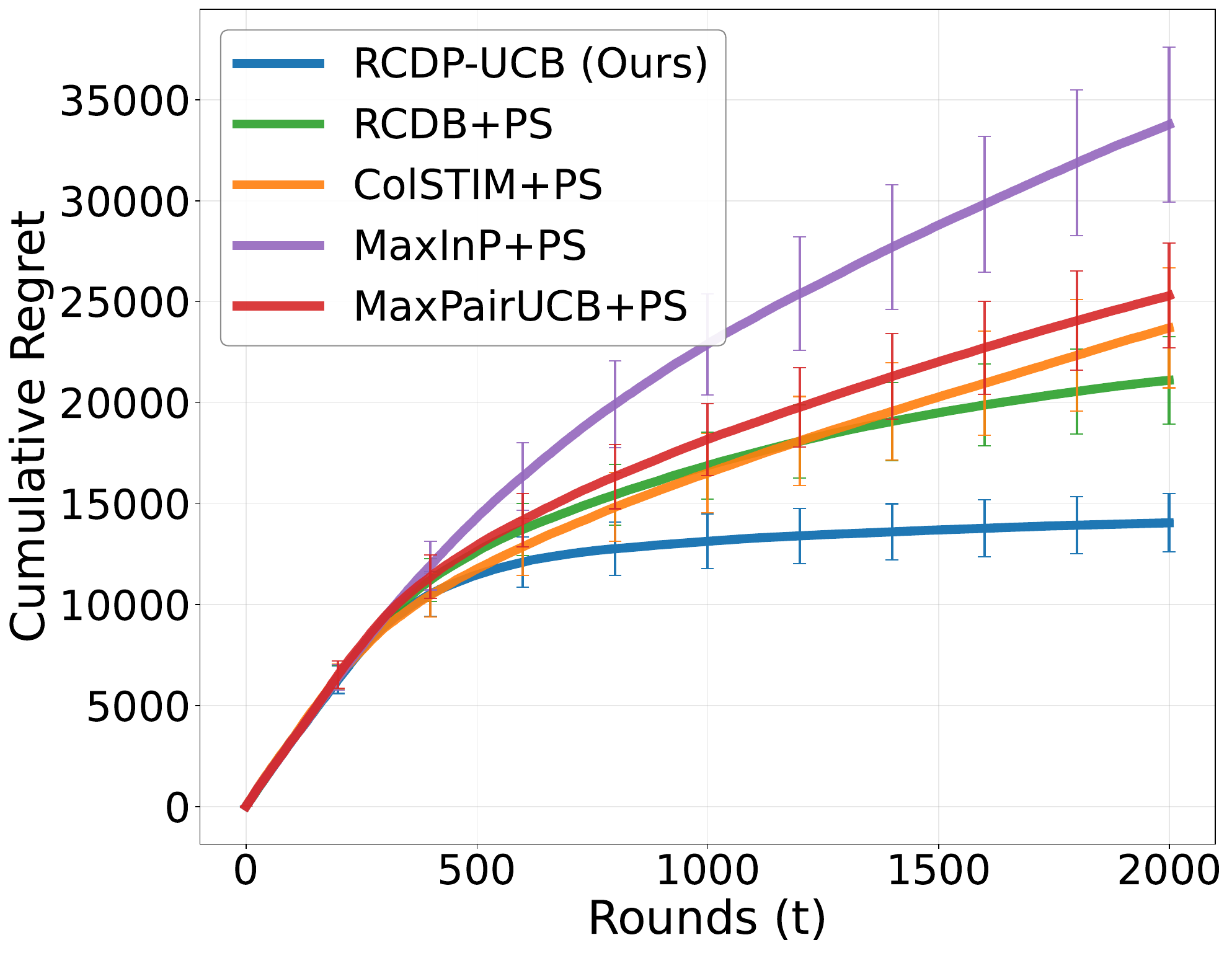}
        \caption{Stochastic: $\mu_\tau=100$}
    \end{subfigure}
    \caption{\textbf{Sinusoidal Mapping (Post-serving Learning)}: Cumulative regret with varying delay budget $\Lambda$ and mean delay $\mu_\tau$ under adversarial corruption ($\mathcal{C}=25$). All experiments were conducted across 10 runs with random seeds.}
    \label{fig:appendix_delay_ps_sin}
\end{figure*}

\subsection{Increasing STD. of Sub-Gaussian Delay}
\label{sec:appendix_std}
We assess the sensitivity of \term to the variance of stochastic delays by varying $\sigma$ from $0$ to $\sqrt{150}$, while fixing the mean delay $\mu_\tau = 100$ and corruption budget $\mathcal{C}=25$. \Cref{fig:appendix_std_ablation} illustrates that the cumulative regret of \term remains remarkably stable across all evaluated noise levels. This empirical evidence validates that \term is robust to both the expected magnitude and the stochastic fluctuations of delays, a result consistent with the theoretical bounds established in \Cref{theorem:upper,thm:lowerbound}.

\begin{figure}[ht]
    \centering
    \includegraphics[width=0.8\linewidth]{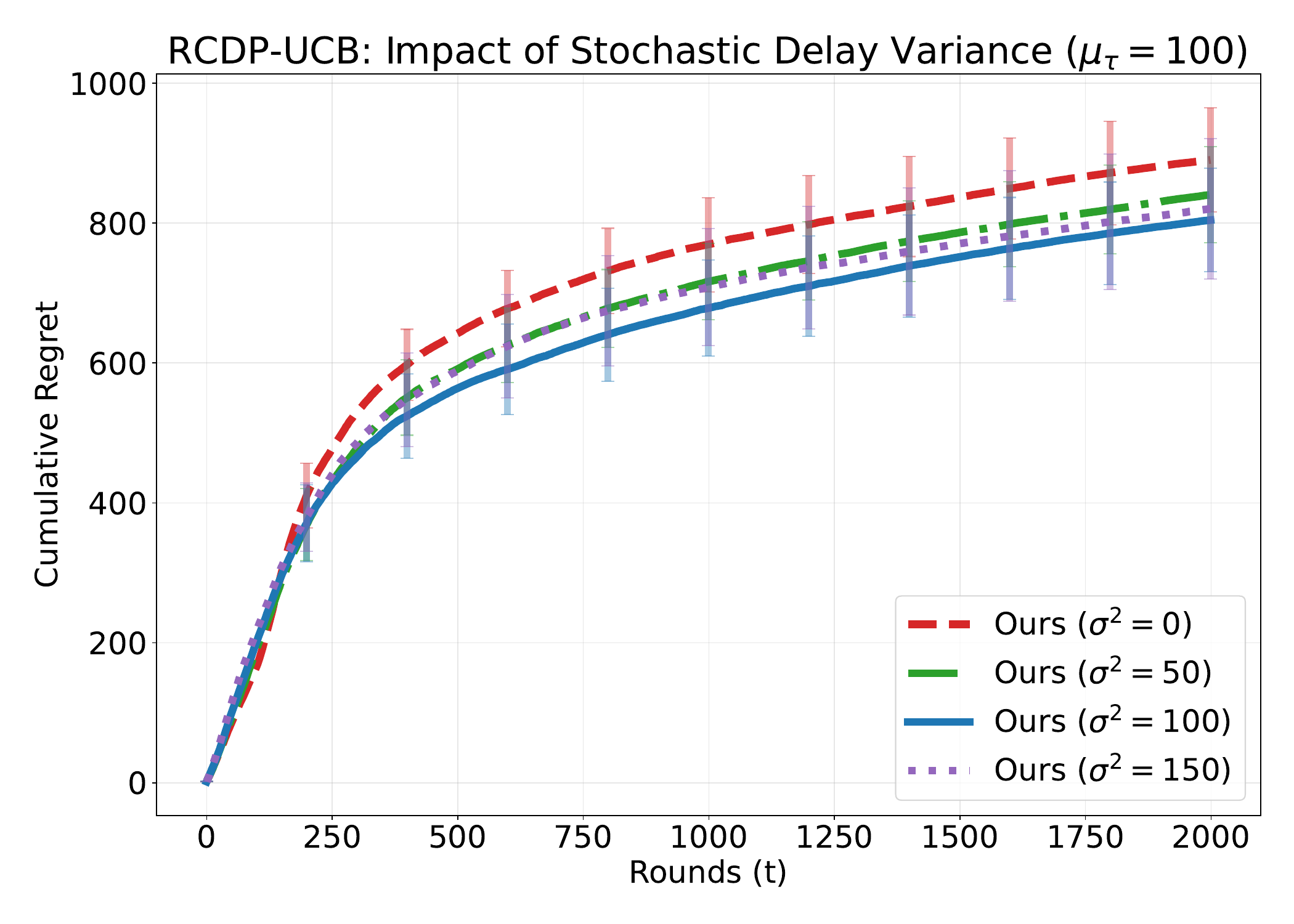}
    \caption{\textbf{Stochastic Delay Variance Ablation}: Cumulative regret of \term under varying variances $\sigma^2 \in \{0, 50, 100, 150\}$ with fixed $\mu_\tau=100$ and $\mathcal{C}=25$. The results are aggregated over 10 runs.}
    \label{fig:appendix_std_ablation}
\end{figure}

\end{document}